\definecolor{greyC}{RGB}{180,180,180}
\definecolor{greyL}{RGB}{235,235,235}
\definecolor{shadecolor}{rgb}{0.92,0.92,0.92}
\theoremstyle{plain}
\newtheorem{theorem}{Theorem}[section]
\newtheorem{lemma}[theorem]{Lemma}
\theoremstyle{definition}
\newtheorem{assumption}[theorem]{Assumption}
\theoremstyle{remark}
\icmltitlerunning{Unleashing Mask: Explore the Intrinsic Out-of-Distribution Detection Capability}
\begin{document}

\twocolumn[
\icmltitle{Unleashing Mask: Explore the Intrinsic Out-of-Distribution Detection Capability}




\begin{icmlauthorlist}
\icmlauthor{Jianing Zhu}{hkbu}
\icmlauthor{Hengzhuang Li}{hkbu}
\icmlauthor{Jiangchao Yao}{sjtu,lab}
\icmlauthor{Tongliang Liu}{syd,sydc}
\icmlauthor{Jianliang Xu}{hkbu}
\icmlauthor{Bo Han}{hkbu}
\end{icmlauthorlist}

\icmlaffiliation{hkbu}{Department of Computer Science, Hong Kong Baptist University}
\icmlaffiliation{sjtu}{CMIC, Shanghai Jiao Tong University}
\icmlaffiliation{lab}{Shanghai AI Laboratory}
\icmlaffiliation{syd}{Mohamed bin Zayed University of Artificial Intelligence}
\icmlaffiliation{sydc}{Sydney AI Centre, The University of Sydney}

\icmlcorrespondingauthor{Bo Han}{bhanml@comp.hkbu.edu.hk}
\icmlcorrespondingauthor{Jiangchao Yao}{Sunarker@sjtu.edu.cn}

\icmlkeywords{Machine Learning, ICML}

\vskip 0.3in
]



\printAffiliationsAndNotice{}  

\begin{abstract}
Out-of-distribution (OOD) detection is an indispensable aspect of secure AI when deploying machine learning models in real-world applications. Previous paradigms either explore better scoring functions or utilize the knowledge of outliers to equip the models with the ability of OOD detection. However, few of them pay attention to the intrinsic OOD detection capability of the given model. In this work, we generally discover the existence of an intermediate stage of a model trained on in-distribution (ID) data having higher OOD detection performance than that of its final stage across different settings, and further identify one critical data-level attribution to be learning with the \textit{atypical samples}. Based on such insights, we propose a novel method, \textit{Unleashing Mask}, which aims to restore the OOD discriminative capabilities of the well-trained model with ID data. Our method utilizes a mask to figure out the memorized \textit{atypical samples}, and then finetune the model or prune it with the introduced mask to forget them. Extensive experiments and analysis demonstrate the effectiveness of our method. The code is available at: \url{https://github.com/tmlr-group/Unleashing-Mask}.
\end{abstract}

\section{Introduction}


Out-of-distribution (OOD) detection has drawn increasing attention when deploying machine learning models into the open-world scenarios~\citep{Nguyen_2015_CVPR,LeeLLS18, yang2021generalized}. Since the test samples can naturally arise from a label-different distribution, identifying OOD inputs from in-distribution (ID) data is important, especially for those safety-critical applications like autonomous driving and medical intelligence. Previous studies focus on designing a series of scoring functions~\citep{hendrycks17baseline,LiangLS18,liu2020energy,SunM0L22} for OOD uncertainty estimation or fine-tuning with auxiliary outlier data to better distinguish the OOD inputs~\citep{hendrycks2018deep,MohseniPYW20,SehwagCM21}.


Despite the promising results achieved by previous methods~\citep{hendrycks17baseline,hendrycks2018deep,liu2020energy,ming2022poem}, limited attention is paid to considering whether the given well-trained model is the most appropriate basis for OOD detection. In general, models deployed for various applications have different original targets (e.g., multi-class classification~\citep{goodfellow2016deep}) instead of OOD detection~\citep{Nguyen_2015_CVPR}. However, most representative score functions, e.g., MSP~\citep{hendrycks2018deep}, ODIN~\citep{LiangLS18}, and Energy~\citep{liu2020energy}, uniformly leverage the given models for OOD detection~\citep{yang2021generalized}. The above target-oriented discrepancy naturally motivates the following critical question: \textit{does the given well-trained model have the optimal OOD discriminative capability?} If not, \textit{how can we find a more appropriate counterpart for OOD detection?}


In this work, we start by revealing an interesting empirical observation, i.e., there always exists a historical training stage where the model has a higher OOD detection performance than the final well-trained one (as shown in Figure~\ref{fig: motivation_1}), spanning among different OOD/ID datasets~\citep{netzer2011reading_SVHN,van2018inaturalist} under different learning rate schedules~\citep{LoshchilovH17} and model structures~\citep{huang2017densely,zagoruyko2016wide}. It shows the inconsistency between gaining better OOD discriminative capability~\citep{Nguyen_2015_CVPR} and pursuing better performance on ID data during training.
Through the in-depth analysis from various perspectives (as illustrated in Figure 2), we figure out one possible attribution at the data level is memorizing the \textit{atypical samples} (compared with others at the semantic level) that are hard to generalize for the model. Seeking zero training error on those samples leads the model more confident in the unseen OOD inputs.


\begin{figure*}[t!]
\begin{center}
    \hspace{-0.10in}
    \subfigure[Curves of FPR95 based on Energy score]{
    \includegraphics[scale=0.17]{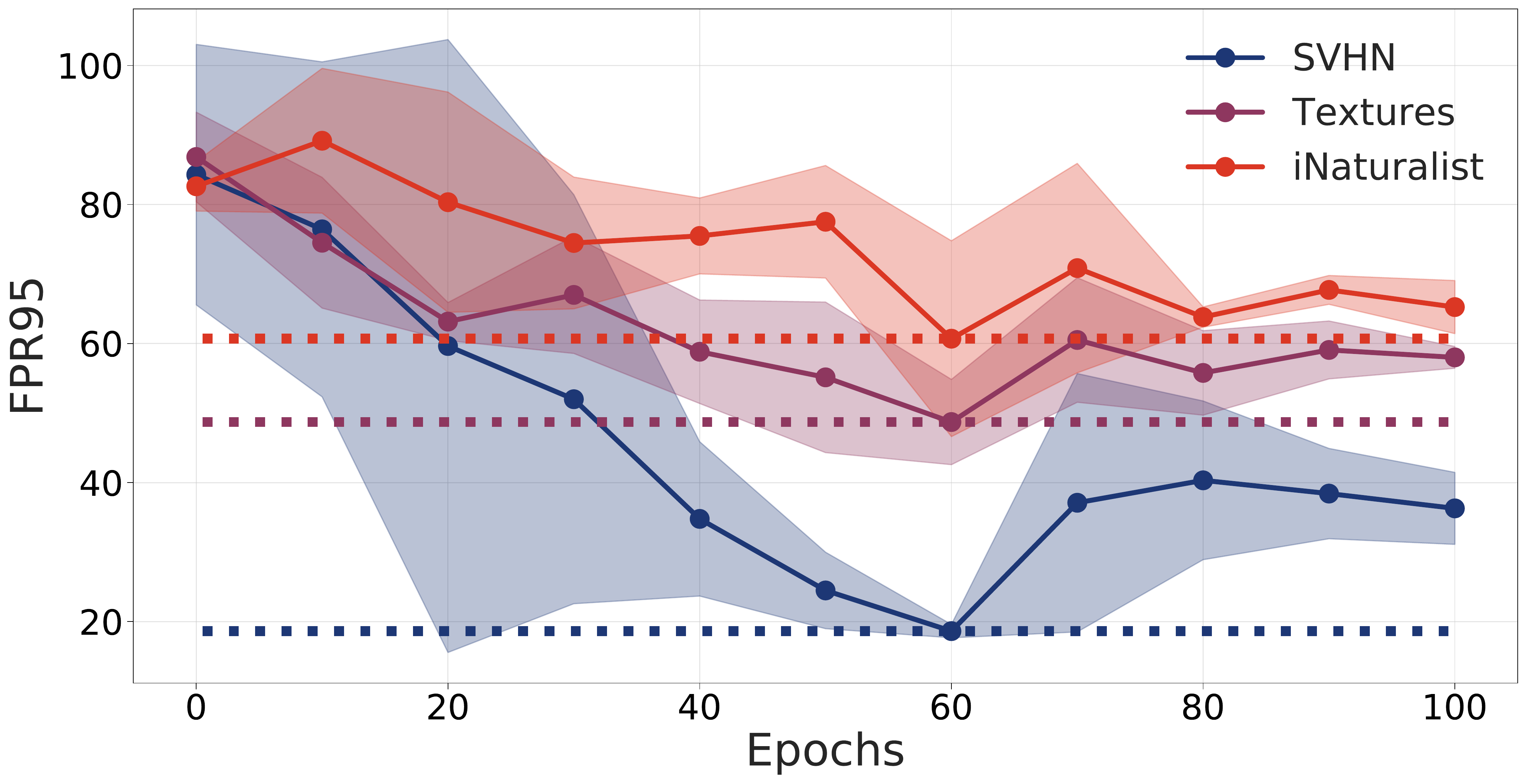}
    \label{fig1:a}
    }
   \hspace{0.15in}
    \subfigure[Diff. LR Schedules]{
    \includegraphics[scale=0.17]{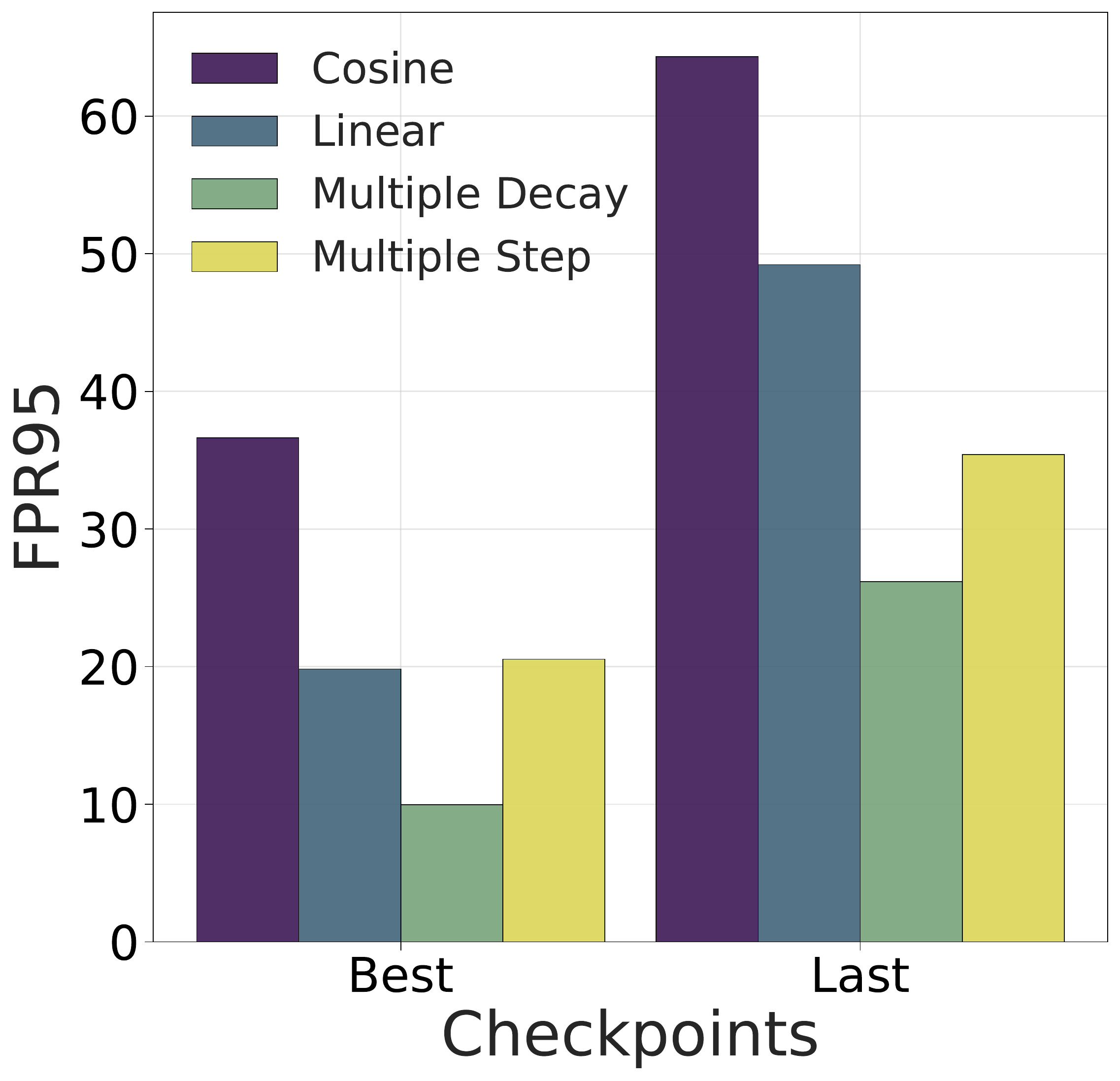}
    \label{fig1:b}
    }
    \subfigure[Diff. Model Structures]{
    \includegraphics[scale=0.17]{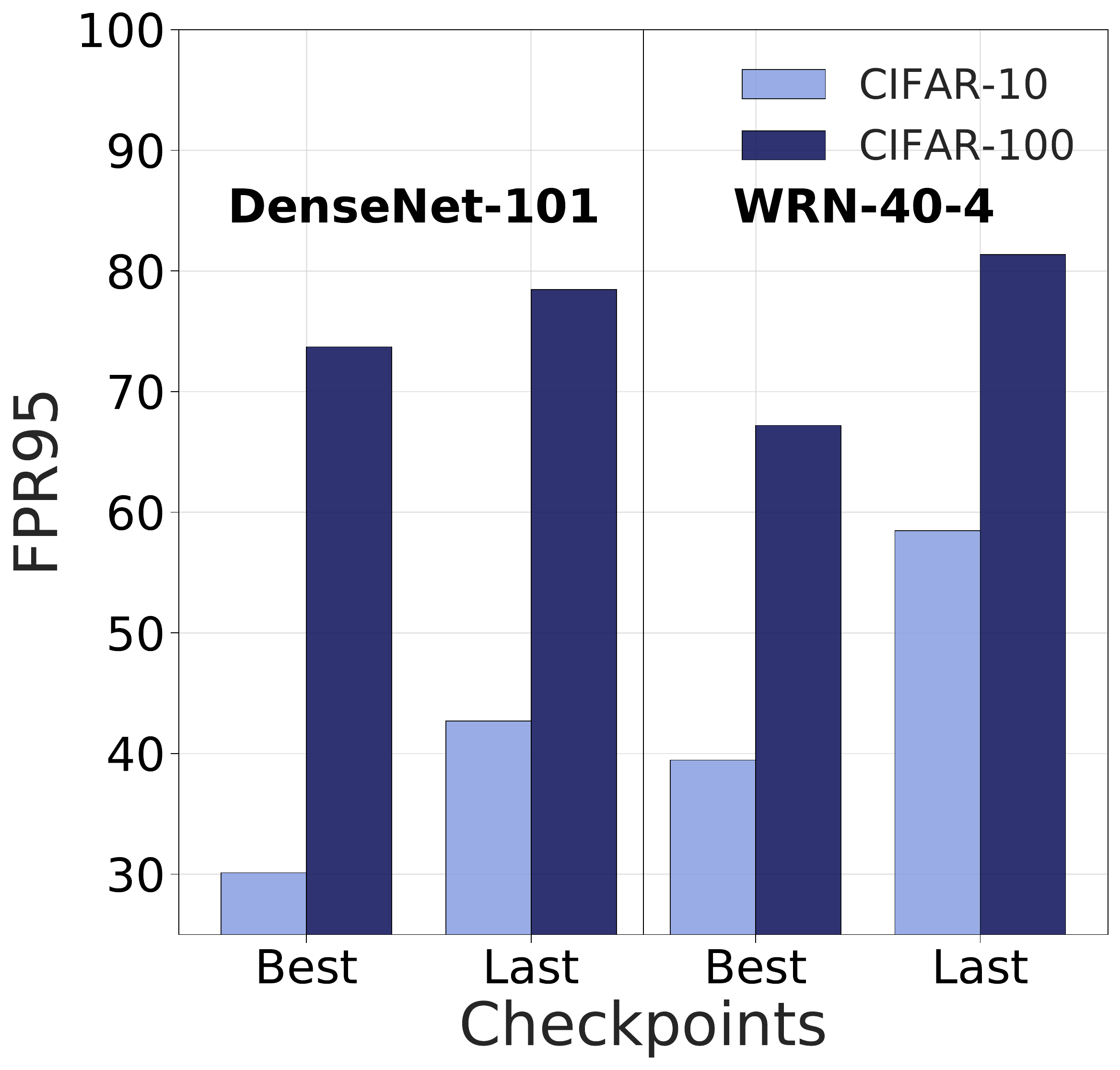}
    \label{fig1:c}
    }
\end{center}
\vspace{-5mm}
\caption{\textbf{Critical reveal of the intermediate stage with better OOD detection performance across various setups:} (a) the curves of FPR95 (false positive rate of OOD examples when the true positive rate of ID examples is at 95\%) based on Energy score~\citep{liu2020energy} across three different OOD datasets during the training on CIFAR-10 dataset; (b) comparison of best/last checkpoints for OOD detection under different lr schedules on CIFAR-10; (c) comparison of best/last checkpoints for OOD detection under different model structures on CIFAR-10/CIFAR-100. 
The results are obtained after multiple runs, and we leave other setup details to Section~\ref{sec:exp_part1} and Appendix~\ref{app:additional_exp_setup}.
}
\label{fig: motivation_1}
\vspace{-4mm}
\end{figure*}

The above analysis inspires us to propose a new method, namely, \textit{Unleashing Mask} (UM), to excavate the overlaid detection capability of a well-trained given model by alleviating the memorization of those atypical samples (as illustrated in Figure~\ref{fig:method}) of ID data. In general, we aim to backtrack its previous stage with better OOD discriminative capabilities. To achieve this target, there are two essential issues: (1) \textit{the model that is well-trained on ID data has already memorized some atypical samples}; (2) \textit{how to forget those memorized atypical samples considering the given model?} Accordingly, our proposed UM contains two parts utilizing different insights to address the two problems. First, as atypical samples are more sensitive to the change of model parameters, we initialize a mask with the specific cutting rate to mine these samples with constructed parameter discrepancy. Second, with the loss reference estimated by the mask, we conduct the constrained gradient ascent for model forgetting (i.e., Eq.~(\ref{eq:obj})). It will encourage the model to finally stabilize around the optimal stage. To avoid severe sacrifices of the original task performance on ID data, we further propose \textit{UM Adopts Pruning} (UMAP) which tunes on the introduced mask with the newly designed objective.


We conduct extensive experiments (in Section~\ref{sec:exp} and Appendixes~\ref{app:additional_exp_setup} to~\ref{app:eff_um}) to present the working mechanism of our proposed methods. We have verified the effectiveness with a series of OOD detection benchmarks mainly on two common ID datasets, i.e., CIFAR-10 and CIFAR-100. Under the various evaluations, our UM, as well as UMAP, can indeed excavate the better OOD discriminative capability of the well-trained given models and the averaged FPR95 can be reduced by a significant margin. Finally, a range of ablation studies, verification on the ImageNet pretrained model, and further discussions from both empirical and theoretical views are provided. Our main contributions are as follows,
\begin{itemize}
    \item Conceptually, we explore the OOD detection performance via a new perspective, i.e., backtracking the initial model training phase without regularizing by any auxiliary outliers, different from most previous works that start with the well-trained model on ID data.
    \item Empirically, we reveal the potential OOD discriminative capability of the well-trained model, and figure out one data-level attribution of concealing it during original training is memorizing the atypical samples.
    \item Technically, we propose a novel \textit{Unleashing Mask} (UM) and its practical variant UMAP, which utilizes the newly designed forgetting objective with ID data to excavate the intrinsic OOD detection capability. 
    \item Experimentally, we conduct extensive explorations to verify the overall effectiveness of our method in improving OOD detection performance, and perform various ablations to provide a thorough understanding. 
\end{itemize}



\section{Preliminaries}

We consider multi-class classification as the original training task~\citep{Nguyen_2015_CVPR}, where $\mathcal{X}\subset\mathbb{R}^d$ denotes the input space and $\mathcal{Y}=\{1,\ldots, C\}$ denotes the label space. In practical, a reliable classifier should be able to figure out the OOD input, which can be considered as a binary classification problem. Given $\mathcal{P}$, the distribution over $\mathcal{X}\times\mathcal{Y}$, we consider $\mathcal{D}_\text{in}$ as the marginal distribution of $\mathcal{P}$ for $\mathcal{X}$, namely, the distribution of ID data. At test time, the environment can present a distribution $\mathcal{D}_\text{out}$ over $\mathcal{X}$ of OOD data. In general, the OOD distribution $\mathcal{D}_\text{out}$ is defined as an irrelevant distribution of which the label set has no intersection with $\mathcal{Y}$~\cite{yang2021generalized} and thus should not be predicted by the model. A decision can be made with the threshold $\lambda$:
\begin{equation}
    D_{\lambda}(x;f)=\left \{
    \begin{aligned}
    &\text{ID} && S(x)\geq\lambda\\
    &\text{OOD} && S(x)<\lambda
    \end{aligned},
    \right.
\end{equation}
Building upon the model ${f}\in\mathcal{H}:\mathcal{X}\rightarrow\mathbb{R}^c$ trained on ID data with the logit outputs, the goal of decision is to utilize the scoring function $S:\mathcal{X}\rightarrow \mathbb{R}$ to distinguish the inputs of $\mathcal{D}_\text{in}$ from that of $\mathcal{D}_\text{out}$ by $S(x)$. Typically, if the score value is larger than the threshold $\lambda$, the associated input $x$ is classified as ID and vice versa. We consider several representative scoring functions designed for OOD detection, e.g., MSP~\citep{hendrycks17baseline}, ODIN~\citep{LiangLS18}, and Energy~\citep{liu2020energy}. More detailed definitions and implementation are provided in Appendix~\ref{app:baseline_info}.


To mitigate the issue of over-confident predictions for some OOD data~\citep{hendrycks17baseline,liu2020energy},  recent works~\citep{hendrycks2018deep, Tack20CSI} utilize the auxiliary unlabeled dataset to regularize the model behavior. Among them, one representative baseline is outlier exposure (OE)~\citep{hendrycks2018deep}. OE can further improve the detection performance by making the model $f(\cdot)$ finetuned from a surrogate OOD distribution $\mathcal{D}^\text{s}_\text{out}$, and its corresponding learning objective is defined as follows,
\begin{equation}
    {\mathcal{L}}_f=\mathbb{E}_{\mathcal{D}_\text{in}}\left[\ell_\text{CE}(f(x),y)\right]
     + \lambda \mathbb{E}_{\mathcal{D}^\text{s}_\text{out}}\left[\ell_\text{OE}(f(x))\right], \label{eq: oe}
\end{equation}
where $\lambda$ is the balancing parameter,  $\ell_\text{CE}(\cdot)$ is the Cross-Entropy (CE) loss, and $\ell_\text{OE}(\cdot)$ is the Kullback-Leibler divergence to the uniform distribution, which can be written as $\ell_\text{OE}(h(\boldsymbol{x}))=-\sum_k \texttt{softmax}_k~f(x) / C$, where $\texttt{softmax}_k (\cdot)$ denotes the $k$-th element of a softmax output. The OE loss $\ell_\text{OE}(\cdot)$ is designed for model regularization, making the model learn from surrogate OOD inputs to return low-confident predictions~\citep{hendrycks2018deep}. 

Although previous works show promising results via designing scoring functions or regularizing models with different auxiliary outlier data, few of them investigated or excavated the original discriminative capability of the well-trained model using ID data. In this work, we introduce the layer-wise mask $m$~\citep{han2015deep,ramanujan2020s} to mine the atypical samples that are memorized by the model. Accordingly, the decision can be rewritten as $D(x;m\odot f)$, and the output of a masked model is defined as $m\odot f(x)$. 








\begin{figure*}[t!]
\begin{center}
    \subfigure[Training/Testing Loss and Accuracy]{
    \includegraphics[scale=0.124]{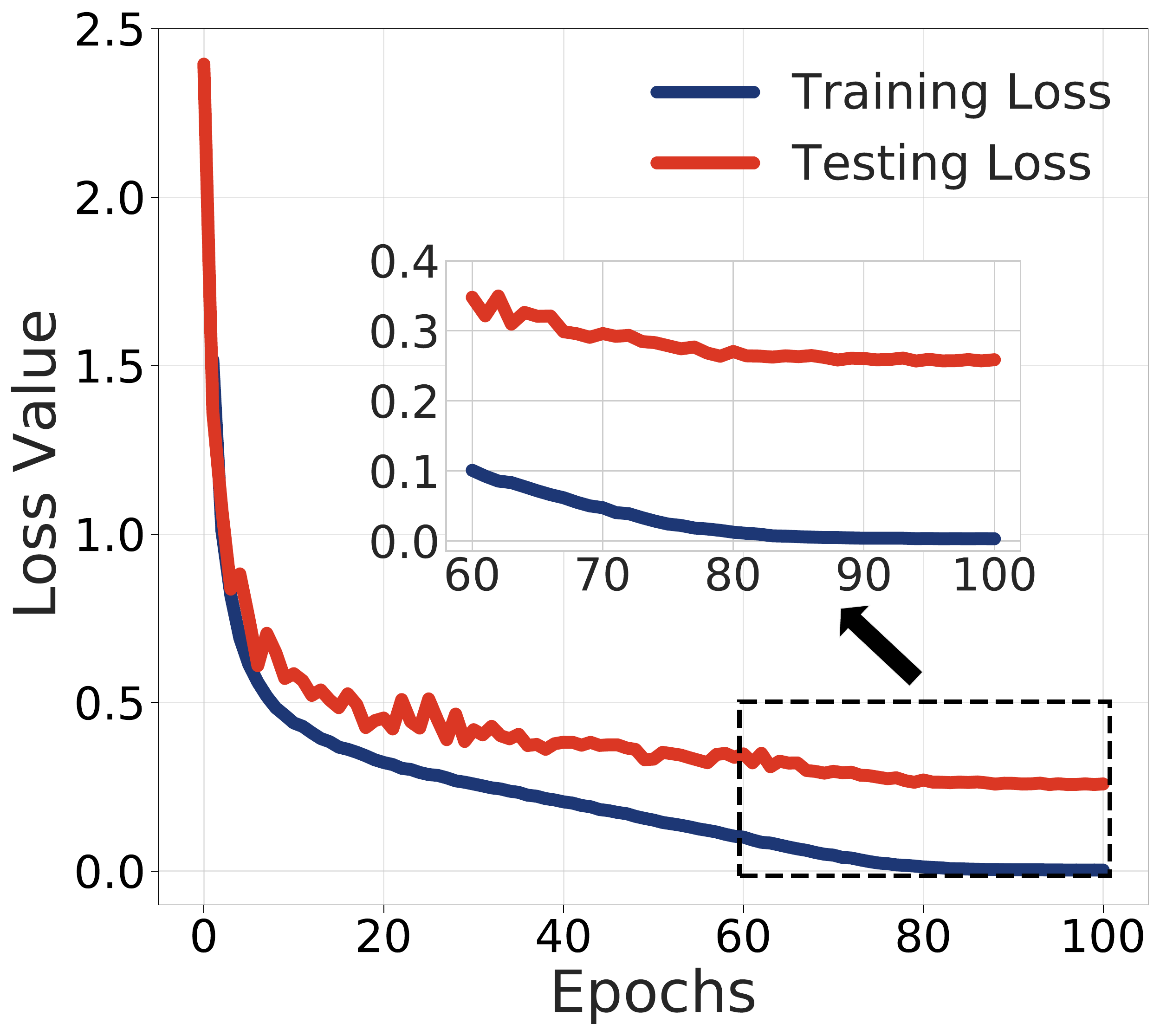}
    \includegraphics[scale=0.124]{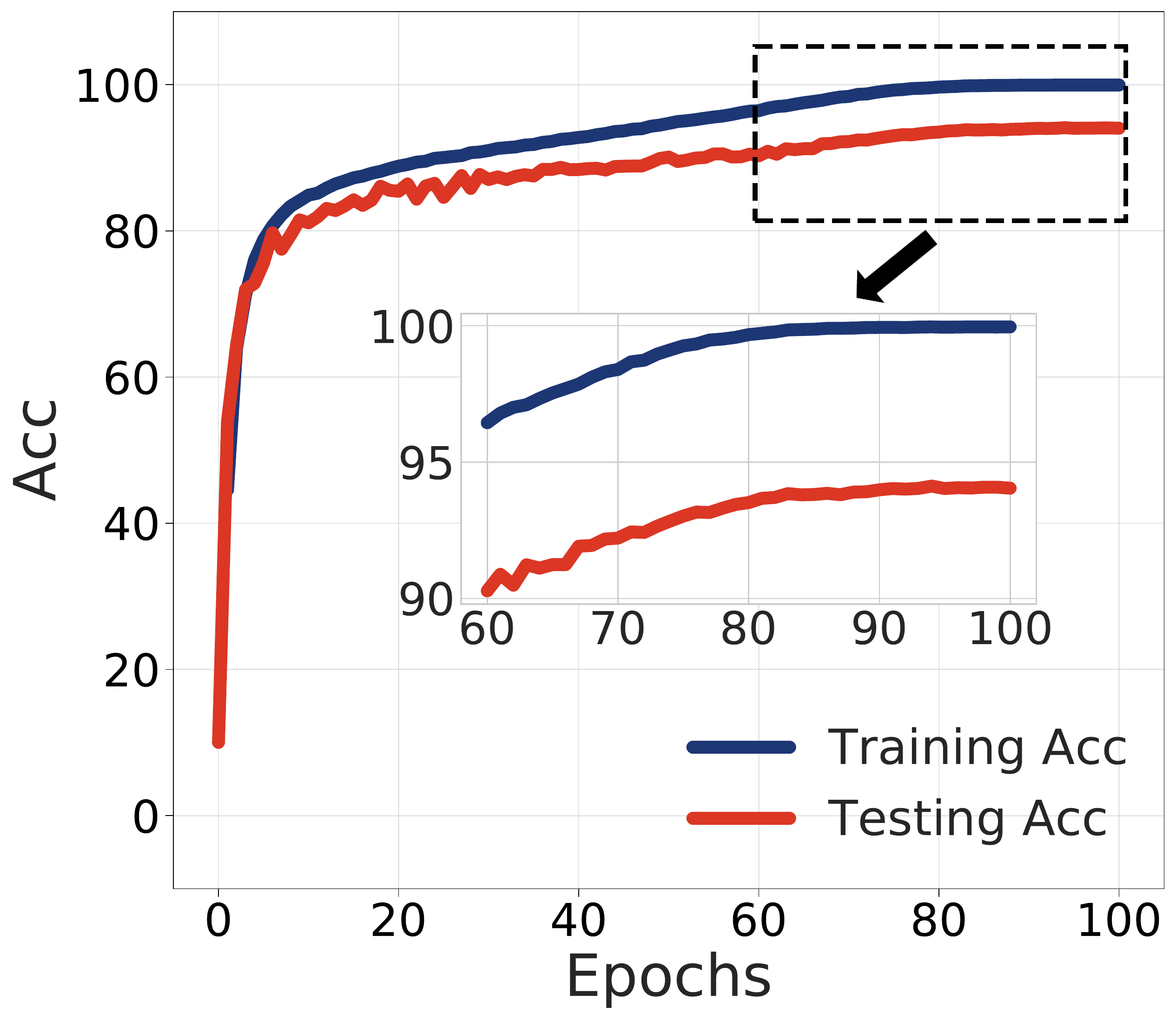}
    \label{fig2:a}
    }
    \subfigure[ID and OOD Distributions at Epoch 60/100]{
    \includegraphics[scale=0.124]{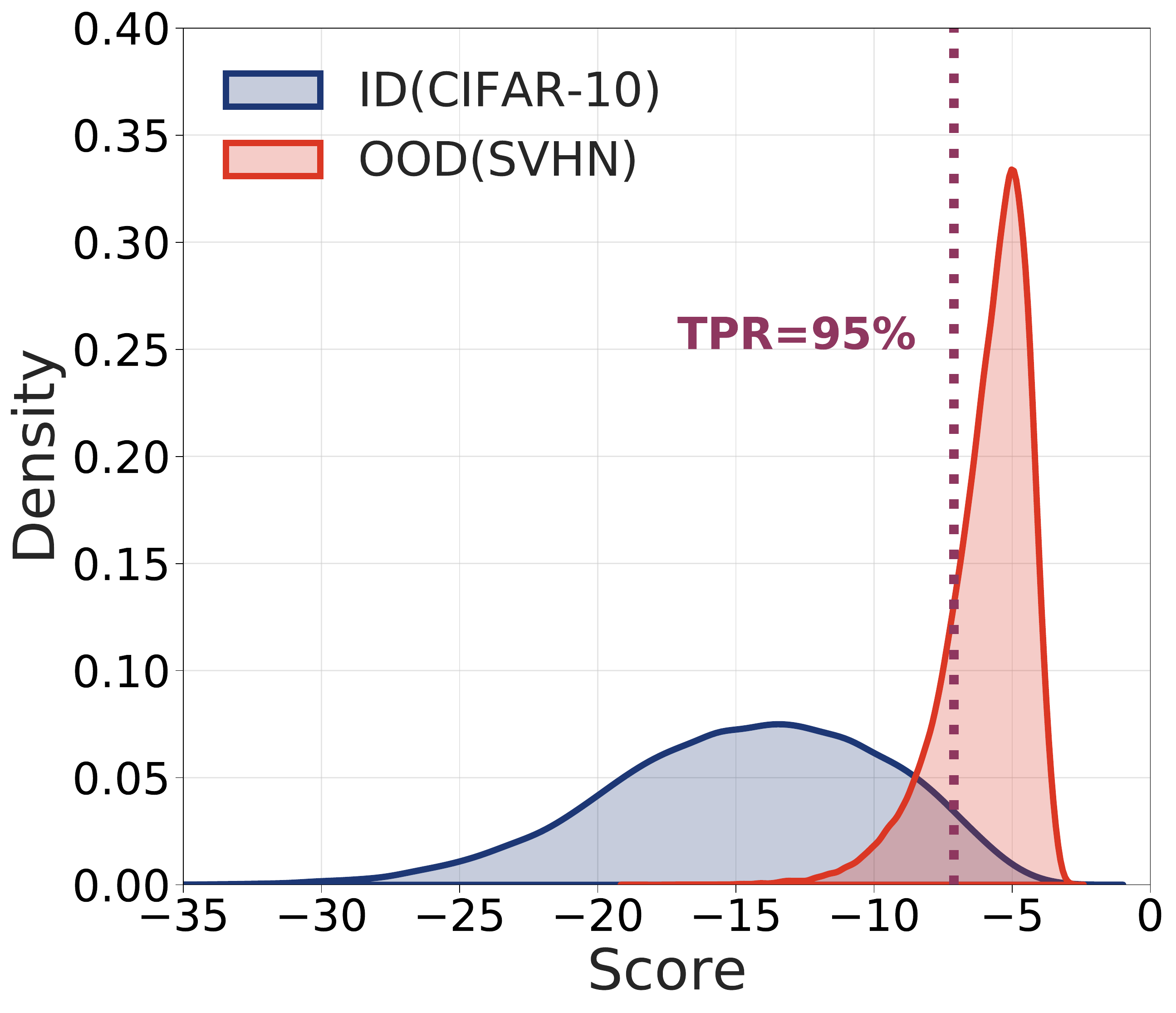}
    \includegraphics[scale=0.124]{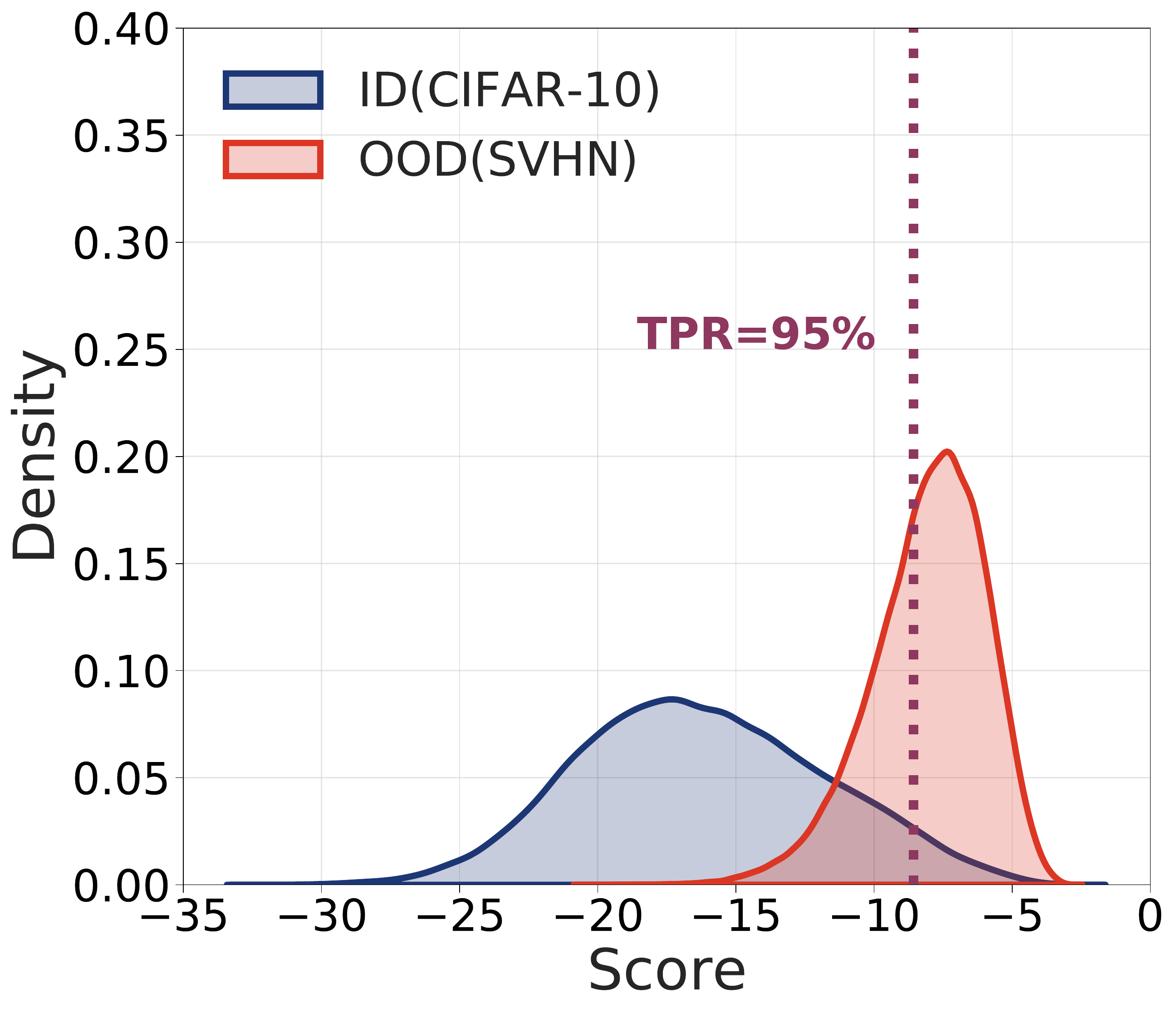}
    \label{fig2:b}
    }
    \subfigure[ID Distributions]{
    \includegraphics[scale=0.124]{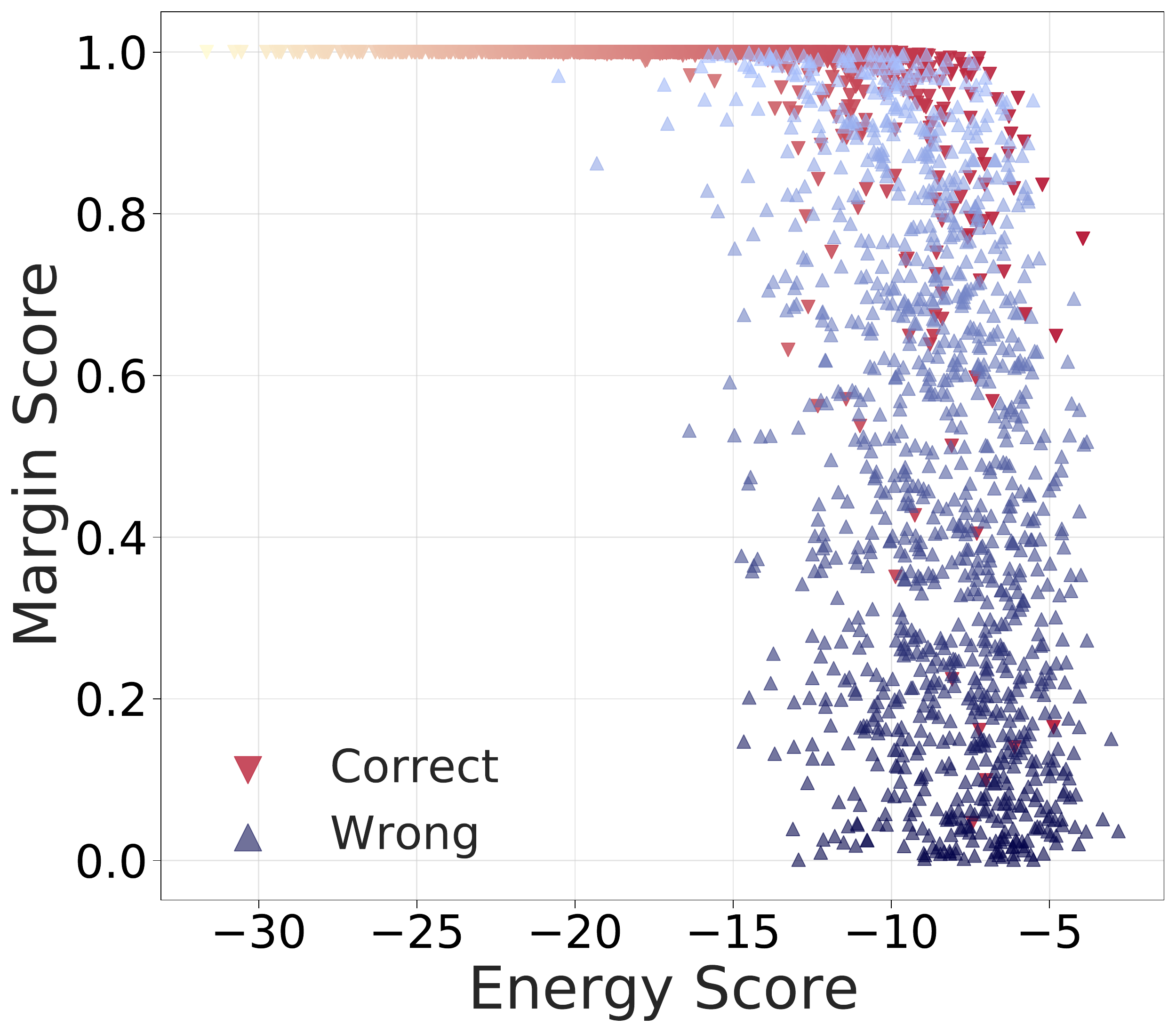}
    \label{fig2:c}
    }
    \\
    \subfigure[Wrongly/Correctly Classified Data at Epoch 60]{
    \includegraphics[scale=0.07]{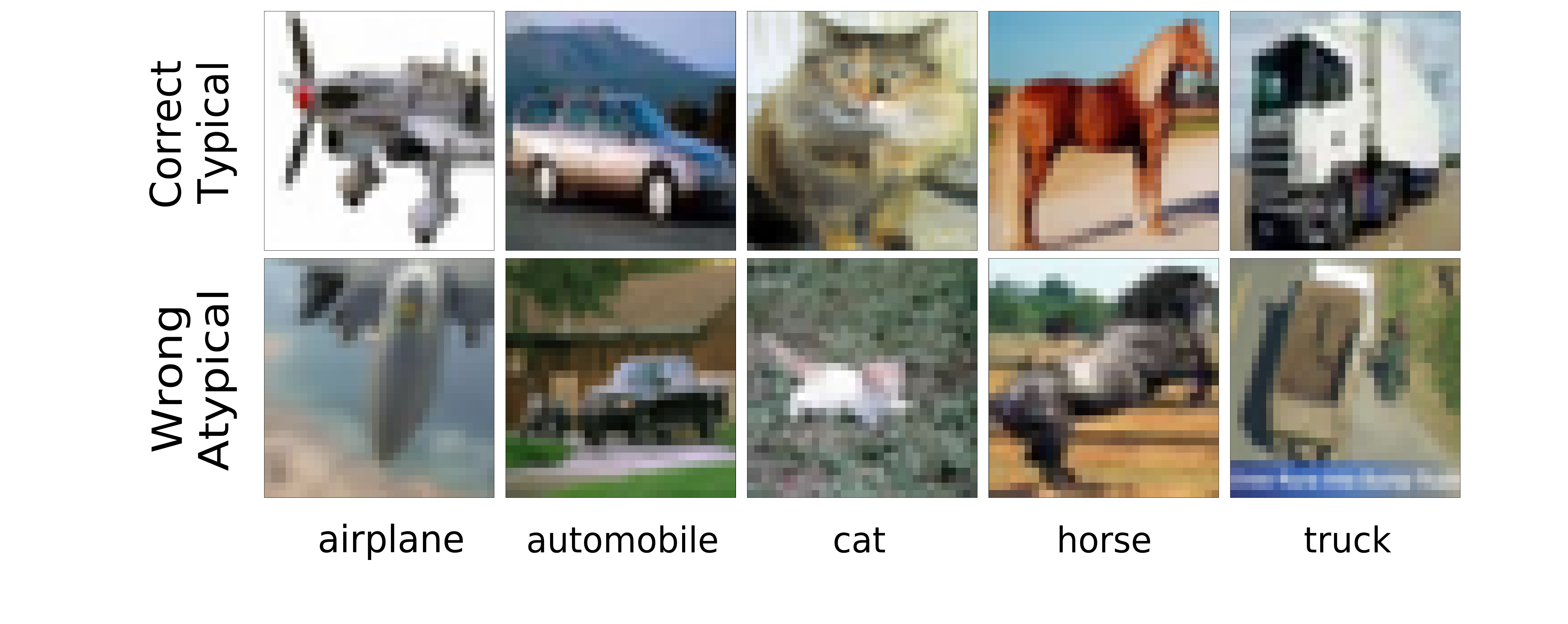}
    \label{fig2:d}
    }
    \subfigure[TSNE Visualization at Epoch 60/100]{
    \includegraphics[scale=0.1512]{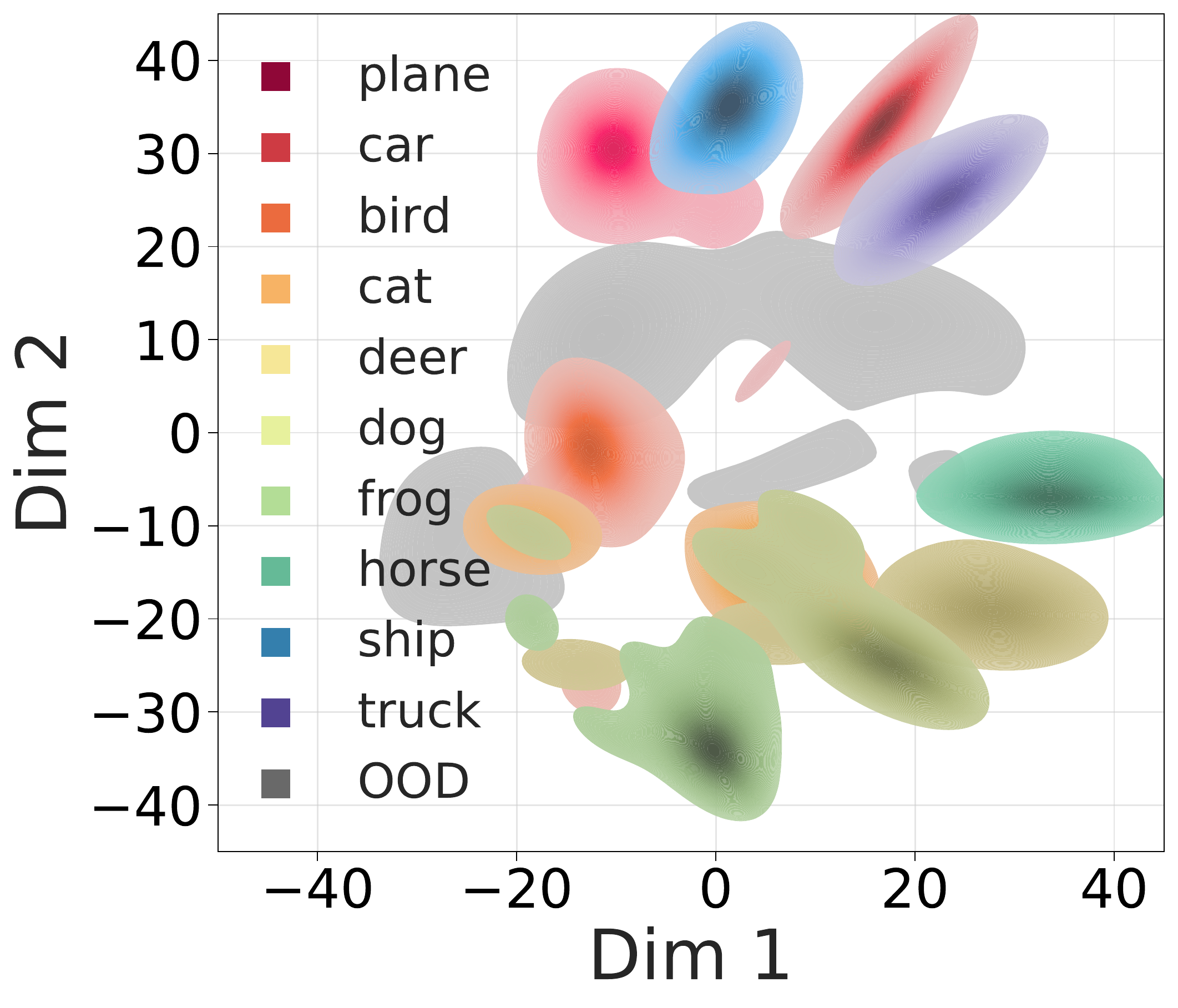}
    \includegraphics[scale=0.1512]{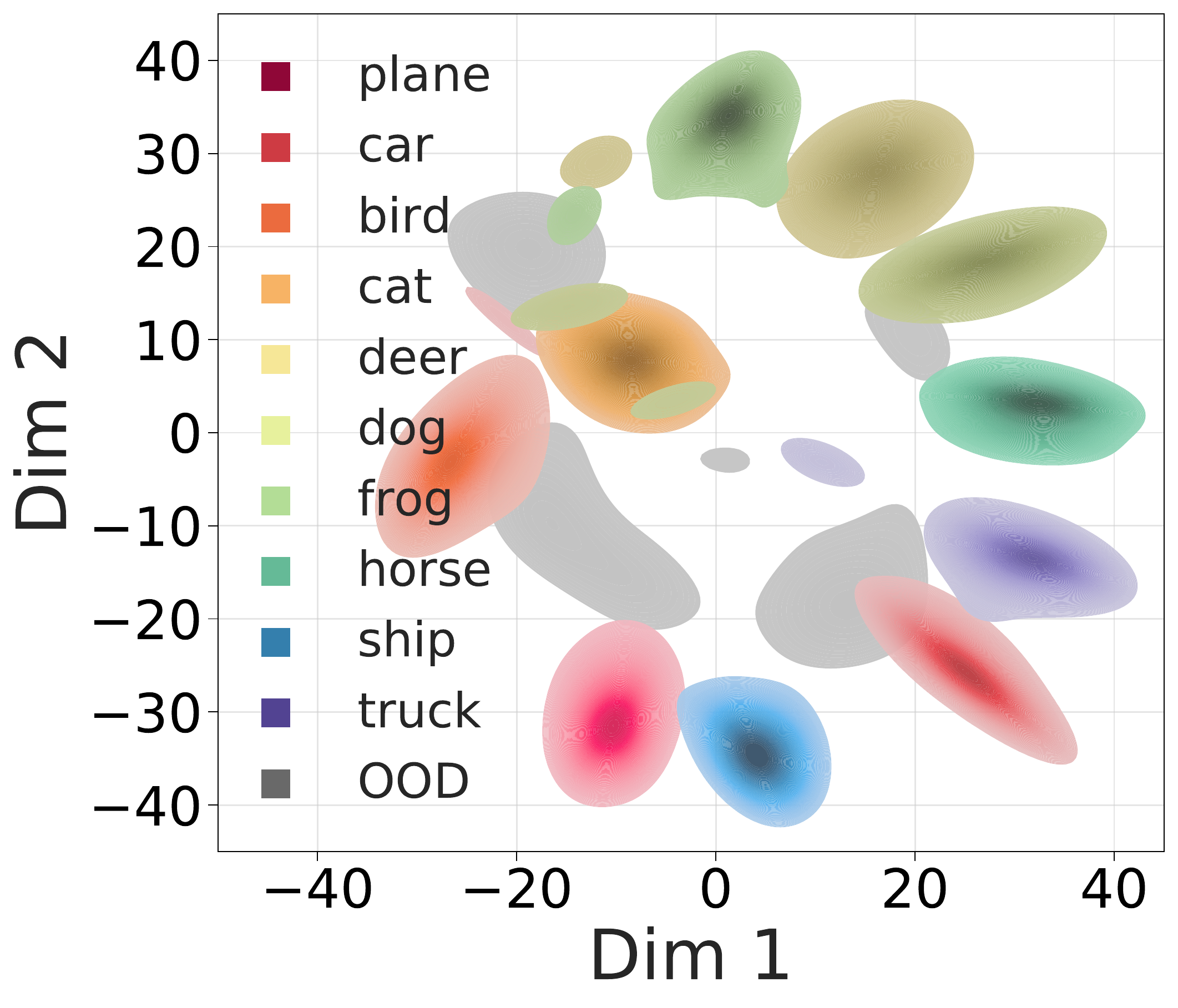}
    \label{fig2:e}
    }
    \subfigure[Diff. Effects]{
    \includegraphics[scale=0.124]{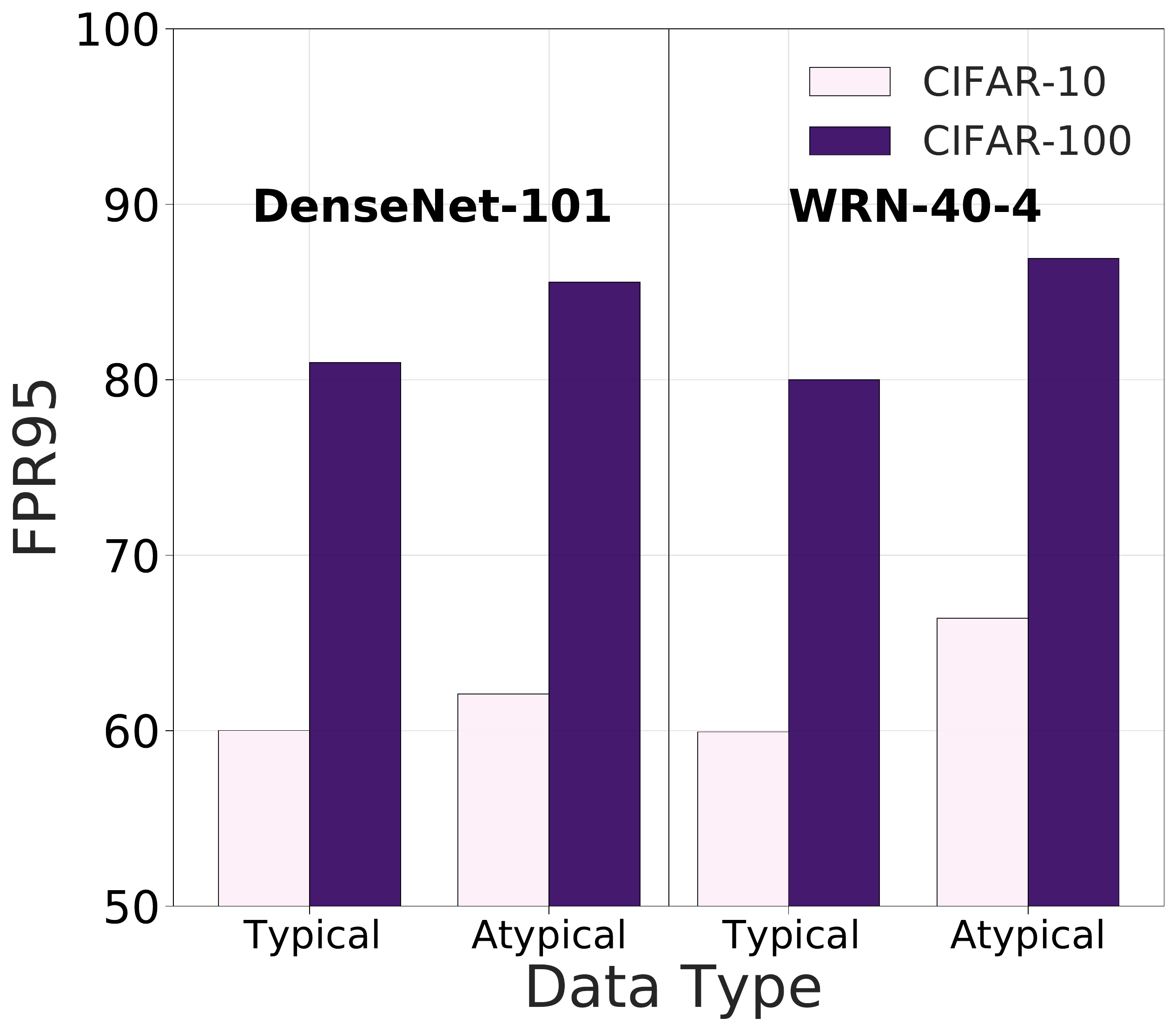}
    \label{fig2:f}
    }
\end{center}
\vspace{-4mm}
\caption{\textbf{Delve into the data-level attribution of the phenomenon with the original multi-classification on CIFAR-10:}  
(a) training/testing loss and accuracy on ID data; (b) comparison of ID/OOD distributions based on Energy score at Epoch 60/100 (c) scatter plot of wrongly/correctly classified samples at Epoch 60 using Margin value and Energy score (d) visualization of wrongly/correctly classified samples at Epoch 60; (e) TSNE visualization of the feature embedding on ID/OOD data at Epoch 60/100. (f) Effects on OOD detection of tuning with those identified typical/atypical samples, more detailed setup, and results can be referred to in Appendix~\ref{app:exp_typical_atypical}.
Through comparison from various perspectives, we find that achieving a reasonably small loss value (at round Epoch 60) on ID data is enough for OOD detection. However, continually optimizing on those atypical samples (e.g., wrongly classified in (d)) may impair OOD detection. 
}
\label{fig: motivation_2}
\vspace{-4mm}
\end{figure*}

\section{Proposed Method: Unleashing Mask}

In this section, we introduce our new method, i.e., \textit{Unleashing Mask} (UM), to reveal the potential OOD discriminative capability of the well-trained model. First, we present and discuss the important observation that inspires our methods (Section~\ref{sec:method_part1}). Second, we provide the insights behind the two critical parts of our UM (Section~\ref{sec:method_part2}). Lastly, we introduce the overall framework and its learning objective, as well as a practical variant of UM, i.e., UMAP (Section~\ref{sec:method_part3}). 

\subsection{Overlaid OOD Detection Capability}
\label{sec:method_part1}

First, we present the phenomenon of the inconsistency between pursuing better OOD discriminative capability and smaller training errors during the original task. 
Empirically, as shown in Figure~\ref{fig: motivation_1}, we trace the OOD detection performance during the model training after multiple runs of the experiments. Across three different OOD datasets in Figure~\ref{fig1:a}, we can observe the existence of a better detection performance using the index of FPR95 metric based on the Energy~\citep{liu2020energy} score. The generality has also been demonstrated under different learning schedules, model structures, and ID datasets in Figures~\ref{fig1:b} and~\ref{fig1:c}. Without any auxiliary outliers, it motivates us to explore the underlying mechanism of the training process with ID data.

We further delve into the learning dynamics from various perspectives in Figure~\ref{fig: motivation_2}, and we reveal the critical data-level attribution for the OOD discriminative capability. In Figure~\ref{fig2:a}, we find that the training loss has reached a reasonably small value\footnote{Note that it is not the conventional overfitting~\citep{goodfellow2016deep} as the testing loss is still decreasing. In Section~\ref{sec:exp_part3} and Appendix~\ref{app:overfitting_comp}, we provide both the empirical comparison of some targeted strategies and the conceptual comparison of them.} at Epoch 60 where its detection performance achieves a satisfactory level. However, if we further minimize the training loss, the trend of the FPR95 curve shows almost the opposite direction with both training and testing loss or accuracy (see Figures~\ref{fig1:a} and~\ref{fig2:a}). The comparison of the ID/OOD distributions is presented in Figure~\ref{fig2:b}. To be specific, the statics of the two distributions indicate that the gap between the ID and OOD data gets narrow as their overlap grows along with the training. After Epoch 60, although the model becomes more confident on ID data which satisfies a part of the calibration target~\citep{hendrycks2019using}, its predictions on the OOD data also become more confident which is unexpected. Using the margin value defined in logit space (see Eq.~(\ref{eq:margin})), we gather the statical with Energy score in Figure~\ref{fig2:c}. The misclassified samples are found to be close to the decision boundary and have a high uncertainty level in model prediction. Accordingly, we extract those samples that were learned by the model at this period. As shown in Figures~\ref{fig2:d},~\ref{fig2:e} and~\ref{fig2:f}, the misclassified samples learned after Epoch 60 present much atypical semantic features, which results in more diverse feature embedding and may impair OOD detection. As deep neural networks tend to first learn the data with typical features~\citep{arpit2017closer}, we attribute the inconsistent trend to memorizing those atypical data at the later stage. 


\subsection{Unleashing the Potential Discriminative Power}
\label{sec:method_part2}

In general, the models that are developed for the original classification tasks are always seeking better performance (e.g., higher testing accuracy and lower training loss) in practice. However, the inconsistent trend revealed before provides us the possibility to unleash the potential detection power only considering the ID data in training. To this end, we have two important issues that need to address: (1) \textit{the well-trained model may have already memorized some atypical samples which cannot be figured out}; (2) \textit{how to forget those atypical samples considering the given model?} 

\paragraph{Atypical mining with constructed discrepancy.}  
As shown Figures~\ref{fig2:a} and~\ref{fig2:b}, the training statics provide limited information to accurately differentiate the stage that learns on typical or atypical data. We thus explore to construct the parameter discrepancy to mine the atypical samples from a well-trained given model in the light of the learning dynamics~\citep{goodfellow2016deep, arpit2017closer} of deep neural networks and the model uncertainty representation~\citep{gal2016dropout}. Specifically, we employ a randomly initialized layer-wise mask which applied to all layers. It is consistent with the mask generation in the conventional pruning pipeline~\citep{han2015deep}. In Figure~\ref{fig:method}, we provide empirical evidence to show that we can figure out atypical samples by a certain mask ratio $\delta$, through which we can gradually mine the model stage that misclassifies atypical samples. We provide more discussion about the underlying intuition of masking in Appendix~\ref{app:atypical_mining}.

\paragraph{Model forgetting with gradient ascent.}
As the training loss achieves zero at the final stage of the given model, we need extra optimization signals to forget those memorized atypical samples. Considering the previous consistent trend before the potential optimal stage (e.g., before Epoch 60 in Figure~\ref{fig1:a}), the optimization signal also needs to control the model update not to be too greedy to drop the discriminative features that can be utilized for OOD detection. Starting with the well-trained given model, we can employ the gradient ascent~\citep{sorg2010reward, ishida2020we} to forget the targeted samples, while the tuning phase should also prevent further updates if it achieves the expected stage. As for another implementation choice, e.g., retraining the model from scratch for our targets, we discuss it in Appendix~\ref{app:exp_less_epochs}.

\begin{figure*}[t!]
\begin{center}
    \hspace{-0.20in}
    \includegraphics[scale=0.15]{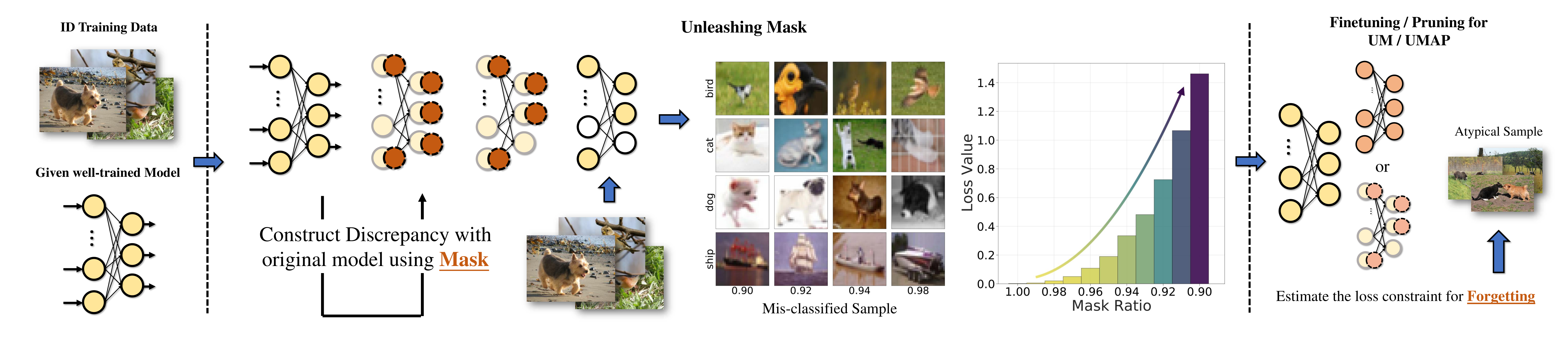}
\end{center}
\vspace{-3mm}
\caption{\textbf{Overview of Unleashing Mask:} Given a well-trained model, we initialize a mask for mining the atypical samples that are sensitive to the changes in model parameters. Then we finetune the original model or adopt pruning with the estimated forgetting threshold, i.e., the loss value estimated by the UM. The final model can serve as the base of those representative score functions to utilize the discriminative features and also as a new initialization of outlier exposure methods. In addition, we also present examples of misclassified samples in ID data after masking the original well-trained model, and loss value using the masked outputs w.r.t. different mask ratios.
}
\label{fig:method}
\vspace{-4mm}
\end{figure*}

\subsection{Method Realization}
\label{sec:method_part3}

Based on previous insights, we present our overall framework and the learning objective of the proposed UM and UMAP for OOD detection. Lastly, we discuss their compatibility with either the fundamental scoring functions or the outlier exposure approaches utilizing auxiliary outliers.

\paragraph{Framework.} As illustrated in Figure~\ref{fig:method}, our framework consists of two critical components for uncovering the intrinsic OOD detection capability: (1) the initialized mask with a specific masking rate for constructing the output discrepancy with the original model; (2) the subsequent adjustment for alleviating the memorization of atypical samples. The overall workflow starts with estimating the loss value of misclassifying those atypical samples and then conducts tuning on the model or the masked output to forget them.
\paragraph{Forgetting via Unleashing Mask (UM).} Based on previous insights, we introduce the forgetting objective as,
\begin{equation}
\label{eq:obj}
\begin{split}
\scriptsize
\vspace{-4mm}
    \min \mathcal{L}_{\text{UM}} 
    =  \min_{m_\delta\in[0,1]^n} |\ell_\text{CE}&(f) - \widehat{\ell}_\text{CE}(m_\delta\odot f^*)|\\
    &+\widehat{\ell}_\text{CE}(m_\delta\odot f^*),
\vspace{-4mm}
\end{split}
\end{equation}
where $m_\delta$ is the layer-wise mask with the masking rate $\delta$, $\ell_\text{CE}$ is the CE loss, $\widehat{\ell}_\text{CE}$ is the averaged CE loss over the ID training data, $|\cdot|$ indicates the computation for absolute value and $m_\delta \odot f^*$ denotes the masked output of the fixed pretrained model that is used to estimate the loss constraint for the learning objective of forgetting. The value of  $\widehat{\ell}_\text{CE}(\cdot)$ would be constant during the whole finetuning process. Concretely, the well-trained model will start to optimize itself again if it memorizes the atypical samples and achieves almost zero loss value. We provide a positive gradient signal when the current loss value is lower than the estimated one and vice versa. The model is expected to finally stabilize around the stage that can forget those atypical samples. To be more specific, for a mini-batch of ID samples, they are forwarded to the (pre-trained) model and the loss is automatically computed in Eq.~\eqref{eq:obj}. Based on our introduced layer-wise mask, the atypical samples would be easier to induce large loss values than the rest, and will be forced to be wrongly classified in the end-to-end optimization, in which atypical samples are forgotten without being identified.

\paragraph{Unleashing Mask Adopts Pruning (UMAP).}  Considering the potential negative effect on the original task performance when conducting tuning for forgetting, we further propose a variant of UM Adopts Pruning, i.e., UMAP, to conduct tuning based on the masked output (e.g., replace $\ell_\text{CE}(f)$ to $\ell_\text{CE}(\hat{m}_p \odot f)$ in Eq~\ref{eq:obj}) using a functionally different mask $\hat{m}_p$ with its pruning rate $p$ as follows,
\begin{equation}
\begin{split}
\scriptsize
\vspace{-4mm}
\min \mathcal{L}_{\text{UMAP}} 
=  \min_{\substack{\hat{m}_p\in[0,1]^n\\ m_\delta\in[0,1]^n}} |\ell_\text{CE}(&\hat{m}_p\odot f) - \widehat{\ell}_\text{CE}(m_\delta\odot f^*)|\\
&+\widehat{\ell}_\text{CE}(m_\delta\odot f^*),
\vspace{-4mm}
\end{split}
\end{equation}\label{eq:obj_umap}
Different from the objective of UM (i.e., Eq \ref{eq:obj}) that minimizes the loss value over the model parameter, the objective of UMAP minimizes the loss over the mask $\hat{m}_p$ to achieve the target of forgetting atypical samples. UMAP provides an extra mask to restore the detection capacity but doesn't affect the model parameter for the inference on original tasks, indicating that UMAP is a more practical choice in real-world applications (as empirically verified in our experiments like Table~\ref{tab:my_label}). We present the algorithms of UM (in Algorithm~\ref{alg:um}) and UMAP (in Algorithm~\ref{alg:umap}) in Appendix~\ref{app:algo_realization}.

\paragraph{Compatible with other methods.} As we explore the original OOD detection capability of the well-trained model, it is orthogonal and compatible with those promising methods that equip the given model with better detection ability. To be specific, through our proposed methods, we reveal the overlaid OOD detection capability by tuning the original model toward its intermediate training stage. 
The discriminative feature learned at that stage can be utilized by different scoring functions~\citep{huang2021importance,liu2020energy,sun2022dice}, like ODIN~\citep{LiangLS18} adopted in Figure~\ref{fig4:c}. For those methods~\citep{hendrycks2019using,liu2020energy, ming2022poem} utilizing the auxiliary outliers to regularize the model, our finetuned model obtained by UM and UMAP can also serve as their starting point or adjustment. As our method does not require any auxiliary outlier data to be involved in training, adjusting the model using ID data during its developing phase is practical.


\section{Experiments}
\label{sec:exp}

In this section, we present the performance comparison of the proposed method in the OOD detection scenario. Specifically, we verify the effectiveness of our UM and UMAP with two mainstreams of OOD detection approaches: (i) fundamental scoring function methods; (ii) outlier exposure methods involving auxiliary samples. To better understand our proposed method, we further conduct various explorations on the ablation study and provide the corresponding discussion on each sub-aspect considered in our work. More details and additional results are presented in Appendix~\ref{app:additional_exp_results}.




\begin{table*}[t!]
    \caption{Main Results ($\%$). Comparison with competitive OOD detection baselines. (averaged by multiple trials)}
    \centering
    \footnotesize
    \renewcommand\arraystretch{0.99}
    \resizebox{\textwidth}{!}{
    \begin{tabular}{c|l|ccccc}
        \toprule[1.5pt]
        $\mathcal{D}_\text{in}$ &  Method & AUROC$\uparrow$ & AUPR$\uparrow$ & FPR95$\downarrow$ & ID-ACC$\uparrow$ & w./w.o $\mathcal{D}_\text{aux}$ \\
        \midrule[0.6pt]
        \multirow{11}*{\textbf{CIFAR-10}}
         & MSP\citep{hendrycks17baseline} & $89.90\pm 0.30$ & $91.48\pm 0.43$ & $60.08\pm 0.76$ & $\textbf{94.01}\pm\textbf{0.08}$ & \\
         & ODIN\citep{LiangLS18} & $91.46\pm 0.56$ & $91.67\pm 0.58$ & $42.31\pm 1.38$ & $\textbf{94.01}\pm\textbf{0.08}$ & \\
         & Mahalanobis\citep{10.5555/3327757.3327819} & $75.10\pm 1.04$ & $72.32 \pm 1.92$ & $61.35\pm 1.25$ & $\textbf{94.01}\pm\textbf{0.08}$ & \\
         & Energy\citep{liu2020energy} & $92.07\pm 0.22$ & $92.72\pm 0.39$ & $42.69\pm 1.31$ & $\textbf{94.01}\pm\textbf{0.08}$ & \\
         & \textbf{Energy+UM} (ours) & $93.73\pm 0.36$ & $94.27\pm 0.60$ & $33.29\pm 1.70$ & $92.80\pm 0.47$ & \\
         & \textbf{Energy+UMAP} (ours) & $\textbf{93.97}\pm\textbf{0.11}$ & $\textbf{94.38}\pm\textbf{0.06}$ & $\textbf{30.71}\pm\textbf{1.94}$ & $\textbf{94.01}\pm\textbf{0.08}$ & \\
         \cmidrule{2-7}
         & OE\citep{hendrycks2018deep} & $97.07\pm 0.01$ & $97.31\pm 0.05$ & $13.80\pm 0.28$ & $92.59\pm 0.32$ & $\checkmark$\\
         & Energy (w. $\mathcal{D}_\text{aux}$)\citep{liu2020energy} & $94.58\pm 0.64$ & $94.69\pm 0.65$ & $18.79\pm 2.31$ & $80.91\pm 3.13$ & $\checkmark$\\
         & POEM\citep{ming2022poem}  & $94.37\pm 0.07$ & $94.51\pm 0.06$ & $18.50\pm 0.33$ & $77.24\pm 2.22$ & $\checkmark$\\
         & \textbf{OE+UM} (ours) & $\textbf{97.60}\pm \textbf{0.03}$ & $\textbf{97.87}\pm \textbf{0.02}$ & $\textbf{11.22}\pm \textbf{0.16}$ & $93.66\pm 0.12$ & $\checkmark$\\
         & \textbf{OE+UMAP} (ours) & $97.48\pm0.01$ & $97.74\pm0.00$ & $12.21\pm0.09$ & $\textbf{94.01}\pm\textbf{0.08}$ & $\checkmark$\\
        \midrule[0.6pt]
        \multirow{11}*{\textbf{CIFAR-100}}
         & MSP\citep{hendrycks17baseline} & $74.06\pm 0.69$ & $75.37\pm 0.73$ & $83.14\pm 0.87$ & $\textbf{74.86}\pm\textbf{0.21}$ & \\
         & ODIN\citep{LiangLS18} & $76.18\pm 0.14$ & $76.49\pm 0.20$ & $78.93\pm 0.31$ & $\textbf{74.86}\pm\textbf{0.21}$ & \\
         & Mahalanobis\citep{10.5555/3327757.3327819} & $63.90\pm 1.91$ & $64.31\pm 0.91$ & $78.79\pm 0.50$ & $\textbf{74.86}\pm\textbf{0.21}$ & \\
         & Energy\citep{liu2020energy} & $\textbf{76.29}\pm \textbf{0.24}$ & $\textbf{77.06}\pm \textbf{0.55}$ & $78.46\pm 0.06$ & $\textbf{74.86}\pm\textbf{0.21}$ & \\
         & \textbf{Energy+UM} (ours) & $76.22\pm 0.42$ & $76.39\pm 1.03$ & $74.05\pm 0.55$ & $64.55\pm 0.24$ & \\
         & \textbf{Energy+UMAP} (ours) & $75.57\pm0.59$ & $75.66\pm0.07$ & $\textbf{72.21}\pm\textbf{1.46}$ & $\textbf{74.86}\pm\textbf{0.21}$ & \\
         \cmidrule{2-7}
         & OE\citep{hendrycks2018deep} & $90.55\pm 0.87$ & $90.34\pm 0.94$ & $34.73\pm 3.85$ & $73.59\pm 0.30$ & $\checkmark$\\
         & Energy (w. $\mathcal{D}_\text{aux}$)\citep{liu2020energy} & $88.92\pm 0.57$ & $89.13\pm 0.56$ & $37.90\pm 2.59$ & $57.85\pm 2.65$ & $\checkmark$\\
         & POEM\citep{ming2022poem}  & $88.95\pm 0.54$ & $88.94\pm 0.31$ & $38.10\pm 1.30$ & $56.18\pm 1.92$ & $\checkmark$\\
         & \textbf{OE+UM} (ours) & $91.04\pm0.11$ & $\textbf{91.13}\pm\textbf{0.24}$ & $34.71\pm0.81$ & $\textbf{75.15}\pm\textbf{0.18}$ & $\checkmark$ \\
         & \textbf{OE+UMAP} (ours) & $\textbf{91.10}\pm\textbf{0.16}$ & $90.99\pm0.23$ & $\textbf{33.62}\pm\textbf{0.26}$ & $74.76\pm0.11$ & $\checkmark$\\
        \bottomrule[1.5pt]
    \end{tabular}}
    \label{tab:my_label}
    \vspace{-4mm}
\end{table*}

\begin{table*}[t!]
    \caption{Fine-grained Results ($\%$). Comparison on different OOD benchmark datasets. (averaged by multiple trials) }
    \centering
    \footnotesize
    \renewcommand\arraystretch{0.95}
    \resizebox{\textwidth}{!}{
    \begin{tabular}{c|l|cccccc}
        \toprule[1.5pt]
        \multirow{3}*{\textbf{ID dataset}} & \multirow{3}*{\textbf{Method}} & \multicolumn{6}{c}{\textbf{OOD dataset}} \\
        ~ & ~ & \multicolumn{2}{c}{\textbf{CIFAR-100}} & \multicolumn{2}{c}{\textbf{Textures}} & \multicolumn{2}{c}{\textbf{Places365}}  \\
        ~ & ~ & FPR95$\downarrow$ & AUROC$\uparrow$ & FPR95$\downarrow$ & AUROC$\uparrow$ & FPR95$\downarrow$ & AUROC$\uparrow$ \\
        \midrule[0.6pt]
        \multirow{12}*{\textbf{CIFAR-10}}
         & MSP & $66.43\pm1.25$ & $87.73\pm0.02$ & $65.20\pm1.33$ & $88.06\pm0.61$ & $61.34\pm0.60$ & $89.63\pm0.15$\\
         & ODIN & $55.31\pm0.85$ & $87.75\pm0.37$ & $53.11\pm4.84$ & $87.13\pm2.04$ & $43.77\pm0.20$ & $91.70\pm0.30$\\
         & Mahalanobis & $81.61\pm0.96$ & $64.52\pm0.73$ & $\textbf{20.04}\pm\textbf{1.43}$ & $\textbf{94.38}\pm\textbf{0.78}$ & $86.21\pm1.36$ & $64.00\pm1.21$\\
         & Energy & $54.65\pm1.24$ & $\textbf{89.01}\pm\textbf{1.18}$ & $57.09\pm3.52$ & $87.51\pm1.43$ & $38.62\pm1.64$ & $93.03\pm0.20$\\
         & \textbf{Energy+UM} (ours) & $\textbf{54.62}\pm\textbf{1.16}$ & $88.30\pm0.30$ & $41.61\pm3.67$ & $91.31\pm0.01$ & $\textbf{30.85}\pm\textbf{0.58}$ & $\textbf{94.27}\pm\textbf{0.16}$\\
         \cmidrule{2-8}
         ~ & \multirow{2}*{\textbf{Method}} &\multicolumn{2}{c}{\textbf{SUN}} &
        \multicolumn{2}{c}{\textbf{LSUN}} & \multicolumn{2}{c}{\textbf{iNaturalist}}\\
        ~ & ~ & FPR95$\downarrow$ & AUROC$\uparrow$ & FPR95$\downarrow$ & AUROC$\uparrow$ & FPR95$\downarrow$ & AUROC$\uparrow$ \\
        \cmidrule{2-8}
         ~& MSP & $60.27\pm0.66$ & $90.00\pm0.24$ & $36.43\pm1.94$ & $95.17\pm0.32$ & $67.53\pm1.64$ & $88.01\pm0.82$\\
         & ODIN & $41.14\pm1.29$ & $92.34\pm0.62$ & $5.16\pm0.76$ & $98.96\pm0.09$ & $54.41\pm0.91$ & $90.17\pm0.19$\\
         & Mahalanobis & $84.56\pm1.51$ & $66.41\pm4.57$ & $69.18\pm3.52$ & $66.41\pm4.57$ & $80.76\pm2.48$ & $71.77\pm1.12$\\
         & Energy & $36.73\pm1.72$ & $93.63\pm0.34$ & $6.25\pm0.43$ & $98.77\pm0.07$ & $59.11\pm1.18$ & $89.71\pm0.06$\\
         & \textbf{Energy+UM} (ours) & $\textbf{27.88}\pm\textbf{0.73}$ & $\textbf{94.83}\pm\textbf{0.11}$ & $\textbf{2.91}\pm\textbf{0.53}$ & $\textbf{99.22}\pm\textbf{0.11}$ & $\textbf{46.27}\pm\textbf{2.74}$ & $\textbf{92.75}\pm\textbf{0.80}$\\
         
        \bottomrule[1.5pt]
    \end{tabular}}
    \label{tab:my_label2}
\end{table*}

\subsection{Experimental Setups}
\label{sec:exp_part1}

\paragraph{Datasets.} Following the common benchmarks used in previous work~\citep{liu2020energy,ming2022poem}, we adopt \verb+CIFAR-10+, \verb+CIFAR-100+ \citep{krizhevsky2009learning_cifar10} as our major ID datasets, and we also adopt \texttt{ImageNet}~\citep{deng2009imagenet} for performance exploration. We use a series of different image datasets as the OOD datasets, e.g., \verb+Textures+ \citep{cimpoi2014describing}, \verb+Places365+ \citep{zhou2017places}, \verb+SUN+ \citep{5539970}, \verb+LSUN+ \citep{yu2015lsun}, \verb+iNaturalist+ \citep{van2018inaturalist} and \texttt{SVHN}~\citep{netzer2011reading_SVHN}. We also use the other ID dataset as OOD dataset when training on a specific ID dataset, given that none of them shares the same classes, e.g., we treat \verb+CIFAR-100+ as the OOD dataset when training on \verb+CIFAR-10+ for comparison. We utilize the ImageNet-1k~\citep{deng2009imagenet} training set as the auxiliary dataset for all of our experiments about fine-tuning with auxiliary outliers (e.g., OE/Energy/POEM), which is detailed in Appendix \ref{app:additional_exp_setup}. This choice follows previous literature \citep{hendrycks2018deep,liu2020energy,ming2022poem} that considers the dataset's availability and the absence of any overlap with the ID datasets.

\begin{figure*}[t!]
    \begin{center}
    
    \subfigure[Train v.s. finetune]{
    \includegraphics[scale=0.13]{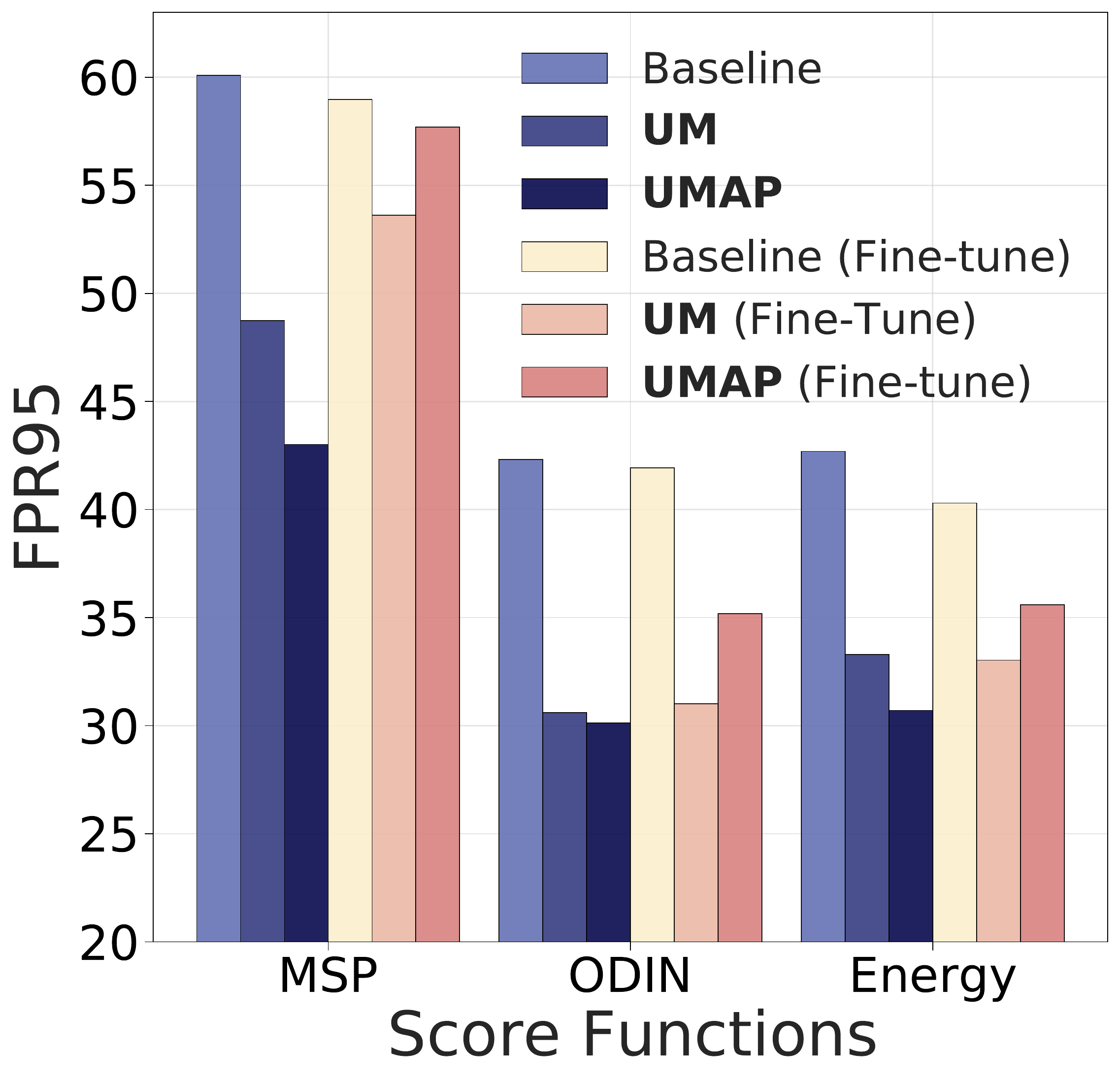}
    \label{fig4:a}
    }
    \subfigure[Compare with Overfitting]{
    \includegraphics[scale=0.13]{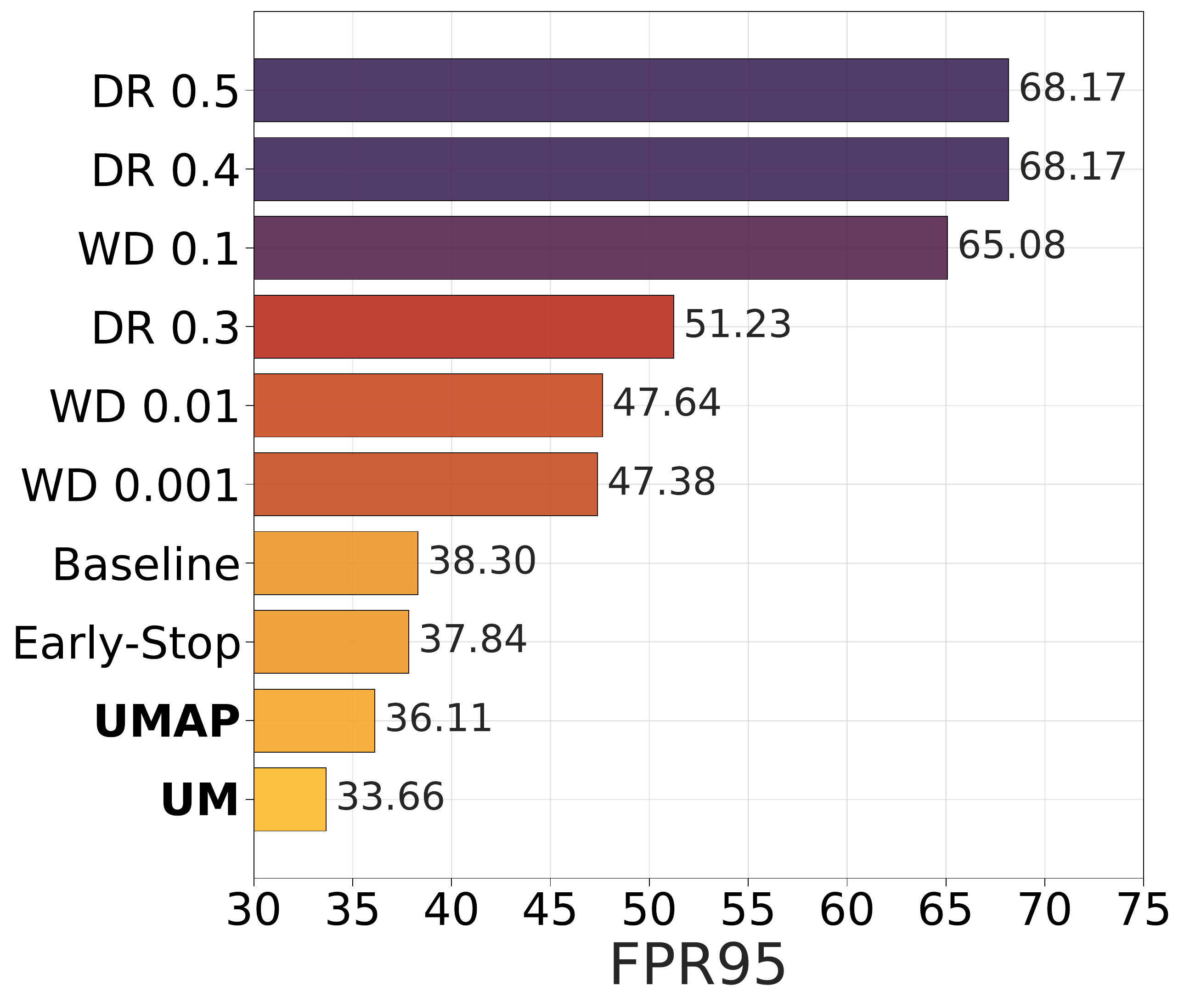}
    \label{fig4:b}
    }
    \subfigure[Diff. Score Functions]{
    \includegraphics[scale=0.13]{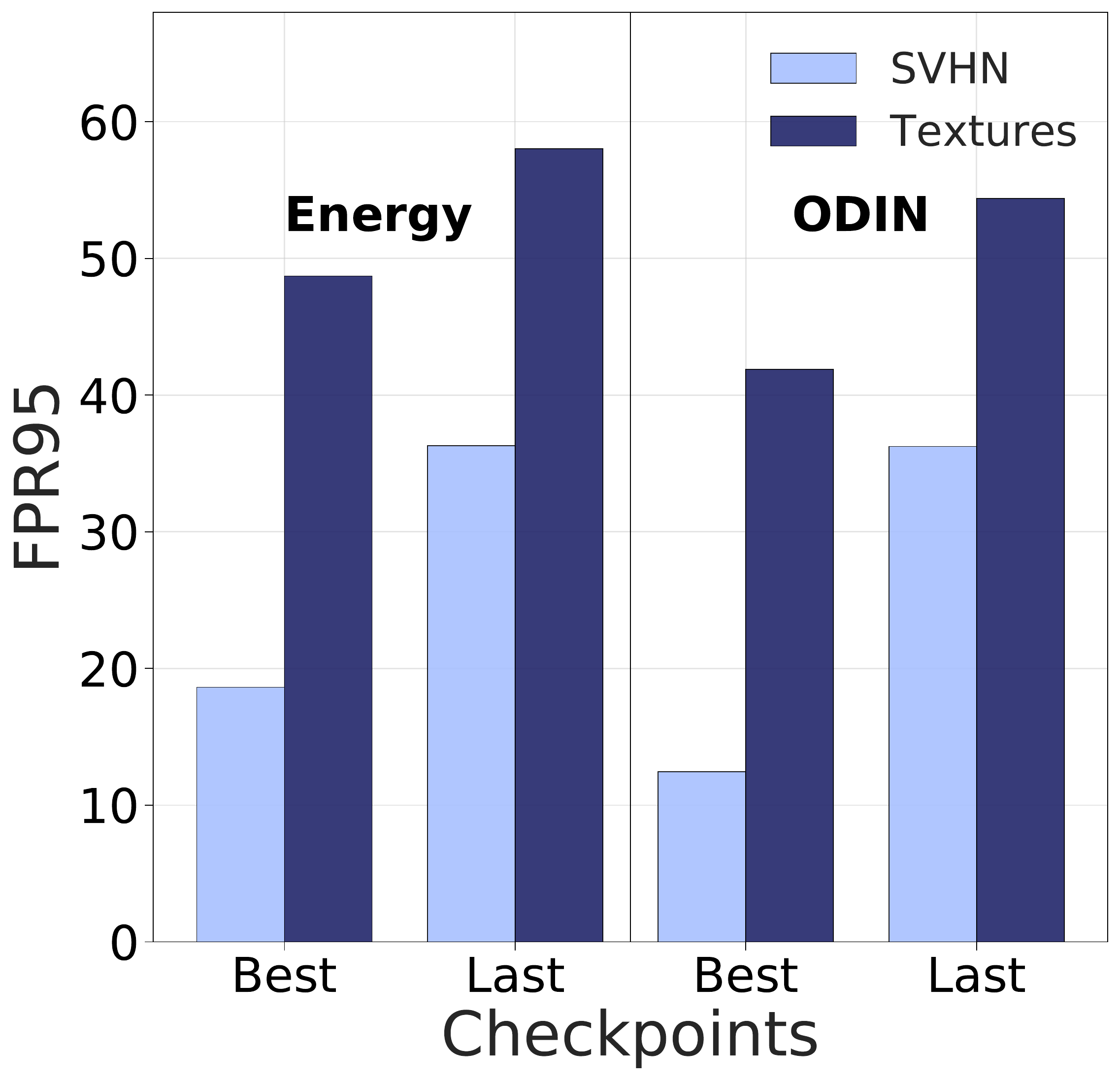}
    \label{fig4:c}
    }
    \subfigure[Diff. Masking Ratios]{
    \includegraphics[scale=0.13]{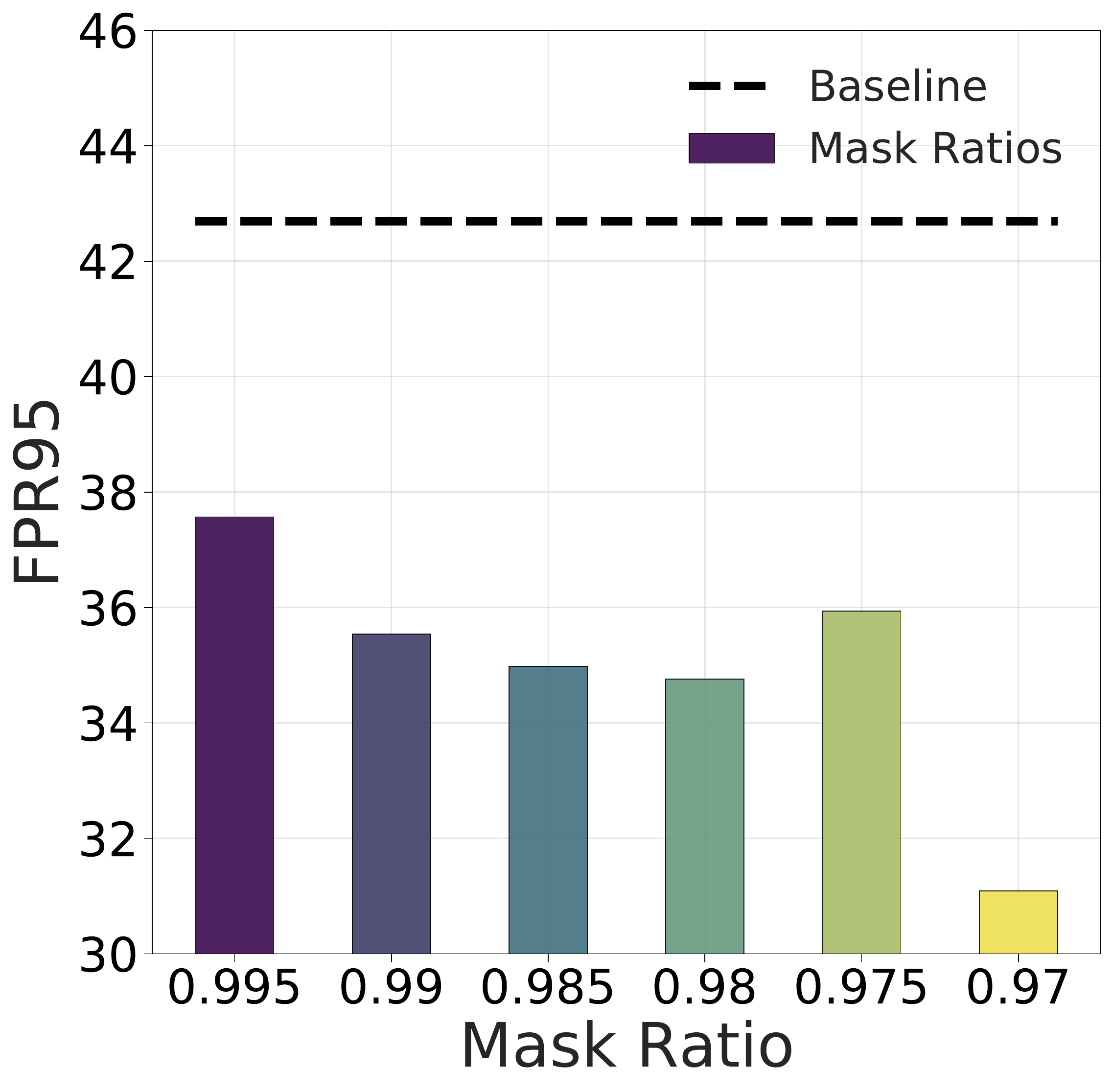}
    \label{fig4:d}
    }
    \subfigure[UMAP v.s. Prune]{
    \includegraphics[scale=0.13]{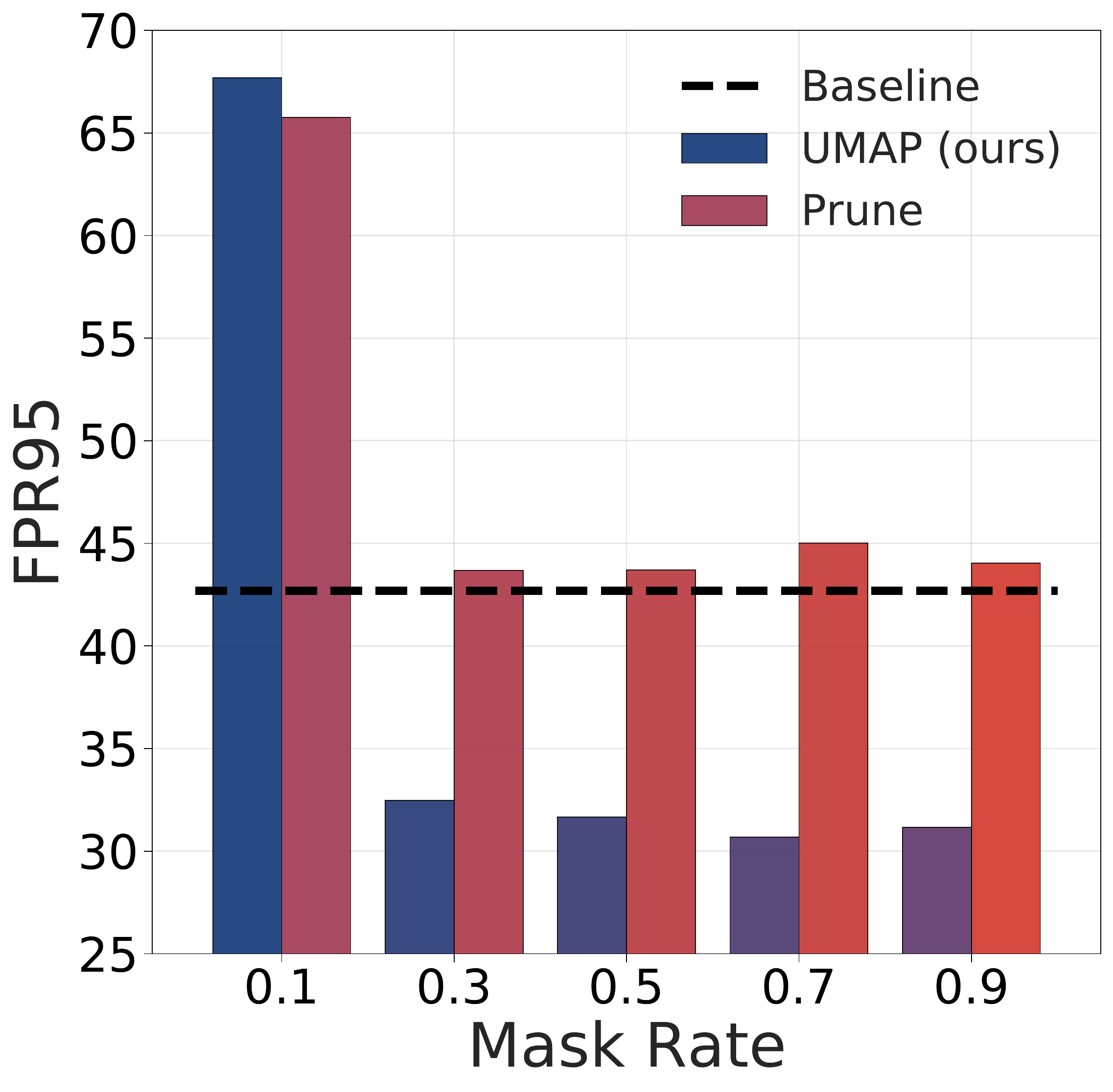}
    \label{fig4:e}
    }
    \end{center}
    \vspace{-4mm}
    \caption{\textbf{Ablation studies:} (a) efficiency of the finetuning adopted in UM and UMAP; (b) comparison of UM and UMAP with other strategies for alleviating the conventional overfitting; (c) the historical model stage using different scoring functions for OOD detection; (d) effects of using different masking ratios for atypical mining in UM; (e) comparison of using vanilla pruning with our proposed UMAP.
    }
\end{figure*}

\paragraph{Evaluation metrics.} We employ the following three common metrics to evaluate the performance of OOD detection: (i) Area Under the Receiver Operating Characteristic curve (AUROC) \citep{inproceedings} can be interpreted as the probability for a positive sample to have a higher discriminating score than a negative sample \citep{fawcett2006introduction}; (ii) Area Under the Precision-Recall curve (AUPR) \citep{manning99foundations} is an ideal metric to adjust the extreme difference between positive and negative base rates; (iii) False Positive Rate (FPR) at $95\%$ True Positive Rate (TPR) \citep{LiangLS18} indicates the probability for a negative sample to be misclassified as positive when the true positive rate is at $95$\%. We also include in-distribution testing accuracy (ID-ACC) to reflect the preservation level of the performance for the original classification task on ID data.

\paragraph{OOD detection baselines.} We compare the proposed method with several competitive baselines in the two directions. Specifically, we adopt Maximum Softmax Probability (MSP) \citep{hendrycks17baseline}, ODIN \citep{LiangLS18}, Mahalanobis score \citep{10.5555/3327757.3327819}, and Energy score \citep{liu2020energy} as scoring function baselines; We adopt OE \citep{hendrycks2018deep}, Energy-bounded learning \citep{liu2020energy}, and POEM \citep{ming2022poem} as baselines with outliers. For all scoring function methods, we assume the accessibility of well-trained models. For all methods involving outliers, we constrain all major experiments to a finetuning scenario, which is more practical in real cases. Different from training a dual-task model at the very beginning, equipping deployed models with OOD detection ability is a much more common circumstance, considering the millions of existing deep learning systems. We leave more implementation details in Appendix~\ref{app:baseline_info}.

\subsection{Performance Comparison}
\label{sec:exp_part2}

In this part, we present the performance comparison with some representative baseline methods to demonstrate the effectiveness of our UM and UMAP. 
In each category of Table~\ref{tab:my_label}, we choose one with the best detection performance to adopt UM or UMAP and check the three evaluation metrics of OOD detection and the ID-ACC. 


In Table~\ref{tab:my_label}, we summarize the results using different methods. For the scoring-based methods, our UM can further improve the overall detection performance by alleviating the memorization of atypical ID data, when the ID-ACC keeps comparable with the baseline. For the complex CIFAR-100 dataset, our UMAP can be adopted as a practical way to empower the detection performance and simultaneously avoid severely affecting the original performance on ID data. As for those methods of the second category (i.e., involving auxiliary outlier $\mathcal{D}_\text{aux}$ sampled from ImageNet), since we consider a practical workflow, i.e., fine-tuning, on the given model, OE achieves the best performance on the task. Due to the special optimization characteristic, Energy (w. $\mathcal{D}_\text{aux}$) and POEM focus more on the energy loss on differentiating OOD data while performing not well on the preservation of ID-ACC. Without sacrificing much performance on ID data, OE with our UM can still achieve better detection performance. In Table~\ref{tab:my_label2}, the fine-grained detection performance on each OOD testing set demonstrates the general effectiveness of UM and UMAP. Note that we may observe Mahalanobis can sometimes achieve the best performance on the specific OOD test set (e.g., Textures). It is probably because Mahalanobis is prone to overfitting on texture features during fine-tuning with Textures. In contrast, according to Table \ref{tab:my_label2}, Mahalanobis achieves the worst results on the other five datasets. We leave more results (e.g., completed comparison in Table~\ref{tab:my_label_complete}; more fine-grained results in Tables~\ref{tab:my_label3} and~\ref{tab:my_label4}; using another model structure in Tables~\ref{tab:label_wrn},~\ref{tab:label_wrn_zoom_in_cifar10} and~\ref{tab:label_wrn_zoom_in_cifar100}) to Appendix, which has verified the significant improvement (up to $18\%$ reduced on averaged FPR95) across various setups and also on a large-scale ID dataset (i.e., ImageNet~\citep{deng2009imagenet} in Table~\ref{tab:my_imagenet}).

\begin{figure*}[t!]
    \begin{center}
    \vspace{2mm}
    \includegraphics[scale=0.11]{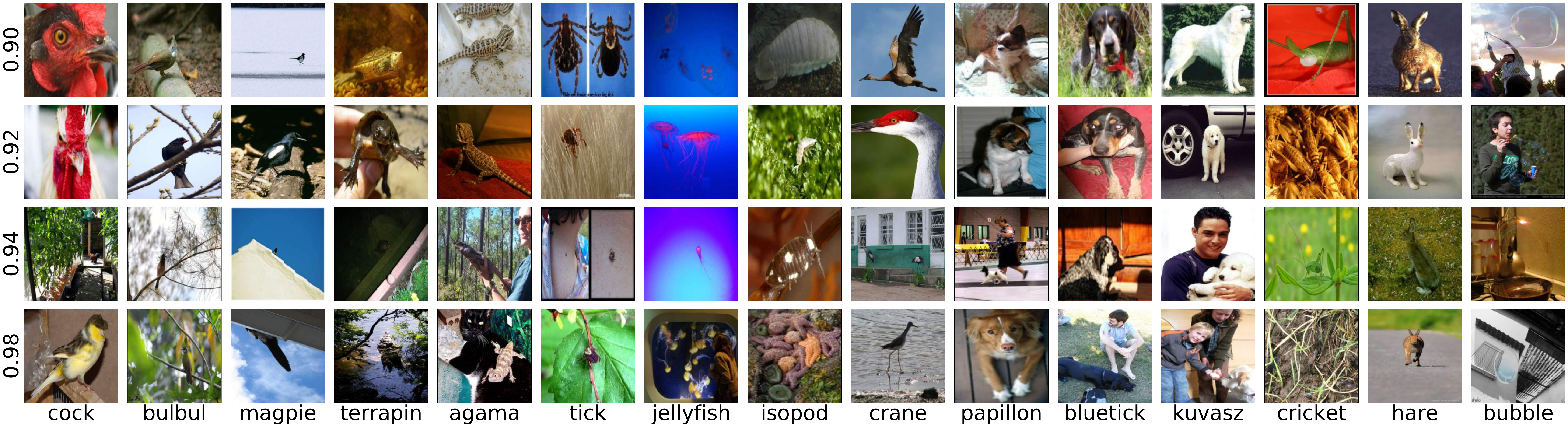}
    \end{center}
    \vspace{-2mm}
    \caption{\textbf{Sample Visualizations:} examples of the misclassified samples after adopting masking on the original well-trained model using the ImageNet dataset. At the left of each line, we indicate the mask ratio that is adopted on the model. We can find that masking with a smaller ratio forces the model to misclassify simple samples (clear contours around subjects, single color background) while masking with a larger ratio guides the model to misclassify complex samples (unclear contours, noisy background). This inspection empirically verifies our intuition using the proper mask ratio to identify those atypical samples and then force the model to forget them.
    }
    \label{fig:sample_vis}
\end{figure*}


\subsection{Ablation and Further Analysis}
\label{sec:exp_part3}

In this part, we conduct further explorations and analysis to provide a thorough understanding of our UM and UMAP. Moreover, we also provide additional experimental results about further explorations on OOD detection in Appendix~\ref{app:additional_exp_results}.


\paragraph{Practicality of the considered setting and the implementation choice.} Following the previous work~\citep{liu2020energy,hendrycks2018deep}, we consider the same setting that starts from a given well-trained model in major explorations, which is practical but can be extended to another implementation choice, i.e., retraining the whole model. In Figure~\ref{fig4:a}, we show the effectiveness of UM/UMAP under different choices. It is worth noting that UM adopting finetuning has shown the advantages of being cost-effective on convergence compared with train-from-scratch, which we leave more discussion and comparison in Appendix~\ref{app:exp_less_epochs}. 


\paragraph{Specificity and applicability of excavated OOD discriminative capability.} As mentioned before, the intrinsic OOD discriminative capability is distinguishable from conventional overfitting. We empirically compare UM/UMAP with dropout (DR), weight decay (WD), and early stop in Figure~\ref{fig4:b}. UM gain lower FPR95 from the newly designed objective for forgetting. In Figure~\ref{fig4:c}, we present the applicability of the OOD detection capability using different score functions, which implies the generated model stage better meets the requirement of uncertainty estimation.



\paragraph{Effects of the mask on mining atypical samples.} In Figure~\ref{fig4:d}, we compare UM with different mask ratios for mining the atypical samples, which seeks to find the intermediate model stage that wrongly classified the atypical samples. The results show reasonably small ratios (e.g., from $0.995$ to $0.97$) that we knocked off in the original model can help us to achieve the targets. More detailed analysis of the mask ratio and the discussion about the underlying intuition of atypical mining are provided in Appendixes~\ref{app:eff_um} and~\ref{app:atypical_mining}.




\paragraph{Exploration on UMAP and vanilla model pruning.} Although the large constraint on training loss can help reveal the OOD detection performance, the ID-ACC may be undermined under such circumstances. To mitigate this issue, we further adopt pruning in UMAP to learn a mask instead of tuning the model parameters directly. In Figure~\ref{fig4:e}, we explore various prune rates $p$ and demonstrate their effectiveness. Specifically, our UMAP can achieve a lower FPR95 than vanilla pruning with the original objective.
The prune rate can be selected from a wide range (e.g., $p \in [0.3, 0.9]$) to guarantee a fast convergence and effectiveness. We also provide additional discussion on UMAP in Appendix~\ref{app:comp_prune_umap}.

\paragraph{Sample visualization of the atypical samples identified by our mask.} 
In Figure~\ref{fig:sample_vis}, we visualize the misclassified samples using the ImageNet~\citep{deng2009imagenet} dataset with the pre-trained model by adopting different mask ratios. We can find that masking the model constructs the parameter discrepancy, which helps us to identify some ID samples with atypical semantic information (e.g., those samples in the bottom line compared with the above in each class). It demonstrates the rationality of our intuition to adopt masking. We leave more visualization results in Appendix~\ref{app:atypical_mining}.

\begin{figure}[t!]
    \begin{center}
    \includegraphics[scale=0.095]{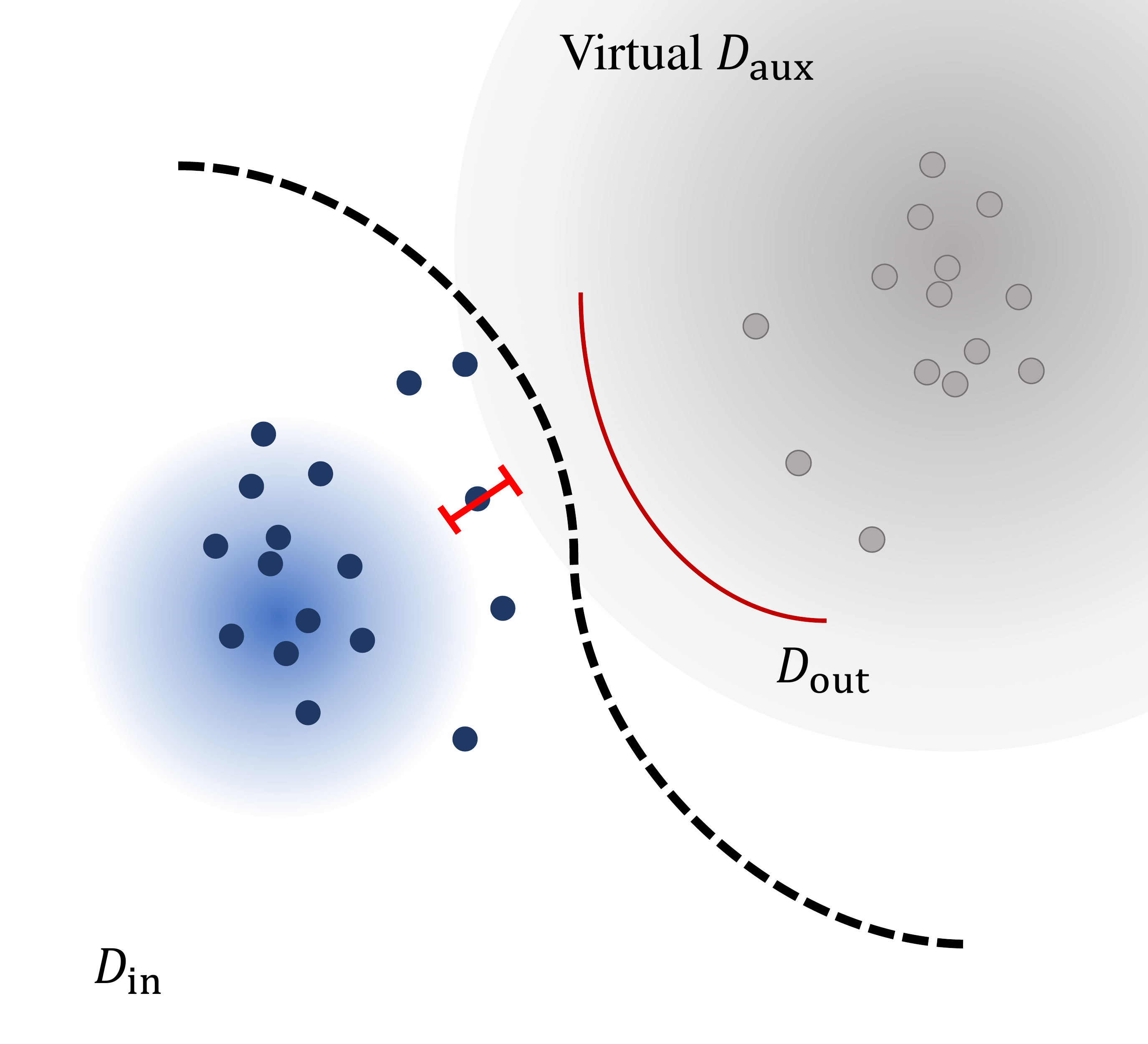}
    \includegraphics[scale=0.095]{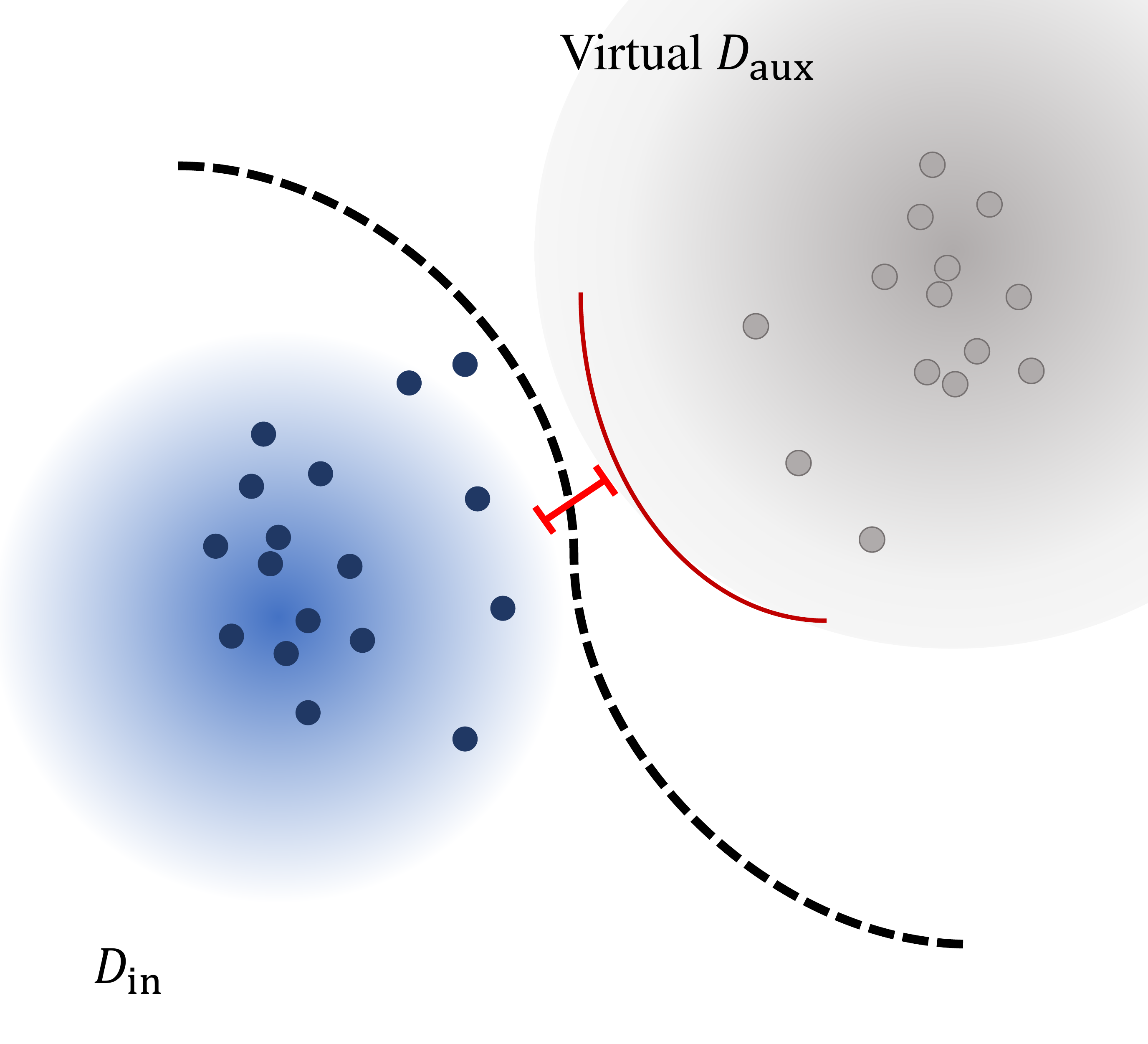}
    \includegraphics[scale=0.095]{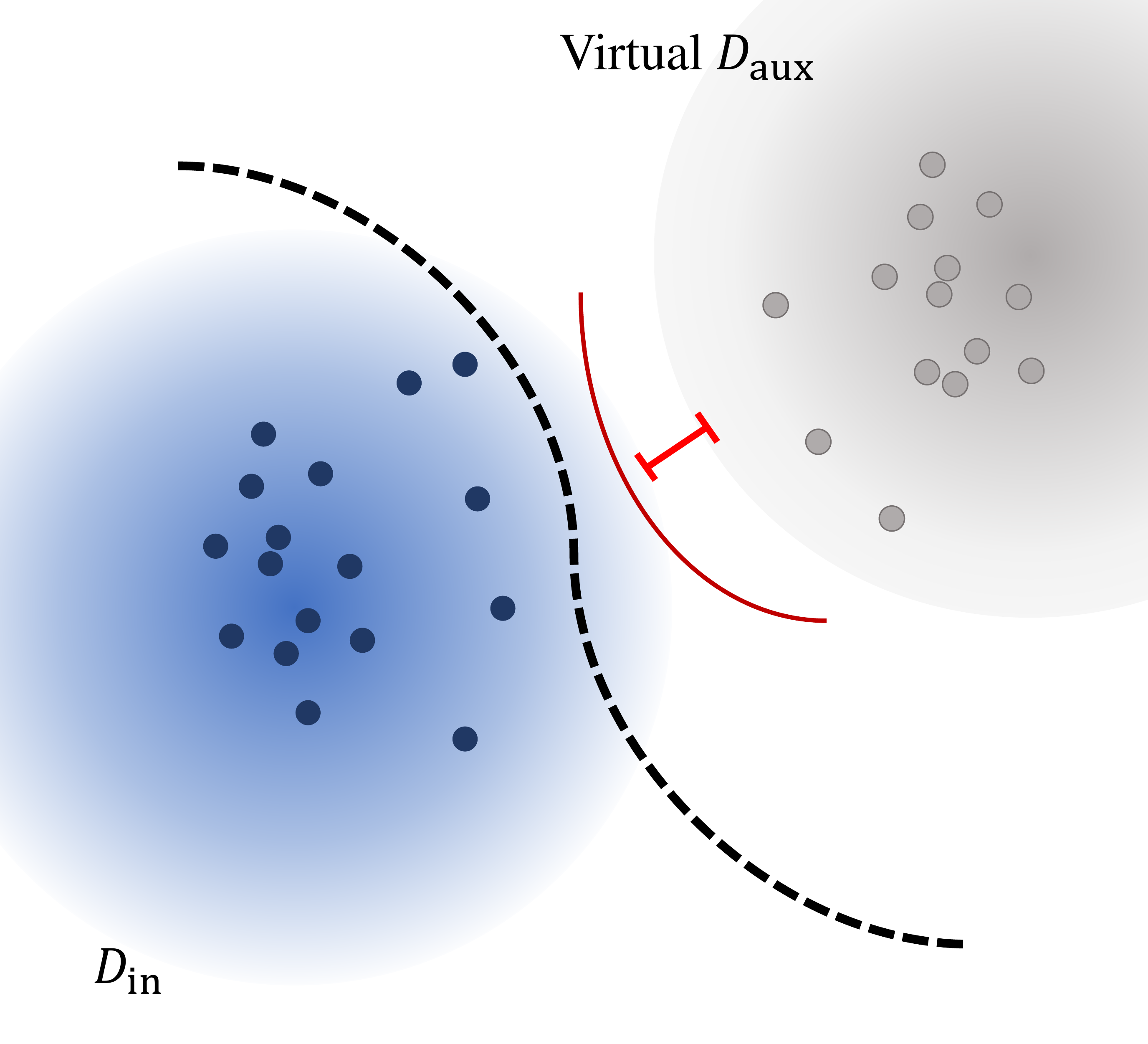}
    \includegraphics[scale=0.101]{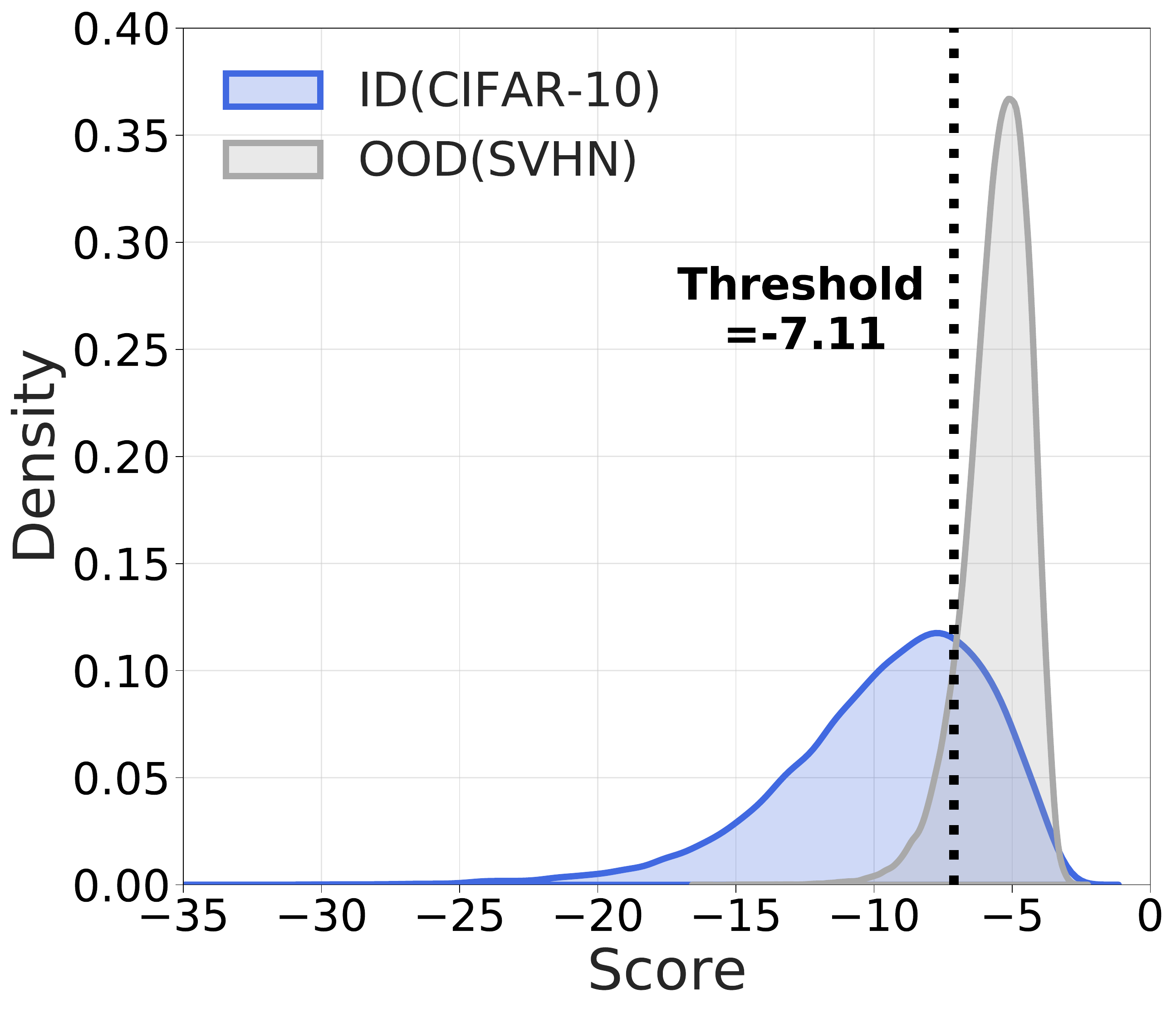}
    \includegraphics[scale=0.101]{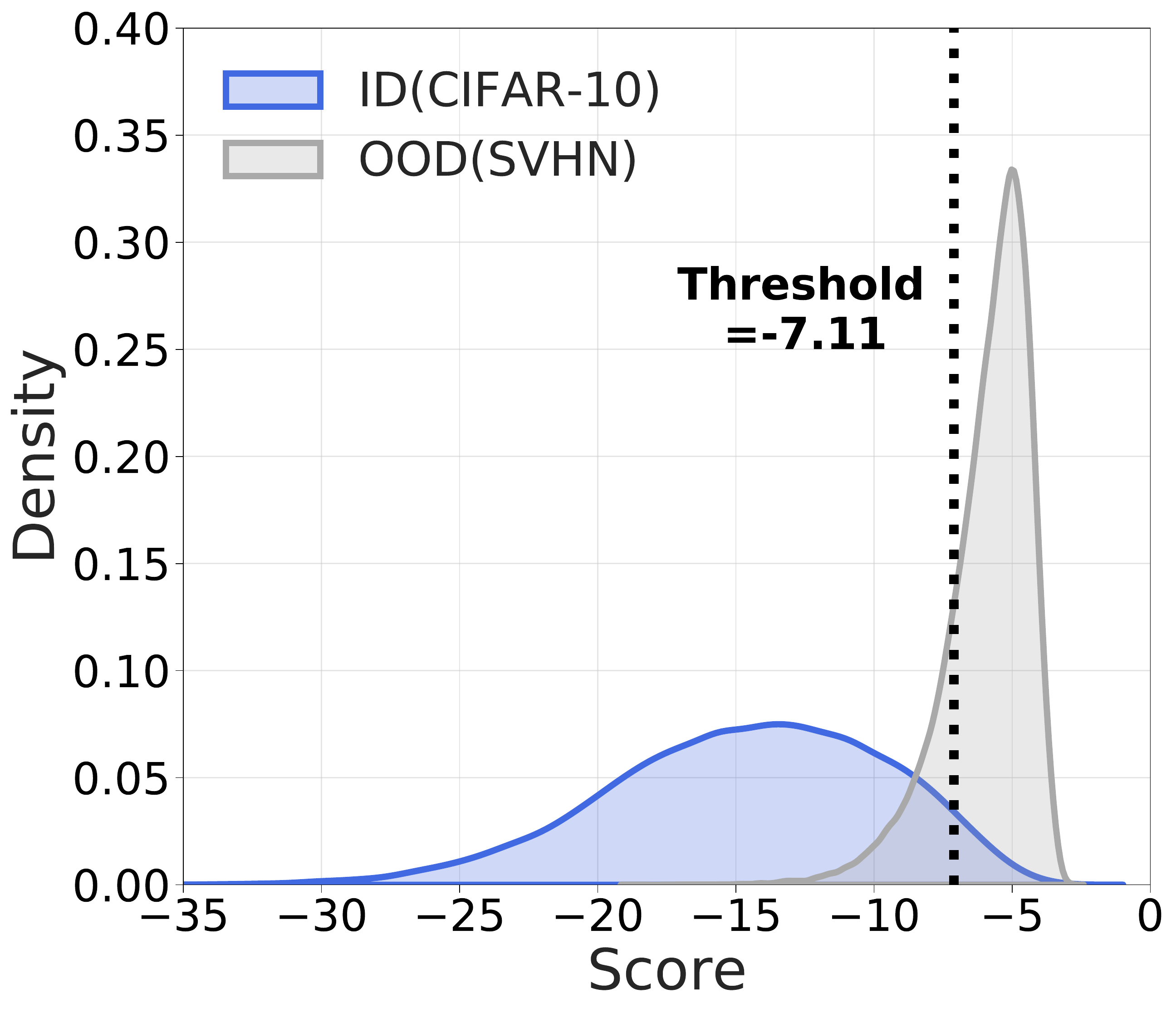}
    \includegraphics[scale=0.101]{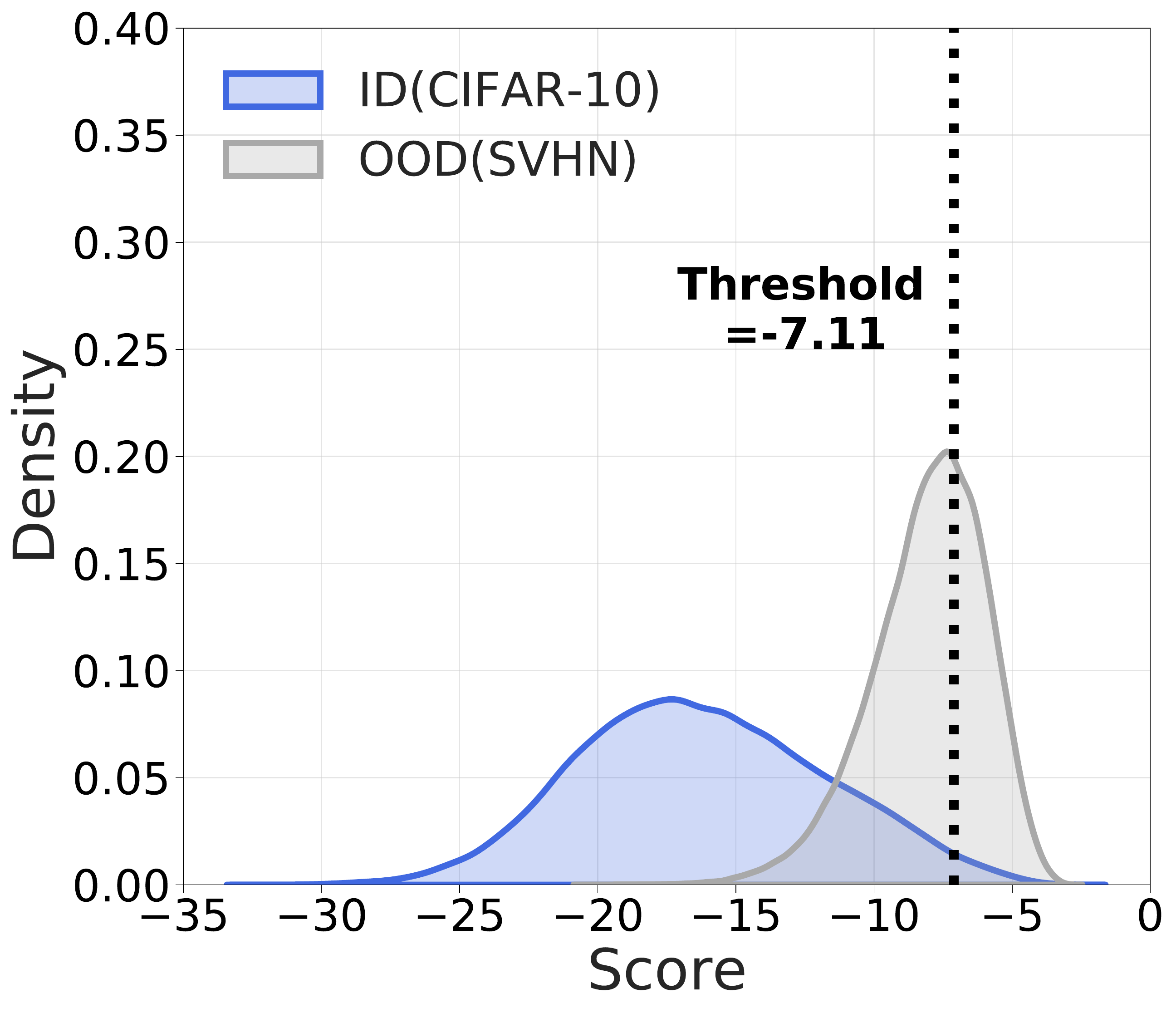}
    \end{center}
    \vspace{-4mm}
    \caption{Illustration about the framework for theoretical anslysis. From left to right: the model under-represent on $\mathcal{D}_\text{in}$ with the lower confidence on the atypical samples close to the boundary; the model reaches a near-optimal representation status on $\mathcal{D}_\text{in}$; the model over-represent on $\mathcal{D}_\text{in}$. See Figure~\ref{fig:theo} for more explanations.
    }
    \label{fig5:theo_ill}
    \vspace{-3mm}
\end{figure}

\paragraph{Theoretical insights on ID data property.} Similar to prior works~\citep{LeeLLS18, SehwagCM21}, here we present the major results based on the sample complexity analysis adopted in POEM~\citep{ming2022poem}. Due to the limited space,  please refer to Appendix~\ref{app:theo} for the completed analysis and Figure~\ref{fig:theo} for more conceptual understanding.

\begin{theorem}
    Given a simple Gaussian mixture model in binary classification with the hypothesis class $\mathcal{H}={\text{sign}(\theta^Tx), \theta\in \mathbb{R}^d}$. There exists constant $\alpha$, $\delta^*$ and $\epsilon$ that,
    \begin{equation}
        \frac{\mu^T\theta^*_{n_1,n_2}}{\sigma||\theta^*_{n_1,n_2}||} \geq \frac{||\mu||^2-\sigma^{\frac{1}{2}}||\mu||^{\frac{3}{2}}-\frac{\sigma^2(|\alpha-\delta^*|+\epsilon)}{2}}{2\sqrt{\frac{\sigma^2}{n}(d+\frac{1}{\sigma}) + ||\mu||^2}}
    \end{equation}
\end{theorem}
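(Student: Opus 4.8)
The plan is to follow the sample-complexity framework of POEM~\citep{ming2022poem}, specialized to the binary isotropic Gaussian mixture in which the two class-conditional laws are $\mathcal{N}(\mu,\sigma^2 I)$ and $\mathcal{N}(-\mu,\sigma^2 I)$, observed through $n_1$ and $n_2$ samples respectively (with $n=n_1+n_2$). Because the covariance is isotropic, the maximum-likelihood linear rule within $\mathcal{H}$ points along the difference of class means, so up to a positive scale the estimator is $\theta^*_{n_1,n_2}=\hat\mu_1-\hat\mu_2$ with $\hat\mu_i$ the empirical class means. The target quantity $\frac{\mu^T\theta}{\sigma\|\theta\|}$ is exactly the argument of the Gaussian tail that governs the misclassification probability of $\text{sign}(\theta^Tx)$, so a high-probability lower bound on it is equivalent to an accuracy guarantee; this is the quantity I would bound.

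First I would write $\hat\mu_i=(\pm\mu)+\frac{\sigma}{\sqrt{n_i}}g_i$ with $g_i$ standard Gaussian in $\mathbb{R}^d$, giving $\theta^*_{n_1,n_2}=2\mu+\sigma\big(\tfrac{1}{\sqrt{n_1}}g_1-\tfrac{1}{\sqrt{n_2}}g_2\big)$. Expanding the numerator $\mu^T\theta^*_{n_1,n_2}$ produces a deterministic signal term proportional to $\|\mu\|^2$ plus a mean-zero sub-Gaussian fluctuation of scale $O(\sigma\|\mu\|/\sqrt n)$; a standard sub-Gaussian tail bound controls the fluctuation, and after trading off the deviation magnitude against its probability one recovers the $\sigma^{1/2}\|\mu\|^{3/2}$ correction appearing in the numerator of the bound. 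The class-imbalance contribution is isolated here: writing the effective per-class weighting through a balance parameter $\alpha$ measured against the target level $\delta^*$, the residual mean shift in the numerator is proportional to $|\alpha-\delta^*|$, which I would carry as a separate additive perturbation scaled by $\sigma^2$.

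Next I would bound the denominator via $\|\theta^*_{n_1,n_2}\|^2=4\|\mu\|^2+4\sigma\,\mu^T\big(\tfrac{1}{\sqrt{n_1}}g_1-\tfrac{1}{\sqrt{n_2}}g_2\big)+\sigma^2\|\tfrac{1}{\sqrt{n_1}}g_1-\tfrac{1}{\sqrt{n_2}}g_2\|^2$. The quadratic term concentrates around $\sigma^2(\tfrac{1}{n_1}+\tfrac{1}{n_2})d=O(\tfrac{\sigma^2 d}{n})$ by a $\chi^2$ (Laurent--Massart) bound, which is the source of the $\frac{\sigma^2}{n}(d+\frac{1}{\sigma})$ inside the square root; the cross term is again sub-Gaussian and is absorbed into the slack $\epsilon$. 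Combining the lower bound on $\mu^T\theta^*_{n_1,n_2}$ with the upper bound on $\|\theta^*_{n_1,n_2}\|$ and dividing by $\sigma$ gives the stated inequality, with $\alpha$, $\delta^*$, $\epsilon$ collecting the deviation levels of the underlying concentration events.

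The main obstacle will be controlling numerator and denominator simultaneously, since both are functions of the same noise $(g_1,g_2)$ and are therefore correlated, so a naive union bound over independent events is unavailable. I would resolve this by conditioning on a single high-probability ``good'' event on which all relevant linear and quadratic forms in $(g_1,g_2)$ lie within their typical deviations at once, and then deriving the ratio bound deterministically on that event. The most delicate bookkeeping is obtaining the precise exponents in the correction terms---in particular why the numerator slack scales as $\sigma^{1/2}\|\mu\|^{3/2}$ rather than the crude $O(\sigma\|\mu\|/\sqrt n)$---which comes from optimizing the deviation-level/probability trade-off and matching it against the $\frac{\sigma^2 d}{n}$ scale in the denominator.
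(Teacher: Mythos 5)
Your proposal reproduces the generic skeleton of the paper's argument---a mean-difference classifier, a decomposition into a signal term plus noise deviations, $\chi^2$-type and Gaussian-tail concentration, and a single simultaneous good event on which the ratio is bounded deterministically---and your account of the origin of the $\frac{\sigma^2}{n}(d+\frac{1}{\sigma})$ term matches the paper. But there is a genuine gap in how you generate the term $\frac{\sigma^2(|\alpha-\delta^*|+\epsilon)}{2}$. In the paper this term comes neither from class imbalance nor from concentration slack: the outlier-class samples are \emph{not} drawn from an exact Gaussian $\mathcal{N}(-\mu,\sigma^2\mathcal{I})$. They are drawn from a \emph{virtual} auxiliary distribution (Assumption~\ref{assump:daux}) whose samples are constrained, through the boundary scores $-|f_\text{outlier}(x)|$ and the mask ratio (Assumption~\ref{assump:masking}), to satisfy $\sum_{i=1}^n|2x_i^T\mu|\leq n\sigma^2(|\alpha-\delta^*|+\epsilon)$ (Lemma~\ref{app: lemma1}, proved by computing $f_\text{outlier}$ via Bayes' rule under the GMM). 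Consequently the outlier deviation $\theta_2=-\frac{1}{n_2}\sum_i x_i^2-\mu$ is treated as a truncated, non-Gaussian quantity: its norm is controlled by a modified concentration bound carrying an extra constant $a$, while its projection is bounded \emph{deterministically} by $|\mu^T\theta_2|\leq\|\mu\|^2+\frac{\sigma^2(|\alpha-\delta^*|+\epsilon)}{2}$. It is exactly this bound that produces the $|\alpha-\delta^*|$ term, the loss of half of $\|\mu\|^2$ in the numerator, and the factor $2$ in the denominator of the stated inequality; here $\delta^*$ is tied to the masking rate and $|\alpha-\delta^*|$ quantifies how far the virtual outliers sit from the decision boundary, not how unbalanced $n_1$ and $n_2$ are.

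In your version both classes are exact Gaussians, $\theta^*_{n_1,n_2}=2\mu+\sigma(\tfrac{1}{\sqrt{n_1}}g_1-\tfrac{1}{\sqrt{n_2}}g_2)$, so the numerator concentrates near $2\|\mu\|^2$ with mean-zero fluctuations of order $\sigma\|\mu\|/\sqrt{n}$; no optimization of deviation levels can manufacture an additive term of size $\frac{\sigma^2|\alpha-\delta^*|}{2}$ tied to a masking rate, because nothing in your model depends on the mask. Reinterpreting $|\alpha-\delta^*|$ as a class-balance mismatch therefore proves a different (and under your assumptions, strictly stronger) statement than the theorem intends: the constants $\alpha$, $\delta^*$, $\epsilon$ are pinned down by Assumption~\ref{assump:masking}, not free parameters collecting "deviation levels of the underlying concentration events." Two minor points: the $\sigma^{1/2}\|\mu\|^{3/2}$ term in the paper is simply the chosen threshold in the Gaussian tail bound $\mathbb{P}(\frac{|\mu^T\theta_1|}{\|\mu\|}\geq(\sigma\|\mu\|)^{\frac{1}{2}})\leq 2e^{-\frac{n_1\|\mu\|}{2\sigma}}$, so your description of it is compatible; and your worry that correlation between numerator and denominator blocks a union bound is unfounded---a union bound over dependent events is valid, and this is exactly how the paper builds its simultaneous good event.
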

Since $\text{FPR}(\theta^*_{n_1,n_2})=erf(\frac{\mu^T\theta^*_{n_1,n_2}}{\sigma||\theta^*_{n_1,n_2}||})$ is monotonically decreasing, as the lower bound of $\frac{\mu^T\theta^*_{n_1,n_2}}{\sigma||\theta^*_{n_1,n_2}||}$ will increase with the constraint of $|\alpha-\delta^*|$ (which corresponds to the illustrated distance from the outlier boundary in right-most of Figure~\ref{fig5:theo_ill}) decrease in our methods, the upper bound of $\text{FPR}(\theta^*_{n_1,n_2})$ will decrease. One insight is learning more atypical ID samples needs more high-quality auxiliary outliers (near ID data) to shape the OOD detection capability.

\paragraph{Additional experimental results of explorations.} 
Except for the major performance comparisons and the previous ablations, we also provide further discussion and analysis from different views in Appendix~\ref{app:additional_exp_results}, including the practicality of the considered setting, the effects of the mask on mining atypical samples, discussion of UMAP with vanilla pruning, additional comparisons with more advanced methods and completed results of our proposed UM and UMAP.

\section{Related Work}

\textbf{OOD Detection without auxiliary data.} \citet{hendrycks17baseline} formally shed light on out-of-distribution detection, proposing to use softmax prediction probability as a baseline which is demonstrated to be unsuitable for OOD detection~\citep{hendrycks2018deep}. Subsequent works~\citep{sun2021react} keep focusing on designing post-hoc metrics to distinguish ID samples from OOD samples, among which ODIN \citep{LiangLS18} introduces small perturbations into input images to facilitate the separation of softmax score, Mahalanobis distance-based confidence score \citep{10.5555/3327757.3327819} exploits the feature space by obtaining conditional Gaussian distributions, energy-based score \citep{liu2020energy} aligns better with the probability density. Besides directly designing new score functions, many other works pay attention to various aspects to enhance the OOD detection such that LogitNorm \citep{wei2022logitnorm} produces confidence scores by training with a constant vector norm on the logits, and DICE \citep{sun2022dice} reduces the variance of the output distribution by leveraging the model sparsification.

\textbf{OOD Detection with auxiliary data.} Another promising direction toward OOD detection involves the auxiliary outliers for model regularization. On the one hand, some works generate virtual outliers such that \citet{LeeLLS18} uses generative adversarial networks to generate boundary samples, VOS \citep{du2022vos} regularizes the decision boundary by adaptively sampling virtual outliers from the low-likelihood region. On the other hand, other works tend to exploit information from natural outliers, such that outlier exposure is introduced by \citet{hendrycks2018deep}, given that diverse data are available in enormous quantities. \citep{DBLP:conf/iccv/YuA19} train an additional "head" and maximizes the discrepancy of decision boundaries of the two heads to detect OOD samples. Energy-bounded learning \citep{liu2020energy} fine-tunes the neural network to widen the energy gap by adding an energy loss term to the objective. Some other works also highlight the sampling strategy, such that ATOM \citep{chen2021atom} greedily utilizes informative auxiliary data to tighten the decision boundary for OOD detection, and POEM \citep{ming2022poem} adopts Thompson sampling to contour the decision boundary precisely. The performance of training with outliers is usually superior to that without outliers, shown in many other works~\citep{liu2020energy, fort2021exploring, sun2021react, SehwagCM21, chen2021atom,salehi2021unified, wei2022logitnorm}.

\section{Conclusion}

In this work, we explore the intrinsic OOD discriminative capability of a well-trained model from a unique data-level attribution. Without involving any auxiliary outliers in training, we reveal the inconsistent trend between minimizing original training loss and gaining OOD detection capability. We further identify the potential attribution to be the memorization on atypical samples. To excavate the overlaid capability, we propose the novel Unleashing Mask (UM) and its practical variant UMAP. Through this, we construct model-level discrepancy that figures out the memorized atypical samples and utilizes the constrained gradient ascent to encourage forgetting. It better utilizes the well-trained given model via backtracking or sub-structure pruning. We hope our work could provide new insights for revisiting the model development in OOD detection, and draw more attention toward the data-level attribution. Future work can be extended to a more systematical ID/OOD data investigation with other topics like data pruning or few-shot finetuning.

\section*{Acknowledgements}

JNZ and BH were supported by NSFC Young Scientists Fund No. 62006202, Guangdong Basic and Applied Basic Research Foundation No. 2022A1515011652, CAAI-Huawei MindSpore Open Fund, and HKBU CSD Departmental Incentive Grant. JCY was supported by the National Key R\&D Program of China (No. 2022ZD0160703),  STCSM (No. 22511106101, No. 22511105700, No. 21DZ1100100), 111 plan (No. BP0719010). JLX was supported by RGC grants 12202221 and C2004-21GF.

\clearpage

\bibliography{main}
\bibliographystyle{icml2023}

\newpage
\appendix
\onecolumn

\section*{Appendix}

\section*{Reproducibility Statement}

We provide the link of our source codes to ensure the reproducibility of our experimental results: \url{https://github.com/tmlr-group/Unleashing-Mask}. Below we summarize critical aspects to facilitate reproducible results:

\begin{itemize}
    \item \textbf{Datasets.}  The datasets we used are all publicly accessible, which is introduced in Section~\ref{sec:exp_part1}. For methods involving auxiliary outliers, we strictly follow previous works \citep{sun2021react, du2022vos} to avoid overlap between the auxiliary dataset (ImageNet-1k)~\citep{deng2009imagenet} and any other OOD datasets.
    \item \textbf{Assumption.} We set our experiments to a post-hoc scenario~\citep{liu2020energy} where a well-trained model is available, and some parts of training samples are also available for subsequent fine-tuning~\citep{hendrycks2018deep}.
    \item \textbf{Environment.} All experiments are conducted with multiple runs on NVIDIA Tesla V100-SXM2-32GB GPUs with Python 3.6 and PyTorch 1.8.
\end{itemize}

\section{Details about Considered Baselines and Metrics}
\label{app:baseline_info}

In this section, we provide the details about the baselines for the scoring functions and fine-tuning with auxiliary outliers, as well as the corresponding hyper-parameters and other related metrics that are considered in our work.
\paragraph{Maximum Softmax Probability (MSP).} \citep{hendrycks17baseline} proposes to use maximum softmax probability to discriminate ID and OOD samples. The score is defined as follows,
\begin{equation}
    S_\text{MSP}(x; f) = \max_cP(y = c | x; f) = \max~\texttt{softmax}(f(x))
\end{equation}
where $f$ represents the given well-trained model and $c$ is one of the classes $\mathcal{Y}=\{1,\ldots, C\}$. The larger softmax score indicates the larger probability for a sample to be ID data, reflecting the model's confidence on the sample. 

\paragraph{ODIN.} \citep{LiangLS18} designed the ODIN score, leveraging the temperature scaling and tiny perturbations to widen the gap between the distributions of ID and OOD samples. The ODIN score is defined as follows,
\begin{equation}
    S_\text{ODIN}(x; f) = \max_cP(y = c | \tilde{x}; f) = \max~\texttt{softmax}(\frac{f(\tilde{x})}{T})
\end{equation}
where $\tilde{x}$ represents the perturbed samples (controled by $\epsilon$), $T$ represents the temperature. For fair comparison, we adopt the suggested hyperparameters \citep{LiangLS18}: $\epsilon = 1.4\times 10^{-3}$, $T = 1.0 \times 10^4$.

\paragraph{Mahalanobis.} \citep{10.5555/3327757.3327819} introduces a Mahalanobis distance-based confidence score, exploiting the feature space of the neural networks by inspecting the class conditional Gaussian distributions. The Mahalanobis distance score is defined as follows,
\begin{equation}
    S_\text{Mahalanobis}(x; f) = \max \limits_{c} - (f(x) - \hat{\mu}_c)^T \hat{\Sigma}^{-1}(f(x) - \hat{\mu}_c)
\end{equation}
where $\hat{\mu}_c$ represents the estimated mean of multivariate Gaussian distribution of class $c$, $\hat{\Sigma}$ represents the estimated tied covariance of the $C$ class-conditional Gaussian distributions.

\paragraph{Energy.} \citep{liu2020energy} proposes to use the Energy of the predicted logits to distinguish the ID and OOD samples. The Energy score is defined as follows,
\begin{equation}
    S_\text{Energy}(x; f) = -T  \log \sum \limits_{c = 1}^C e^{f(x)_c / T}
\end{equation}
where $T$ represents the temperature parameter. As theoretically illustrated in \citet{liu2020energy}, a lower Energy score indicates a higher probability for a sample to be ID. Following \citep{liu2020energy}, we fix the $T$ to $1.0$ throughout all experiments.

\paragraph{Outlier Exposure (OE).} \citep{hendrycks2018deep} initiates a promising approach towards OOD detections by involving outliers to force apart the distributions of ID and OOD samples. In the experiments, we use the cross-entropy from $f(x_{\text{out}})$to the uniform distribution as the $\mathcal{L}_{\text{OE}}$ \citep{LeeLLS18},
\begin{equation}
\label{eq:oe_app}
    \mathcal{L}_f = \mathbb{E}_{\mathcal{D}_\text{in}}\left[\ell_\text{CE}(f(x),y)\right] + \lambda\mathbb{E}_{\mathcal{D}^\text{s}_\text{out}}\left[\log \sum \limits_{c = 1}^C e^{f(x)_c} - \mathbb{E}_{\mathcal{D}^\text{s}_\text{out}}(f(x))\right]
\end{equation}  

\paragraph{Energy (w. $\mathcal{D}_{\text{aux}}$).} In addition to using the Energy as a post-hoc score to distinguish ID and OOD samples, \citep{liu2020energy} proposes an Energy-bounded objective to further separate the two distributions. The OE objective is as follows,
\begin{equation}
\label{eq:energy_aux}
    \mathcal{L}_{\text{OE}} = \mathbb{E}_{\mathcal{D}^\text{s}_\text{in}}(\max(0, S_\text{Energy}(x,f) - m_\text{in}))^2 + \mathbb{E}_{\mathcal{D}^\text{s}_\text{out}}(\max(0, m_\text{out} - S_\text{Energy}(x,f)))^2
\end{equation}
We keep the thresholds same to \citep{liu2020energy}: $m_\text{in} = -25.0$, $m_\text{out} = -7.0$.

\paragraph{POEM.} \citep{ming2022poem} explores the Thompson sampling strategy \citep{thompson} to make the most use of outliers to learn a tight decision boundary. Though given the POEM's nature to be orthogonal to other OE methods, we use the Energy(w. $\mathcal{D}_{\text{aux}}$) as the backbone, which is the same as Eq.(~\ref{eq:energy_aux}) in \citet{liu2020energy}. The details of Thompson sampling can refer to~\citet{ming2022poem}.

\paragraph{FPR and TPR.} Suppose we have a binary classification task (to predict an image to be an ID or OOD sample in this paper). There are two possible outputs: a positive result (the model predicts an image to be an ID sample); a negative result (the model predicts an image to be an OOD sample). Since we have two possible labels and two possible outputs, we can form a confusion matrix with all possible outputs as follows,

\begin{table}[h!]
    \caption{Confusion Matrix.}
    \vspace{3mm}
    \centering
    \footnotesize
    \begin{tabular}{c|c|c}
         \toprule[0.6pt]
         & Truth: ID & Truth: OOD\\
        \midrule[0.6pt]
         Predict: ID & True Positive (TP) & False Positive (FP)\\
        \midrule[0.6pt]
         Predict: OOD & False Negative (FN) & True Negative (TN)\\
         \bottomrule[0.6pt]
    \end{tabular}
    \label{tab:confusion}
\end{table}

The false positive rate (FPR) is calculated as:
\begin{equation}
\label{eq:fpr}
    \text{FPR} = \frac{FP}{FP +TN}
\end{equation}
The true positive rate (TPR) is calculated as:
\begin{equation}
\label{eq:tpr}
    \text{TPR} = \frac{TP}{TP + FN}
\end{equation}

\paragraph{Margin value.} Let $f(x): \mathbb{R}^d \rightarrow \mathbb{R}^k$ be a model that outputs $k$ logits, following previous works~\citep{koltchinskii2002empirical,cao2019learning}, the margin value of an example (x,y) used in our Figure~\ref{fig2:c} is defined as,
\begin{equation}
\label{eq:margin}
    S_\text{margin}(x,y) = f(x)_y-\max_{j\neq y}f(x)_j
\end{equation}

\section{Theoretical Insights on ID Data Property}
\label{app:theo}

In this section, we provide a detailed discussion and theoretical analysis to explain the revealed observation and the benefits of our proposed method on ID data property. Specifically, we present the analysis based on the view of sample complexity adopted in POEM~\citep{ming2022poem}. To better demonstrate the conceptual extension, we also provide an intuitive illustration based on a comparison with POEM's previous focus on auxiliary outlier sampling in Figure~\ref{fig:theo} (extended version of Figure~\ref{fig5:theo_ill}). Briefly, we focus on the ID data property which is not discussed in the previous analytical framework.

\paragraph{Preliminary setup and notations.} As the original training task (e.g., the multi-classification task on CIFAR-10) does not involve any outliers data, it is hard to analyze the related property with OOD detection. Here we introduce an Assumption~\ref{assump:daux} about virtual $\mathcal{D}_\text{aux}$ to help complete the analytical framework. 
To sum up, we consider a binary classification task here for distinguishing ID and OOD data. Following the prior works~\citep{LeeLLS18,SehwagCM21,ming2022poem}, we assume the extracted feature approximately follows a Gaussian mixture model (GMM) with the equal class priors as $\frac{1}{2}\mathcal{N}(\mu,\sigma^2\mathcal{I})+\frac{1}{2}\mathcal{N}(-\mu,\sigma^2\mathcal{I})$. To be specific,  $\mathcal{D}_\text{in}=\mathcal{N}(\mu, \sigma^2\mathcal{I})$ and $\mathcal{D}_\text{aux}=\mathcal{N}(-\mu, \sigma^2\mathcal{I})$. Considering the hypothesis class as $\mathcal{H}={\text{sign}(\theta^Tx), \theta\in\mathbb{R}^d}$. The classifier outputs 1 if $x\sim\mathcal{D}_\text{in}$ and outputs -1 if $x\sim\mathcal{D}_\text{aux}$.


\begin{figure*}[t!]
    \begin{center}
    \subfigure[$\mathcal{D}_\text{in}$ as an Anchor]{
    \includegraphics[scale=0.135]{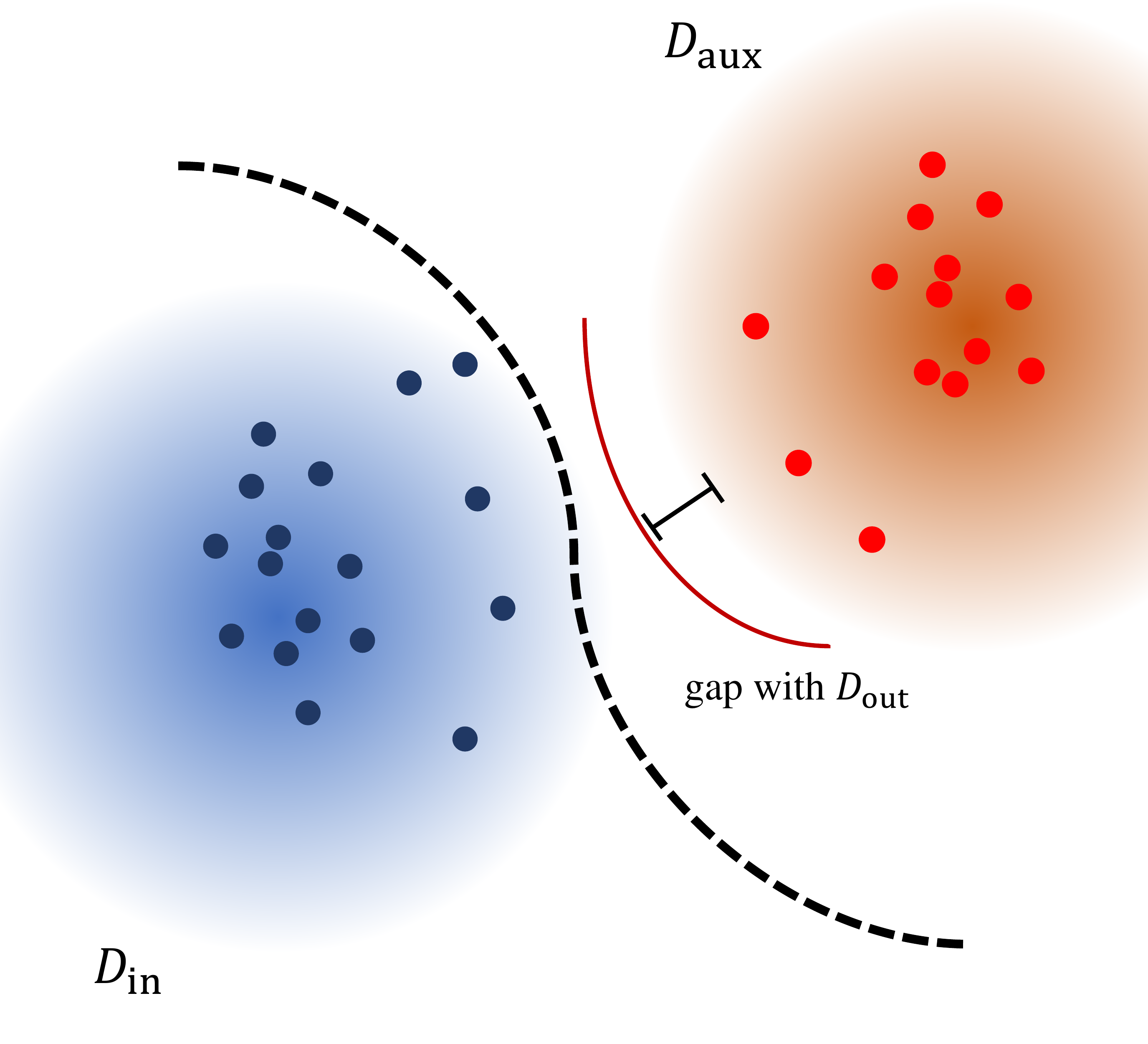}
    \label{fig6:poem_theo}
    }
    \subfigure[Under-represent on $\mathcal{D}_\text{in}$]{
    \includegraphics[scale=0.135]{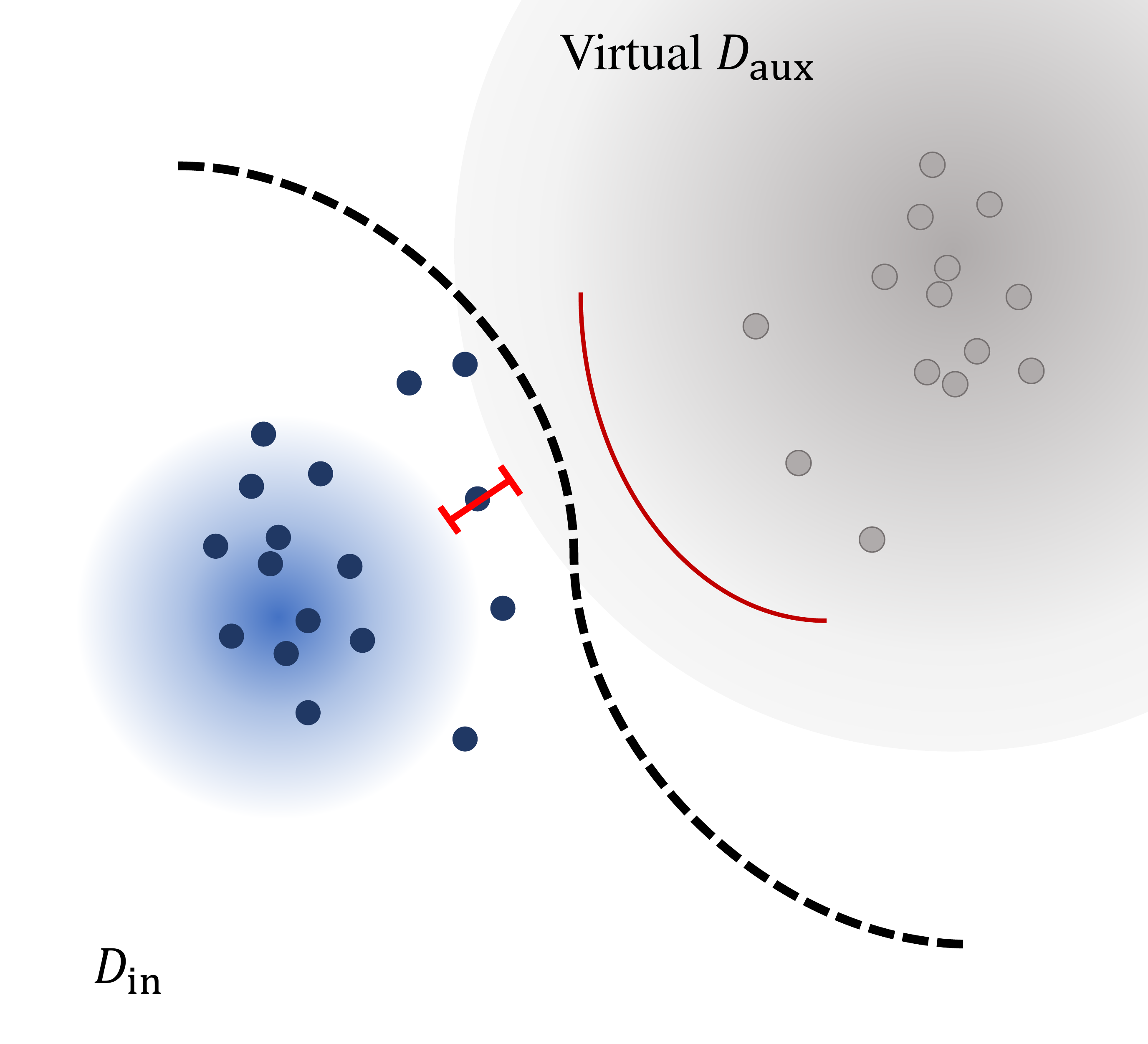}
    \label{fig6:ours_theo1}
    }
    \subfigure[Optimal-represent on $\mathcal{D}_\text{in}$]{
    \includegraphics[scale=0.135]{daux_as_anchor2.pdf}
    \label{fig6:ours_theo2}
    }
    \subfigure[Over-represent on $\mathcal{D}_\text{in}$]{
    \includegraphics[scale=0.135]{daux_as_anchor3.pdf}
    \label{fig6:ours_theo3}
    }
    \end{center}
    \caption{Illustration about the theoretical insights on ID data property considering the binary classification scenario, which is presented as a conceptual comparison based on the underlying intuition in POEM~\citep{ming2022poem}. Different from treating the ID distribution $\mathcal{D}_\text{in}$ as an analytical anchor in POEM, we present three conceptual visualizations which correspond to different training phases in model development on ID data. (a) Using $\mathcal{D}_\text{in}$ as an anchor, the boundary data (defined in \citet{ming2022poem}) sampled from $\mathcal{D}_\text{aux}$ is important to mitigate the distribution gap with the true $\mathcal{D}_\text{out}$ (as indicated with the red arc).  Without the $\mathcal{D}_\text{aux}$, we assume a virtual $\mathcal{D}_\text{aux}$ exists for the analytical target, which is highly related to the $\mathcal{D}_\text{in}$ and the model in the original classification task on $\mathcal{D}_\text{in}$. (b) the model under-represent on $\mathcal{D}_\text{in}$ with the lower confidence on the atypical samples close to the boundary; (c) the model reaches a near-optimal representation status on $\mathcal{D}_\text{in}$; (d) the model over-represent on $\mathcal{D}_\text{in}$. The corresponding OOD discriminative capability is affected by the different scenarios.
    }
    \label{fig:theo}
    \vspace{-2mm}
\end{figure*}

First, we introduce the assumption about virtual $\mathcal{D}_\text{aux}$. Considering the representation power of deep neural networks, the assumption can be valid. It is empirically supported by the evidence in Figure~\ref{fig5:theo_ill}, as the part of real $\mathcal{D}_\text{out}$ can be viewed as the virtual $\mathcal{D}_\text{aux}$. Second, to better link our method for the analysis, we introduce another assumption (i.e., Assumption~\ref{assump:masking}) about the ID training status. It can be verified by the relative degree of distinguishability indicated by a fixed threshold in Figure~\ref{fig5:theo_ill}, that the model is more confident on the $\mathcal{D}_\text{out}$ along with the training.

\begin{assumption}[Virtual $\mathcal{D}_\text{aux}$]
Given the well-trained model in the original classification task on the ID distribution $\mathcal{D}_\text{in}$, and considering the binary classification for OOD detection, we can assume the existence of a virtual $\mathcal{D}_\text{aux}$, that the OOD discriminative capacity of the current model can result from learning on the virtual $\mathcal{D}_\text{aux}$ with the outlier exposure manner.
\label{assump:daux}
\end{assumption}

\begin{assumption}[ID Training Status w.r.t. Masking]
Considering the model training phase in the original multi-class classification task on the ID distribution $\mathcal{D}_\text{in}$, and tuning with a specific mask ratio serving as the sample selection, we assume that the data points $x\sim$ virtual $\mathcal{D}_\text{aux}$ satisfy the extended constraint based on the boundary scores $-|f_\text{outlier}(x)|$ defined in POEM~\cite{ming2022poem}: $\sum^{n}_{i=1}f_\text{outlier}\leq (|\alpha-\delta^*|+\epsilon)n$, where the $f_\text{outlier}$ is a function parameterized by some unknown ground truth weights and maps the high-dimensional input x into a scalar. Generally, it represents the discrepancy between virtual $\mathcal{D}_\text{aux}$ and the true $\mathcal{D}_\text{out}$, indicated with the constraint $|\alpha-\delta^*|$ results from the masked ID data.
\label{assump:masking}
\end{assumption}

Given the above, we can naturally get the following extended lemma based on that adopted in POEM~\citep{ming2022poem}.

\begin{lemma}[Constraint of Varied Virtual $\mathcal{D}_\text{aux}$]
\label{app: lemma1}
Assume the data points $x$$\sim$ virtual $\mathcal{D}_\text{aux}$ satisfy the following constraint for resulting in the following varied boundary margin: $\sum_{i=1}^n|2x_i^T\mu|\leq n \sigma^2 (|\alpha-\delta^*|+\epsilon)$. 
\end{lemma}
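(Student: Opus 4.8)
The plan is to read Lemma~\ref{app: lemma1} as a concrete instantiation of Assumption~\ref{assump:masking}: the abstract boundary constraint $\sum_{i=1}^n|f_\text{outlier}(x_i)|\le(|\alpha-\delta^*|+\epsilon)n$ (which is how I interpret the summation, consistent with the boundary score $-|f_\text{outlier}(x)|$ borrowed from POEM) is to be made explicit by substituting the particular form that the boundary function $f_\text{outlier}$ takes in the Gaussian mixture model of the preliminary setup. So the first step I would take is to compute that form. Since $\mathcal{D}_\text{in}=\mathcal{N}(\mu,\sigma^2\mathcal{I})$ and the virtual $\mathcal{D}_\text{aux}=\mathcal{N}(-\mu,\sigma^2\mathcal{I})$ carry equal priors, the natural confidence/boundary signal separating the two components is their log-likelihood ratio, which I would evaluate directly.

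Carrying out that computation, the quadratic terms $x^Tx$ and $\mu^T\mu$ cancel, leaving
\begin{equation}
\log\frac{p_\text{in}(x)}{p_\text{aux}(x)}=\frac{1}{2\sigma^2}\left(\|x+\mu\|^2-\|x-\mu\|^2\right)=\frac{2x^T\mu}{\sigma^2}.
\end{equation}
This identifies the ground-truth boundary function as $f_\text{outlier}(x)=\tfrac{2x^T\mu}{\sigma^2}$, whose magnitude $|f_\text{outlier}(x)|=\tfrac{|2x^T\mu|}{\sigma^2}$ is exactly the distance-from-boundary signal underlying the POEM score $-|f_\text{outlier}(x)|$: a point near the decision hyperplane $\{x:x^T\mu=0\}$ has small $|2x^T\mu|$ and thus qualifies as a near-ID boundary outlier. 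Substituting this into the constraint of Assumption~\ref{assump:masking} gives $\sum_{i=1}^n|f_\text{outlier}(x_i)|=\tfrac{1}{\sigma^2}\sum_{i=1}^n|2x_i^T\mu|\le(|\alpha-\delta^*|+\epsilon)n$, and multiplying through by $\sigma^2$ yields $\sum_{i=1}^n|2x_i^T\mu|\le n\sigma^2(|\alpha-\delta^*|+\epsilon)$, which is the claimed varied boundary-margin constraint. Note that the $\sigma^2$ prefactor in the lemma arises precisely from the $1/\sigma^2$ scaling of the log-likelihood ratio.

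The main obstacle I expect is not the algebra, which is elementary, but justifying the identification in the second step: namely that, under Assumption~\ref{assump:daux}, POEM's abstract boundary function specializes to the linear discriminant $\tfrac{2x^T\mu}{\sigma^2}$ with exactly this constant, and that the masking-induced discrepancy $|\alpha-\delta^*|$ enters additively alongside the original tolerance $\epsilon$ of POEM's boundary condition rather than multiplicatively. I would argue the additive structure by appealing to the interpretation fixed in Assumption~\ref{assump:masking}, where $|\alpha-\delta^*|$ denotes the gap between the virtual $\mathcal{D}_\text{aux}$ and the true $\mathcal{D}_\text{out}$ induced by masking the ID data (illustrated in Figure~\ref{fig5:theo_ill}), so that the concrete bound inherits the same $|\alpha-\delta^*|+\epsilon$ form as the assumed abstract one. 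Everything downstream of this identification is a one-line rescaling.
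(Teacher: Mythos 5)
Your proposal is correct and takes essentially the same route as the paper's proof: the paper extracts $f_\text{outlier}$ by writing $\mathbb{P}(\text{outlier}|x)$ via Bayes' rule as a sigmoid of $\frac{1}{2\sigma^2}(d_\text{outlier}(x)-d_\text{in}(x))$, which is precisely your log-likelihood-ratio identification (the sign is immaterial under the absolute value), yielding $|f_\text{outlier}(x)|=\frac{2}{\sigma^2}|x^\top\mu|$. Substituting this into the constraint of Assumption~\ref{assump:masking} and rescaling by $\sigma^2$ then gives the lemma, exactly as in your final step.
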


\begin{proof}[proof of Lemma.~\ref{app: lemma1}]
Given the Gaussian mixture model described in the previous setup, we can obtain the following expression by Bayes' rule of $\mathbb{P}(\text{outlier}|x)$,
\begin{align}
    \mathbb{P}(\text{outlier}|x) = \frac{ \mathbb{P}(x|\text{outlier}) \mathbb{P}(\text{outlier})}{ \mathbb{P}(x)} = \frac{1}{1+e^{-\frac{1}{2\sigma^2}(d_\text{outlier}(x)-d_\text{in}(x))}},
\end{align}
where $d_\text{outlier}(x))=(x+\mu)^\top(x+\mu), d_\text{in}(x)=(x-\mu)^\top(x-\mu)$, and $\mathbb{P}(\text{outlier}|x)=\frac{1}{1+e^{-f_\text{outlier}(x)}}$ according to its definition. Then we have:
\begin{align}
    -f_\text{outlier} = -\frac{1}{2\sigma^2}&(d_\text{outlier}(x)-d_\text{in}(x)),\\
    -|f_\text{outlier}| = -\frac{1}{2\sigma^2}|(x-\mu)^\top(x-\mu)&-(x+\mu)^\top(x+\mu)|=-\frac{2}{\sigma^2}|x^\top \mu|.
\end{align}
Therefore, we can get the constraint as: $\sum_{i=1}^n|2x_i^T\mu|\leq n \sigma^2 (|\alpha-\delta^*|+\epsilon)$.
\end{proof}

With the previous assumption and lemma that incorporate our masking in the variable $\delta^*$, we present the analysis as below.

\paragraph{Complexity analysis anchored on $\mathcal{D}_\text{in}$.} With the above lemma and the assumptions of virtual $\mathcal{D}_\text{aux}$ (as illustrated in Figure~\ref{fig:theo}), we can derive the results to understand the benefits from the revealed observation and our UM and UMAP.

Consider the given classifier defined as $\theta^*_{n_1,n_2}=\frac{1}{n_1+n_2}(\sum_{i=1}^{n_1} x_i^1-\sum_{i=1}^{n_2} x_i^2)$, assume each $x_i^1$ is drawn \textit{i.i.d.} from $\mathcal{D}_\text{in}$ and each $x_i^2$ is drawn \textit{i.i.d} from $\mathcal{D}_\text{aux}$, and assume the signal/noise ratio is $\frac{||\mu||}{\sigma}=r_0\gg 1$, the dimentionality/sample size ratio is $\frac{d}{n}=r_1$, as well as exist some constant $\alpha<1$. By decomposition, we can rewrite $\theta^*_{n_1,n_2}=\mu+\frac{n_1}{n_1+n_2}\theta_\text{1}+\frac{n_2}{n_1+n_2}\theta_\text{2}$ with the following $\theta_\text{1}$ and $\theta_\text{2}$:
\begin{equation}
\label{eq:theta_decomp}
    \theta_\text{1} = \frac{1}{n_1}(\sum_{i=1}^{n_1}x_i^1)-\mu,\quad \theta_\text{2} = \frac{1}{n_2}(-\sum_{i=1}^{n_2}x_i^2)-\mu,
\end{equation}
Since $\theta_\text{1} \sim \mathcal{N}(0, \frac{\sigma^2}{n_1}\mathcal{I})$, we have that $||\theta_1||^2\sim\frac{\sigma^2}{n_1}\mathcal{X}_d^2$ and $\frac{\mu^T\theta_1}{||\mu||}\sim\mathcal{N}(0, \frac{\sigma^2}{n_1})$ to form the standard concentration bounds as:
\begin{equation}
\label{eq:concentration}
    \mathbb{P}(||\theta_1||^2\geq\frac{\sigma^2}{n_1}(d+\frac{1}{\sigma}))\leq e^{-\frac{d}{8\sigma^2}},\quad \mathbb{P}(\frac{|\mu^T\theta_1|}{||\mu||}\geq(\sigma||\mu||)^{\frac{1}{2}})\leq 2e^{-\frac{n_1||\mu||}{2\sigma}}
\end{equation}

Anchored on $\mathcal{D}_\text{in}$, the distribution of $\theta_2$ can be treated as a truncated distribution of $\theta_1$ as $x_i^2$ drawn \textit{i.i.d.} from the virtual $\mathcal{D}_\text{aux}$ are under the relative constraint with $\mathcal{D}_\text{in}$. Without losing the generality, we replace $n_1$ with $n$, and have the following inequality with a finite positive constant $a$:
\begin{equation}
\label{eq:concentration2}
    \mathbb{P}(||\theta_2||^2\geq\frac{\sigma^2}{n_1}(d+\frac{1}{\sigma}))\leq ae^{-\frac{d}{8\sigma^2}}
\end{equation}
According to Lemma~\ref{app: lemma1}, we can have that $|\mu^T\theta_2|\leq||\mu||^2+\frac{\sigma^2(|\alpha-\delta^*|+\epsilon)}{2}$. Now we can have $||\theta_1||^2\leq\frac{\sigma^2}{n}(d+\frac{1}{\sigma})$, $||\theta_2||^2\leq\frac{\sigma^2}{n}(d+\frac{1}{\sigma})$, $\frac{|\mu^T\theta_1|}{||\mu||}\leq(\sigma||\mu||)^{\frac{1}{2}}$ simultaneously hold and derive the following recall the decomposition,
\begin{equation}
\label{eq:overalltheta}
    ||\theta^*_{n_1,n_2}||^2 = ||\mu+\frac{n_1}{n_1+n_2}\theta_1+\frac{n_2}{n_1+n_2}\theta_2||^2 \leq \frac{\sigma^2}{n}(d+\frac{1}{\sigma}) + ||\mu||^2,
\end{equation}
and
\begin{equation}
\label{eq:overallmutheta}
    |\mu^T\theta^*_{n_1,n_2}|\geq\frac{1}{2}(||\mu||^2-\sigma^{\frac{1}{2}}||\mu||^{\frac{3}{2}}-\frac{\sigma^2(|\alpha-\delta^*|+\epsilon)}{2}).
\end{equation}
With the above inequality derived in Eq.~(\ref{eq:overalltheta}) and Eq.~(\ref{eq:overallmutheta}), we can have the following bound with the probability at least $1-(1+a)e^{-\frac{r_1n}{8\sigma^2}}-2e^{-\frac{n_1||\mu||}{2\sigma}}$
\begin{equation}
    \frac{\mu^T\theta^*_{n_1,n_2}}{\sigma||\theta^*_{n_1,n_2}||} \geq \frac{||\mu||^2-\sigma^{\frac{1}{2}}||\mu||^{\frac{3}{2}}-\frac{\sigma^2(|\alpha-\delta^*|+\epsilon)}{2}}{2\sqrt{\frac{\sigma^2}{n}(d+\frac{1}{\sigma}) + ||\mu||^2}}
\end{equation}
Since $\text{FPR}(\theta^*_{n_1,n_2})=erf(\frac{\mu^T\theta^*_{n_1,n_2}}{\sigma||\theta^*_{n_1,n_2}||})$ is monotonically decreasing, as the lower bound of $\frac{\mu^T\theta^*_{n_1,n_2}}{\sigma||\theta^*_{n_1,n_2}||}$ will increase as the constraint from the virtual $\mathcal{D}_\text{aux}$ changed accordingly in our UM and UMAP, the upper bound of $\text{FPR}(\theta^*_{n_1,n_2})$ will decrease.
From the above analysis, one insight we can draw is learning more atypical ID data may need more high-quality auxiliary outliers to shape the near-the-boundary behavior of the model, which can further enhance the OOD discriminative capability.

\section{Discussion about the "Conflict" Against Previous Empirical Observation}
\label{app:conflict}

In this section, we address what initially appears to be a contradiction between our observation and previous empirical studies~ \citep{vaze2022openset,fort2021exploring}, but it is not a contradiction. This work demonstrates that during training, there exists a middle stage where the model's OOD detection performance is superior to the final stage, even though the model has not achieved the best performance on ID-ACC. Some previous studies \citep{vaze2022openset,fort2021exploring} suggest that a good close-set classifier tends to have higher OOD detection performance, which may seem to contradict our claim. However, this is not the case, and we provide the following explanations.

First, the previous empirical observation \citep{vaze2022openset,fort2021exploring} of a high correlation between a good close-set classifier (e.g., high ID-ACC in \citep{vaze2022openset}) and OOD detection performance is based on \textbf{inter-model comparisons}, such as comparing different model architectures. This is consistent with our results in Table~\ref{tab:conflict1}. Even the previous model stages backtracked via our UM show similar results confirming that a better classifier (e.g., the DenseNet-101 in Table~\ref{tab:conflict1}) is better to achieve better OOD detection performance.

Second, our observation is based on \textbf{intra-model comparisons}, which compare different training stages of a single model. Our results in Figure~\ref{fig: motivation_1} across various training settings confirm this observation. Additionally, Table~\ref{tab:conflict2} shows that when we backtrack the model through UM, we obtain lower ID-ACC but better OOD detection performance. However, if we compare different models, DenseNet-101 with higher ID-ACC still outperforms Wide-ResNet, as previously mentioned.

To summarize, our observation provides an orthogonal view to exploring the relationship between ID-ACC and OOD detection performance. On the one hand, we attribute this observation to the model's memorization of atypical samples, as further demonstrated by our experiments (e.g., in Figure~\ref{fig: motivation_2}). On the other hand, we believe that this observation reveals other characteristics of a "good classifier" beyond ID-ACC, e.g., higher OOD detection capability.

\begin{table}[h!]
    \caption{\textbf{Inter-model comparison} (different models) of ID-ACC with the OOD detection performance on CIFAR-10 ($\%$). $\uparrow$ indicates higher values are better, and $\downarrow$ indicates lower values are better.}
    \vspace{2mm}
    \centering
    \footnotesize
    \begin{tabular}{c|l|cccc}
        \toprule[1.5pt]
        Method &  Model & AUROC$\uparrow$ & AUPR$\uparrow$ & FPR95$\downarrow$ & ID-ACC$\uparrow$ \\
        \midrule[0.6pt]
        \multirow{2}*{MSP}
         & DenseNet-101 & $\textbf{89.90}\pm\textbf{0.30}$ & $\textbf{91.48}\pm\textbf{0.43}$ & $\textbf{60.08}\pm\textbf{0.76}$ & $\textbf{94.01}\pm\textbf{0.08}$\\
         & WRN-40-4 & $87.12\pm0.25$ & $87.84\pm0.30$ & $68.29\pm0.96$ & $93.86\pm0.19$\\
         \midrule[0.6pt]
        \multirow{2}*{ODIN}
         & DenseNet-101 & $\textbf{91.46}\pm\textbf{0.56}$ & $\textbf{91.67}\pm\textbf{0.58}$ & $\textbf{42.31}\pm\textbf{1.38}$ & $\textbf{94.01}\pm\textbf{0.08}$\\
         & WRN-40-4 & $83.29\pm0.72$ & $82.74\pm0.79$ & $65.68\pm0.77$ & $93.86\pm0.19$\\
         \midrule[0.6pt]
         \multirow{2}*{Energy}
         & DenseNet-101 & $\textbf{92.07}\pm\textbf{0.22}$ & $\textbf{92.72}\pm\textbf{0.39}$ & $\textbf{42.69}\pm\textbf{1.31}$ & $\textbf{94.01}\pm\textbf{0.08}$\\
         & WRN-40-4 & $87.69\pm0.54$ & $88.16\pm0.69$ & $58.47\pm1.94$ & $93.86\pm0.19$\\
         \midrule[0.6pt]
         \multirow{2}*{Energy+\textbf{UM} (ours)}
         & DenseNet-101 & $\textbf{93.73}\pm\textbf{0.36}$ & $\textbf{94.27}\pm\textbf{0.60}$ & $\textbf{33.29}\pm\textbf{1.70}$ & $\textbf{92.80}\pm\textbf{0.47}$\\
         & WRN-40-4 & $91.74\pm0.43$ & $92.67\pm0.52$ & $40.40\pm1.32$ & $92.68\pm0.23$\\
        \bottomrule[1.5pt]
    \end{tabular}
    \label{tab:conflict1}
\end{table}

\begin{table}[h!]
    \caption{\textbf{Intra-model comparison }(regarding the same model) of ID-ACC with the OOD detection performance on CIFAR-10 ($\%$). $\uparrow$ indicates higher values are better, and $\downarrow$ indicates lower values are better.}
    \vspace{2mm}
    \centering
    \footnotesize
    \begin{tabular}{c|l|cccc}
        \toprule[1.5pt]
        Method &  Model & AUROC$\uparrow$ & AUPR$\uparrow$ & FPR95$\downarrow$ & ID-ACC$\uparrow$ \\
        \midrule[0.6pt]
        {Energy} & \multirow{2}*{DenseNet-101} & $92.07\pm0.22$ & $92.72\pm0.39$ & $42.69\pm1.31$ & $\textbf{94.01}\pm\textbf{0.08}$\\
         \cmidrule[0.6pt]{1-1}\cmidrule[0.6pt]{3-6}
         {Energy+\textbf{UM} (ours)} &  & $\textbf{93.73}\pm\textbf{0.36}$ & $\textbf{94.27}\pm\textbf{0.60}$ & $\textbf{33.29}\pm\textbf{1.70}$ & $92.80\pm0.47$\\
         \midrule[0.6pt]
         {Energy} & \multirow{2}*{WRN-40-4} & $87.69\pm0.54$ & $88.16\pm0.69$ & $58.47\pm1.94$ & $\textbf{93.86}\pm\textbf{0.19}$\\
         \cmidrule[0.6pt]{1-1}\cmidrule[0.6pt]{3-6}
         {Energy+\textbf{UM} (ours)} &  & $\textbf{91.74}\pm\textbf{0.43}$ & $\textbf{92.67}\pm\textbf{0.52}$ & $\textbf{40.40}\pm\textbf{1.32}$ & $92.68\pm0.23$\\
        \bottomrule[1.5pt]
    \end{tabular}
    \label{tab:conflict2}
\end{table}

\section{Discussion with Conventional Overfitting}
\label{app:overfitting_comp}

In this section, we provide a comprehensive comparison of our observation and conventional overfitting in deep learning.

First, we would refer to the concept of the conventional overfitting \citep{goodfellow2016deep,doi:10.1073/pnas.1903070116}, i.e., the model "overfits" the training data but fails to generalize and perform well on the test data that is unseen during training. The common empirical reflection of overfitting is that the training error is decreasing while the test error is increasing at the same time, which enlarges the generalization gap of the model. It has been empirically confirmed not the case in our observation as observed in Figure~\ref{fig2:a} and~\ref{fig2:b}. To be specific, for the original classification task, there is no conventional overfitting observed as the test performance is still improved at the later training stage, which is a general pursuit of the model development phase on the original tasks~\citep{goodfellow2016deep,zhang2016understanding}.

Then, when we consider the OOD detection performance of the well-trained model, our unique observation is about the inconsistency between gaining better OOD detection capability and pursuing better performance on the original classification task for the in-distribution (ID) data. It is worth noting that here the training task is not the binary classification of OOD detection, but the classification task on ID data. It is out of the rigorous concept of conventional overfitting and has received limited focus and discussion through the data-level perspective in the previous literature about OOD detection~\citep{yang2021generalized,yangopenood2022} to the best of our knowledge. Considering the practical scenario that exists target-level discrepancy, our revealed observation may encourage us to revisit the detection capability of the well-trained model.

Third, we also provide an empirical comparison with some strategies targeted for mitigating overfitting. In our experiments, for all the baseline models including that used in Figure~\ref{fig: motivation_1}, we have adopted those strategies \citep{booktitles_jmlr_dropout, TheElementsHastie2009} (e.g., drop-out, weight decay) to reduce overfitting. 
The results are summarized in the following Tables~\ref{tab:overfitting_odin_densenet}, \ref{tab:overfitting_energy_densenet}, \ref{tab:overfitting_odin_wrn} and~\ref{tab:overfitting_energy_wrn}. According to the experiments, most conventional methods proposed to prevent conventional overfitting show limited benefits in gaining better OOD detection performance, since they have a different underlying target from UM/UMAP. However, most of them suffer from the higher sacrifice on the performance of the original task and may not be compatible and practical in the current general setting, i.e., starting from a well-trained model.
In contrast, our proposed UMAP can be a more practical and flexible way to restore detection performance.


\begin{table}[h!]
    \caption{Comparison among overfitting methods and ODIN with DenseNet-101 ($\%$). $\uparrow$ indicates higher values are better, and $\downarrow$ indicates lower values are better.}
    \vspace{2mm}
    \centering
    \footnotesize
    \begin{tabular}{c|l|cccc}
        \toprule[1.5pt]
        $\mathcal{D}_\text{in}$ &  Method & AUROC$\uparrow$ & AUPR$\uparrow$ & FPR95$\downarrow$ & ID-ACC$\uparrow$ \\
        \midrule[0.6pt]
        \multirow{11}*{\textbf{CIFAR-10}}
         & Baseline & $91.67 $ & $91.89 $ & $40.74 $ & $93.67 $\\
         & Early Stopping w. ACC & $92.13 $ & $92.46 $ & $38.86 $ & $93.69 $\\
         & Weight Decay 0.1 & $86.64 $ & $86.67 $ & $60.07 $ & $88.53 $\\
         & Weight Decay 0.01 & $90.76 $ & $91.25 $ & $44.20 $ & $92.07 $\\
         & Weight Decay 0.001 & $88.93 $ & $88.25 $ & $48.95 $ & $94.26 $\\
         & Drop Rate 0.3 & $91.14 $ & $92.21 $ & $46.58 $ & $90.05 $\\
         & Drop Rate 0.4 & $84.95 $ & $86.62 $ & $62.52 $ & $82.55 $\\
         & Drop Rate 0.5 & $83.75 $ & $85.17 $ & $62.17 $ & $75.31 $\\
         \cmidrule{2-6}
         & \textbf{UM} (ours) & $92.45 $ & $93.06 $ & $37.13 $ & $92.76 $\\
         & \textbf{UMAP} (ours) & $91.92 $ & $92.88 $ & $37.69 $ & $93.69 $\\
        \bottomrule[1.5pt]
    \end{tabular}
    \label{tab:overfitting_odin_densenet}
\end{table}

\begin{table}[h!]
    \caption{Comparison among overfitting methods and Energy with DenseNet-101 ($\%$). $\uparrow$ indicates higher values are better, and $\downarrow$ indicates lower values are better.}
    \vspace{2mm}
    \centering
    \footnotesize
    \begin{tabular}{c|l|cccc}
        \toprule[1.5pt]
        $\mathcal{D}_\text{in}$ &  Method & AUROC$\uparrow$ & AUPR$\uparrow$ & FPR95$\downarrow$ & ID-ACC$\uparrow$ \\
        \midrule[0.6pt]
        \multirow{11}*{\textbf{CIFAR-10}}
         & Baseline & $92.72 $ & $93.48 $ & $38.30 $ & $93.67 $\\
         & Early Stopping w. ACC & $92.75 $ & $93.54 $ & $37.84 $ & $93.69 $\\
         & Weight Decay 0.1 & $86.78 $ & $88.04 $ & $65.08 $ & $88.53 $\\
         & Weight Decay 0.01 & $90.86 $ & $91.77 $ & $47.64 $ & $92.07 $\\
         & Weight Decay 0.001 & $90.68 $ & $90.90 $ & $47.38 $ & $94.26 $\\
         & Drop Rate 0.3 & $90.52 $ & $91.79 $ & $51.23 $ & $90.05 $\\
         & Drop Rate 0.4 & $84.29 $ & $86.43 $ & $68.17 $ & $82.55 $\\
         & Drop Rate 0.5 & $83.29 $ & $85.14 $ & $68.17 $ & $75.31 $\\
         \cmidrule{2-6}
         & \textbf{UM} (ours) & $93.58 $ & $94.14 $ & $33.66 $ & $92.76 $\\
         & \textbf{UMAP} (ours) & $93.17 $ & $93.87 $ & $36.11 $ & $93.69 $\\
        \bottomrule[1.5pt]
    \end{tabular}
    \label{tab:overfitting_energy_densenet}
\end{table}

\begin{table}[h!]
    \caption{Comparison among overfitting methods and ODIN with WRN-40-4 ($\%$). $\uparrow$ indicates higher values are better, and $\downarrow$ indicates lower values are better.}
    \vspace{2mm}
    \centering
    \footnotesize
    \begin{tabular}{c|l|cccc}
        \toprule[1.5pt]
        $\mathcal{D}_\text{in}$ &  Method & AUROC$\uparrow$ & AUPR$\uparrow$ & FPR95$\downarrow$ & ID-ACC$\uparrow$ \\
        \midrule[0.6pt]
        \multirow{11}*{\textbf{CIFAR-10}}
         & Baseline & $86.24 $ & $85.90 $ & $60.13 $ & $93.86 $\\
         & Early Stopping w. ACC & $83.80 $ & $83.30 $ & $65.13 $ & $93.99 $\\
         & Weight Decay 0.1 & $84.38 $ & $84.75 $ & $65.75 $ & $89.88 $\\
         & Weight Decay 0.01 & $88.08 $ & $88.45 $ & $55.16 $ & $93.16 $\\
         & Weight Decay 0.001 & $86.34 $ & $86.38 $ & $57.42 $ & $94.91 $\\
         & Drop Rate 0.3 & $87.53 $ & $87.25 $ & $56.12 $ & $94.22 $\\
         & Drop Rate 0.4 & $88.24 $ & $88.41 $ & $54.62 $ & $94.20 $\\
         & Drop Rate 0.5 & $89.13 $ & $89.99 $ & $53.07 $ & $93.91 $\\
         \cmidrule{2-6}
         & \textbf{UM} (ours) & $89.61 $ & $91.13 $ & $50.97 $ & $92.68 $\\
         & \textbf{UMAP} (ours) & $90.43 $ & $91.73 $ & $46.96 $ & $93.86 $\\
        \bottomrule[1.5pt]
    \end{tabular}
    \label{tab:overfitting_odin_wrn}
\end{table}

\begin{table}[h!]
    \caption{Comparison among overfitting methods and Energy with WRN-40-4 ($\%$). $\uparrow$ indicates higher values are better, and $\downarrow$ indicates lower values are better.}
    \vspace{2mm}
    \centering
    \footnotesize
    \begin{tabular}{c|l|cccc}
        \toprule[1.5pt]
        $\mathcal{D}_\text{in}$ &  Method & AUROC$\uparrow$ & AUPR$\uparrow$ & FPR95$\downarrow$ & ID-ACC$\uparrow$ \\
        \midrule[0.6pt]
        \multirow{11}*{\textbf{CIFAR-10}}
         & Baseline & $87.69 $ & $88.16 $ & $58.47 $ & $93.86 $\\
         & Early Stopping w. ACC & $88.07 $ & $88.65 $ & $67.61 $ & $93.99 $\\
         & Weight Decay 0.1 & $86.97 $ & $88.51 $ & $63.54 $ & $89.88 $\\
         & Weight Decay 0.01 & $89.77 $ & $89.82 $ & $50.23 $ & $93.16 $\\
         & Weight Decay 0.001 & $89.25 $ & $89.84 $ & $50.95 $ & $93.91 $\\
         & Drop Rate 0.3 & $89.74 $ & $90.07 $ & $52.16 $ & $93.22 $\\
         & Drop Rate 0.4 & $89.94 $ & $90.53 $ & $51.13 $ & $94.20 $\\
         & Drop Rate 0.5 & $90.09 $ & $91.04 $ & $52.76 $ & $93.91 $\\
         \cmidrule{2-6}
         & \textbf{UM} (ours) & $91.74 $ & $92.67 $ & $40.40 $ & $92.68 $\\
         & \textbf{UMAP} (ours) & $88.84 $ & $89.31 $ & $50.23 $ & $93.86 $\\
        \bottomrule[1.5pt]
    \end{tabular}
    \label{tab:overfitting_energy_wrn}
\end{table}

Given the concept discrepancy aforementioned, we can know that "memorization of the atypical samples" are not "memorization in overfitting". Those atypical samples are empirically beneficial in improving the performance on the original classification task as shown in Figure~\ref{fig: motivation_2}. However, this part of knowledge is not very necessary and even harmful to the OOD detection task as the detection performance of the model drops significantly. Based on the training and test curves in our observation, the memorization in overfitting is expected to happen later than the final stage in which the test performance would drop. Since we have already used some strategies to prevent overfitting, it does not exist. Intuitively, the "atypical samples" identified in our work are relative to the OOD detection task. The memorization of "atypical samples" indicates that the model may not be able to draw the general information of the ID distribution through further learning on those atypical samples through the original classification task. Since we mainly provide the understanding of the data-level attribution for OOD discriminative capability, further analysis from theoretical views~\citep{fang2022is} to link the conventional overfitting with OOD detection would be an interesting future direction.

\section{Additional Explanation Towards Mining the Atypical Samples}
\label{app:atypical_mining}

In this section, we provide further discussion and explanation about mining the atypical samples.

First, for identifying those atypical samples using a randomly initialized layer-wise mask~\citep{ramanujan2020s} with the well-pre-trained model, the underlying intuition is constructing the parameter-level discrepancy to mine the atypical samples. It is inspired by and based on the evidence drawn from previous literature about learning behaviors \citep{arpit2017closer,goodfellow2016deep} of deep neural networks (DNNs), sparse representation \citep{DBLP:journals/corr/abs-1803-03635,10.1007/978-3-642-42051-1_16,barham2022pathways}, and also model uncertainty representation (like dropout~\citep{gal2016dropout}). To be specific, the atypical samples tend to be learned by the DNNs later than those typical samples \citep{arpit2017closer}, and are relatively more sensitive to the changes of the model parameter as the model does not generalize well on that~\citep{booktitles_jmlr_dropout,gal2016dropout}. By the layer-wise mask, the constructed discrepancy can make the model misclassify the atypical samples and estimate loss constraint for the forgetting objective, as visualized in Figure~\ref{fig:method}.

Second, introducing the layer-wise mask has several advantages for achieving the staged target of mining atypical samples in our proposed method, while we would also admit that the layer-wise mask may not be an irreplaceable option or may not be optimal. On the one hand, considering that the model has been trained to approach the zero error on training data, utilizing the layer-wise mask is an integrated strategy to 1) figure out the atypical samples; and 2) obtain the loss value computed by the masked output that misclassifies them. The loss constraint is later used in the forgetting objective to fine-tune the model. On the other hand, the layer-wise mask is also compatible with the proposed UMAP to generate a flexible mask for restoring the detection capability of the original model.

\paragraph{More discussion and visualization using CIFAR-10 and ImageNet.} Third, we also adopt the unit/weight mask \citep{han2015deep} and visualize the misclassified samples in Figure~\ref{fig17:mine_atypical} (we also present a similar visualization about the experiments on ImageNet~\citep{deng2009imagenet} in Figure~\ref{fig18:mine_atypica_imagenetl}). The detected samples show that traditionally pruning the network according to weights can't efficiently figure out whether an image is typical or atypical while pruning randomly can do so. Intuitively, we attribute this phenomenon to the uncertain relationship~\citep{gal2016dropout} between the magnitudes and the learned patterns. Randomly masking out weights can have a harsh influence on atypical samples, which creates a discrepancy in mining them. Further investigating the specific effect of different methods that construct the parameter-level discrepancy would be an interesting sub-topic in future work. For the value of CE loss, although the atypical samples tend to have high CE loss value, they are already memorized and correctly classified as indicated by the zero training error. Only using the high CE error can not provide the loss estimation when the model does not correctly classify those samples.

\clearpage
\section{Algorithmic Realization of UM and UMAP}
\label{app:algo_realization}

In this section, we provide the detailed algorithmic realizations of our proposed Unleashing Mask (UM) (i.e., in Algorithm~\ref{alg:um}) and Unleashing Mask Adopt Pruning (UMAP) (i.e., in Algorithm~\ref{alg:umap}) given the well-trained model. 

In general, we seek to unleash the intrinsic detection power of the well-trained model by adjusting the well-trained given model. For the first part, we need to mine the atypical samples and estimate the loss value to misclassify them using the current model. For the second part, we need to tune or prune with the loss constraint for forgetting.

To estimate the loss constrain for forgetting (i.e., $\widehat{\ell}_\text{CE}(m_\delta\odot f^*)$ in Eq \ref{eq:obj} with the fixed given model $f^*$), we randomly knock out parts of weights according to a specific mask ratio $\delta$. To be specific, we sample a score from a Gaussian distribution for every weight. Then we initialize a unit matrix for every layer of the model concerning the size of the layer. We formulate the mask $m_\delta$ according to the sampled scores. We then iterate through every layer (termed as $l \in \theta_\text{layers}$) to find the threshold for each layer that is smaller than the score of the given mask ratio in that layer (termed as $\text{quantile}$). Then set all the ones, whose corresponding scores are more significant than the layers' thresholds, to zeros. 

We dot-multiply every layer's weights with the formulated binary matrix as if we delete some parts of the weights. Then, we input a batch of training samples to the masked model and treat the mean value of the outputs' CE loss as the loss constraint. After all of these have been done, we begin to fine-tune the model's weights with the loss constraint applied to the original CE loss. In our algorithms, the fine-tuning epochs $k$ is the epochs we finetune after we get the well-trained model.

For UMAP, the major difference from UM is that, instead of fine-tuning the weights, we generate a popup score for every weight, and force the gradients to pass through the scores. In every iteration, we need to formulate a binary mask according to the given prune rate $p$. This is just what we do when estimating the loss constraint. For more details, it can refer to \citep{ramanujan2020s}. In Table~\ref{tab:my_label_complete}, we summarize the complete comparison of UM and UMAP to show their effectiveness. We also provide the performance comparison by switching ID training data to be the large-scaled ImageNet, and demonstrate the effectiveness of our UM and UMAP in Table~\ref{tab:my_imagenet}. In practice, we use SVHN as a validation OOD set to tune the mask ratio. We adopt $99.6\%$ (the corresponding estimated loss constraint is about $0.6$) in this large-scale experiment to estimate the loss constraint for forgetting. Surprisingly, we also find that loss values smaller than the estimated one (i.e., $<0.6$) can also help improve OOD detection performance, distinguishing the general effectiveness of UM/UMAP.


\begin{algorithm}[h!]
   \caption{Unleashing Mask (UM)}
   \label{alg:um}
   {\bf Input:} well-trained model : $\theta$, Gaussian distribution: $N(\mu, \sigma^2)$, mask ratio : $\delta \in [0, 1]$, fine-tuning epochs of UM : $k$, training samples : $x \sim \mathcal{D}^\text{s}_\text{in}$, layer-iterated model : $\theta_{\text{layer}}$, compute the $\delta$-th quantile of the data $s$ : $\text{quantile}(s, \delta)$;\\
   {\bf Output:} fine-tuned model $\theta^k$;
\begin{algorithmic}[1]
    \STATE \begin{footnotesize}\texttt{// Initialize a popup score for every weight}\end{footnotesize}\vspace{2mm}
    \FOR{$w \in \theta$}
        \STATE {$s_w \sim N(\mu, \sigma^2)$}
    \ENDFOR\vspace{2mm}
    \STATE \begin{footnotesize}\texttt{// Generate mask by the popup scores}\end{footnotesize}\vspace{2mm}
    \FOR{$l \in \theta_\text{layers}$}
        \STATE $m_\delta^l = s_l > \text{quantile}(s_l, \delta)$ \begin{footnotesize}\texttt{// Generated mask for layer $l$}\end{footnotesize}
    \ENDFOR\vspace{2mm}
    \STATE \begin{footnotesize}\texttt{// Unleashing Mask: fine-tuning}\end{footnotesize}\vspace{2mm}
    \FOR{$t \in (1, \dots, k)$}
        \STATE $\theta^{(t + 1)} = \theta^{(t)} - \eta \frac{\partial(|\mathcal{L}_{\text{CE}}(x, \theta^{(t)}) - \mathbb{E}_{x \sim \mathcal{D}^\text{s}_\text{in}}(\hat{\mathcal{L}}_{\text{CE}}(x, m_\delta \odot \theta^{(t)}))| + \mathbb{E}_{x \sim \mathcal{D}^\text{s}_\text{in}}(\hat{\mathcal{L}}_{\text{CE}}(x, m_\delta \odot \theta^{(t)})))}{\partial\theta^{(t)}}$
    \ENDFOR
\end{algorithmic}
\end{algorithm}

\begin{algorithm}[h!]
  \caption{Unleashing Mask Adopt Pruning (UMAP)}
  \label{alg:umap}
  {\bf Input:} well-trained model : $\theta$, Gaussian distribution: $N(\mu, \sigma^2)$, mask ratio: $\delta \in [0, 1]$, fine-tuning epochs of UM: $k$, training samples: $x \sim \mathcal{D}^\text{s}_\text{in}$, prune rate: $p$ , layer-iterated model : $\theta_{\text{layer}}$,compute the $\delta$-th quantile of the data $s$ : $\text{quantile}(s, \delta)$;\\
  {\bf Output:} learnt binary mask $\hat{m}_p$;
\begin{algorithmic}[1]
    \STATE \begin{footnotesize}\texttt{// Initialize a popup score for every weight}\end{footnotesize}\vspace{2mm}
    \FOR{$w \in \theta$}
        \STATE {$s_w \sim N(\mu, \sigma^2)$}
    \ENDFOR\vspace{2mm}
    \STATE \begin{footnotesize}\texttt{// Generate mask by the popup scores}\end{footnotesize}\vspace{2mm}
    \FOR{$l \in \theta_\text{layers}$}
        \STATE $m_\delta^l = s^l > \text{quantile}(s^l, \delta)$ \begin{footnotesize}\texttt{// Generated mask for layer $l$}\end{footnotesize}
    \ENDFOR\vspace{2mm}
    \STATE \begin{footnotesize}\texttt{// Initialize the ready-to-be-learned scores for UMAP}\end{footnotesize}\vspace{2mm}
    \FOR{$w \in \theta$}
        \STATE $\hat{s}^{(1)}_w \sim N(\mu, \sigma^2)$
    \ENDFOR\vspace{2mm}
    \STATE \begin{footnotesize}\texttt{// Unleashing Mask Adopt Pruning: Pruning}\end{footnotesize}\vspace{2mm}
    \FOR{$t \in (1, \dots, k)$}
        \STATE \begin{footnotesize}\texttt{// Generate the mask for UMAP according to learned scores $\hat{s}^{t}$}\end{footnotesize}
        \FOR{$l \in \theta_\text{layers}$}
            \STATE $\hat{m}_p^l = \hat{s}^{t}_{l} > \text{quantile}(\hat{s}^{t}_{l}, p)$ 
        \ENDFOR
        \STATE $\hat{s}^{(t + 1)} = \hat{s}^{(t)} - \eta \frac{\partial(|\mathcal{L}_{\text{CE}}(x, \hat{m}_p \odot\theta) - \mathbb{E}_{x \sim \mathcal{D}^\text{s}_\text{in}}(\hat{\mathcal{L}}_{\text{CE}}(x, m_\delta \odot \theta))| + \mathbb{E}_{x \sim \mathcal{D}^\text{s}_\text{in}}(\hat{\mathcal{L}}_{\text{CE}}(x, m_\delta \odot \theta)))}{\partial\hat{s}^{(t)}}$
    \ENDFOR
    \FOR{$l \in \theta_\text{layers}$}
        \STATE $\hat{m}^l_p = \hat{s}^{k}_{l} > \text{quantile}(\hat{s}^{k}_{l}, p)$ 
    \ENDFOR
\end{algorithmic}
\end{algorithm}


\begin{table}[h!]
    \caption{Completed Results ($\%$). Comparison with competitive OOD detection baselines. $\uparrow$ indicates higher values are better, and $\downarrow$ indicates lower values are better.}
    \vspace{2mm}
    \centering
    \footnotesize
    \resizebox{\textwidth}{!}{
    \begin{tabular}{c|l|ccccc}
        \toprule[1.5pt]
        $\mathcal{D}_\text{in}$ &  Method & AUROC$\uparrow$ & AUPR$\uparrow$ & FPR95$\downarrow$ & ID-ACC$\uparrow$ & w./w.o $\mathcal{D}_\text{aux}$ \\
        \midrule[0.6pt]
        \multirow{13}*{\textbf{CIFAR-10}}
         & MSP\citep{hendrycks17baseline} & $89.90\pm 0.30$ & $91.48\pm 0.43$ & $60.08\pm 0.76$ & $\textbf{94.01}\pm\textbf{0.08}$ & \\
         & ODIN\citep{LiangLS18} & $91.46\pm 0.56$ & $91.67\pm 0.58$ & $42.31\pm 1.38$ & $\textbf{94.01}\pm\textbf{0.08}$ & \\
         & Mahalanobis\citep{10.5555/3327757.3327819} & $75.10\pm 1.04$ & $72.32 \pm 1.92$ & $61.35\pm 1.25$ & $\textbf{94.01}\pm\textbf{0.08}$ & \\
         & Energy\citep{liu2020energy} & $92.07\pm 0.22$ & $92.72\pm 0.39$ & $42.69\pm 1.31$ & $\textbf{94.01}\pm\textbf{0.08}$ & \\
         & \textbf{Energy+UM} (ours) & $93.73\pm 0.36$ & $94.27\pm 0.60$ & $33.29\pm 1.70$ & $92.80\pm 0.47$ & \\
         & \textbf{Energy+UMAP} (ours) & $\textbf{93.97}\pm\textbf{0.11}$ & $\textbf{94.38}\pm\textbf{0.06}$ & $\textbf{30.71}\pm\textbf{1.94}$ & $\textbf{94.01}\pm\textbf{0.08}$ & \\
         \cmidrule{2-7}
         & OE\citep{hendrycks2018deep} & $97.07\pm 0.01$ & $97.31\pm 0.05$ & $13.80\pm 0.28$ & $92.59\pm 0.32$ & $\checkmark$\\
         & Energy (w. $\mathcal{D}_\text{aux}$)\citep{liu2020energy} & $94.58\pm 0.64$ & $94.69\pm 0.65$ & $18.79\pm 2.31$ & $80.91\pm 3.13$ & $\checkmark$\\
         & POEM\citep{ming2022poem}  & $94.37\pm 0.07$ & $94.51\pm 0.06$ & $18.50\pm 0.33$ & $77.24\pm 2.22$ & $\checkmark$\\
         & \textbf{OE+UM} (ours) & $\textbf{97.60}\pm \textbf{0.03}$ & $\textbf{97.87}\pm \textbf{0.02}$ & $\textbf{11.22}\pm \textbf{0.16}$ & $\textbf{93.66}\pm \textbf{0.12}$ & $\checkmark$\\
         & \textbf{Energy+UM} (ours) & $93.02\pm0.42$ & $92.36\pm0.38$ & $24.41\pm1.65$ & $71.97\pm0.92$ & $\checkmark$\\
         & \textbf{POEM+UM} (ours) & $93.04\pm0.02$ & $92.99\pm0.02$ & $23.52\pm0.16$ & $67.41\pm0.27$ & $\checkmark$\\
         & \textbf{OE+UMAP} (ours) & $97.48\pm0.01$ & $97.74\pm0.00$ & $12.21\pm0.09$ & $93.44\pm0.21$ & $\checkmark$\\
         & \textbf{Energy+UMAP} (ours) & $95.63\pm1.15$ & $95.92\pm1.17$ & $17.51\pm2.59$ & $88.12\pm4.22$ & $\checkmark$\\
         & \textbf{POEM+UMAP} (ours) & $94.18\pm2.98$ & $94.15\pm3.46$ & $20.55\pm8.70$ & $76.62\pm17.95$ & $\checkmark$\\
        \midrule[0.6pt]
        \multirow{13}*{\textbf{CIFAR-100}}
         & MSP\citep{hendrycks17baseline} & $74.06\pm 0.69$ & $75.37\pm 0.73$ & $83.14\pm 0.87$ & $\textbf{74.86}\pm\textbf{0.21}$ & \\
         & ODIN\citep{LiangLS18} & $76.18\pm 0.14$ & $76.49\pm 0.20$ & $78.93\pm 0.31$ & $\textbf{74.86}\pm\textbf{0.21}$ & \\
         & Mahalanobis\citep{10.5555/3327757.3327819} & $63.90\pm 1.91$ & $64.31\pm 0.91$ & $78.79\pm 0.50$ & $\textbf{74.86}\pm\textbf{0.21}$ & \\
         & Energy\citep{liu2020energy} & $\textbf{76.29}\pm \textbf{0.24}$ & $\textbf{77.06}\pm \textbf{0.55}$ & $78.46\pm 0.06$ & $\textbf{74.86}\pm\textbf{0.21}$ & \\
         & \textbf{Energy+UM} (ours) & $76.22\pm 0.42$ & $76.39\pm 1.03$ & $74.05\pm 0.55$ & $64.55\pm 0.24$ & \\
         & \textbf{Energy+UMAP} (ours) & $75.57\pm0.59$ & $75.66\pm0.07$ & $\textbf{72.21}\pm\textbf{1.46}$ & $\textbf{74.86}\pm\textbf{0.21}$ & \\
         \cmidrule{2-7}
         & OE\citep{hendrycks2018deep} & $90.55\pm 0.87$ & $90.34\pm 0.94$ & $34.73\pm 3.85$ & $73.59\pm 0.30$ & $\checkmark$\\
         & Energy (w. $\mathcal{D}_\text{aux}$)\citep{liu2020energy} & $88.92\pm 0.57$ & $89.13\pm 0.56$ & $37.90\pm 2.59$ & $57.85\pm 2.65$ & $\checkmark$\\
         & POEM\citep{ming2022poem}  & $88.95\pm 0.54$ & $88.94\pm 0.31$ & $38.10\pm 1.30$ & $56.18\pm 1.92$ & $\checkmark$\\
         & \textbf{OE+UM} (ours) & $91.04\pm0.11$& $91.13\pm0.24$ & $34.71\pm0.81$ & $\textbf{75.15}\pm\textbf{0.18}$ & $\checkmark$ \\
         & \textbf{Energy+UM} (ours) & $90.39\pm0.40$ & $90.14\pm0.45$ & $32.65\pm3.13$ & $71.95\pm0.23$ & $\checkmark$\\
         & \textbf{POEM+UM} (ours) & $\textbf{91.18}\pm\textbf{0.35}$ & $\textbf{91.45}\pm\textbf{0.27}$ & $\textbf{30.78}\pm\textbf{1.76}$ & $70.17\pm0.01$ & $\checkmark$\\
         & \textbf{OE+UMAP} (ours) & $91.10\pm0.16$ & $90.99\pm0.23$ & $33.62\pm0.26$ & $74.76\pm0.11$ & $\checkmark$\\
         & \textbf{Energy+UMAP} (ours) & $90.52\pm0.26$ & $90.46\pm0.50$ & $32.17\pm0.30$ & $72.76\pm0.18$ & $\checkmark$\\
         & \textbf{POEM+UMAP} (ours) & $91.10\pm0.29$ & $91.41\pm0.28$ & $31.02\pm1.70$ & $71.05\pm0.04$ & $\checkmark$\\
        \bottomrule[1.5pt]
    \end{tabular}}
    \label{tab:my_label_complete}
\end{table}

\begin{table*}[t!]
    \caption{OOD Detection Performance on ImageNet Dataset. $\uparrow$ indicates higher values are better, and $\downarrow$ indicates lower values are better. We experiment with the large-scale classification on a pretrained Resnet-50 (i.e., provided by PyTorch). To avoid potential semantic or covariate overlap between ID set (ImageNet) and OOD test sets \citep{sun2021react}, we choose iNaturalist, Textures, Places365, and SUN as OOD evaluation sets following previous literatures~\citep{liu2020energy, huang2021importance}. Here we provide results of MSP, ODIN, and Energy using FPR95 and AUROC.}
    \vspace{2mm}
    \centering
    \footnotesize
    \renewcommand\arraystretch{0.9}
    \resizebox{\textwidth}{!}{
    \begin{tabular}{c|l|cccccccc|cc}
        \toprule[1.5pt]
        \multirow{3}*{\textbf{ID dataset}} & \multirow{3}*{\textbf{Method}} & \multicolumn{8}{c|}{\textbf{OOD dataset}} \\
        ~ & ~ & \multicolumn{2}{c}{\textbf{iNaturalist}} & \multicolumn{2}{c}{\textbf{Textures}} & \multicolumn{2}{c}{\textbf{Places365}} &  \multicolumn{2}{c|}{\textbf{SUN}} & \multicolumn{2}{c}{\textbf{Average}} \\
        ~ & ~ & FPR95$\downarrow$ & AUROC$\uparrow$ & FPR95$\downarrow$ & AUROC$\uparrow$ & FPR95$\downarrow$ & AUROC$\uparrow$  & FPR95$\downarrow$ & AUROC$\uparrow$  & FPR95$\downarrow$ & AUROC$\uparrow$ \\
        \midrule[0.6pt]
        \multirow{10}*{\textbf{ImageNet}}
         & MSP & $47.83$ & $89.06$ & $49.57$ & $85.62$ & $61.76$ & $84.89$ & $61.36$ & $85.02$ & $55.13$ & $85.15$\\
         & ODIN & $41.39$ & $89.83$ & $44.15$ & $84.04$ & $60.12$ & $82.46$ & $58.52$ & $82.66$ & $51.29$ & $84.75$\\
         & Energy & $49.12$ & $87.69$ & $49.59$ & $81.90$ & $66.07$ & $80.56$ & $65.00$ & $80.96$ & $57.45$ & $82.78$\\
         \cmidrule{2-12}
         & \textbf{MSP+UM} (ours) & $37.85$ & $90.70$ & $46.22$ & $86.50$ & $57.62$ & $85.56$ & $57.36$ & $85.51$ & $49.76$ & $87.07$\\
         & \textbf{ODIN+UM} (ours) & $28.88$ & $91.97$ & $\textbf{39.91}$ & $84.89$ & $51.22$ & $84.12$ & $51.06$ & $84.01$ & $42.77$ & $86.25$\\
         & \textbf{Energy+UM} (ours) & $33.11$ & $90.91$ & $44.10$ & $82.78$ & $56.52$ & $82.53$ & $55.44$ & $82.64$ & $47.29$ & $84.72$\\
         \cmidrule{2-12}
         & \textbf{MSP+UMAP} (ours) & $36.90$ & $91.61$ & $51.61$ & $87.37$ & $61.94$ & $86.07$ & $61.56$ & $84.91$ & $53.00$ & $87.24$\\
         & \textbf{ODIN+UMAP} (ours) & $\textbf{21.97}$ & $\textbf{94.71}$ & $42.02$ & $\textbf{88.35}$ & $\textbf{50.06}$ & $\textbf{86.99}$ & $\textbf{49.69}$ & $\textbf{86.92}$ & $\textbf{40.94}$ & $\textbf{89.24}$\\
         & \textbf{Energy+UMAP} (ours) & $33.03$ & $92.41$ & $64.06$ & $82.22$ & $61.76$ & $83.17$ & $60.99$ & $83.26$ & $54.96$ & $85.26$\\
         
        \bottomrule[1.5pt]
    \end{tabular}}
    \label{tab:my_imagenet}
\end{table*}

\clearpage

\section{Additional Experiment Results}
\label{app:additional_exp_results}

In this section, we provide more experiment results from different perspectives to characterize our proposed algorithms. 


\subsection{Additional Setups}
\label{app:additional_exp_setup}

\paragraph{Training details.} We conduct all major experiments on DenseNet-101 \citep{huang2017densely} with training epochs fixed to 100. The models are trained using stochastic gradient descent \citep{1177729392} with Nesterov momentum \citep{JMLR:v12:duchi11a}. We adopt Cosine Annealing \citep{LoshchilovH17} to schedule the learning rate which begins at $0.1$. We set the momentum and weight decay to be $0.9$ and $10^{-4}$ respectively throughout all experiments. The size of the mini-batch is $256$ for both ID samples (during training and testing) and OOD samples (during testing). The choice of mask ratio for our UM and UMAP is detailed and further discussed in Appendix \ref{app:eff_um}.

\paragraph{Model architecture.} For DenseNet-101, we fix the growth rate and reduce the rate to 12 and 0.5 respectively with the bottleneck block included in the backbone~\citep{ming2022poem}. We also explore the proposed UM on WideResNet \citep{zagoruyko2016wide} with 40 depth and 4 widen factor, which is termed as WRN-40-4. The batch size for both ID and OOD testing samples is $256$, and the batch size of auxiliary samples is 2000. The $\lambda$ in Eq.~(\ref{eq:oe_app}) is 0.5 to keep the OE loss comparable to the CE loss. As for the outliers sampling, we randomly retrieve $50000$ samples from ImageNet-1k~\citep{deng2009imagenet} for OE and Energy (w. $\mathcal{D}_{\text{aux}}$) and $50000$ samples using Thompson sampling~\citep{thompson} for POEM~\citep{ming2022poem}.

\begin{wrapfigure}{r}{0.38\textwidth}
  \begin{center}
    \includegraphics[scale=0.14]{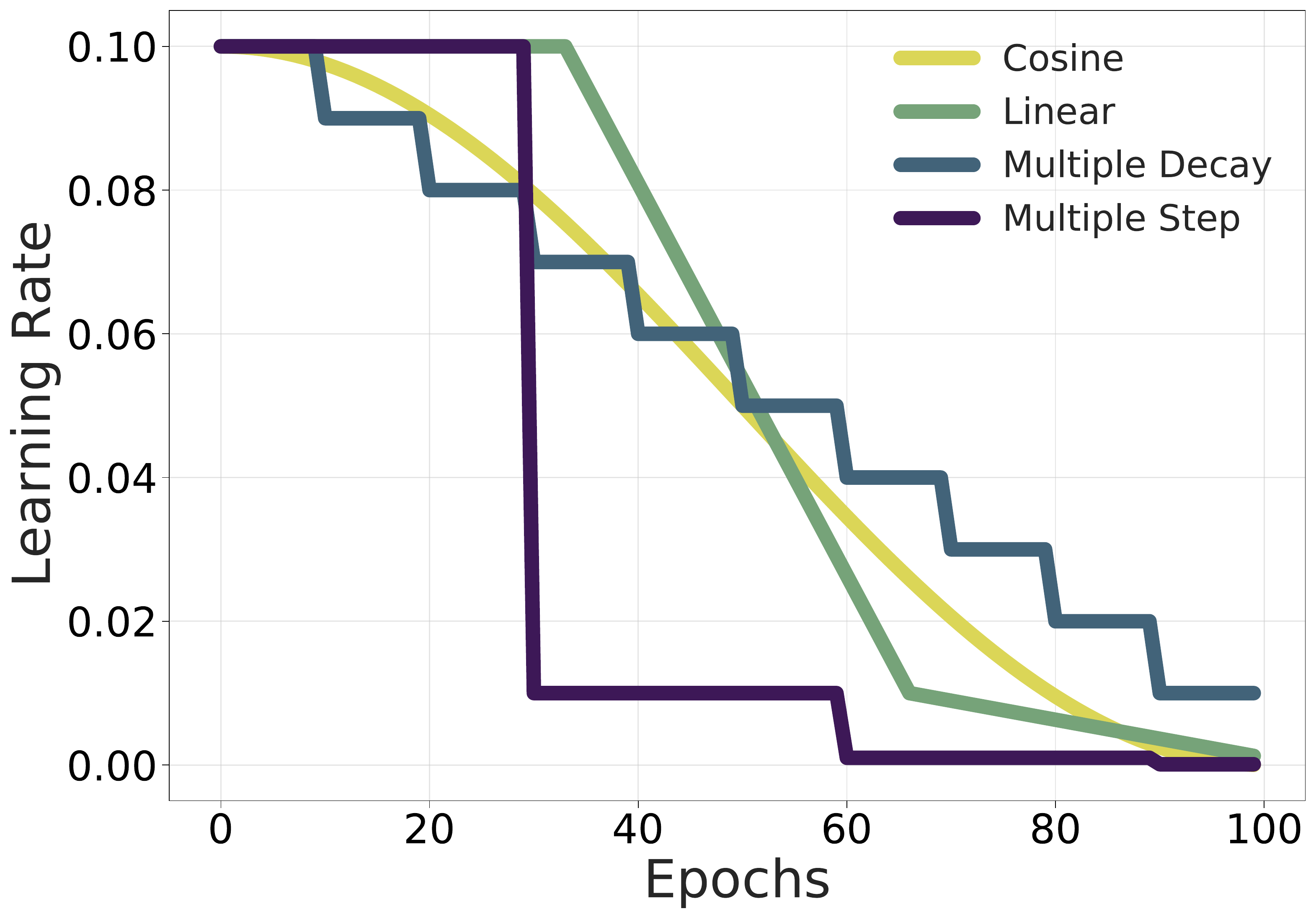}
    \end{center}
    \vspace{-2mm}
    \caption{Learning Rate Scheduler.}
    \label{fig7:lre_scheduler}
\end{wrapfigure}
\paragraph{Learning rate schedules.} We use 4 different learning rate schedules to demonstrate the existence of the overlaid OOD detection capability. For cosine annealing, we follow the common setups in \citet{LoshchilovH17}; for linear schedule, the learning rate remains the same in the first one-third epochs, decreases linearly to the tenth of the initial rate in the middle one-third epochs, and decrease linearly to $1\%$ of the initial rate in the last one-third epochs; for the multiple decay schedule, the learning rate decreases $10\%$ of the initial rate ($0.01$) every $10\%$ epochs ($10$ epochs); for the multiple step schedule, the learning rate decreases to $10\%$ of the current rate every $30$ epochs. All those learning rate schedules for our experiments are intuitively illustrated in Figure~\ref{fig7:lre_scheduler}.

\subsection{Empirical verification on typical/atypical data.}
\label{app:exp_typical_atypical}

In the following Tables \ref{tab:atypical_integrated}, \ref{tab:atypical_cifar10_densenet}, \ref{tab:atypical_cifar10_wrn}, \ref{tab:atypical_cifar100_densenet}, and \ref{tab:atypical_cifar100_wrn}, we further conduct the experiments to identify the negative effect of learning on those atypical samples by comparing with a counterpart that learning only with the typical samples. The results demonstrate that the degeneration in detection performance is more likely to come from learning atypical samples.

In Table \ref{tab:atypical_integrated}, we provide the main results for the verification using typical/atypical samples. Intuitively, we intend to separate the training dataset into a typical set and an atypical set, and train respectively on these two sets to see whether it is learning atypical samples that induce the degradation in OOD detection performance during the latter  training phase. Specifically, we input the training samples through the model (DenseNet-101) of the 60th epoch and get the CE loss for selection. We provide the ACC of the generated sets on the model of the 60th epoch (ACC in the tables). The extremely low ACCs of the atypical sets show that the model of the 60th epoch can hardly predict the right label, which meets our conceptual definition of atypical samples. We then finetune the model of the 60th epoch with the generated dataset and report the OOD performance. The results show learning from only those atypical data fails to gain better detection performance than its counterpart (i.e., learning from only those typical data), although it is beneficial to improve the performance of the original multi-class classification task. The experiments provide a conceptual verification of our conjecture which links our observation and the proposed method.

\begin{table}[h!]
    \caption{Fine-tuning on typical/atypical samples with different model structures ($\%$). $\uparrow$ indicates higher values are better, and $\downarrow$ indicates lower values are better.}
    \vspace{2mm}
    \centering
    \footnotesize
    \begin{tabular}{c|c|c|cccc}
        \toprule[1.5pt]
        $\mathcal{D}_\text{in}$ &  Dataset Size & Structure & Atypical/Typical & AUROC$\uparrow$ & AUPR$\uparrow$ & FPR95$\downarrow$ \\
        \midrule[0.6pt]
        \multirow{4}*{\textbf{CIFAR-10}}
         & \multirow{4}*{$200$}
         & \multirow{2}*{DenseNet-101}
         & Atypical & $81.45 $ & $82.40 $ & $62.10 $\\
         &  & & Typical & $\textbf{82.86} $ & $\textbf{84.38} $ & $\textbf{60.01} $\\
         \cmidrule{3-7}
         & & \multirow{2}*{WRN-40-4}
         & Atypical & $85.13 $ & $86.57 $ & $66.41 $\\
         & & & Typical & $\textbf{86.26} $ & $\textbf{86.89} $ & $\textbf{59.93} $\\
         \cmidrule{1-7}
         \multirow{4}*{\textbf{CIFAR-100}}
         & \multirow{4}*{$1000$}
         & \multirow{2}*{DenseNet-101}
         & Atypical & $71.96 $ & $73.16 $ & $85.57 $\\
         & & & Typical & $\textbf{74.79} $ & $\textbf{75.83} $ & $\textbf{80.97} $\\
         \cmidrule{3-7}
         & & \multirow{2}*{WRN-40-4}
         & Atypical & $66.64 $ & $67.41 $ & $86.92 $\\
         & & & Typical & $\textbf{71.95} $ & $\textbf{72.02} $ & $\textbf{80.00} $\\
        \bottomrule[1.5pt]
    \end{tabular}
    \label{tab:atypical_integrated}
\end{table}

\begin{table}[h!]
    \caption{Fine-tuning on typical/atypical CIFAR-10 samples with DenseNet-101 ($\%$). $\uparrow$ indicates higher values are better, and $\downarrow$ indicates lower values are better.}
    \vspace{2mm}
    \centering
    \footnotesize
    \begin{tabular}{c|c|ccccc}
        \toprule[1.5pt]
        $\mathcal{D}_\text{in}$ &  Dataset Size & Atypical/Typical & ACC & AUROC$\uparrow$ & AUPR$\uparrow$ & FPR95$\downarrow$ \\
        \midrule[0.6pt]
        \multirow{7}*{\textbf{CIFAR-10}}
         & \multirow{2}*{$200$}
         & Atypical & $3.50$ & $81.45 $ & $82.40 $ & $62.10 $\\
         & & Typical & $100.00$ & $\textbf{82.86} $ & $\textbf{83.48} $ & $\textbf{60.01} $\\
         \cmidrule{2-7}
         & \multirow{2}*{$350$}
         & Atypical & $11.14$ & $85.90 $ & $86.01 $ & $55.10 $\\
         & & Typical & $100.00$ &  $\textbf{85.90} $ & $\textbf{86.16} $ & $\textbf{52.81} $\\
         \cmidrule{2-7}
         & \multirow{2}*{$500$}
         & Atypical & $16.80$ & $84.94 $ & $85.33 $ & $59.27 $\\
         & & Typical & $100.00$ & $\textbf{85.53} $ & $\textbf{86.10} $ & $\textbf{58.74} $\\
        \bottomrule[1.5pt]
    \end{tabular}
    \label{tab:atypical_cifar10_densenet}
\end{table}

\begin{table}[h!]
    \caption{Fine-tuning on typical/atypical CIFAR-10 samples with WRN-40-4 ($\%$). $\uparrow$ indicates higher values are better, and $\downarrow$ indicates lower values are better.}
    \vspace{2mm}
    \centering
    \footnotesize
    \begin{tabular}{c|c|ccccc}
        \toprule[1.5pt]
        $\mathcal{D}_\text{in}$ &  Dataset Size & Atypical/Typical & ACC & AUROC$\uparrow$ & AUPR$\uparrow$ & FPR95$\downarrow$ \\
        \midrule[0.6pt]
        \multirow{7}*{\textbf{CIFAR-10}}
         & \multirow{2}*{$200$}
         & Atypical & $3.50$ & $85.13 $ & $86.57 $ & $66.41 $\\
         & & Typical & $100.00$ & $\textbf{86.26} $ & $\textbf{86.89} $ & $\textbf{59.93} $\\
         \cmidrule{2-7}
         & \multirow{2}*{$350$}
         & Atypical & $11.14$ & $82.92 $ & $84.24 $ & $68.57 $\\
         & & Typical & $100.00$ & $\textbf{85.82} $ & $\textbf{87.84} $ & $\textbf{65.54} $\\
         \cmidrule{2-7}
         & \multirow{2}*{$500$}
         & Atypical & $16.80$ & $82.88 $ & $83.22 $ & $66.75 $\\
         & & Typical & $100.00$ & $\textbf{87.38} $ & $\textbf{87.93} $ & $\textbf{52.27} $\\
        \bottomrule[1.5pt]
    \end{tabular}
    \label{tab:atypical_cifar10_wrn}
\end{table}

\begin{table}[h!]
    \caption{Fine-tuning on typical/atypical CIFAR-100 samples with DenseNet-101 ($\%$). $\uparrow$ indicates higher values are better, and $\downarrow$ indicates lower values are better.}
    \vspace{2mm}
    \centering
    \footnotesize
    \begin{tabular}{c|c|ccccc}
        \toprule[1.5pt]
        $\mathcal{D}_\text{in}$ &  Dataset Size & Atypical/Typical & ACC & AUROC$\uparrow$ & AUPR$\uparrow$ & FPR95$\downarrow$ \\
        \midrule[0.6pt]
        \multirow{7}*{\textbf{CIFAR-100}}
         & \multirow{2}*{$500$}
         & Atypical & $1.00$ & $72.69 $ & $73.28 $ & $80.71 $\\
         & & Typical & $100.00$ & $\textbf{74.07} $ & $\textbf{75.20} $ & $\textbf{80.19} $\\
         \cmidrule{2-7}
         & \multirow{2}*{$800$}
         & Atypical & $2.88$ & $69.74 $ & $71.15 $ & $85.46 $\\
         & & Typical & $100.00$ & $\textbf{72.49} $ & $\textbf{73.17} $ & $\textbf{81.97} $\\
         \cmidrule{2-7}
         & \multirow{2}*{$1000$}
         & Atypical & $3.50$ & $71.96 $ & $73.16 $ & $85.57 $\\
         & & Typical & $100.00$ & $\textbf{74.79} $ & $\textbf{75.83} $ & $\textbf{80.97} $\\
        \bottomrule[1.5pt]
    \end{tabular}
    \label{tab:atypical_cifar100_densenet}
\end{table}

\begin{table}[h!]
    \caption{Fine-tuning on typical/atypical CIFAR-100 samples with WRN-40-4 ($\%$). $\uparrow$ indicates higher values are better, and $\downarrow$ indicates lower values are better.}
    \vspace{2mm}
    \centering
    \footnotesize
    \begin{tabular}{c|c|ccccc}
        \toprule[1.5pt]
        $\mathcal{D}_\text{in}$ &  Dataset Size & Atypical/Typical & ACC & AUROC$\uparrow$ & AUPR$\uparrow$ & FPR95$\downarrow$ \\
        \midrule[0.6pt]
        \multirow{7}*{\textbf{CIFAR-100}}
         & \multirow{2}*{$500$}
         & Atypical & $1.00$ & $66.03 $ & $66.17 $ & $89.56 $\\
         & & Typical & $100.00$ & $\textbf{68.60} $ & $\textbf{69.93} $ & $\textbf{86.53} $\\
         \cmidrule{2-7}
         & \multirow{2}*{$800$}
         & Atypical & $2.88$ & $67.59 $ & $68.66 $ & $85.61 $\\
         & & Typical & $100.00$ & $\textbf{70.25} $ & $\textbf{68.95} $ & $\textbf{79.66} $\\
         \cmidrule{2-7}
         & \multirow{2}*{$1000$}
         & Atypical & $3.50$ & $66.64 $ & $67.41 $ & $86.92 $\\
         & & Typical & $100.00$ & $\textbf{71.95} $ & $\textbf{72.02} $ & $\textbf{80.00} $\\
        \bottomrule[1.5pt]
    \end{tabular}
    \label{tab:atypical_cifar100_wrn}
\end{table}

\subsection{Empirical Efficiency of UM and UMAP}
\label{app:exp_less_epochs}


As mentioned before, UM adopts finetuning on the proposed objective for forgetting has shown the advantages of being cost-effective compared with train-from-scratch. For the tuning epochs, we show in Figures \ref{fig12:abla_app_6} and \ref{fig13:abla_app_7} that fine-tuning using UM can converge within about 20 epochs, indicating that we can apply our UM/UMAP for far less than 100 epochs (compared with train-from-scratch) to restore the better detection performance of the original well-trained model. It is intuitively reasonable that finetuning with the newly designed objective would benefit from the well-trained model, allowing a faster convergence since the two phases consider the same task with the same training data. As for the major experiments conducted in our work, finetuning adopts 100 epochs for better exploring and presenting its learning dynamics for research purposes, and this configuration is indicated in the training details of Section \ref{sec:exp_part1}.

Here, we also provide an extra comparison to directly show the relative efficiency of our proposed UM/UMAP in the following Table \ref{tab:20epoch_densenet} and Table \ref{tab:20epoch_wrn}. The results demonstrate that UM and UMAP can efficiently restore detection performance compared with the baseline. Considering the significance of the OOD awareness for those safety-critical areas, it is worthwhile to further excavate the OOD detection capability of the deployed well-trained model using our UM and UMAP. 

However, there may be a concern that while both UM/UMAP and OE-based methods need extra fine-tuning processes, why should we choose UM/UMAP instead of OE-based methods, given that OE-based methods can also achieve good performance on OOD detection task. The intuition of UM/UMAP is to unleash the OOD detection capability of a pre-trained model with ID data, which is orthogonal to those OE-based methods (e.g. DOE~\citep{wang2023outofdistribution}), improving the OOD capability of a pre-train model with both ID data and auxiliary data. On the one hand, OE-based methods need sampling/synthesizing large auxiliary OOD datasets, while UM/UMAP only needs the ID data. On the other hand, although both require additional costs to fine-tune the model, they are orthogonal and can be coupled (as discussed in Section~\ref{sec:method_part3}). To further address the concern, we conduct additional experiments(i.e., OE-based results in Table~\ref{tab:additional_sota}) to validate their mutual benefit in the combination. According to the results, we can find that UM/UMAP with DOE achieves better performance. This is because while OE-based methods can improve the performance of OOD detection by fine-tuning with both ID and auxiliary outliers, UM/UMAP can serve as a method (only using ID data) to encourage optimization to learn a more appropriate model for OOD detection.

\begin{table}[h!]
    \caption{Fine-tuning for 20 epochs with DenseNet-101 ($\%$). $\uparrow$ indicates higher values are better, and $\downarrow$ indicates lower values are better.}
    \vspace{2mm}
    \centering
    \footnotesize
    \begin{tabular}{c|c|lcccc}
        \toprule[1.5pt]
        $\mathcal{D}_\text{in}$ &  Epoch & Method &  AUROC$\uparrow$ & AUPR$\uparrow$ & FPR95$\downarrow$ & ID-ACC$\uparrow$\\
        \midrule[0.6pt]
        \multirow{11}*{\textbf{CIFAR-10}}
         & \multirow{5}*{$100$}
         & MSP & $89.90 $ & $91.48 $ & $60.08 $ & $94.01 $\\
         & & ODIN & $91.46 $ & $91.67 $ & $42.31 $ & $94.01 $\\
         & & Energy & $92.07 $ & $92.72 $ & $42.69 $ & $94.01 $\\
         & & Energy + UM & $93.73 $ & $94.27 $ & $33.29 $ & $92.80 $\\
         & & Energy + UMAP & $93.97 $ & $94.38 $ & $30.71 $ & $94.01 $\\
        \cmidrule{2-7}
         & \multirow{6}*{$\textbf{20}$}
         & MSP + UM & $90.31 $ & $91.99 $ & $53.61 $ & $91.70 $\\
         & & ODIN +UM & $94.08 $ & $94.67 $ & $31.01 $ & $91.70 $\\
         & & Energy + UM & $93.60 $ & $94.32 $ & $33.03 $ & $91.70 $\\
         & & MSP + UMAP & $88.70 $ & $90.39 $ & $57.69 $ & $94.01 $\\
         & & ODIN + UMAP & $92.88 $ & $93.33 $ & $35.19 $ & $94.01 $\\
         & & Energy + UMAP & $92.88 $ & $93.39 $ & $35.60 $ & $94.01 $\\
        \bottomrule[1.5pt]
    \end{tabular}
    \label{tab:20epoch_densenet}
\end{table}

\begin{table}[h!]
    \caption{Fine-tuning for 20 epochs with WRN-40-4 ($\%$). $\uparrow$ indicates higher values are better, and $\downarrow$ indicates lower values are better.}
    \vspace{2mm}
    \centering
    \footnotesize
    \begin{tabular}{c|c|lcccc}
        \toprule[1.5pt]
        $\mathcal{D}_\text{in}$ &  Epoch & Method &  AUROC$\uparrow$ & AUPR$\uparrow$ & FPR95$\downarrow$ & ID-ACC$\uparrow$\\
        \midrule[0.6pt]
        \multirow{11}*{\textbf{CIFAR-10}}
         & \multirow{5}*{$100$}
         & MSP & $87.12 $ & $87.84 $ & $68.29 $ & $93.86 $\\
         & & ODIN & $83.29 $ & $82.74 $ & $65.68 $ & $93.86 $\\
         & & Energy & $87.69 $ & $88.16 $ & $58.47 $ & $93.86 $\\
         & & Energy + UM & $91.74 $ & $92.67 $ & $40.40 $ & $92.68 $\\
         & & Energy + UMAP & $88.84 $ & $89.31 $ & $50.23 $ & $93.86 $\\
        \cmidrule{2-7}
         & \multirow{6}*{$\textbf{20}$}
         & MSP + UM & $89.86 $ & $91.32 $ & $51.62 $ & $91.96 $\\
         & & ODIN +UM & $91.97 $ & $92.58 $ & $41.78 $ & $91.96 $\\
         & & Energy + UM & $92.95 $ & $93.64 $ & $36.21 $ & $91.96 $\\
         & & MSP + UMAP & $88.77 $ & $90.61 $ & $61.60 $ & $93.86 $\\
         & & ODIN + UMAP & $90.85 $ & $91.89 $ & $45.70 $ & $93.86 $\\
         & & Energy + UMAP & $91.66 $ & $92.49 $ & $42.94 $ & $93.86 $\\
        \bottomrule[1.5pt]
    \end{tabular}
    \label{tab:20epoch_wrn}
\end{table}

\subsection{Fine-grained Results on OOD Data}
\label{app:exp_finegrained}

In order to further understand the effectiveness of the proposed UM and UMAP on different OOD datasets, we report the fine-grained results of our experiments on CIFAR-10 and CIFAR-100 with $6$ OOD datasets (CIFAR-10/CIFAR-100, textures, Places365, SUN, LSUN, iNaturalist). 
The results on the $6$ OOD datasets show the general effectiveness of the proposed UM as well as UMAP. 
In Table~\ref{tab:my_label3}, \textbf{OE + UM} can outperform all the OOD baselines, and further improve the OOD performance even though the original detection performance is already well. By equipping with our proposed UM and UMAP, the baselines can outperform their counterparts on most of the OOD datasets. For instance, the FPR95 can decrease from $1.91$ to $1.42$. In Table~\ref{tab:my_label4}, we also take a closer check about results on CIFAR-100 with $6$ OOD datasets. Our proposed method can almost improve all competitive baselines (either the scoring functions or the finetuning with auxiliary outliers) on the $6$ OOD datasets. In both w. $\mathcal{D}_\text{aux}$ and w.o. $\mathcal{D}_\text{aux}$ scenarios, Unleashing Mask can significantly excavate the intrinsic OOD detection capability of the model. In addition to unleashing the excellent OOD performance, UMAP can also maintain the high ID-ACC by learning a binary mask instead of tuning the well-trained original parameters directly. Due to the space limit, we separate the results of SVHN dataset in Tables~\ref{tab:label_zoom_in_SVHN_densenet} and~\ref{tab:label_zoom_in_SVHN_wrn} to show the relative comparison of our UM and UMAP. The results demonstrate the general effectiveness of UM/UMAP compared with the original Energy score. Besides, we find Mahalanobis performs dramatically well which is an outlier method against other post-hoc baselines when SVHN as OOD set in our experiments. Our conjecture about this phenomenon is that the Mahalanobis score can perform better on those specific OOD data by inspecting the class conditional Gaussian distributions~\citep{LeeLLS18}. Nonetheless, the proposed UM/UMAP can still outstrip all the baselines on most OOD datasets under various settings at the perspective of average, showing their distinguishing effectiveness and practicability.

\begin{table}[t!]
    \caption{Fine-grained Results of DenseNet-101 on CIFAR-10 ($\%$). Comparison on different OOD benchmark datasets respectively. $\uparrow$ indicates higher values are better, and $\downarrow$ indicates lower values are better.}
    \vspace{2mm}
    \centering
    \footnotesize
    \renewcommand\arraystretch{0.9}
    \resizebox{\textwidth}{!}{
    \begin{tabular}{c|l|cccccc}
        \toprule[1.5pt]
        \multirow{3}*{\textbf{ID dataset}} & \multirow{3}*{\textbf{Method}} & \multicolumn{6}{c}{\textbf{OOD dataset}} \\
        ~ & ~ & \multicolumn{2}{c}{\textbf{CIFAR-100}} & \multicolumn{2}{c}{\textbf{Textures}} & \multicolumn{2}{c}{\textbf{Places365}}  \\
        ~ & ~ & FPR95$\downarrow$ & AUROC$\uparrow$ & FPR95$\downarrow$ & AUROC$\uparrow$ & FPR95$\downarrow$ & AUROC$\uparrow$ \\
        \midrule[0.6pt]
        \multirow{22}*{\textbf{CIFAR-10}}
         & \textbf{Energy + UMAP} & $54.95\pm2.61$ & $87.72\pm1.05$ & $33.59\pm1.32$ & $92.67\pm0.23$ & $32.80\pm4.14$ & $93.57\pm1.12$ \\
         \cmidrule{2-8}
         & OE & $59.29\pm1.30$ & $88.51\pm0.22$ & $2.89\pm0.30$ & $99.16\pm0.05$ & $11.14\pm1.11$ & $97.50\pm0.19$\\
         & Energy (w. $\mathcal{D}_\text{aux}$) & $79.88\pm2.47$ & $74.99\pm2.40$ & $4.27\pm0.57$ & $98.80\pm0.19$ & $14.22\pm3.99$ & $97.07\pm0.72$\\
         & POEM & $82.30\pm1.57$ & $72.74\pm1.42$ & $1.91\pm0.41$ & $99.40\pm0.10$ & $11.24\pm2.70$ & $96.67\pm0.48$\\
         & \textbf{OE + UM} (ours) & $\textbf{55.74}\pm\textbf{1.47}$ & $\textbf{89.53}\pm\textbf{0.18}$ & $\textbf{1.42}\pm\textbf{0.15}$ & $\textbf{99.49}\pm\textbf{0.04}$ & $\textbf{7.77}\pm\textbf{0.69}$ & $\textbf{98.15}\pm\textbf{0.08}$\\
         & \textbf{Energy (w. $\mathcal{D}_\text{aux}$)+ UM} (ours) & $84.52\pm0.01$ & $70.09\pm0.47$ & $8.30\pm0.88$ & $97.76\pm0.05$ & $20.27\pm1.30$ & $96.06\pm0.31$\\
         & \textbf{POEM + UM} (ours) & $84.87\pm1.56$ & $68.97\pm0.39$ & $4.73\pm0.52$ & $98.88\pm0.13$ & $19.83\pm0.34$ & $96.35\pm0.09$\\
         & \textbf{OE + UMAP} (ours) & $59.05\pm1.41$ & $89.14\pm0.14$ & $1.86\pm0.07$ & $99.35\pm0.00$ & $8.21\pm0.12$ & $98.07\pm0.03$\\
         & \textbf{Energy (w. $\mathcal{D}_\text{aux}$) + UMAP} (ours) & $75.18\pm4.96$ & $80.93\pm4.49$ & $2.24\pm1.34$ & $99.25\pm0.29$ & $9.30\pm2.12$ & $97.90\pm0.40$\\
         & \textbf{POEM + UMAP} (ours) & $79.33\pm4.14$ & $76.89\pm4.86$ & $2.10\pm1.37$ & $99.34\pm0.30$ & $9.94\pm6.92$ & $98.01\pm1.11$\\
         \cmidrule{2-8}
         ~ & \multirow{2}*{\textbf{Method}} &\multicolumn{2}{c}{\textbf{SUN}} &
        \multicolumn{2}{c}{\textbf{LSUN}} & \multicolumn{2}{c}{\textbf{iNaturalist}}\\
        ~ & ~ & FPR95$\downarrow$ & AUROC$\uparrow$ & FPR95$\downarrow$ & AUROC$\uparrow$ & FPR95$\downarrow$ & AUROC$\uparrow$ \\
        \cmidrule{2-8}
         & \textbf{Energy + UMAP} & $29.05\pm2.78$ & $94.41\pm0.73$ & $2.31\pm0.88$ & $99.42\pm0.04$ & $47.22\pm14.03$ & $92.63\pm2.14$ \\
         \cmidrule{2-8}
         & OE & $8.38\pm0.71$ & $98.00\pm0.14$ & $5.90\pm1.43$ & $98.60\pm0.21$ & $5.09\pm0.64$ & $98.76\pm0.11$\\
         & Energy (w. $\mathcal{D}_\text{aux}$) & $10.30\pm3.82$ & $97.77\pm0.64$ & $12.80\pm4.67$ & $96.08\pm1.38$ & $6.93\pm1.86$ & $98.40\pm0.32$\\
         & POEM & $8.39\pm2.42$ & $98.16\pm0.43$ & $9.69\pm1.89$ & $97.25\pm0.58$ & $3.78\pm0.90$ & $98.99\pm0.17$\\
         & \textbf{OE + UM} (ours) & $\textbf{5.51}\pm\textbf{0.44}$ & $\textbf{98.55}\pm\textbf{0.07}$ & $\textbf{3.51}\pm\textbf{0.43}$ & $\textbf{98.93}\pm\textbf{0.09}$ & $\textbf{2.87}\pm\textbf{0.49}$ & $\textbf{99.14}\pm\textbf{0.09}$\\
         & \textbf{Energy (w. $\mathcal{D}_\text{aux}$)+ UM} (ours) & $16.13\pm1.86$ & $96.84\pm0.30$ & $23.27\pm2.40$ & $92.11\pm0.94$ & $11.20\pm2.35$ & $97.54\pm0.42$\\
         & \textbf{POEM + UM} (ours) & $16.16\pm0.57$ & $97.01\pm0.08$ & $25.69\pm0.15$ & $93.38\pm0.27$ & $9.30\pm1.60$ & $98.05\pm0.24$\\
         & \textbf{OE + UMAP} (ours) & $6.16\pm0.02$ & $98.49\pm0.01$ & $4.53\pm0.16$ & $98.86\pm0.06$ & $3.40\pm0.74$ & $98.96\pm0.09$\\
         & \textbf{Energy (w. $\mathcal{D}_\text{aux}$) + UMAP} (ours) & $6.67\pm1.50$ & $98.40\pm0.32$ & $23.50\pm5.61$ & $94.78\pm2.04$ & $3.77\pm2.14$ & $98.93\pm0.46$\\
         & \textbf{POEM + UMAP} (ours) & $7.00\pm5.80$ & $98.46\pm0.96$ & $21.17\pm12.84$ & $94.74\pm3.67$ & $3.63\pm2.78$ & $98.99\pm0.54$\\
        \bottomrule[1.5pt]
    \end{tabular}}
    \label{tab:my_label3}
\end{table}

\begin{table}[t!]
    \caption{Fine-grained Results of DenseNet-101 on CIFAR-100 ($\%$). Comparison on different OOD benchmark datasets respectively. $\uparrow$ indicates higher values are better, and $\downarrow$ indicates lower values are better.}
    \vspace{2mm}
    \centering
    \footnotesize
    \renewcommand\arraystretch{0.9}
    \resizebox{\textwidth}{!}{
    \begin{tabular}{c|l|cccccc}
        \toprule[1.5pt]
        \multirow{3}*{\textbf{ID dataset}} & \multirow{3}*{\textbf{Method}} & \multicolumn{6}{c}{\textbf{OOD dataset}} \\
        ~ & ~ & \multicolumn{2}{c}{\textbf{CIFAR-10}} & \multicolumn{2}{c}{\textbf{Textures}} & \multicolumn{2}{c}{\textbf{Places365}}  \\
        ~ & ~ & FPR95$\downarrow$ & AUROC$\uparrow$ & FPR95$\downarrow$ & AUROC$\uparrow$ & FPR95$\downarrow$ & AUROC$\uparrow$ \\
        \midrule[0.6pt]
        \multirow{30}*{\textbf{CIFAR-100}}
         & MSP & $83.53\pm0.33$ & $75.11\pm0.27$ & $86.90\pm0.18$ & $71.45\pm0.40$ & $85.83\pm0.48$ & $70.54\pm0.42$\\
         & ODIN & $85.29\pm0.17$ & $73.31\pm0.24$ & $86.45\pm1.27$ & $71.91\pm0.27$ & $84.35\pm0.64$ & $73.58\pm0.51$\\
         & Mahalanobis & $98.25\pm0.05$ & $49.60\pm1.51$ & $\textbf{33.06}\pm\textbf{3.76}$ & $\textbf{90.19}\pm\textbf{1.21}$ & $95.20\pm0.49$ & $53.69\pm1.55$\\
         & Energy & $\textbf{82.16}\pm\textbf{0.59}$ & $\textbf{75.31}\pm\textbf{0.21}$ & $90.20\pm0.30$ & $68.98\pm0.34$ & $82.39\pm0.97$ & $73.78\pm0.66$\\
         & \textbf{Energy+UM} (ours) & $89.62\pm0.07$ & $66.12\pm0.93$ & $86.99\pm1.22$ & $65.39\pm1.44$ & $\textbf{77.30}\pm\textbf{2.08}$ & $\textbf{76.06}\pm\textbf{1.05}$\\
         \cmidrule{2-8}
         & OE & $90.97\pm0.46$ & $69.02\pm0.39$ & $14.36\pm0.25$ & $95.92\pm0.12$ & $40.19\pm6.97$ & $90.70\pm2.01$\\
         & Energy (w. $\mathcal{D}_\text{aux}$) & $96.14\pm0.06$ & $57.52\pm0.88$ & $9.02\pm0.06$ & $97.16\pm0.26$ & $35.18\pm4.73$ & $93.29\pm1.14$\\
         & POEM & $96.19\pm0.16$ & $55.82\pm1.05$ & $7.63\pm1.40$ & $97.69\pm0.06$ & $32.67\pm3.73$ & $93.94\pm0.68$\\
         & \textbf{OE + UM} (ours) & $89.61\pm0.08$ & $\textbf{71.24}\pm\textbf{0.08}$ & $16.78\pm0.25$ & $95.60\pm0.08$ & $39.77\pm0.34$ & $91.07\pm0.13$\\
         & \textbf{Energy (w. $\mathcal{D}_\text{aux}$)+ UM} (ours) & $95.38\pm0.45$ & $63.41\pm0.14$ & $6.41\pm0.83$ & $97.77\pm0.33$ & $30.96\pm3.61$ & $92.85\pm0.69$\\
         & \textbf{POEM + UM} (ours) & $95.78\pm0.14$ & $60.23\pm0.70$ & $\textbf{5.17}\pm\textbf{0.18}$ & $\textbf{98.53}\pm\textbf{0.03}$ & $\textbf{23.90}\pm\textbf{0.84}$ & $\textbf{95.45}\pm\textbf{0.11}$\\
         & \textbf{OE + UMAP} (ours) & $\textbf{90.72}\pm\textbf{0.35}$ & $69.76\pm0.25$ & $15.32\pm0.23$ & $95.72\pm0.01$ & $36.42\pm1.91$ & $92.08\pm0.49$\\
         & \textbf{Energy (w. $\mathcal{D}_\text{aux}$) + UMAP} (ours) & $95.39\pm0.10$ & $63.26\pm0.18$ & $6.52\pm0.44$ & $97.83\pm0.18$ & $31.18\pm0.43$ & $93.13\pm0.41$\\
         & \textbf{POEM + UMAP} (ours) & $95.69\pm0.17$ & $61.62\pm0.24$ & $5.23\pm0.58$ & $98.52\pm0.01$ & $26.06\pm1.16$ & $94.91\pm0.25$\\
         \cmidrule{2-8}
         ~ & \multirow{2}*{\textbf{Method}} &\multicolumn{2}{c}{\textbf{SUN}} &
        \multicolumn{2}{c}{\textbf{LSUN}} & \multicolumn{2}{c}{\textbf{iNaturalist}}\\
        ~ & ~ & FPR95$\downarrow$ & AUROC$\uparrow$ & FPR95$\downarrow$ & AUROC$\uparrow$ & FPR95$\downarrow$ & AUROC$\uparrow$ \\
        \cmidrule{2-8}
         & MSP & $88.75\pm0.23$ & $66.75\pm0.25$ & $67.83\pm1.37$ & $82.94\pm0.32$ & $85.00\pm0.73$ & $76.62\pm0.25$\\
         & ODIN & $88.49\pm0.99$ & $69.64\pm0.61$ & $34.80\pm2.55$ & $93.92\pm0.75$ & $81.67\pm2.77$ & $78.36\pm1.57$\\
         & Mahalanobis & $95.53\pm0.37$ & $54.37\pm1.35$ & $89.31\pm4.83$ & $43.19\pm16.36$ & $93.63\pm1.19$ & $49.60\pm1.51$\\
         & Energy & $97.17\pm0.92$ & $69.04\pm0.83$ & $35.09\pm3.17$ & $93.49\pm0.87$ & $85.70\pm2.14$ & $75.82\pm1.72$\\
         & \textbf{Energy+UM} (ours) & $\textbf{81.96}\pm\textbf{2.26}$ & $\textbf{71.47}\pm\textbf{1.88}$ & $\textbf{22.54}\pm\textbf{5.93}$ & $\textbf{94.98}\pm\textbf{1.75}$ & $\textbf{74.28}\pm\textbf{3.72}$ & $\textbf{80.72}\pm\textbf{3.75}$\\
         \cmidrule{2-8}
         & OE & $44.47\pm9.10$ & $90.70\pm2.01$ & $\textbf{5.75}\pm\textbf{1.18}$ & $98.57\pm0.13$ & $25.51\pm4.12$ & $94.46\pm0.88$\\
         & Energy (w. $\mathcal{D}_\text{aux}$) & $32.69\pm5.69$ & $93.63\pm1.48$ & $55.75\pm4.31$ & $87.96\pm1.03$ & $17.34\pm4.54$ & $96.50\pm0.81$\\
         & POEM & $30.45\pm5.11$ & $94.26\pm0.90$ & $46.68\pm3.59$ & $90.30\pm2.17$ & $16.50\pm2.09$ & $96.63\pm0.23$\\
         & \textbf{OE + UM} (ours) & $44.23\pm0.20$ & $90.28\pm0.03$ & $5.80\pm0.33$ & $98.63\pm0.03$ & $26.72\pm1.95$ & $94.51\pm0.43$\\
         & \textbf{Energy (w. $\mathcal{D}_\text{aux}$)+ UM} (ours) & $28.98\pm3.15$ & $93.18\pm0.69$ & $37.56\pm4.81$ & $91.98\pm0.83$ & $10.83\pm2.06$ & $97.09\pm0.69$\\
         & \textbf{POEM + UM} (ours) & $\textbf{21.34}\pm\textbf{1.07}$ & $\textbf{95.76}\pm\textbf{0.22}$ & $33.74\pm6.22$ & $94.43\pm1.18$ & $\textbf{8.85}\pm\textbf{0.23}$ & $\textbf{97.93}\pm\textbf{0.03}$\\
         & \textbf{OE + UMAP} (ours) & $39.58\pm2.02$ & $91.37\pm0.52$ & $5.77\pm0.71$ & $\textbf{98.64}\pm\textbf{0.12}$ & $23.33\pm1.24$ & $95.08\pm0.22$\\
         & \textbf{Energy (w. $\mathcal{D}_\text{aux}$) + UMAP} (ours) & $29.65\pm1.06$ & $93.38\pm0.19$ & $35.94\pm0.75$ & $92.08\pm0.39$ & $13.96\pm2.48$ & $96.87\pm0.15$\\
         & \textbf{POEM + UMAP} (ours) & $23.73\pm0.71$ & $95.25\pm0.06$ & $33.09\pm5.94$ & $93.57\pm0.88$ & $9.76\pm1.09$ & $97.77\pm0.24$\\
        \bottomrule[1.5pt]
    \end{tabular}}
    \label{tab:my_label4}
\end{table}

\subsection{Experiments on Different Model Structure}
\label{app:exp_diff_model}

Following \ref{sec:exp_part2}, we additionally conduct critical experiments on the WRN-40-4~\citep{lin2021mood} backbone to demonstrate the effectiveness of the proposed UM and UMAP. In Figure~\ref{fig10:abla_app_4}, we can find during the model training phase on ID data, there also exists the overlaid OOD detection capability can be explored in later development. In Table~\ref{tab:label_wrn}, we show the comparison of multiple OOD detection baselines, evaluating the OOD performance on the different OOD datasets mentioned in Section~\ref{sec:exp_part1}. The results again demonstrate that our proposed method indeed excavates the intrinsic detection capability and improves the performance. 

\begin{table}[t!]
    \caption{Results of WRN-40-4. Comparison with competitive OOD detection baselines ($\%$). We respectively train WRN-40-4 on CIFAR-10 and CIFAR-100. For those methods involving outliers, we retrieve $5000$ samples from ImageNet-1k. $\uparrow$ indicates higher values are better, and $\downarrow$ indicates lower values are better.}
    \vspace{2mm}
    \centering
    \footnotesize
    \resizebox{\textwidth}{!}{
    \begin{tabular}{c|l|cccc}
        \toprule[1.5pt]
        $\mathcal{D}_\text{in}$ &  Method & AUROC$\uparrow$ & AUPR$\uparrow$ & FPR95$\downarrow$ & ID-ACC$\uparrow$ \\
        \midrule[0.6pt]
        \multirow{6}*{\textbf{CIFAR-10}}
         & MSP\citep{hendrycks17baseline} & $87.12\pm0.25 $ & $87.84\pm0.30$ & $68.29\pm0.96$ & $\textbf{93.86}\pm\textbf{0.19}$ \\
         & ODIN\citep{LiangLS18} & $83.29\pm0.72 $ & $82.74\pm0.79$ & $65.68\pm0.77$ & $\textbf{93.86}\pm\textbf{0.19}$ \\
         & Mahalanobis\citep{10.5555/3327757.3327819} & $77.57\pm0.28 $ & $76.11\pm0.10$ & $61.18\pm0.10$ & $\textbf{93.86}\pm\textbf{0.19}$ \\
         & Energy\citep{liu2020energy} & $87.69\pm0.54 $ & $88.16\pm0.69$ & $58.47\pm1.94$ & $\textbf{93.86}\pm\textbf{0.19}$ \\
         & \textbf{Energy+UM} (ours) & $\textbf{91.74}\pm\textbf{0.43} $ & $\textbf{92.67}\pm\textbf{0.52}$ & $\textbf{40.40}\pm\textbf{1.32}$ & $92.68\pm0.23$  \\
         & \textbf{Energy+UMAP} (ours) & $88.84\pm1.02 $ & $89.31\pm1.44$ & $50.23\pm2.25$ & $\textbf{93.86}\pm\textbf{0.19}$ \\
        \midrule[0.6pt]
        \multirow{6}*{\textbf{CIFAR-100}}
         & MSP\citep{hendrycks17baseline} & $72.34\pm0.63 $ & $72.69\pm0.44$ & $85.40\pm0.59$ &  $\textbf{75.01}\pm\textbf{0.07}$ \\
         & ODIN\citep{LiangLS18} & $68.78\pm0.67 $ & $66.92\pm0.72$ & $85.28\pm0.64$ &  $\textbf{75.01}\pm\textbf{0.07}$  \\
         & Mahalanobis\citep{10.5555/3327757.3327819} & $68.20\pm0.99 $ & $68.30\pm1.15$ & $76.46\pm2.02$ &  $\textbf{75.01}\pm\textbf{0.07}$  \\
         & Energy\citep{liu2020energy} & $74.00\pm 0.41$ & $73.02\pm0.47$ & $81.37\pm0.08$ &  $\textbf{75.01}\pm\textbf{0.07}$  \\
         & \textbf{Energy+UM} (ours) & $76.07\pm0.04 $ & $76.94\pm0.06$ & $74.29\pm1.66$ & $59.08\pm2.75$  \\
         & \textbf{Energy+UMAP} (ours) & $\textbf{77.35}\pm\textbf{0.78} $ & $\textbf{77.43}\pm\textbf{0.91}$ & $\textbf{68.20}\pm\textbf{0.06}$ & $\textbf{75.01}\pm\textbf{0.07}$ \\
        \bottomrule[1.5pt]
    \end{tabular}}
    \label{tab:label_wrn}
\end{table}

As for the fine-grained results of WRN-40-4, we report results on $6$ OOD datasets respectively. When trained on CIFAR-10, UM can outstrip all the scoring function baselines on $5$ OOD datasets except Textures on which Mahalanobis performs better while UMAP still has excellent OOD performance ranking only second to UM. When trained on CIFAR-100, UM and UMAP can also outperform the baselines on most OOD datasets. The fine-grained results of WRN-40-4 further demonstrate the effectiveness of the proposed UM/UMAP on other architectures. The future extension can also take other advanced model structures for OOD detection~\citep{ming2022delving} into consideration. 

\begin{table}[t!]
    \caption{Fine-grained Results of WRN-40-4 on CIFAR-10 ($\%$). Comparison on different OOD benchmark datasets. $\uparrow$ indicates higher values are better, and $\downarrow$ indicates lower values are better.}
    \vspace{2mm}
    \centering
    \footnotesize
    \renewcommand\arraystretch{0.95}
    \resizebox{\textwidth}{!}{
    \begin{tabular}{c|l|cccccc}
        \toprule[1.5pt]
        \multirow{3}*{\textbf{ID dataset}} & \multirow{3}*{\textbf{Method}} & \multicolumn{6}{c}{\textbf{OOD dataset}} \\
        ~ & ~ & \multicolumn{2}{c}{\textbf{CIFAR-100}} & \multicolumn{2}{c}{\textbf{Textures}} & \multicolumn{2}{c}{\textbf{Places365}}  \\
        ~ & ~ & FPR95$\downarrow$ & AUROC$\uparrow$ & FPR95$\downarrow$ & AUROC$\uparrow$ & FPR95$\downarrow$ & AUROC$\uparrow$ \\
        \midrule[0.6pt]
        \multirow{14}*{\textbf{CIFAR-10}}
         & MSP & $70.96\pm0.70$ & $86.08\pm0.08$ & $68.81\pm1.29$ & $86.53\pm0.83$ & $68.31\pm0.25$ & $86.71\pm0.13$\\
         & ODIN & $64.97\pm0.08$ & $83.36\pm0.11$ & $66.86\pm2.24$ & $81.34\pm0.81$ & $66.49\pm1.16$ & $83.47\pm0.93$\\
         & Mahalanobis & $79.84\pm0.55$ & $70.33\pm0.24$ & $\textbf{22.56}\pm\textbf{0.08}$ & $\textbf{94.07}\pm\textbf{0.04}$ & $85.09\pm0.59$ & $67.90\pm0.37$\\
         & Energy & $61.09\pm0.58$ & $86.66\pm0.04$ & $64.29\pm1.72$ & $85.56\pm0.53$ & $55.32\pm0.13$ & $88.29\pm0.26$\\
         & \textbf{Energy+UM} (ours) & $\textbf{57.21}\pm\textbf{1.41}$ & $\textbf{87.56}\pm\textbf{0.15}$ & $46.49\pm1.03$ & $89.74\pm0.45$ & $\textbf{40.68}\pm\textbf{4.46}$ & $\textbf{92.51}\pm\textbf{0.97}$\\
         & \textbf{Energy+UMAP} (ours) & $65.45\pm1.10$ & $84.65\pm0.95$ & $59.14\pm1.64$ & $85.27\pm1.74$ & $48.16\pm1.89$ & $90.43\pm0.47$\\
         \cmidrule{2-8}
         ~ & \multirow{2}*{\textbf{Method}} &\multicolumn{2}{c}{\textbf{SUN}} &
        \multicolumn{2}{c}{\textbf{LSUN}} & \multicolumn{2}{c}{\textbf{iNaturalist}}\\
        ~ & ~ & FPR95$\downarrow$ & AUROC$\uparrow$ & FPR95$\downarrow$ & AUROC$\uparrow$ & FPR95$\downarrow$ & AUROC$\uparrow$ \\
        \cmidrule{2-8}
         & MSP & $68.62\pm0.50$ & $86.95\pm0.23$ & $52.97\pm3.07$ & $92.41\pm0.18$ & $76.05\pm0.01$ & $83.44\pm0.36$\\
         & ODIN & $65.47\pm0.78$ & $83.79\pm1.10$ & $31.89\pm3.44$ & $94.34\pm0.89$ & $79.28\pm0.18$ & $79.80\pm0.35$\\
         & Mahalanobis & $82.92\pm0.28$ & $70.52\pm0.47$ & $64.31\pm0.57$ & $67.75\pm0.55$ & $81.50\pm2.91$ & $74.97\pm2.91$\\
         & Energy & $54.88\pm0.18$ & $88.67\pm0.30$ & $24.99\pm1.38$ & $95.98\pm0.37$ & $75.89\pm0.85$ & $82.40\pm0.22$\\
         & \textbf{Energy+UM} (ours) & $\textbf{38.92}\pm\textbf{3.46}$ & $\textbf{92.98}\pm\textbf{0.95}$ & $\textbf{8.38}\pm\textbf{0.77}$ & $\textbf{98.18}\pm\textbf{0.16}$ & $\textbf{66.02}\pm\textbf{6.70}$ & $\textbf{85.22}\pm\textbf{3.42}$\\
         & \textbf{Energy+UMAP} (ours) & $45.94\pm2.64$ & $91.27\pm0.56$ & $14.10\pm0.04$ & $97.46\pm0.14$ & $74.69\pm0.15$ & $81.13\pm1.12$\\
         
        \bottomrule[1.5pt]
    \end{tabular}}
    \label{tab:label_wrn_zoom_in_cifar10}
\end{table}

\begin{table}[t!]
    \caption{Fine-grained Results of WRN-40-4 on CIFAR-100 ($\%$). Comparison on different OOD benchmark datasets. $\uparrow$ indicates higher values are better, and $\downarrow$ indicates lower values are better.}
    \vspace{2mm}
    \centering
    \footnotesize
    \renewcommand\arraystretch{0.95}
    \resizebox{\textwidth}{!}{
    \begin{tabular}{c|l|cccccc}
        \toprule[1.5pt]
        \multirow{3}*{\textbf{ID dataset}} & \multirow{3}*{\textbf{Method}} & \multicolumn{6}{c}{\textbf{OOD dataset}} \\
        ~ & ~ & \multicolumn{2}{c}{\textbf{CIFAR-10}} & \multicolumn{2}{c}{\textbf{Textures}} & \multicolumn{2}{c}{\textbf{Places365}}  \\
        ~ & ~ & FPR95$\downarrow$ & AUROC$\uparrow$ & FPR95$\downarrow$ & AUROC$\uparrow$ & FPR95$\downarrow$ & AUROC$\uparrow$ \\
        \midrule[0.6pt]
        \multirow{14}*{\textbf{CIFAR-100}}
         & MSP & $83.83\pm0.29$ & $75.50\pm0.21$ & $86.15\pm0.23$ & $72.36\pm0.40$ & $86.72\pm0.29$ & $69.60\pm0.27$\\
         & ODIN & $83.70\pm0.30$ & $74.32\pm0.03$ & $81.57\pm1.74$ & $71.67\pm0.28$ & $88.07\pm0.11$ & $64.83\pm1.36$\\
         & Mahalanobis & $96.89\pm0.11$ & $68.78\pm0.67$ & $\textbf{31.02}\pm\textbf{2.04}$ & $\textbf{91.85}\pm\textbf{0.91}$ & $93.34\pm0.44$ & $61.28\pm1.62$\\
         & Energy & $\textbf{81.32}\pm\textbf{0.47}$ & $\textbf{77.49}\pm\textbf{0.26}$ & $86.38\pm0.49$ & $73.50\pm0.45$ & $84.45\pm0.38$ & $69.82\pm0.63$\\
         & \textbf{Energy+UM} (ours) & $89.23\pm1.51$ & $63.85\pm1.73$ & $78.90\pm0.07$ & $72.58\pm1.33$ & $\textbf{80.46}\pm\textbf{1.99}$ & $70.49\pm1.01$\\
         & \textbf{Energy+UMAP} (ours) & $94.11\pm0.72$ & $60.77\pm0.96$ & $66.94\pm4.49$ & $75.82\pm5.33$ & $82.59\pm0.27$ & $\textbf{71.92}\pm\textbf{3.58}$\\
         \cmidrule{2-8}
         ~ & \multirow{2}*{\textbf{Method}} &\multicolumn{2}{c}{\textbf{SUN}} &
        \multicolumn{2}{c}{\textbf{LSUN}} & \multicolumn{2}{c}{\textbf{iNaturalist}}\\
        ~ & ~ & FPR95$\downarrow$ & AUROC$\uparrow$ & FPR95$\downarrow$ & AUROC$\uparrow$ & FPR95$\downarrow$ & AUROC$\uparrow$ \\
        \cmidrule{2-8}
         & MSP & $88.88\pm0.83$ & $65.22\pm0.85$ & $78.56\pm0.66$ & $79.10\pm0.53$ & $86.72\pm0.29$ & $73.75\pm0.56$\\
         & ODIN & $91.00\pm0.10$ & $59.06\pm1.81$ & $70.14\pm2.42$ & $84.03\pm0.86$ & $87.86\pm1.13$ & $64.52\pm1.51$\\
         & Mahalanobis & $94.22\pm0.01$ & $60.09\pm1.56$ & $89.73\pm2.87$ & $40.81\pm3.07$ & $87.25\pm3.28$ & $74.98\pm2.85$\\
         & Energy & $88.35\pm0.52$ & $64.04\pm0.76$ & $59.84\pm0.06$ & $87.91\pm0.53$ & $88.91\pm0.78$ & $67.81\pm0.91$\\
         & \textbf{Energy+UM} (ours) & $84.04\pm0.09$ & $67.19\pm0.14$ & $33.87\pm1.21$ & $92.29\pm0.02$ & $76.91\pm6.07$ & $79.28\pm4.17$\\
         & \textbf{Energy+UMAP} (ours) & $\textbf{80.53}\pm\textbf{1.31}$ & $\textbf{72.68}\pm\textbf{4.50}$ & $\textbf{27.79}\pm\textbf{2.19}$ & $\textbf{93.39}\pm\textbf{0.57}$ & $\textbf{55.53}\pm\textbf{4.61}$ & $\textbf{85.65}\pm\textbf{0.83}$\\
         
        \bottomrule[1.5pt]
    \end{tabular}}
    \label{tab:label_wrn_zoom_in_cifar100}
\end{table}

\begin{table}[t!]
    \caption{Results of DenseNet-101 when SVHN as OOD set ($\%$). Comparison on different ID benchmark datasets. $\uparrow$ indicates higher values are better, and $\downarrow$ indicates lower values are better.}
    \vspace{2mm}
    \centering
    \footnotesize
    \renewcommand\arraystretch{0.95}
    \resizebox{\textwidth}{!}{
    \begin{tabular}{c|l|ccc|ccc}
        \toprule[1.5pt]
        \multirow{3}*{\textbf{OOD dataset}} & \multirow{3}*{\textbf{Method}} & \multicolumn{6}{c}{\textbf{ID dataset}} \\
        ~ & ~ & \multicolumn{3}{c|}{\textbf{CIFAR-10}} & \multicolumn{3}{c}{\textbf{CIFAR-100}} \\
        ~ & ~ & AUROC$\uparrow$ & AUPR$\uparrow$ & FPR95$\downarrow$ & AUROC$\uparrow$ & AUPR$\uparrow$ & FPR95$\downarrow$ \\
        \midrule[0.6pt]
        \multirow{15}*{\textbf{SVHN}}
         & MSP & $90.67\pm1.35$ & $87.36\pm2.53$ & $63.35\pm3.54$ & $75.00\pm4.09$ & $64.57\pm5.21$ & $84.11\pm4.64$\\
         & ODIN & $92.19\pm1.14$ & $86.30\pm2.17$ & $43.27\pm5.07$ & $72.64\pm2.02$ & $60.22\pm2.09$ & $91.43\pm2.32$\\
         & Mahalanobis & $\textbf{98.12}\pm\textbf{0.92}$ & $93.76\pm5.52$ & $\textbf{7.09}\pm\textbf{1.95}$ & $\textbf{92.63}\pm\textbf{1.04}$ & $\textbf{82.93}\pm\textbf{5.01}$ & $\textbf{32.55}\pm\textbf{2.93}$\\
         & Energy & $92.84\pm1.86$ & $89.23\pm3.62$ & $46.37\pm7.31$ & $77.64\pm3.76$ & $68.45\pm5.28$ & $86.50\pm2.98$\\
         & \textbf{Energy+UM} (ours) & $95.42\pm1.77$ & $93.29\pm2.61$ & $28.90\pm13.53$ & $78.80\pm7.41$ & $70.60\pm9.59$ & $85.67\pm7.28$\\
         & \textbf{Energy+UMAP} (ours) & $97.37\pm1.76$ & $\textbf{95.67}\pm\textbf{2.45}$ & $15.04\pm11.34$ & $83.31\pm5.31$ & $75.10\pm6.76$ & $75.25\pm9.74$\\
         \cmidrule{2-8}
         & OE & $98.97\pm0.05$ & $98.15\pm0.15$ & $3.91\pm0.16$ & $95.50\pm1.49$ & $91.86\pm2.52$ & $21.67\pm8.92$\\
         & Energy (w. $\mathcal{D}_\text{aux}$) & $98.93\pm0.23$ & $98.07\pm0.36$ & $3.17\pm0.33$ & $96.38\pm0.63$ & $94.13\pm1.01$ & $19.14\pm6.80$\\
         & POEM & $94.37\pm0.07$ & $94.50\pm0.06$ & $18.50\pm0.33$ & $94.00\pm0.98$ & $89.25\pm1.99$ & $36.56\pm4.43$\\
         & \textbf{OE + UM} (ours) & $99.43\pm0.09$ & $\textbf{98.98}\pm\textbf{0.17}$ & $\textbf{1.73}\pm\textbf{0.41}$ & $95.96\pm0.95$ & $92.76\pm1.53$ & $20.09\pm6.58$\\
         & \textbf{Energy (w. $\mathcal{D}_\text{aux}$)+ UM} (ours) & $98.16\pm0.48$ & $96.67\pm0.95$ & $7.18\pm2.74$ & $96.22\pm1.78$ & $93.40\pm2.75$ & $18.46\pm12.78$\\
         & \textbf{POEM + UM} (ours) & $98.63\pm0.32$ & $97.49\pm0.51$ & $4.08\pm1.41$ & $95.90\pm0.25$ & $93.89\pm0.01$ & $26.66\pm4.39$\\
         & \textbf{OE + UMAP} (ours) & $\textbf{99.46}\pm\textbf{0.01}$ & $98.85\pm0.01$ & $2.27\pm0.00$ & $95.05\pm0.28$ & $90.57\pm0.18$ & $24.20\pm2.05$\\
         & \textbf{Energy (w. $\mathcal{D}_\text{aux}$) + UMAP} (ours) & $99.21\pm0.11$ & $98.73\pm0.21$ & $1.91\pm0.47$ & $\textbf{97.05}\pm\textbf{0.99}$ & $\textbf{94.41}\pm\textbf{1.99}$ & $\textbf{12.52}\pm\textbf{4.99}$\\
         & \textbf{POEM + UMAP} (ours) & $99.36\pm0.19$ & $98.73\pm0.47$ & $2.58\pm1.53$ & $96.04\pm0.95$ & $93.83\pm1.36$ & $23.57\pm5.85$\\
        \bottomrule[1.5pt]
    \end{tabular}}
    \label{tab:label_zoom_in_SVHN_densenet}
\end{table}

\begin{table}[t!]
    \caption{Results of WRN-40-4 when SVHN as OOD set ($\%$). Comparison on different ID benchmark datasets. $\uparrow$ indicates higher values are better, and $\downarrow$ indicates lower values are better.}
    \vspace{2mm}
    \centering
    \footnotesize
    \renewcommand\arraystretch{0.95}
    \resizebox{\textwidth}{!}{
    \begin{tabular}{c|l|ccc|ccc}
        \toprule[1.5pt]
        \multirow{3}*{\textbf{OOD dataset}} & \multirow{3}*{\textbf{Method}} & \multicolumn{6}{c}{\textbf{ID dataset}} \\
        ~ & ~ & \multicolumn{3}{c|}{\textbf{CIFAR-10}} & \multicolumn{3}{c}{\textbf{CIFAR-100}} \\
        ~ & ~ & AUROC$\uparrow$ & AUPR$\uparrow$ & FPR95$\downarrow$ & AUROC$\uparrow$ & AUPR$\uparrow$ & FPR95$\downarrow$ \\
        \midrule[0.6pt]
        \multirow{6}*{\textbf{SVHN}}
         & MSP & $87.68\pm0.65$ & $81.94\pm1.75$ & $72.35\pm0.88$ & $70.83\pm2.06$ & $56.20\pm0.88$ & $86.30\pm2.09$\\
         & ODIN & $76.98\pm6.71$ & $65.19\pm8.60$ & $84.78\pm8.44$ & $63.00\pm0.59$ & $47.27\pm0.18$ & $94.63\pm1.08$\\
         & Mahalanobis & $\textbf{97.48}\pm\textbf{0.22}$ & $\textbf{94.49}\pm\textbf{0.80}$ & $\textbf{12.07}\pm\textbf{0.71}$ & $\textbf{91.26}\pm\textbf{2.35}$ & $\textbf{84.19}\pm\textbf{3.42}$ & $\textbf{42.72}\pm\textbf{11.32}$\\
         & Energy & $86.26\pm3.92$ & $79.81\pm5.42$ & $72.80\pm13.80$ & $77.47\pm1.78$ & $64.99\pm1.57$ & $80.36\pm1.70$\\
         & \textbf{Energy+UM} (ours) & $95.96\pm1.07$ & $93.79\pm1.42$ & $25.09\pm9.32$ & $86.77\pm2.93$ & $82.39\pm2.40$ & $76.59\pm13.00$\\
         & \textbf{Energy+UMAP} (ours) & $91.67\pm4.21$ & $86.15\pm6.36$ & $44.16\pm17.68$ & $81.24\pm5.09$ & $69.53\pm7.29$ & $69.90\pm12.66$\\
        \bottomrule[1.5pt]
    \end{tabular}}
    \label{tab:label_zoom_in_SVHN_wrn}
\end{table}

\subsection{Additional Experiments on More Advanced Post-hoc and OE-based Methods}

Except for some representative methods (like MSP, Energy, OE, POEM) that have been considered in the experiments, in Table \ref{tab:additional_sota}, we add more advanced post-hoc and OE-based methods \cite{sun2021react,sun2022dice,djurisic2023extremely,katz2022training,wang2023outofdistribution} as comparison to further validate the effectiveness of the proposed UM/UMAP.

\begin{table}[h!]
    \caption{Results of additional comparison with advanced methods of post-hoc scoring functions or fine-tuning with auxiliary outliers ($\%$). $\uparrow$ indicates higher values are better, and $\downarrow$ indicates lower values are better.}
    \vspace{2mm}
    \centering
    \footnotesize
    \begin{tabular}{c|l|ccccc}
        \toprule[1.5pt]
        $\mathcal{D}_\text{in}$ &  Method & AUROC$\uparrow$ & AUPR$\uparrow$ & FPR95$\downarrow$ & ID-ACC$\uparrow$ & w./w.o $\mathcal{D}_\text{aux}$ \\
        \midrule[0.6pt]
        \multirow{9}*{\textbf{CIFAR-10}}
         & ReAct\cite{sun2021react} & $92.76\pm 0.26$ & $93.57\pm 0.34$ & $38.43\pm 1.31$ & $\textbf{93.74}\pm \textbf{0.10}$ & \\
         & DICE\cite{sun2022dice} & $90.66\pm 0.94$ & $91.11\pm 1.04$ & $41.51\pm 2.39$ & $92.55\pm 0.20$ & \\
         & ASH-S\cite{djurisic2023extremely} & $95.13\pm 0.08$ & $95.51\pm 0.14$ & $25.87\pm 0.22$ & $93.69\pm 0.10$ & \\
         & \textbf{ASH-S+UM}(ours) & $\textbf{95.22}\pm \textbf{0.27}$ & $\textbf{95.53}\pm \textbf{0.26}$ & $\textbf{23.94}\pm \textbf{1.27}$ & $92.47\pm 0.13$ & \\
         & \textbf{ASH-S+UMAP}(ours) & $94.57\pm 0.94$ & $94.86\pm 0.95$ & $25.95\pm 3.24$ & $93.69\pm 0.10$ & \\
         \cmidrule{2-7}
         & WOODS\cite{katz2022training} & $97.32\pm0.04$ & $96.78\pm0.12$ & $12.94\pm0.46$ & $\textbf{92.22}\pm\textbf{0.84}$ & $\checkmark$\\
         & DOE\cite{wang2023outofdistribution} & $97.41\pm0.01$ & $97.38\pm0.04$ & $12.84\pm0.23$ & $91.02\pm0.02$ & $\checkmark$\\
         & \textbf{DOE+UM}(ours) & $\textbf{97.49}\pm\textbf{0.00}$ & $\textbf{97.63}\pm\textbf{0.03}$ & $\textbf{11.62}\pm\textbf{0.11}$ & $90.57\pm0.32$ & $\checkmark$\\
         & \textbf{DOE+UMAP}(ours) & $97.48\pm0.01$ & $96.62\pm0.05$ & $11.85\pm0.16$ & $91.02\pm0.02$ & $\checkmark$\\
        \midrule[0.6pt]
        \multirow{9}*{\textbf{CIFAR-100}}
         & ReAct\cite{sun2021react} & $75.66\pm 0.30$ & $75.49\pm 0.24$ & $79.04\pm 0.39$ & $73.69\pm 0.22$ & \\
         & DICE\cite{sun2022dice} & $78.41\pm 0.65$ & $79.11\pm 0.52$ & $69.33\pm 2.35$ & $67.77\pm 0.47$ & \\
         & ASH-S\cite{djurisic2023extremely} & $83.53\pm 0.07$ & $84.02\pm 0.10$ & $61.78\pm 0.41$ & $\textbf{74.09}\pm \textbf{0.13}$ & \\
         & \textbf{ASH-S+UM}(ours) & $\textbf{83.60}\pm \textbf{0.37}$ & $\textbf{84.27}\pm \textbf{0.55}$ & $\textbf{56.56}\pm \textbf{2.13}$ & $64.39\pm 0.87$ & \\
         & \textbf{ASH-S+UMAP}(ours) & $81.98\pm 0.04$ & $82.84\pm 0.01$ & $59.10\pm 0.01$ & $\textbf{74.09}\pm \textbf{0.13}$ & \\
         \cmidrule{2-7}
         & WOODS\cite{katz2022training} & $92.68\pm0.12$ & $87.18\pm0.79$ & $33.10\pm1.85$ & $\textbf{73.08}\pm\textbf{0.97}$ & $\checkmark$\\
         & DOE\cite{wang2023outofdistribution} & $92.77\pm0.01$ & $86.79\pm0.00$ & $30.63\pm0.11$ & $73.01\pm0.17$ & $\checkmark$\\
         & \textbf{DOE+UM}(ours) & $92.93\pm0.08$ & $87.33\pm0.24$ & $\textbf{29.47}\pm\textbf{0.15}$ & $71.11\pm0.03$ & $\checkmark$\\
         & \textbf{DOE+UMAP}(ours) & $\textbf{92.95}\pm\textbf{0.01}$ & $\textbf{87.41}\pm\textbf{0.05}$ & $29.61\pm0.32$ & $73.01\pm0.17$ & $\checkmark$\\
        \bottomrule[1.5pt]
    \end{tabular}
    \label{tab:additional_sota}
\end{table}

\subsection{Additional Verification for Intrinsic OOD Discriminative Capability}
\label{app:verf_intrinsic_power}

In Section~\ref{sec:exp_part3}, we display the overlaid OOD detection capability on CIFAR-10 using SVHN as the OOD dataset. Here, we additionally verify the previously observed trend during training when training DenseNet-101 on CIFAR-100 using iNaturalist as an OOD dataset. In Figure~\ref{fig8:abla_app_2}, we trace the three evaluation metrics during training on CIFAR-100 using $4$ different learning rate schedules. Consistent with the original experiment, we still use iNaturalist as the OOD dataset. It can be seen for all three metrics that exists a middle stage where the model has the better OOD detection capability (For FPR95, it is smaller (better) in the middle stage; for AUROC and AUPR, they are higher (better) in the middle stage). Besides that, we also look into the change of OOD performance on other architecture (e.g., WRN-40-4) in Figure~\ref{fig9:abla_app_3} and Figure~\ref{fig10:abla_app_4}. In Figure~\ref{fig9:abla_app_3}, we display the curves of three metrics of WRN-40-4 when trained on CIFAR-10 with SVHN and Textures as OOD datasets.  The trend that the OOD performance first goes better and then converges to worse OOD performance can be reflected. In Figure~\ref{fig10:abla_app_4}, we continually provide curves of the three metrics of WRN-40-4 during training on CIFAR-100 with iNaturalist, Places365, and SUN as OOD datasets. A clear better middle stage can still be 
excavated in this scenario.

\begin{figure}[t!]
    \begin{center}
    \subfigure[CIFAR10 FPR95 Curves]{
    \includegraphics[scale=0.18]{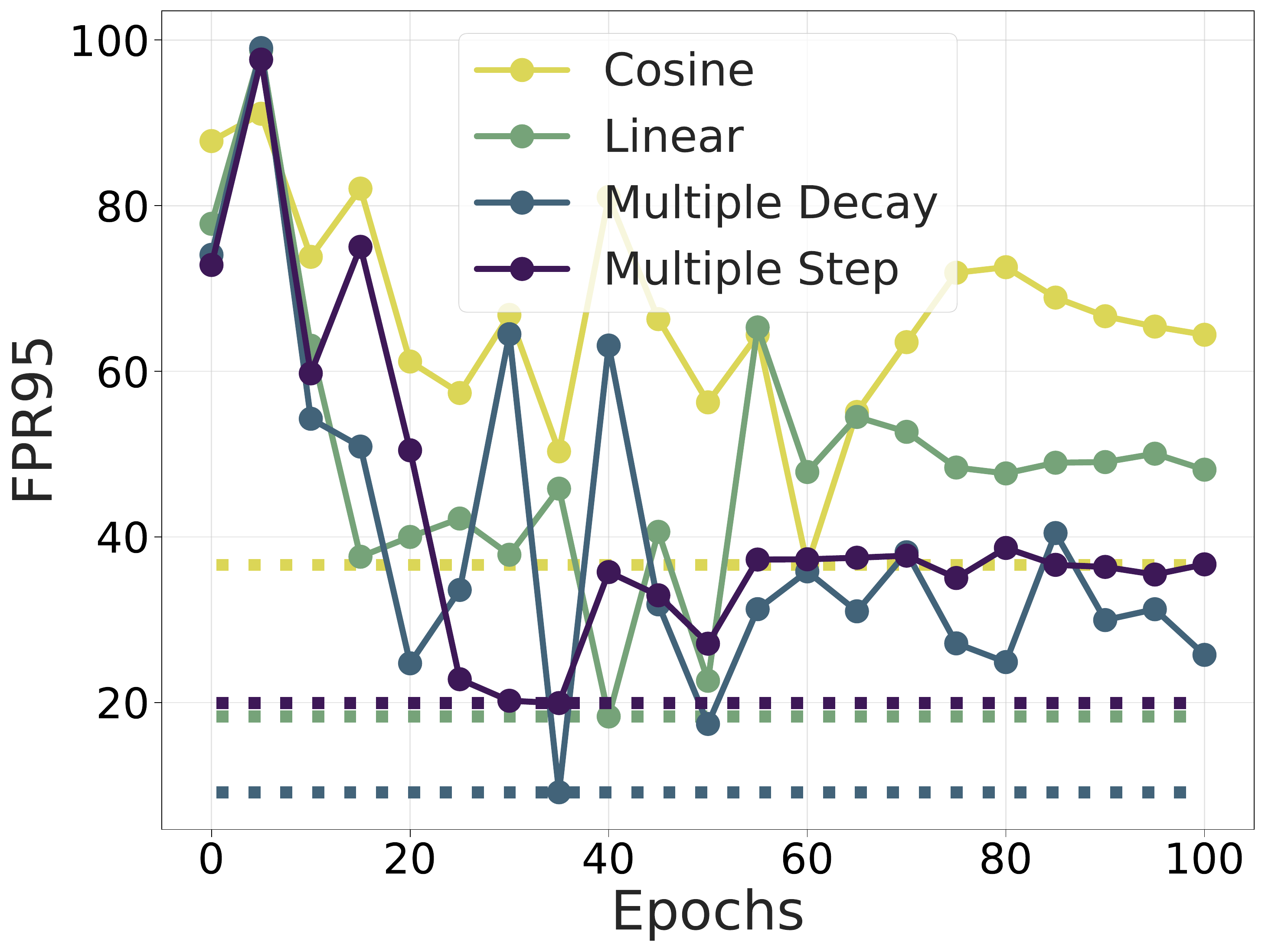}
    \label{fig8:abla_app_2_a}
    }
    \subfigure[CIFAR10 AUROC Curves]{
    \includegraphics[scale=0.18]{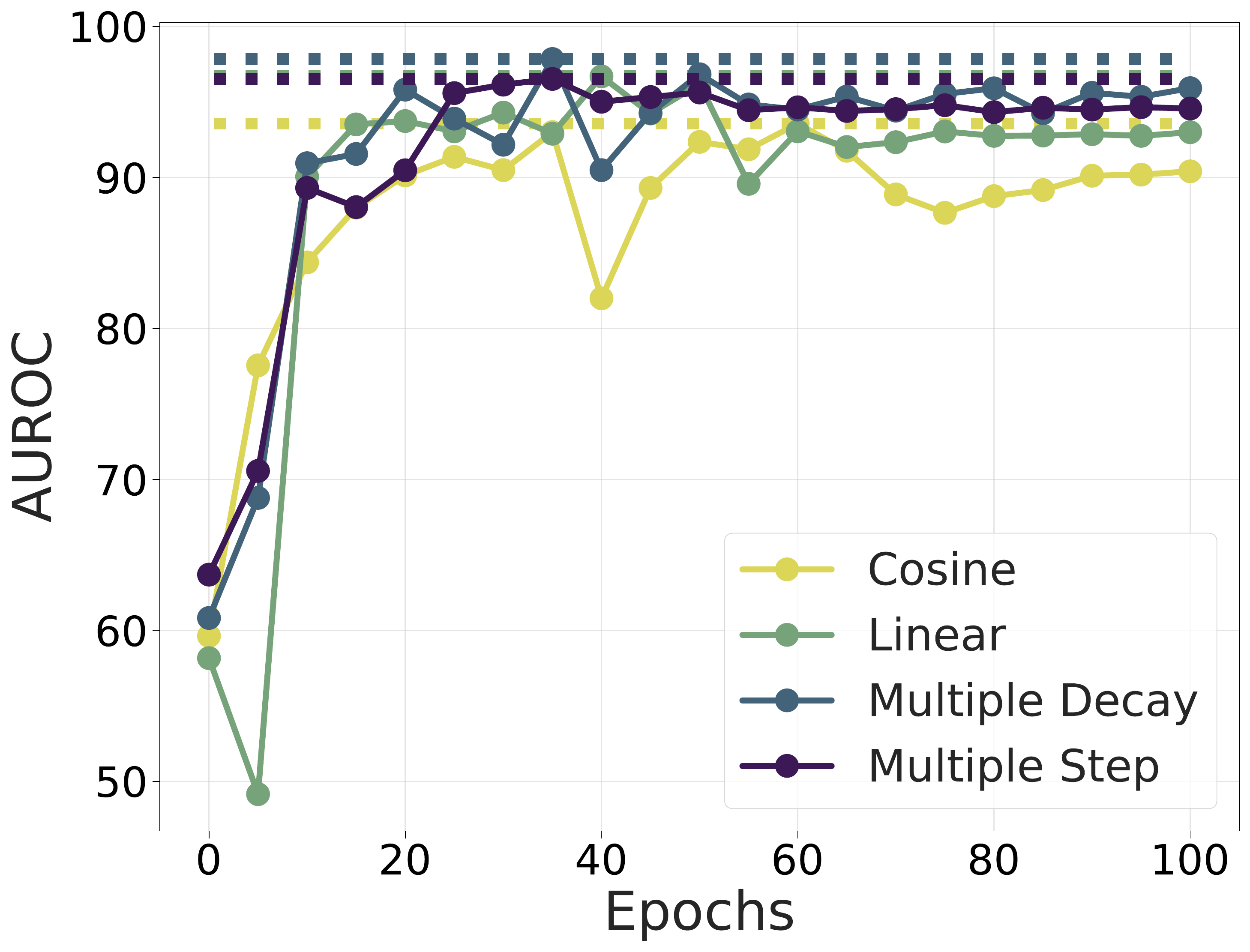}
    \label{fig8:abla_app_2_b}
    }
    \subfigure[CIFAR10 AUPR Curves]{
    \includegraphics[scale=0.18]{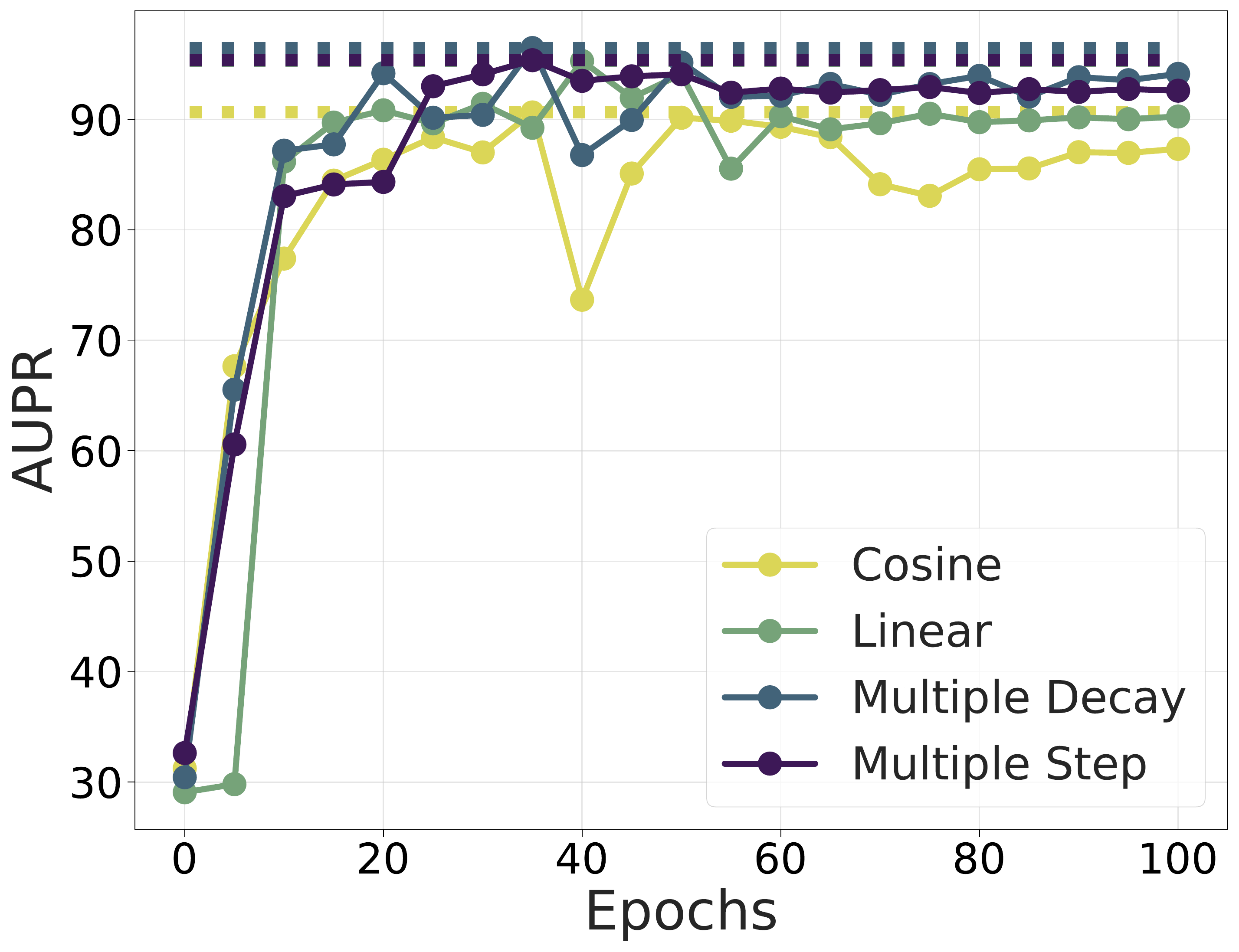}
    \label{fig8:abla_app_2_c}
    }
    \subfigure[CIFAR100 FPR95 Curves]{
    \includegraphics[scale=0.18]{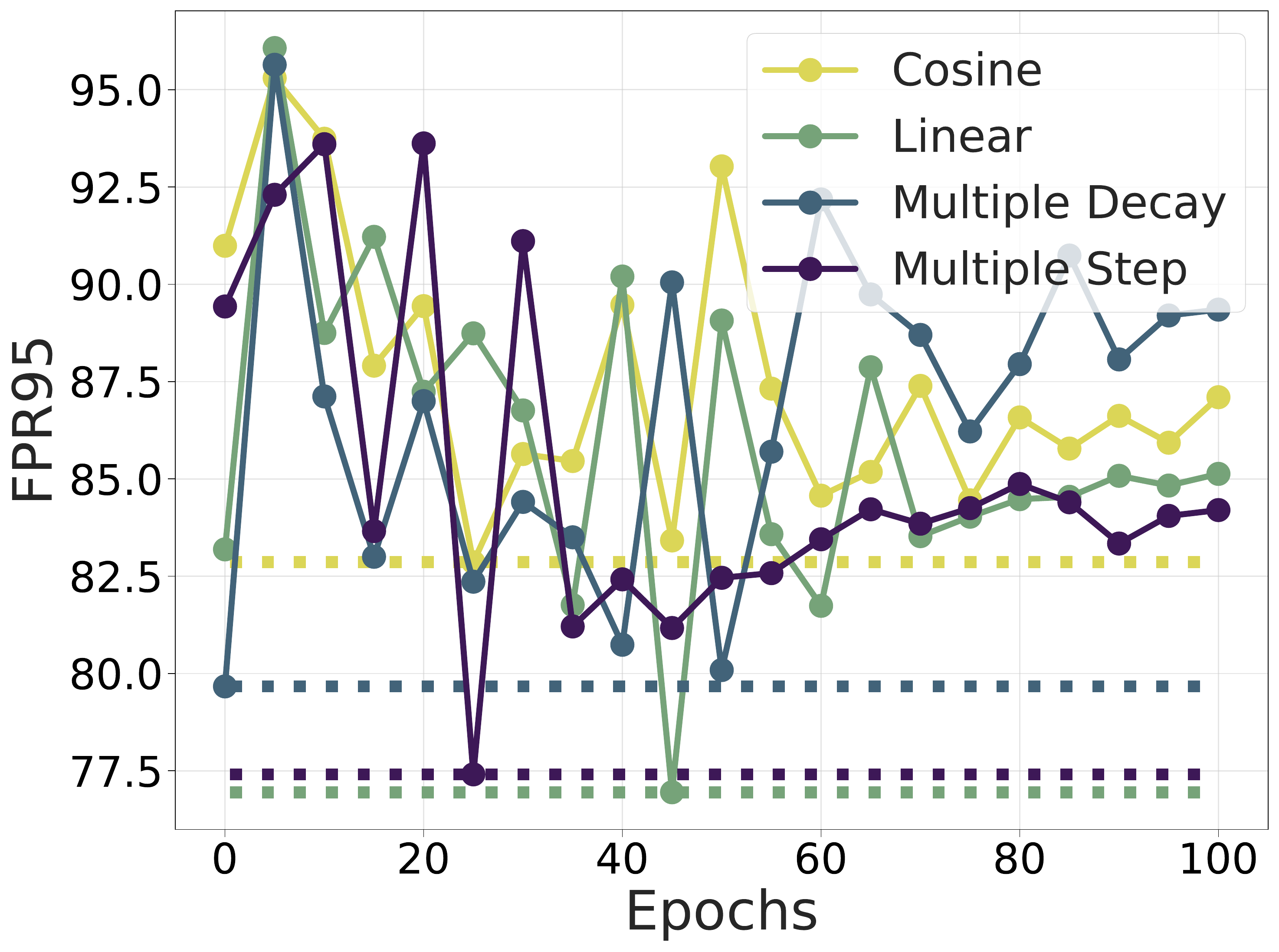}
    \label{fig8:abla_app_2_d}
    }
    \subfigure[CIFAR100 AUROC Curves]{
    \includegraphics[scale=0.18]{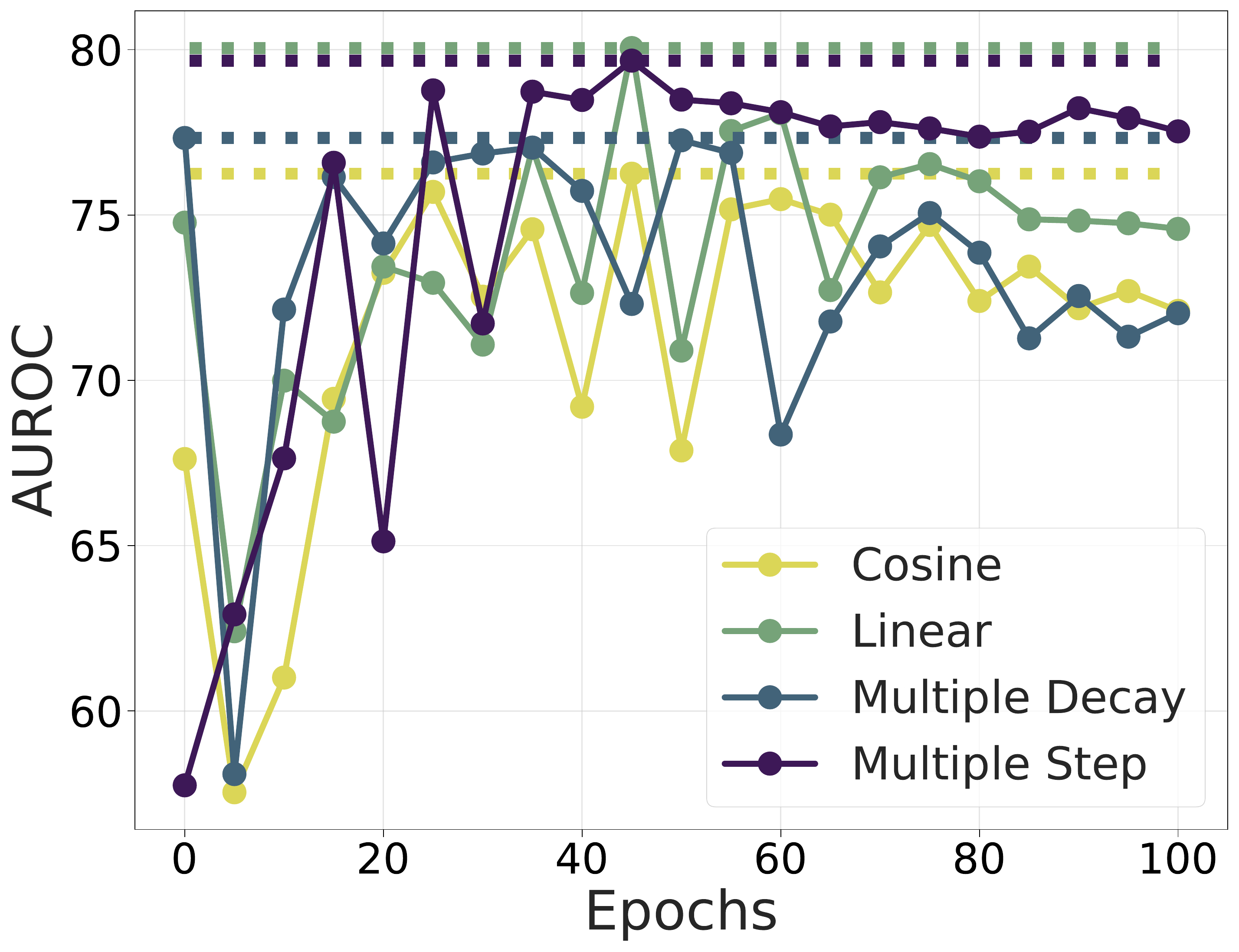}
    \label{fig8:abla_app_2_e}
    }
    \subfigure[CIFAR100 AUPR Curves]{
    \includegraphics[scale=0.18]{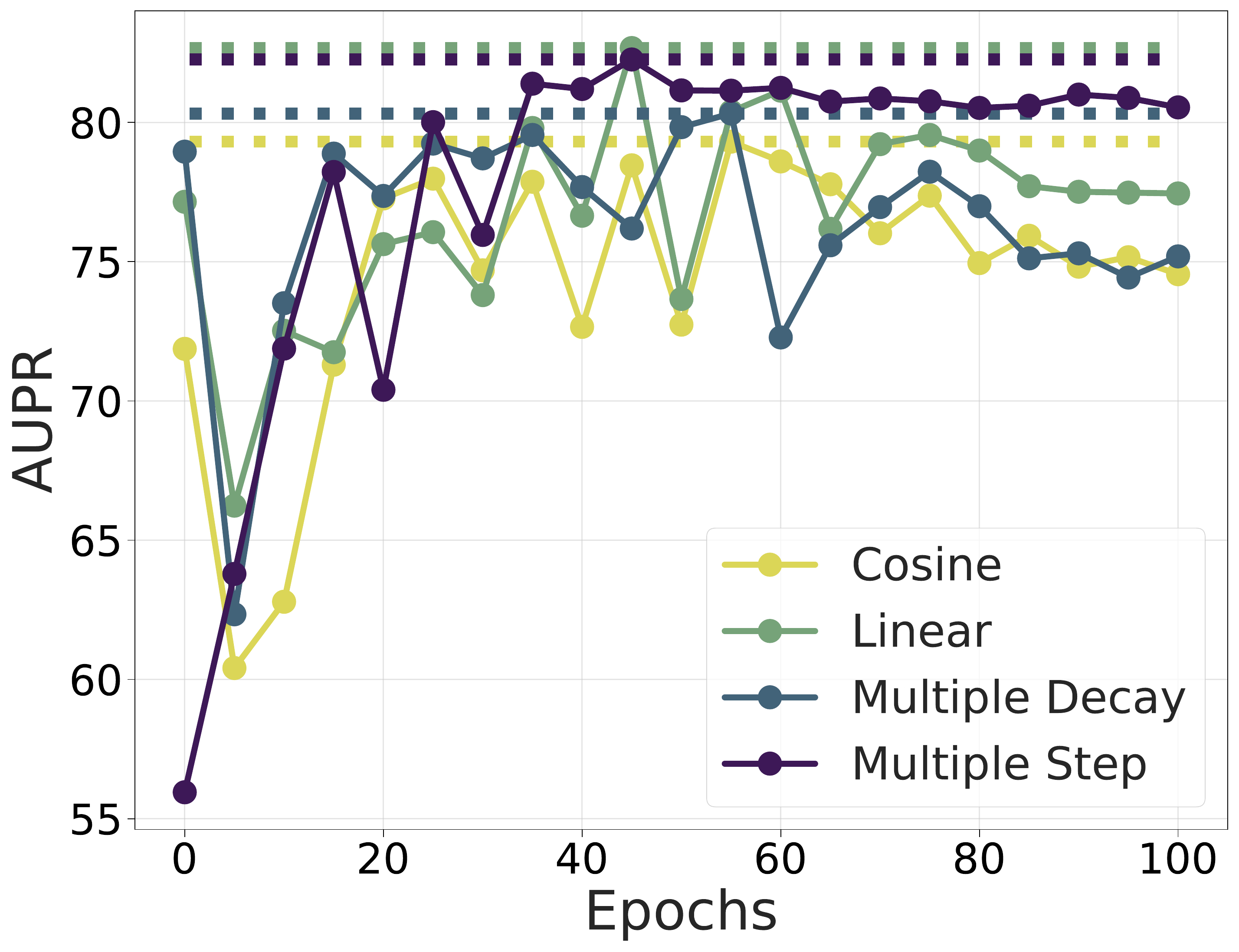}
    \label{fig8:abla_app_2_f}
    }

    \end{center}
    \caption{Ablation studies on three metrics with 4 different learning rate schedules. The model is DenseNet-101 trained on CIFAR-100 with iNaturalist as the OOD dataset. (a) change of FPR95 throughout the pruning phase when training on CIFAR-100; (b) change of AUROC throughout the pruning phase when training on CIFAR-100; (c) change of AUPR throughout the pruning phase when training on CIFAR-100. It demonstrates a better middle stage exists according to the three metrics.}
    \label{fig8:abla_app_2}
\end{figure}

\begin{figure}[t!]
    \begin{center}
    \subfigure[FPR95 Curves]{
    \includegraphics[scale=0.18]{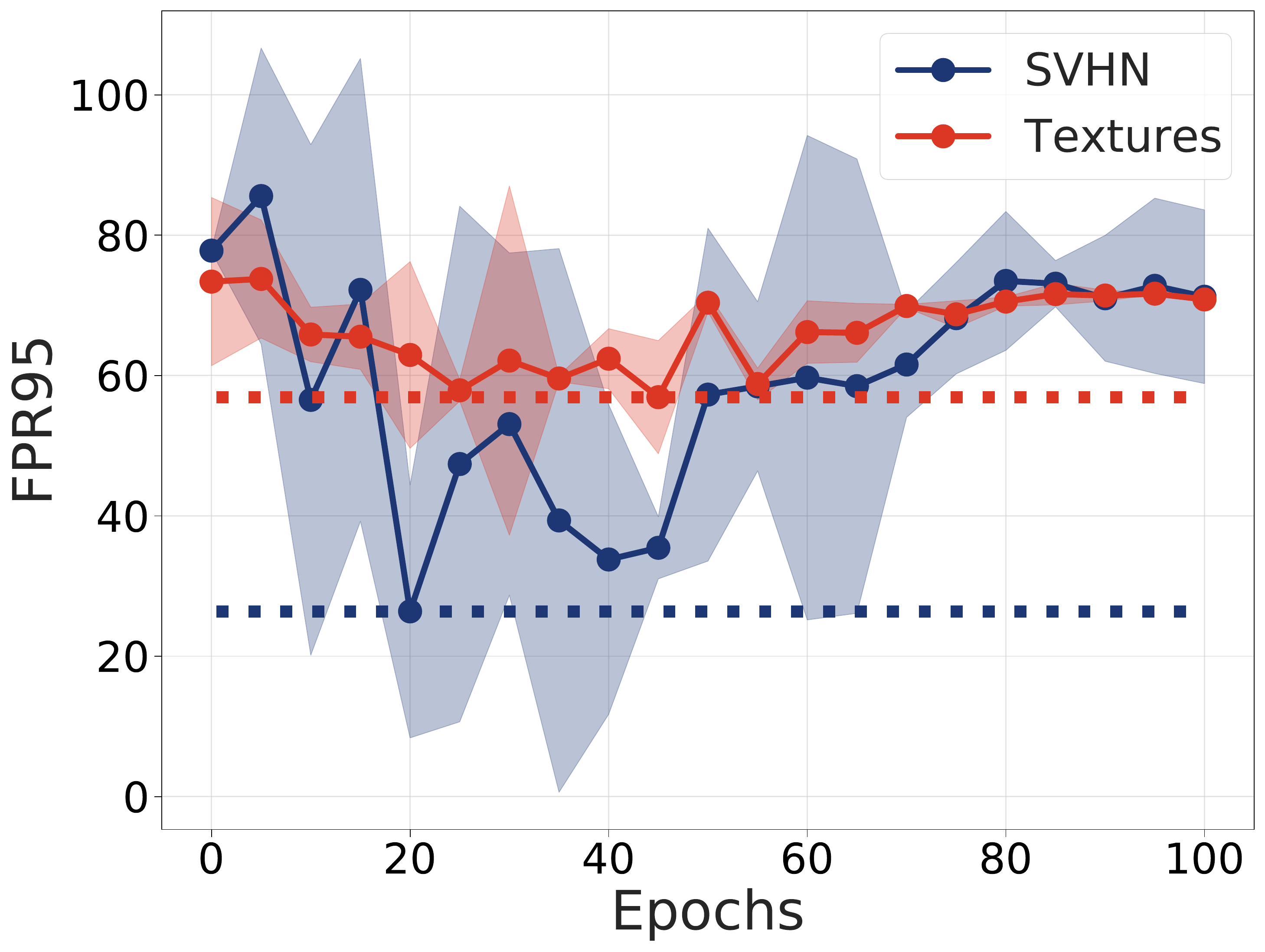}
    \label{fig9:abla_app_3_a}
    }
    \subfigure[AUROC Curves]{
    \includegraphics[scale=0.18]{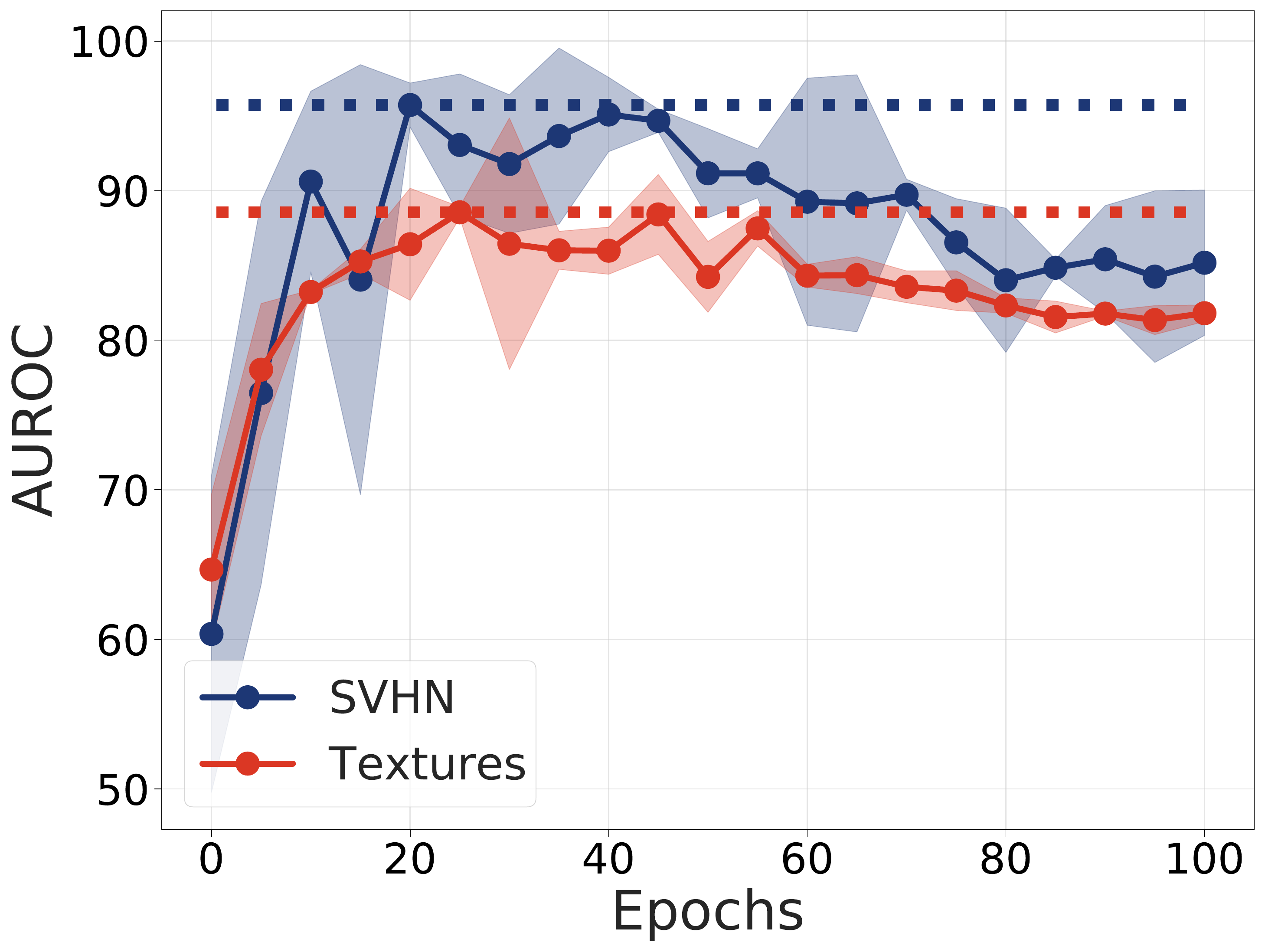}
    \label{fig9:abla_app_3_b}
    }
    \subfigure[AUPR Curves]{
    \includegraphics[scale=0.18]{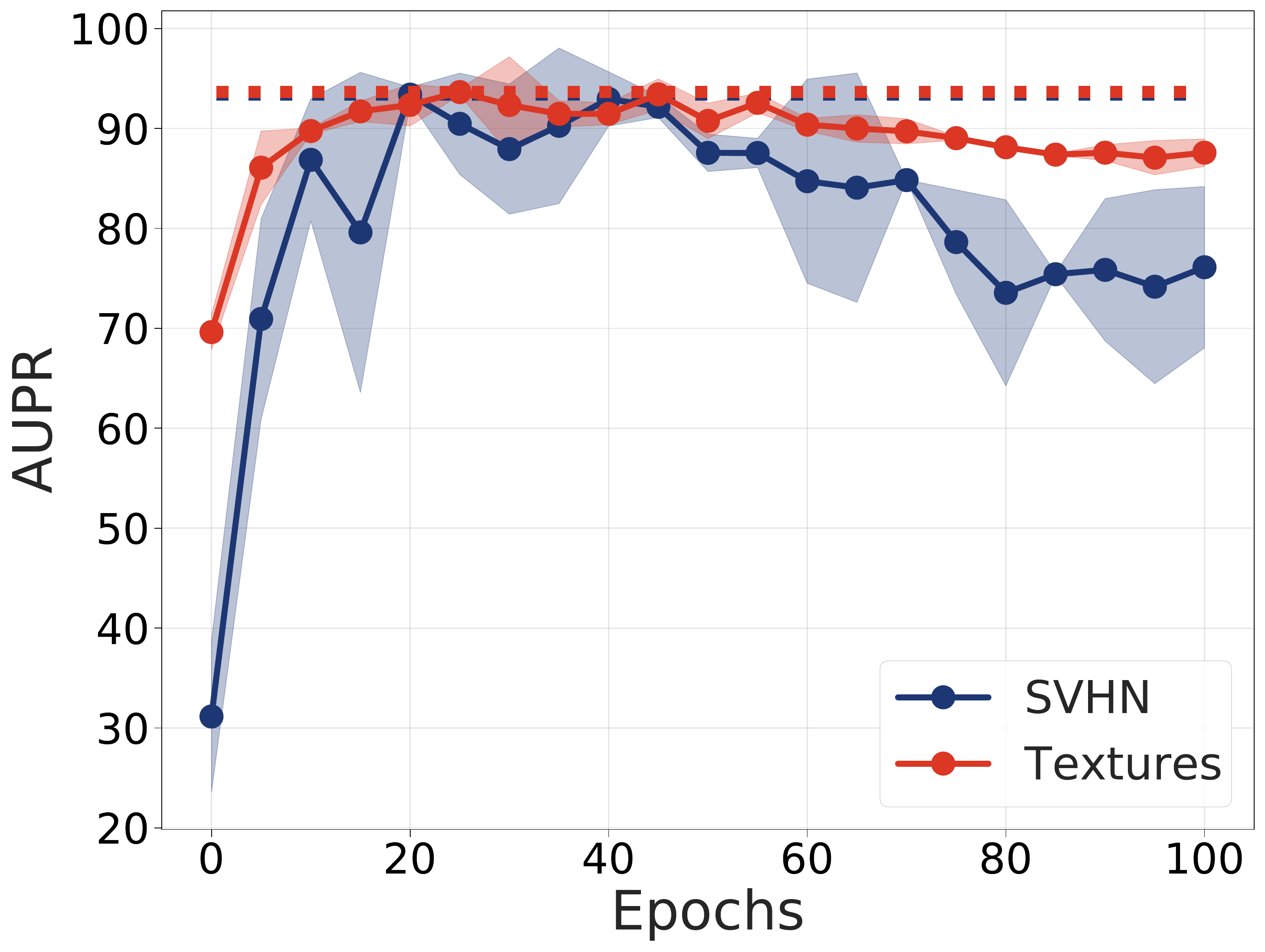}
    \label{fig9:abla_app_3_c}
    }
    \end{center}
    \caption{Ablation studies on three metrics of WRN-40-4 with CIFAR-10 as ID dataset, SVHN, and Textures as OOD datasets. (a) change of FPR95 throughout the pruning phase when training on CIFAR-10; (b) change of AUROC throughout the pruning phase when training on CIFAR-10; (c) change of AUPR throughout the pruning phase when training on CIFAR-10. It demonstrates a better middle stage exists according to the three metrics.}
    \label{fig9:abla_app_3}
\end{figure}

\begin{figure}[t!]
    \begin{center}
    \subfigure[FPR95 Curves]{
    \includegraphics[scale=0.18]{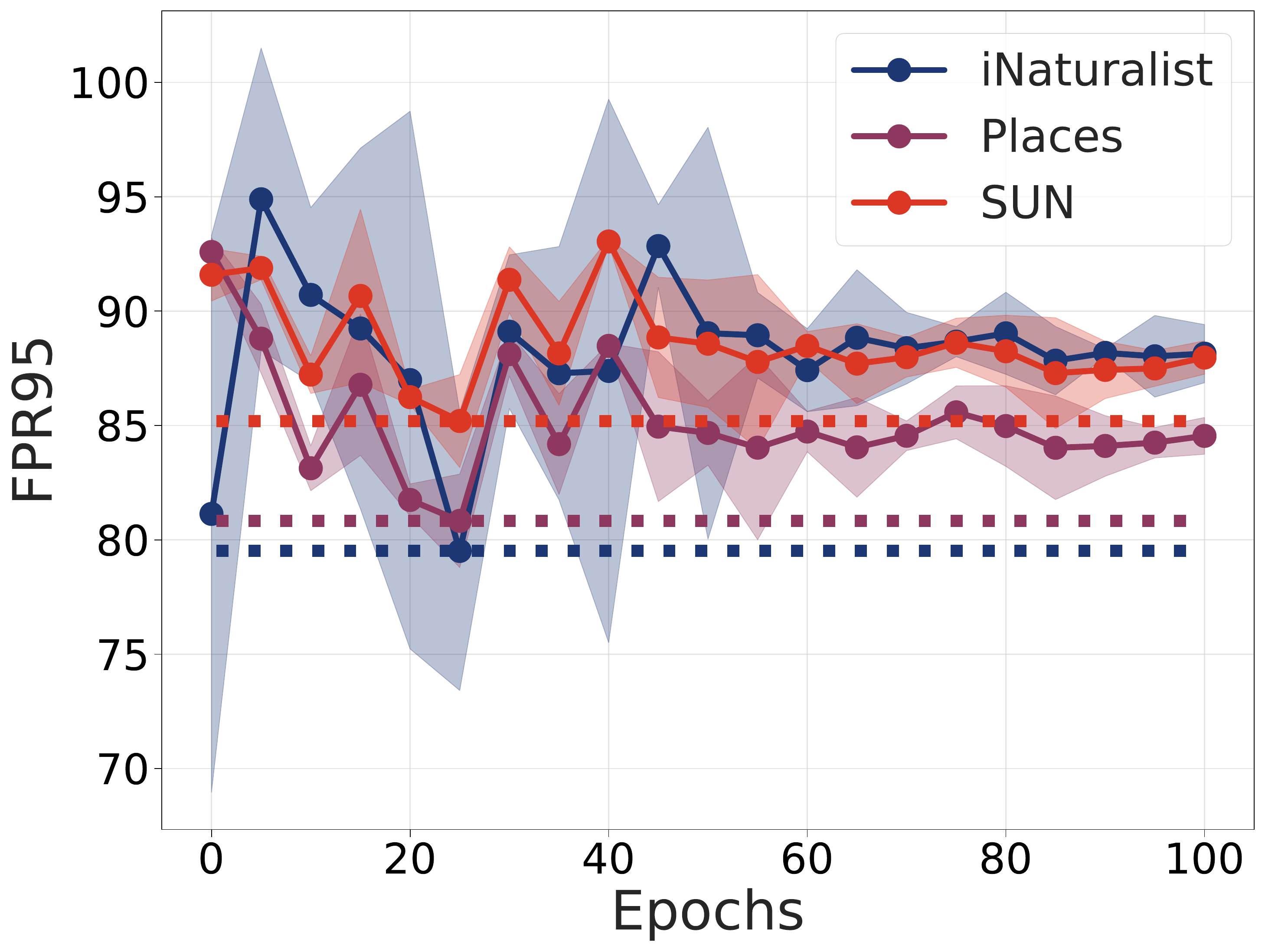}
    \label{fig10:abla_app_4_a}
    }
    \subfigure[AUROC Curves]{
    \includegraphics[scale=0.18]{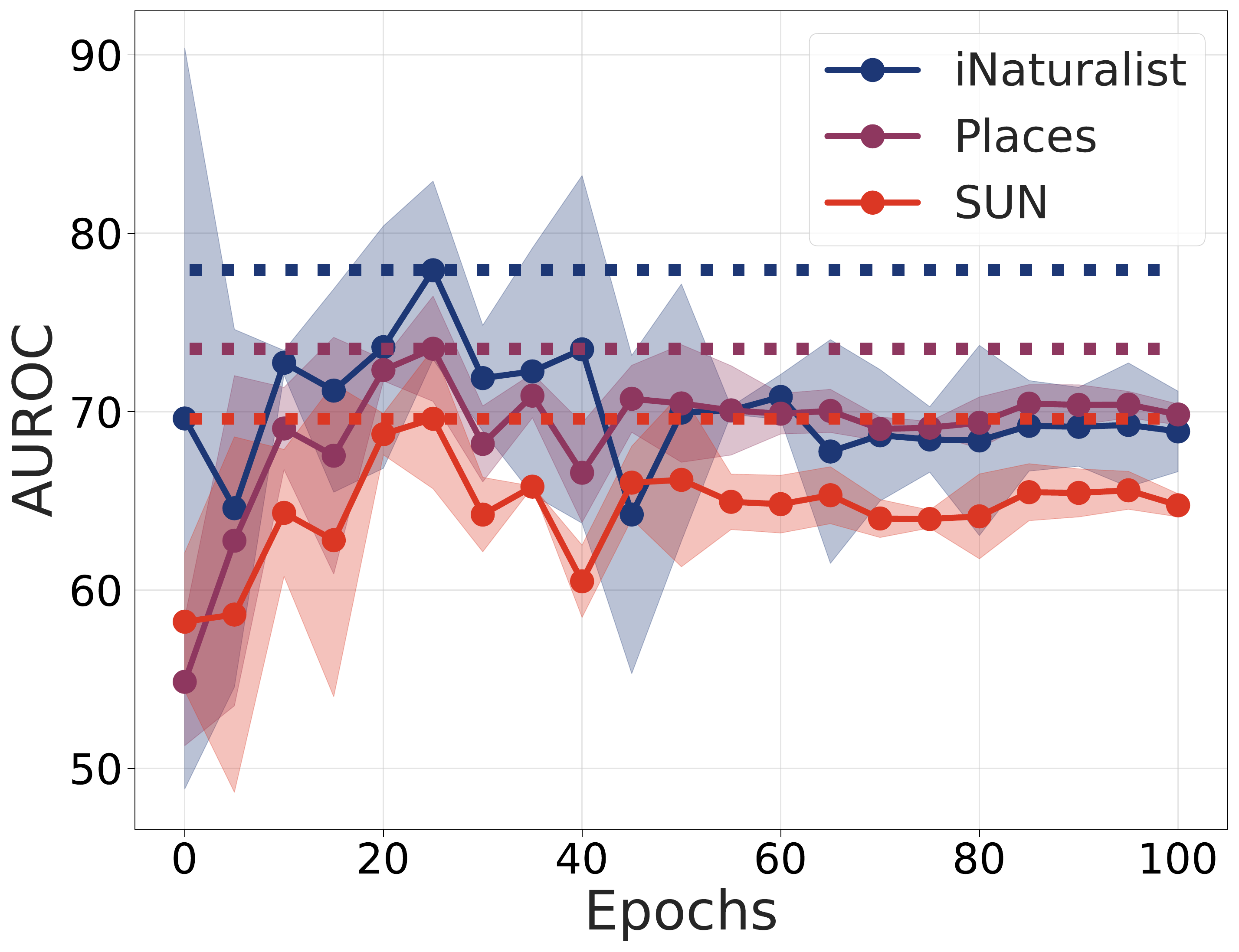}
    \label{fig10:abla_app_4_b}
    }
    \subfigure[AUPR Curves]{
    \includegraphics[scale=0.18]{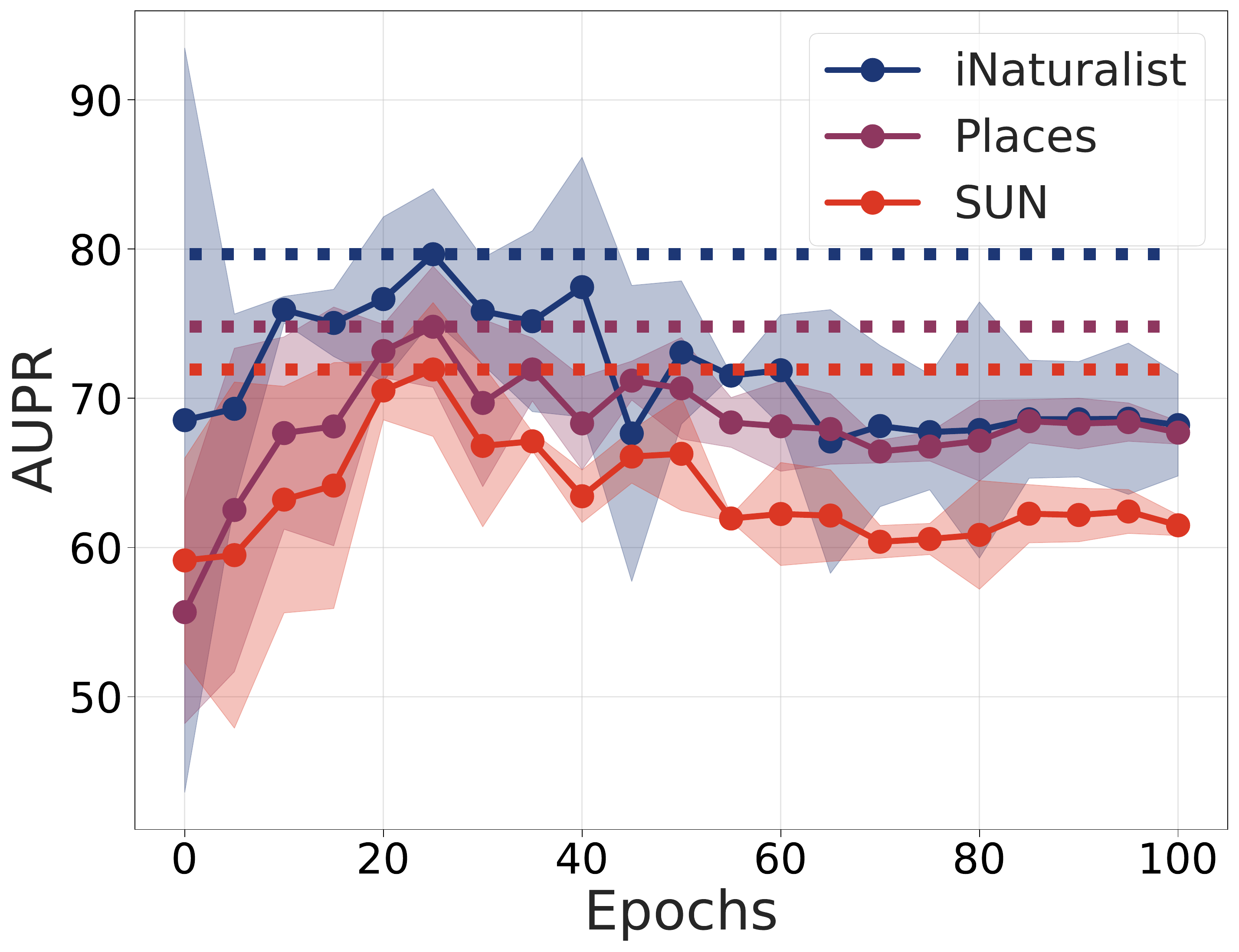}
    \label{fig10:abla_app_4_c}
    }
    \end{center}
    \caption{Ablation studies on three metrics of WRN-40-4 with CIFAR-100 as ID dataset, iNaturalist, Places365, and SUN as OOD datasets. (a) change of FPR95 throughout the pruning phase when training on CIFAR-100; (b) change of AUROC throughout the pruning phase when training on CIFAR-100; (c) change of AUPR throughout the pruning phase when training on CIFAR-100. It demonstrates a better middle stage exists according to the three metrics.}
    \label{fig10:abla_app_4}
\end{figure}

\subsection{Ablation on UMAP which Adopting Pruning on UM.} 
\label{app:abl_UMAP}

We conduct various experiments to see whether pruning has an impact on Unleashing Mask itself. To be specific, we expect the pruning to learn a mask on the given model while not impairing the excellent OOD performance that UM brings. In Figure~\ref{fig11:abla_app_5}, it presents that pruning from a wide range (e.g. $p \in [0.3, 0.9]$) can well maintain the effectiveness of UM while possessing a terrific convergence trend. For simplicity, we use prune to indicate the original pruning approach and UMAP indicate UM with pruning on the mask with our newly designed forgetting objective in Figure~\ref{fig11:abla_app_5}. In Figure~\ref{fig11:abla_app_5_a}, the solid lines represent the proposed UMAP and the dashed lines represent only pruning the well-trained model at prune rates $0.2$, $0.5$, and $0.8$. While the model's OOD performance can't be improved (not better than the baseline) through only pruning, using our proposed forgetting objective for the loss constrain can significantly bring out better OOD performance at a wide range of mask rates (e.g. $p \in [0.5, 0.8]$). In Figure~\ref{fig11:abla_app_5_b}, we intuitively reflect the effect of the estimated loss constraint by the initialized mask which redirects the gradients when the loss reaches the value, while the loss will just approach $0$ when pruning only. In Figure~\ref{fig11:abla_app_5_c}, we can see that ID-ACC for both UMAP and Prune can converge to approximately the same high level ($92\%\sim 94\%$), though we can simply remove the learned mask to recover the original ID-ACC.

\begin{figure}[t!]
    \begin{center}
    \subfigure[FPR95 Curves]{
    \includegraphics[scale=0.18]{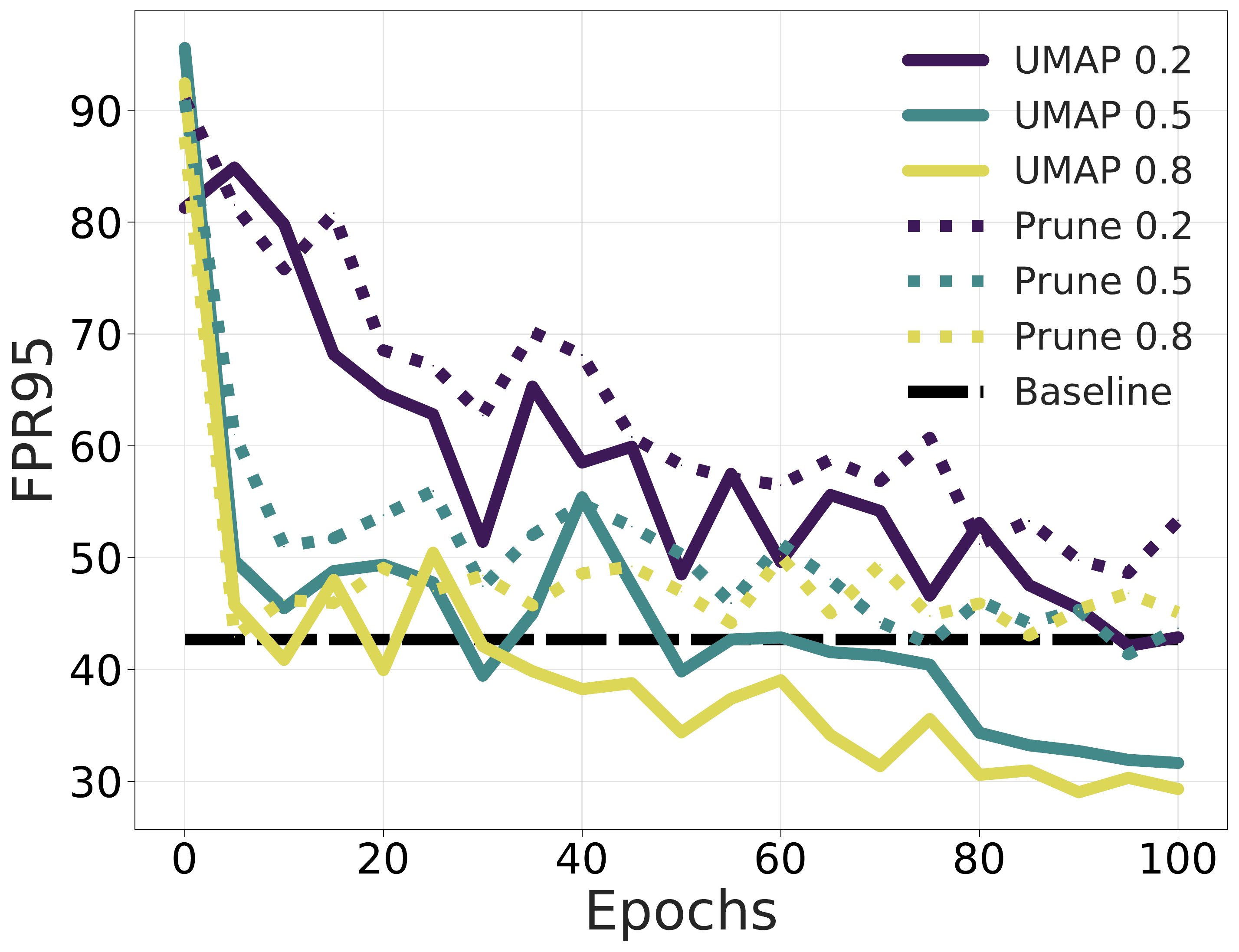}
    \label{fig11:abla_app_5_a}
    }
    \subfigure[Train Loss Curves]{
    \includegraphics[scale=0.18]{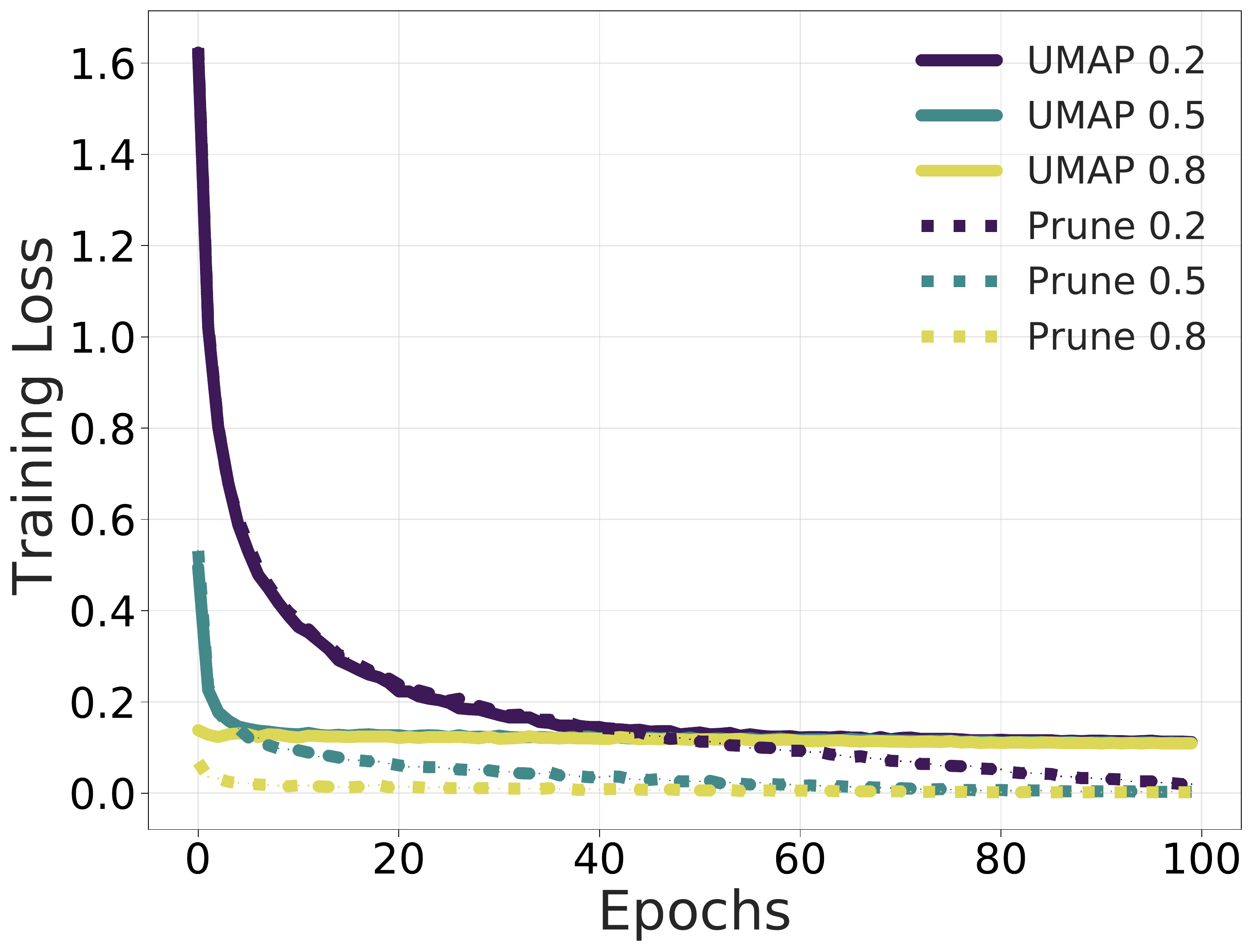}
    \label{fig11:abla_app_5_b}
    }
    \subfigure[Test Acc Curves]{
    \includegraphics[scale=0.18]{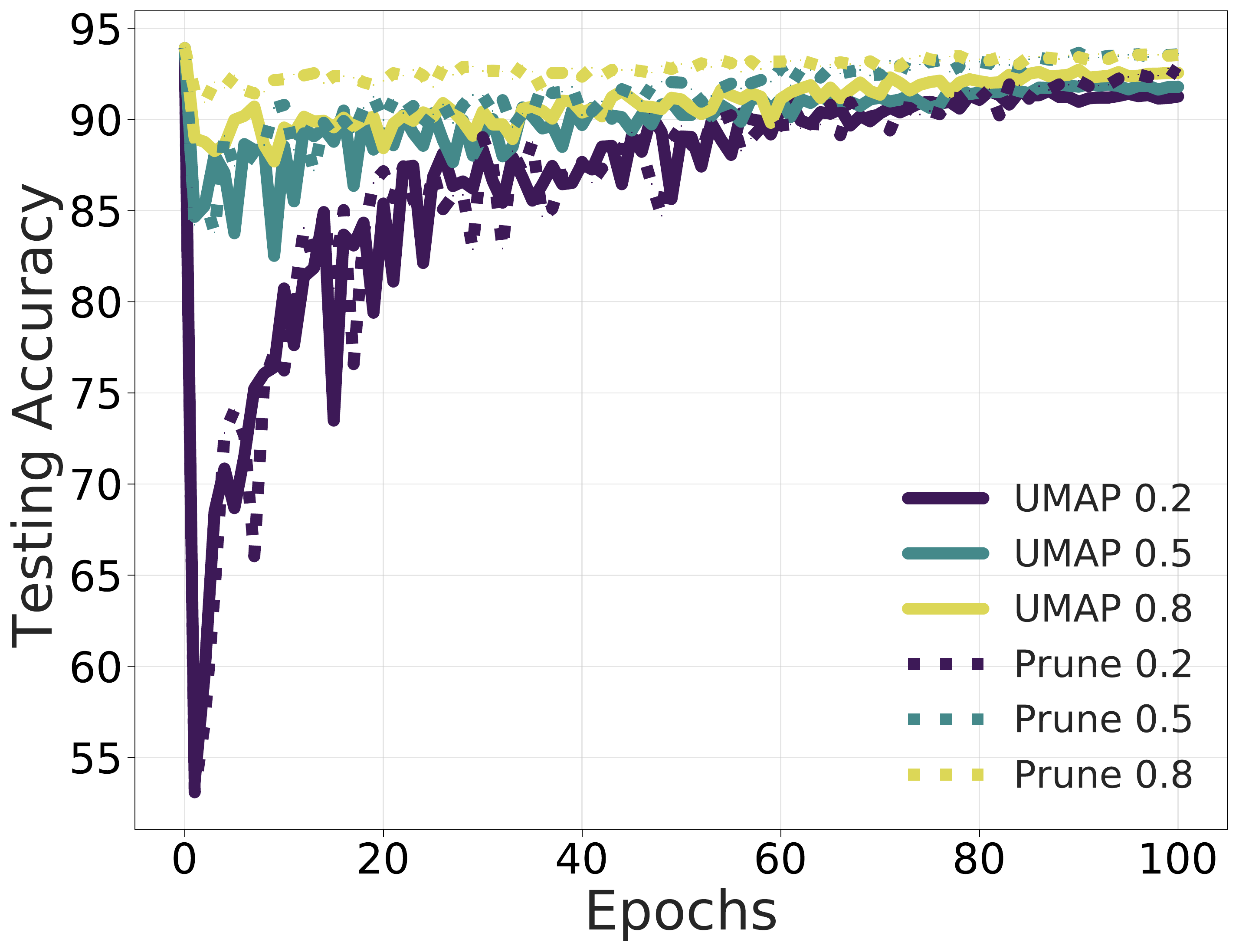}
    \label{fig11:abla_app_5_c}
    }
    \end{center}
    \caption{Ablation studies on Prune Rate of UMAP. (a) change of OOD performance throughout the pruning phase; (b) training loss converges to estimated loss constraint properly; (c) though ID-ACC is not taken into consideration for UMAP, it still rises high after training for 100 epochs.}
    \label{fig11:abla_app_5}
\end{figure}

\begin{figure}[t!]
    \begin{center}
    \subfigure[FPR95 Curves]{
    \includegraphics[scale=0.18]{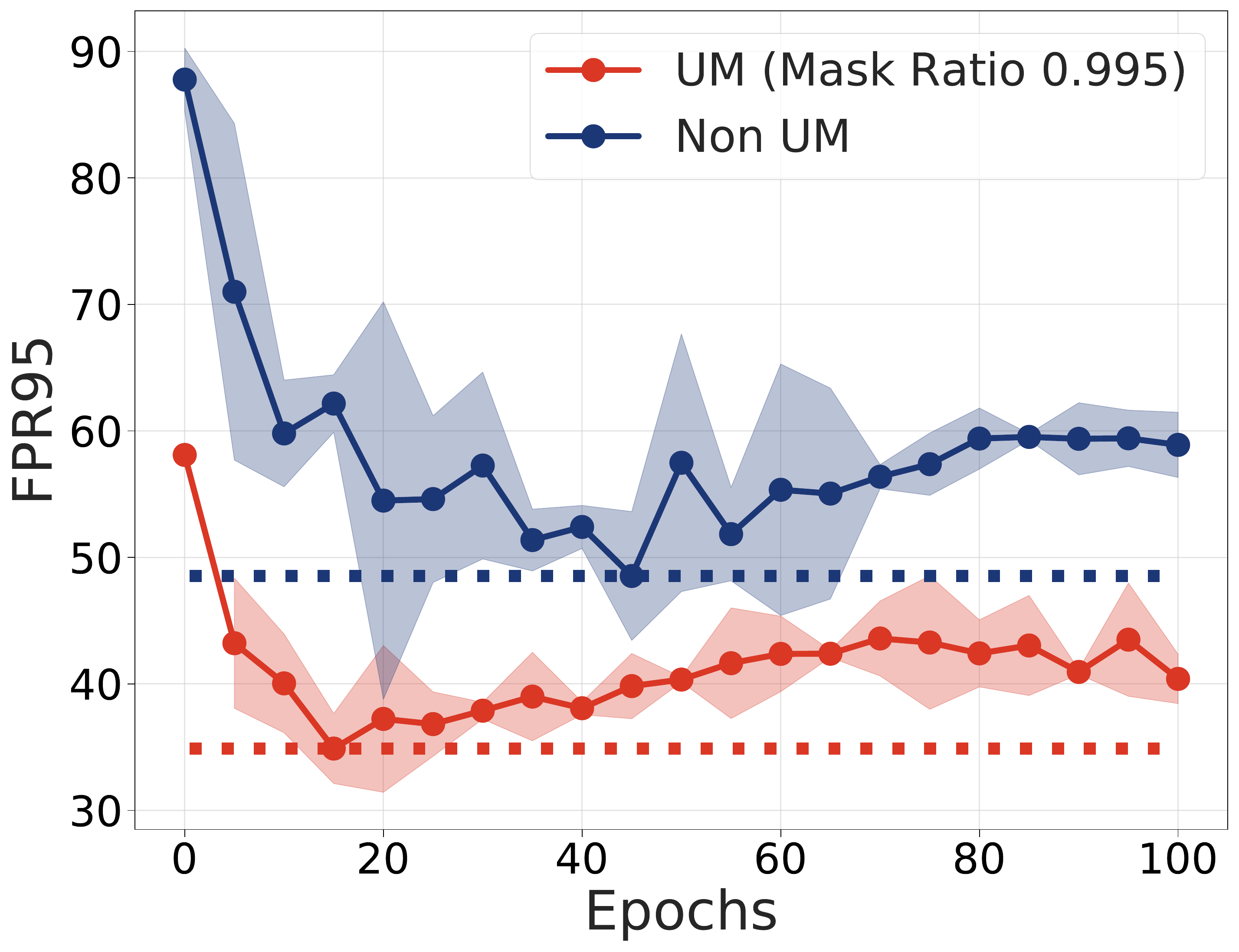}
    \label{fig12:abla_app_6_a}
    }
    \subfigure[AUROC Curves]{
    \includegraphics[scale=0.18]{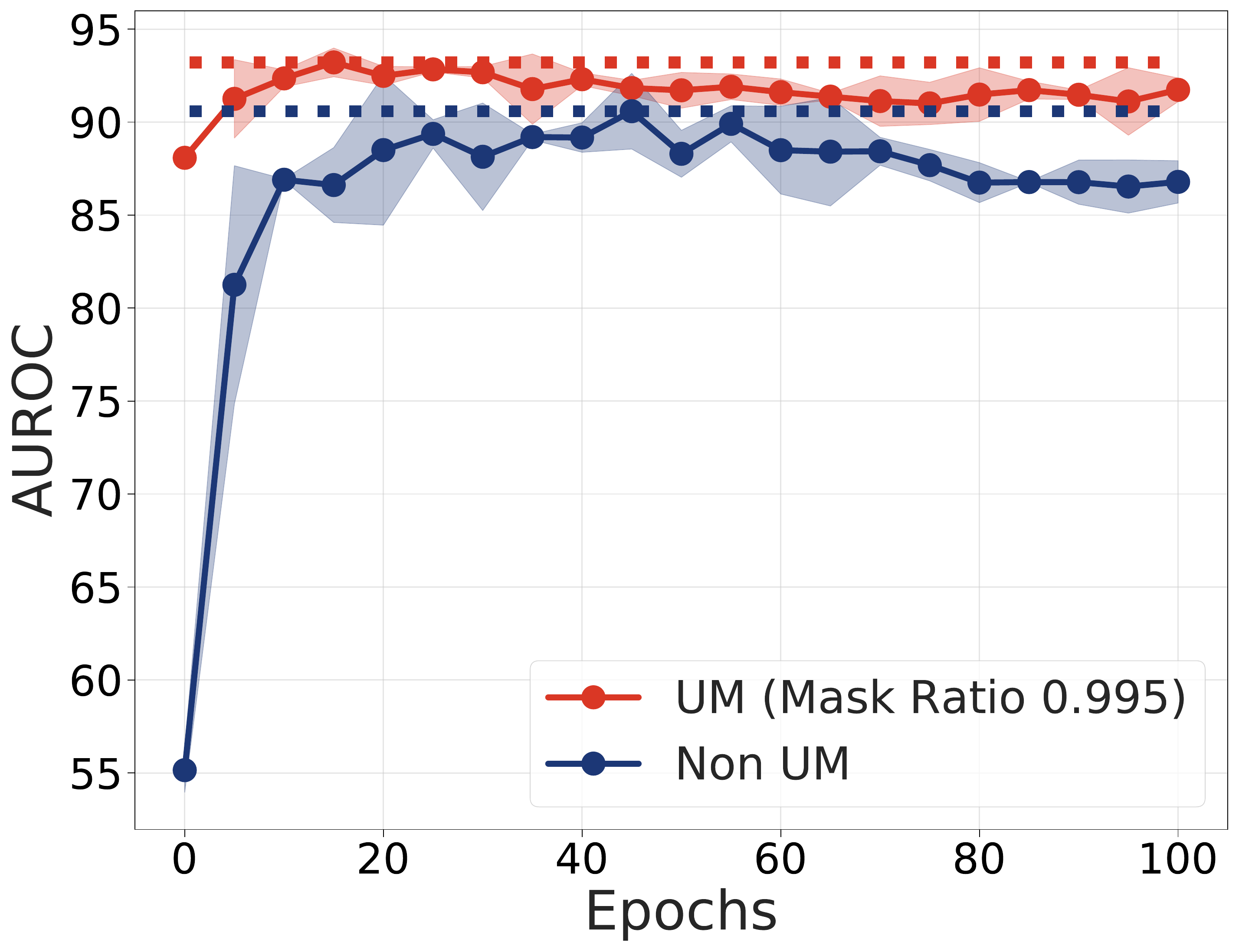}
    \label{fig12:abla_app_6_b}
    }
    \subfigure[AUPR Curves]{
    \includegraphics[scale=0.18]{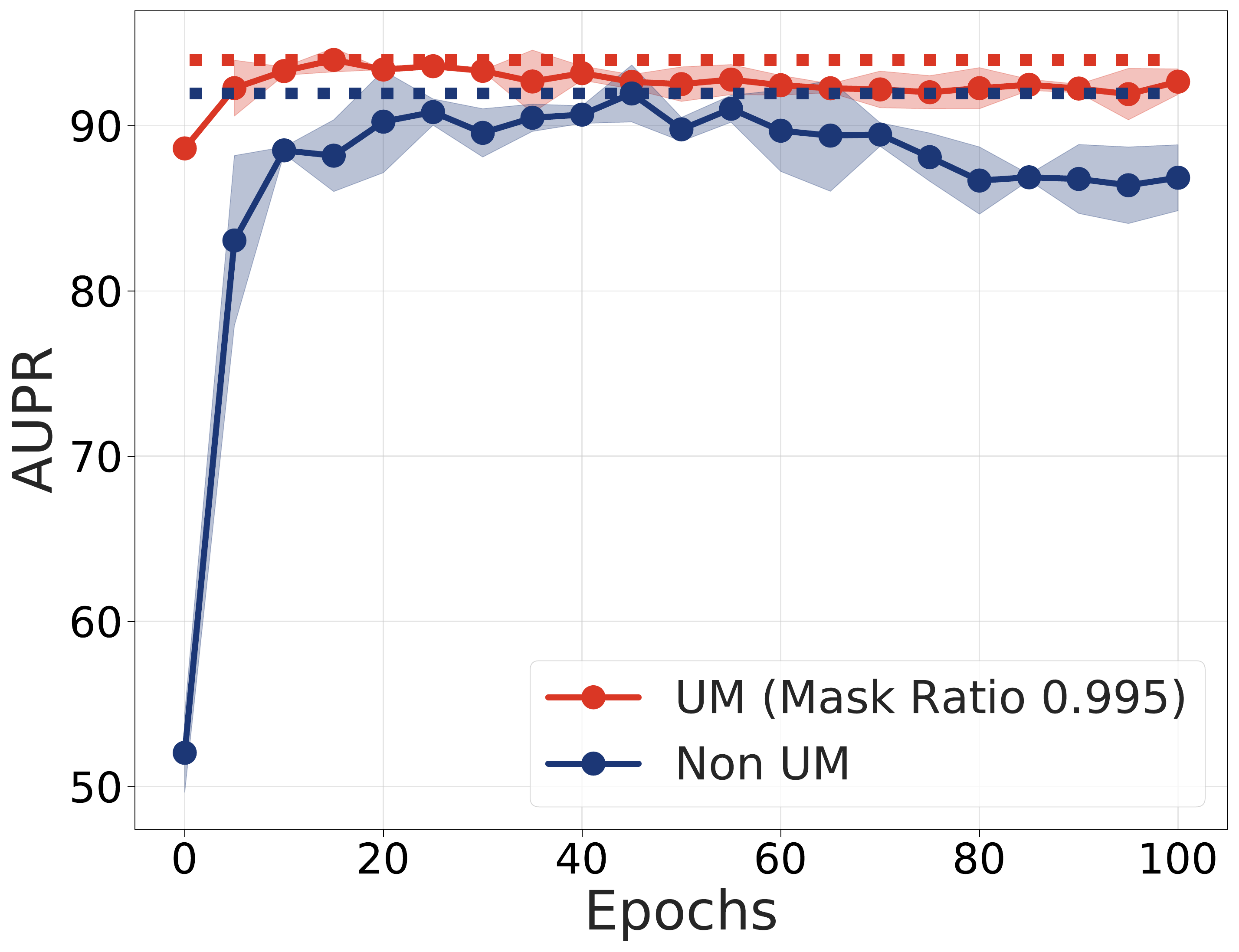}
    \label{fig12:abla_app_6_c}
    }
    \end{center}
    \caption{Ablation studies to reflect the effectiveness of UM. The mask ratio of UM is $99.5\%$. (a) change of FPR95 throughout the training phase on CIFAR-10; (b) change of AUROC throughout the training phase  on CIFAR-10; (c) change of AUPR throughout the training phase on CIFAR-10.}
    \label{fig12:abla_app_6}
\end{figure}

\begin{figure}[t!]
    \begin{center}
    \subfigure[FPR95 Curves]{
    \includegraphics[scale=0.18]{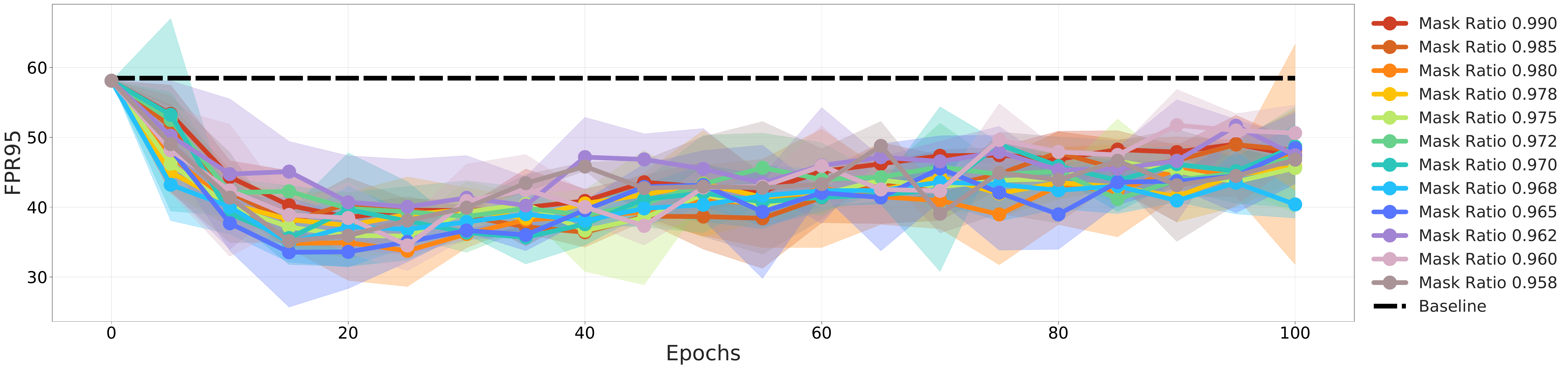}
    \label{fig13:abla_app_7_a}
    }
    \subfigure[AUROC Curves]{
    \includegraphics[scale=0.18]{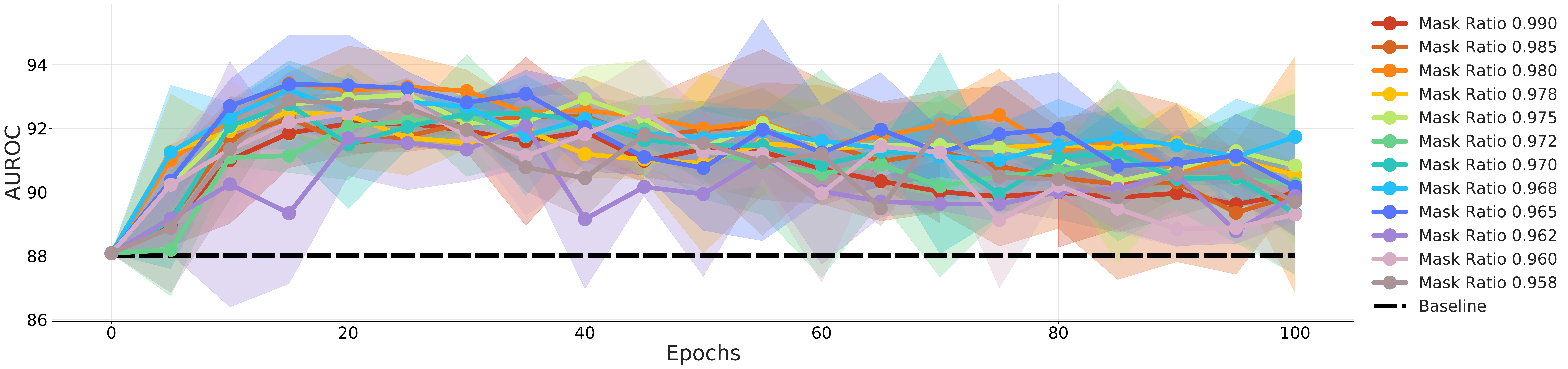}
    \label{fig13:abla_app_7_b}
    }
    \subfigure[AUPR Curves]{
    \includegraphics[scale=0.18]{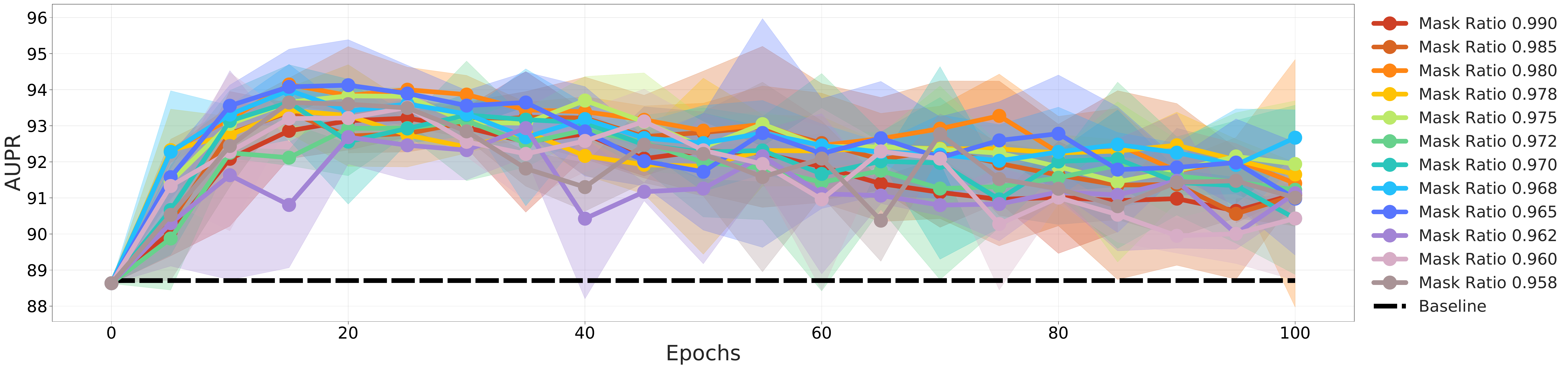}
    \label{fig13:abla_app_7_c}
    }
    \end{center}
    \caption{Ablation studies of WRN-40-4 on various Mask Ratios. The mask rate is from $95.8\%$ to $99.0\%$. (a) change of FPR95 throughout the training on CIFAR-10; (b) change of AUROC throughout the training on CIFAR-10; (c) change of AUPR throughout the training on CIFAR-10.}
    \label{fig13:abla_app_7}
\end{figure}

\subsection{Fine-grained comparison of model weights.}
\label{app:comp_prune_umap}

We display the weights of the original model, pruned model, and the UMAP model respectively in Figure~\ref{fig14:abla_app_8}. The histograms show that the adopted pruning algorithm tends to choose weights far from $0$ for the first convolution layer, shown in Figure~\ref{fig14:abla_app_8_a}. However, for almost all layers (from the 2nd to the 98th), the pruning chooses weights with no respect to the value of weights, shown in Figure~\ref{fig14:abla_app_8_b}. For the fully connected layer, the pruning algorithm itself still keeps its behavior on the first layer, while UMAP forces the pruning algorithm to choose weights near $0$, shown in Figure~\ref{fig14:abla_app_8_c}, indicating that forgetting learned atypical samples doesn't necessarily correspond to larger weights or smaller weights.

\begin{figure}[h!]
    \begin{center}
    \subfigure[First Layer]{
    \includegraphics[scale=0.18]{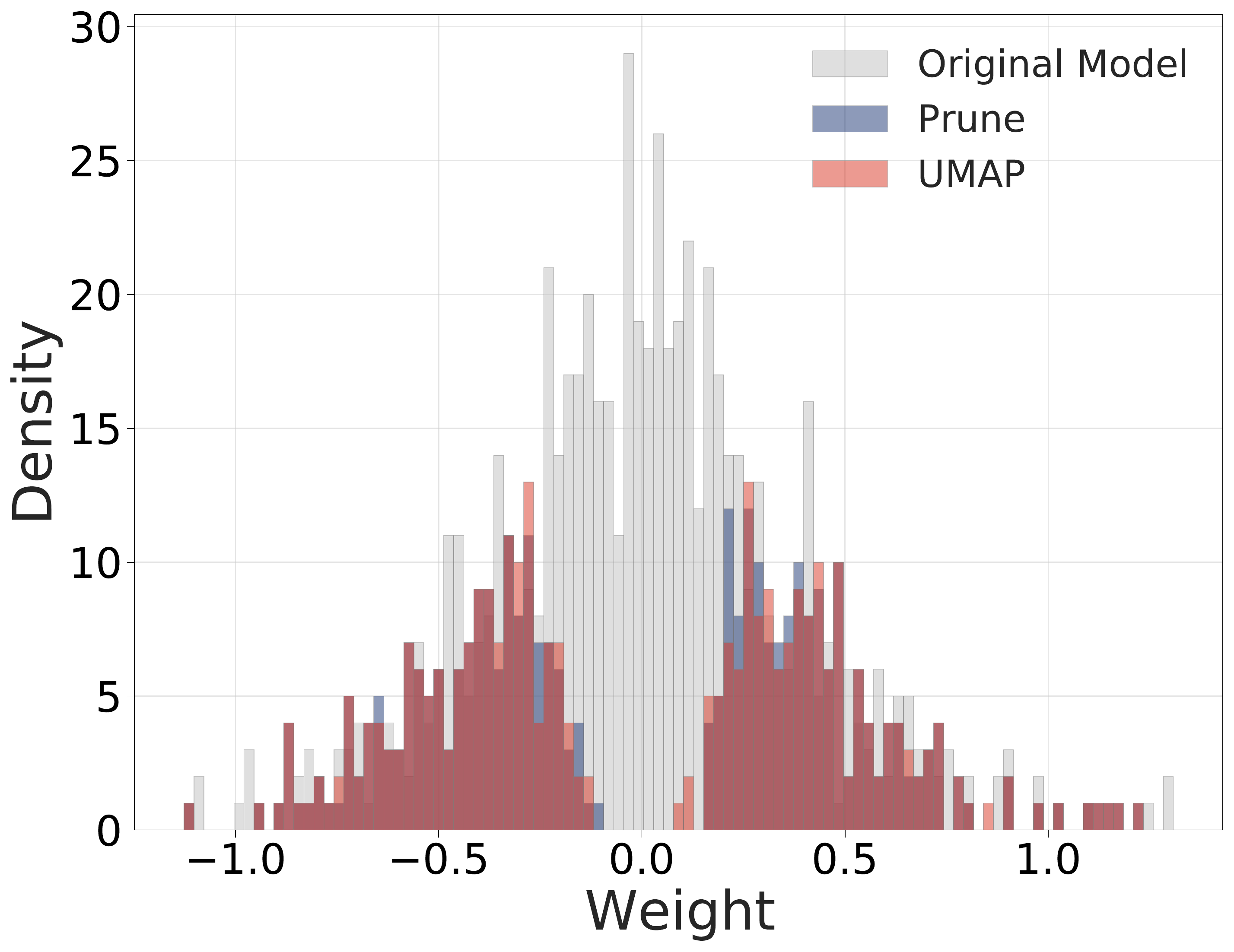}
    \label{fig14:abla_app_8_a}
    }
    \subfigure[50th Layer]{
    \includegraphics[scale=0.18]{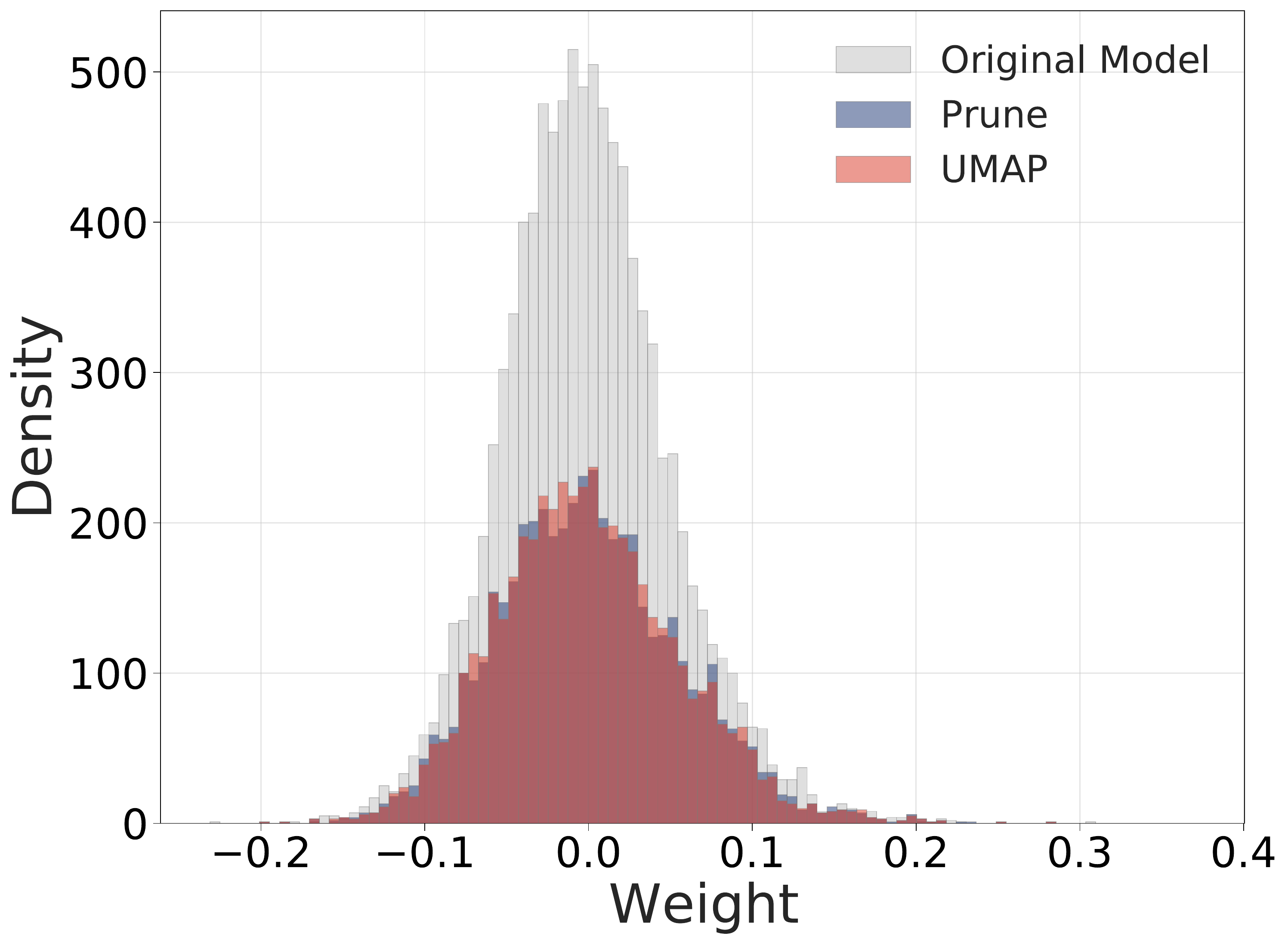}
    \label{fig14:abla_app_8_b}
    }
    \subfigure[Last Layer (FC)]{
    \includegraphics[scale=0.18]{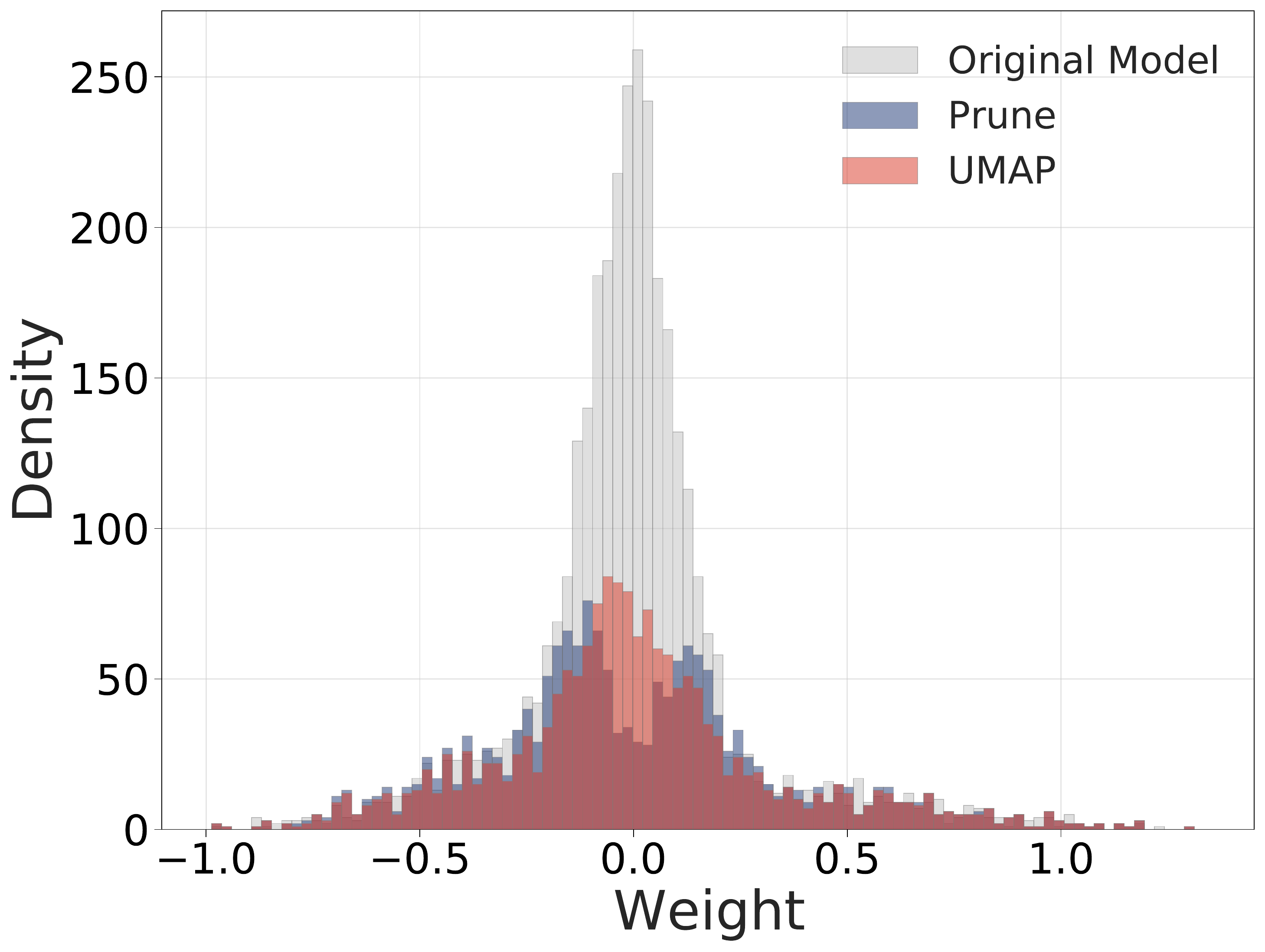}
    \label{fig14:abla_app_8_c}
    }
    \end{center}
    \caption{Histograms of different layers for the original model, pruned model, and UMAP model. The model is DenseNet-101 with a prune rate of $50\%$. (a) the histogram of the first convolution layer; (b) the histogram of the 50th convolution layer; (c) the histogram of the last (fully connected) layer.}
    \label{fig14:abla_app_8}
\end{figure}

\subsection{The effectiveness of UM}
\label{app:eff_um}

In Figure~\ref{fig12:abla_app_6}, we present the FPR95, AUROC, and AUPR curves during training to show the comparison of the original training and our proposed UM on ID data. We observe that training using UM can consistently outperform the vanilla model training, either for the final stage or the middle stage with the best OOD detection performance indicated by the FPR95 curve. In Figure~\ref{fig13:abla_app_7}, we also adopt different mask rates for the initialized loss constraint estimation for forgetting the atypical samples. The results show that a wide range of mask ratios (i.e., from 96\% to 99\%) to estimate the loss constraint used in Eq.~\eqref{eq:obj} can gain better OOD detection performance than the baseline. It shows the mask ratio would be robust to hyper-parameter selection under a certain small value. The principle intuition behind this is our revealed important observation as indicated in Figures \ref{fig1:a}, \ref{fig2:a}, and \ref{fig2:b}. With the guidance of the general mechanism, empirically choosing the hyper-parameter using the validation set is supportable and valuable for excavating better OOD detection capability of the model as conducted by previous literature \citep{hendrycks2018deep,liu2020energy,sun2021react}.

In our experiments, we empirically determine the value of our proposed UM and UMAP by examining the training loss on the masked output. For CIFAR-10 as ID datasets, the value of the mask ratio is $97.5\%$, and the estimated loss constraint for forgetting is $0.10$ for our tuning until the convergence; For CIFAR-100, the value of the mask ratio is $97\%$, and the estimated loss constraint for forgetting is $1.20$ for our tuning until the convergence. To choose the parameters of the estimated loss constraint, we use the TinyImageNet \citep{DBLP:booktitles/corr/abs-2007-06712} dataset as the validation set, which is not seen during training and is not considered in our evaluation of OOD detection performance. Since the core intuition behind our method is to restore the OOD detection performance starting from the well-trained model stage, forgetting a relatively small portion (empirically found around 97\% mask ratio) of atypical samples can be beneficial for the two common benchmarked datasets. In addition, we also verify the effectiveness of UM and UMAP considering the large-scale ImageNet as ID dataset in Table~\ref{tab:my_imagenet} and Appendix~\ref{app:algo_realization}, the loss constraint for forgetting can be $0.6$ which is estimated using the mask ratio as $99.6\%$. To find the optimal parameter for tuning, more advanced searching techniques like AutoML or validation design based on the important observation in our work may be further employed in the future. For the safety concerns, it is affordable and reasonable to gain significant OOD detection performance improvement by investing extra computing resources.

\section{Summarization of the proposed UM/UMAP's advantages}

Regarding the advantages of the proposed method, we kindly interpret them as follows,
\begin{itemize}
    \item \textbf{Novelty.} The proposed UM/UMAP is the first to emphasize the intrinsic OOD detection capability of a given well-trained model during its training phase, better leveraging what has been learned and drawing new insight into the relationship between the OOD detection and the original classification task. This work also shows that ID data is important for a well-trained model's OOD discriminative capability.
    \item \textbf{Simplicity.} Based on the empirical insights that atypical semantics may impair the OOD detection capability, we introduced the easy-to-adopt forgetting objective to weaken the influence of atypical samples on the OOD detection performance. Besides, to maintain the ID performance, we proposed to learning a mask instead of tuning the model directly. Such a design makes UM/UMAP easy to follow and a good starting point to conduct further adjustments. They build on extensive empirical analysis on the point of how to unleash the optimal OOD detection capacity of one given model. Besides, this work explores an orthogonal perspective to previous methods, which shows the consistent improvement combined with previous methods in a range of experiments.
    \item \textbf{Compatibility \& Effectiveness.} UM/UMAP is orthogonal to other competitive methods and can be flexibly combined with them. Extensive experiments demonstrate that UM/UMAP can consistently improve the baselines on average in both benchmarked datasets and large-scale ImageNet (e.g., Tables \ref{tab:my_label},\ref{tab:my_label2},\ref{tab:my_label_complete},\ref{tab:my_imagenet},\ref{tab:my_label3},\ref{tab:my_label4},\ref{tab:label_wrn},\ref{tab:label_wrn_zoom_in_cifar10},\ref{tab:label_wrn_zoom_in_cifar100},\ref{tab:label_zoom_in_SVHN_densenet},\ref{tab:label_zoom_in_SVHN_wrn}; Figures \ref{fig4:a},\ref{fig4:d},\ref{fig4:e},\ref{fig11:abla_app_5},\ref{fig12:abla_app_6},\ref{fig13:abla_app_7},\ref{fig15:um/umap_tsne},\ref{fig16:tsne_UM},\ref{fig16:tsne_UMAP}.
\end{itemize}


\begin{figure*}[t!]
    \begin{center}
    
    \subfigure[UM]{
    \includegraphics[scale=0.24]{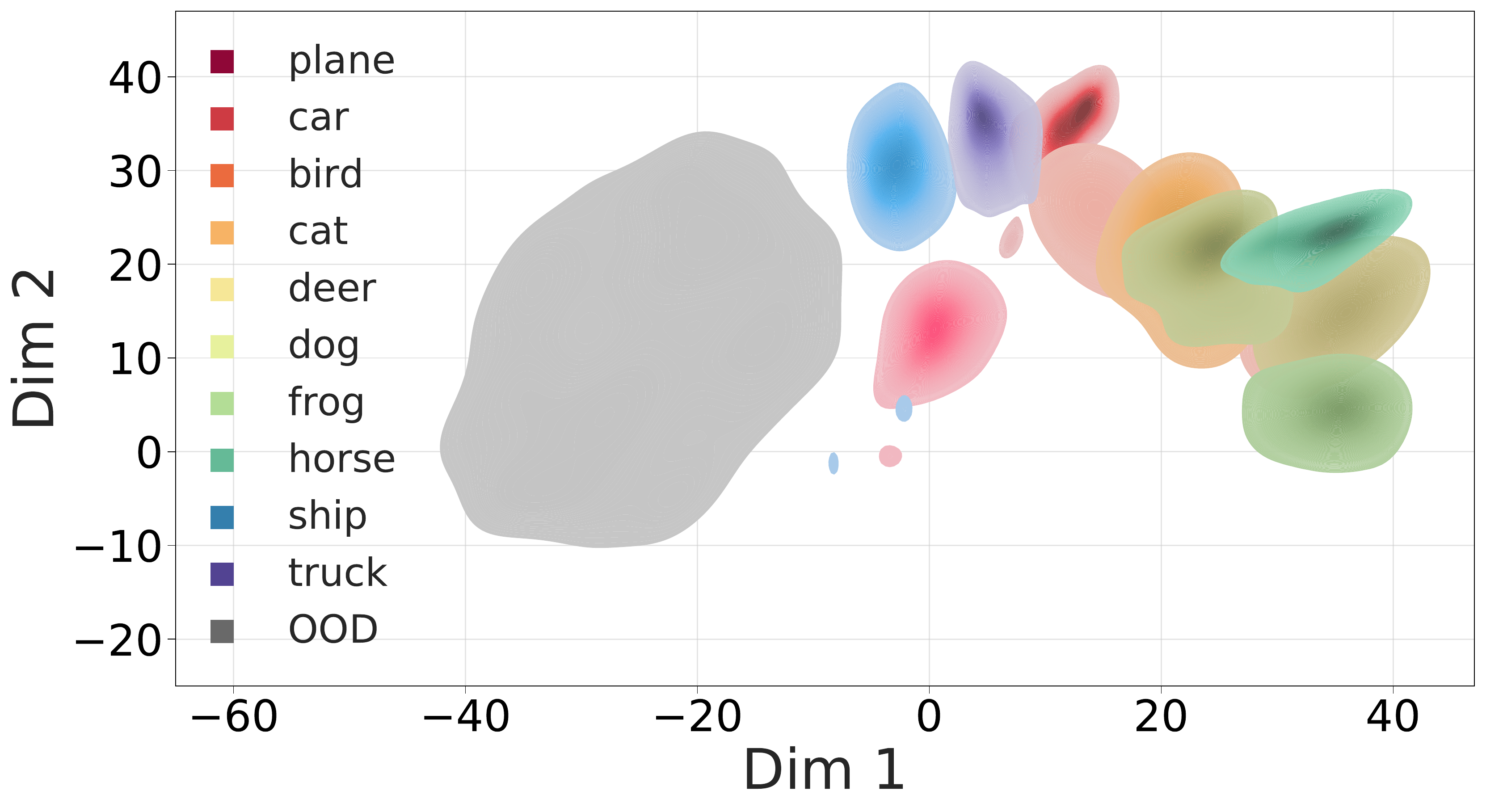}
    \label{fig15:umtsne}
    }
    \subfigure[UMAP]{
    \includegraphics[scale=0.24]{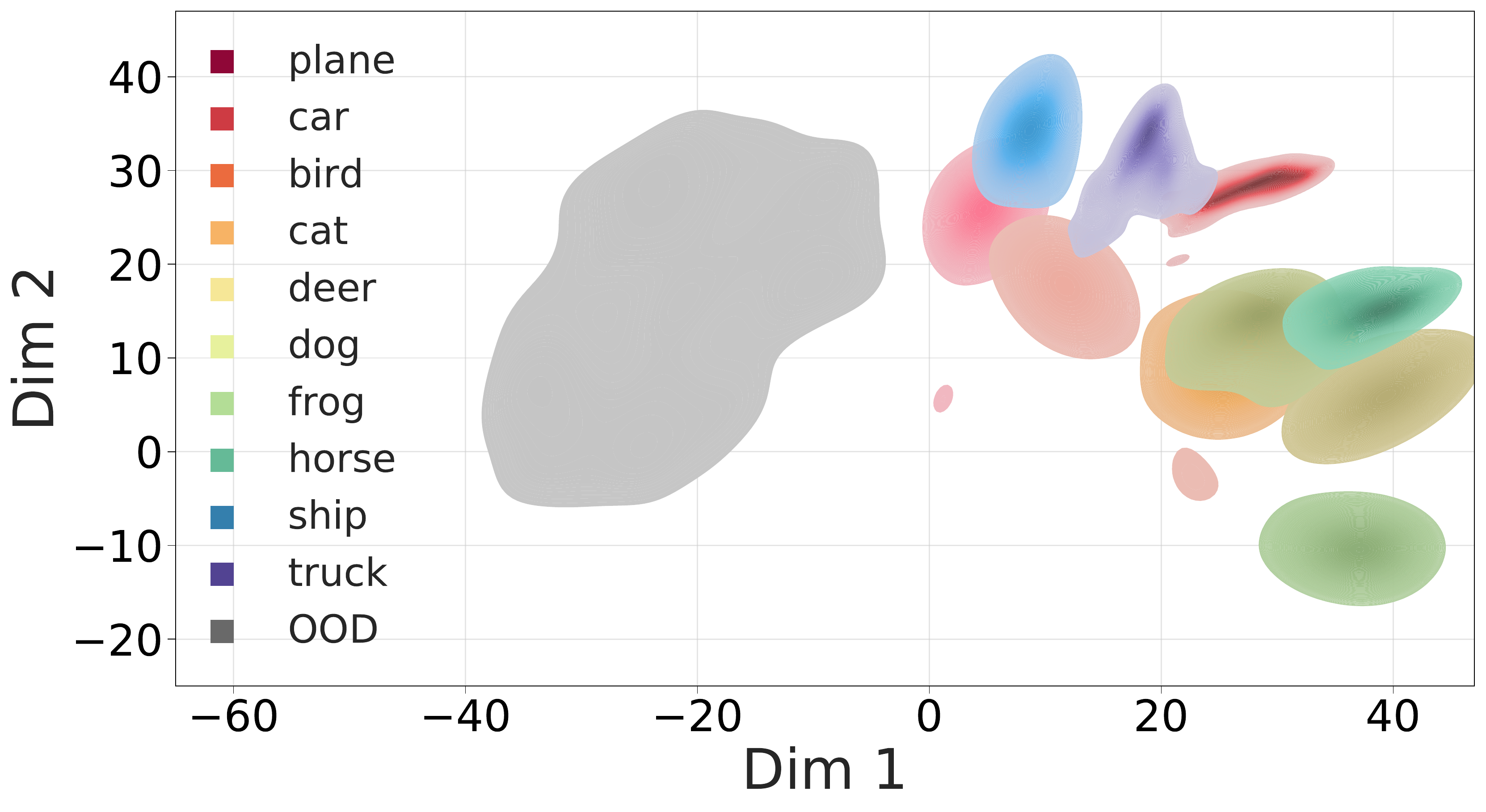}
    \label{fig15:umaptsne}
    }
    \end{center}
    \vspace{-4mm}
    \caption{Similar to previous Figure~\ref{fig2:e}, we visualize the learned feature by UM and UMAP: (a) TSNE visualization of UM; (b) TSNE visualization of UMAP. Compared to the visualization in Figure~\ref{fig2:e}, it is apparent that ID distribution and OOD distribution have larger intervals, which indicates that the UM/UMAP-processed model can better distinguish ID and OOD samples. Moreover, the OOD distributions are more united compared to those in Figure~\ref{fig2:e}, indicating better OOD discriminative Capability.
    }
    \label{fig15:um/umap_tsne}
    \vspace{-2mm}
\end{figure*}

\begin{figure*}[t!]
    \begin{center}
    
    \subfigure[Epoch 60]{
    \includegraphics[scale=0.151]{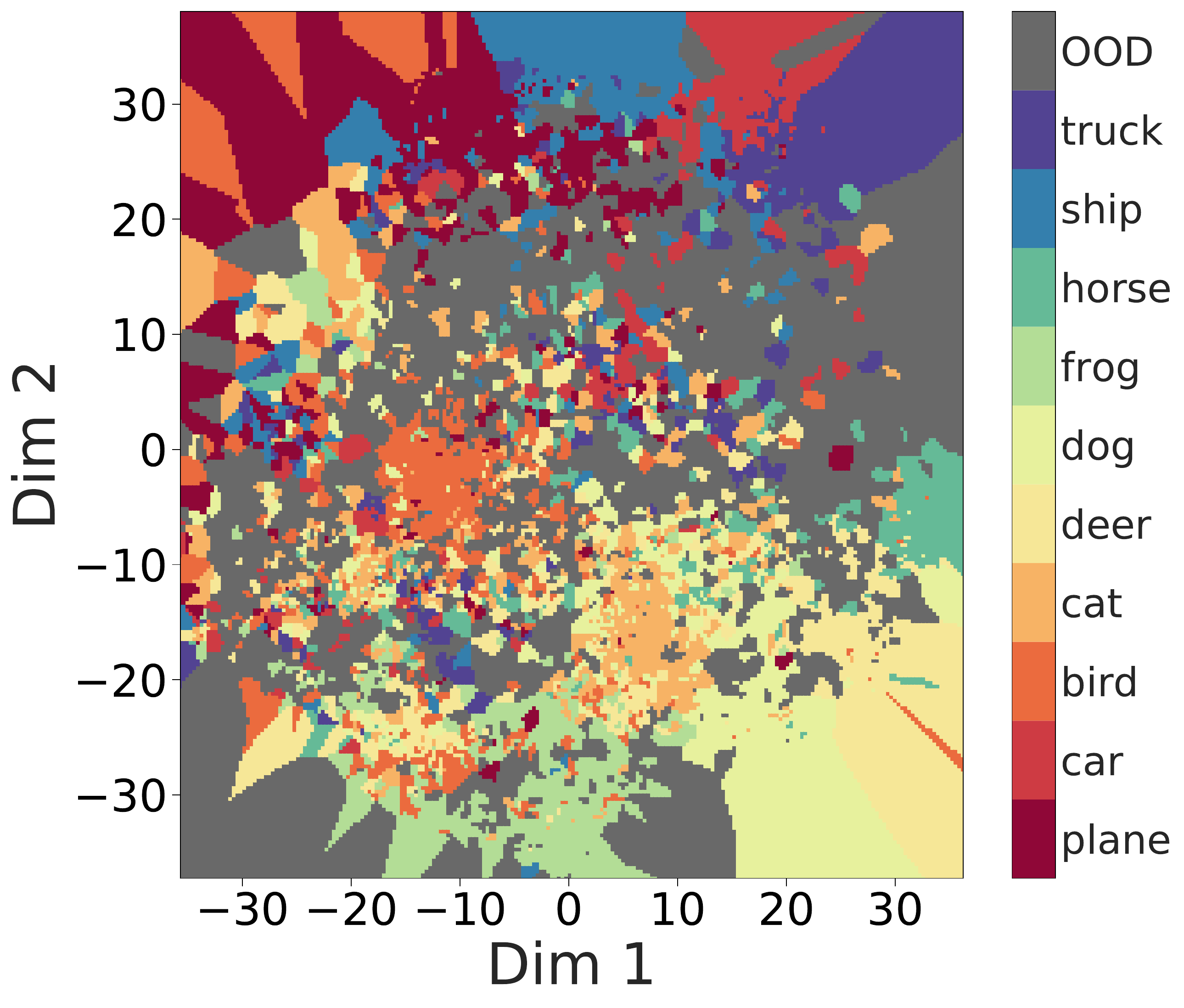}
    \label{fig16:tsne_60}
    }
    \subfigure[Epoch 100]{
    \includegraphics[scale=0.151]{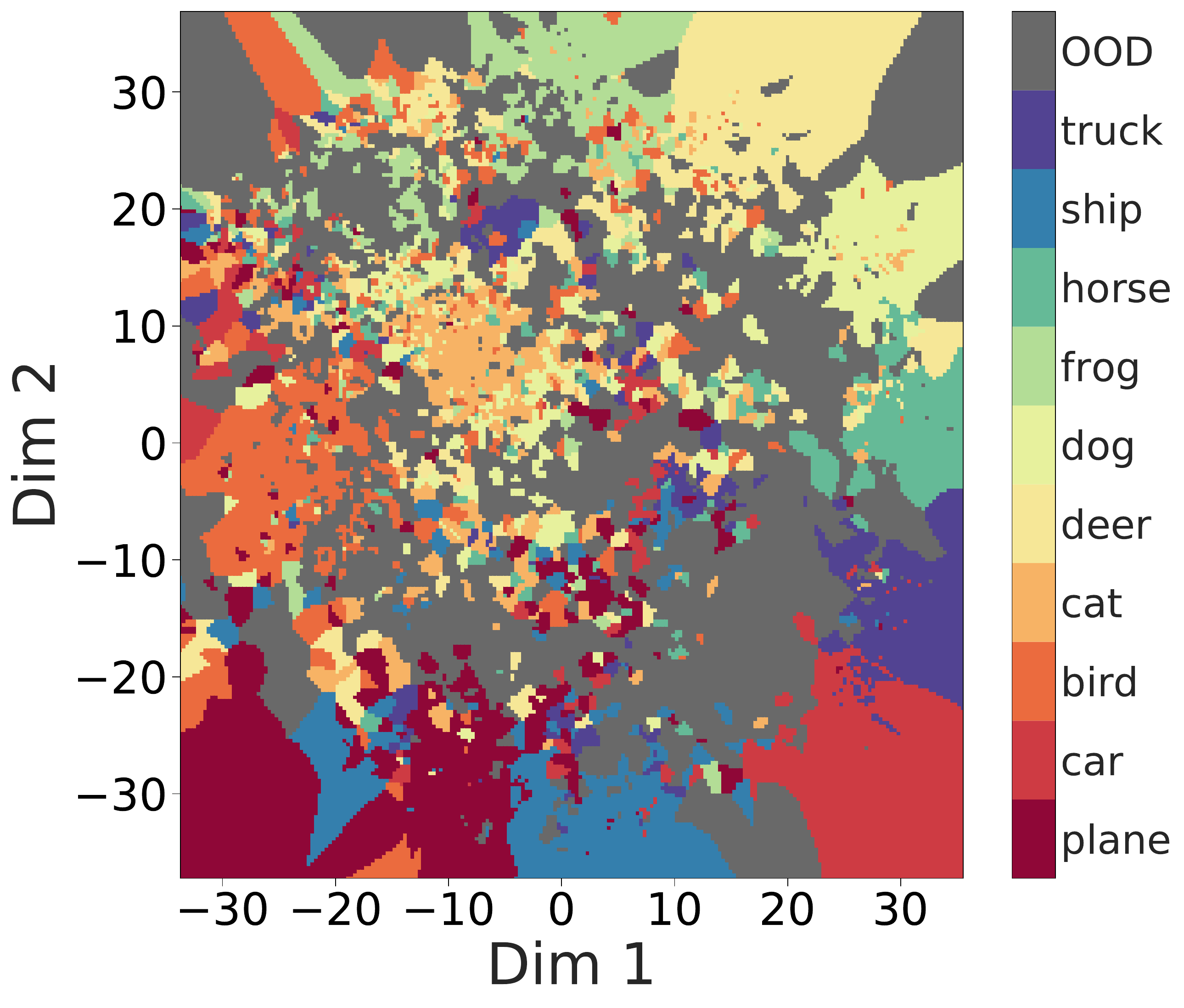}
    \label{fig16:tsne_100}
    }
    \subfigure[UM]{
    \includegraphics[scale=0.151]{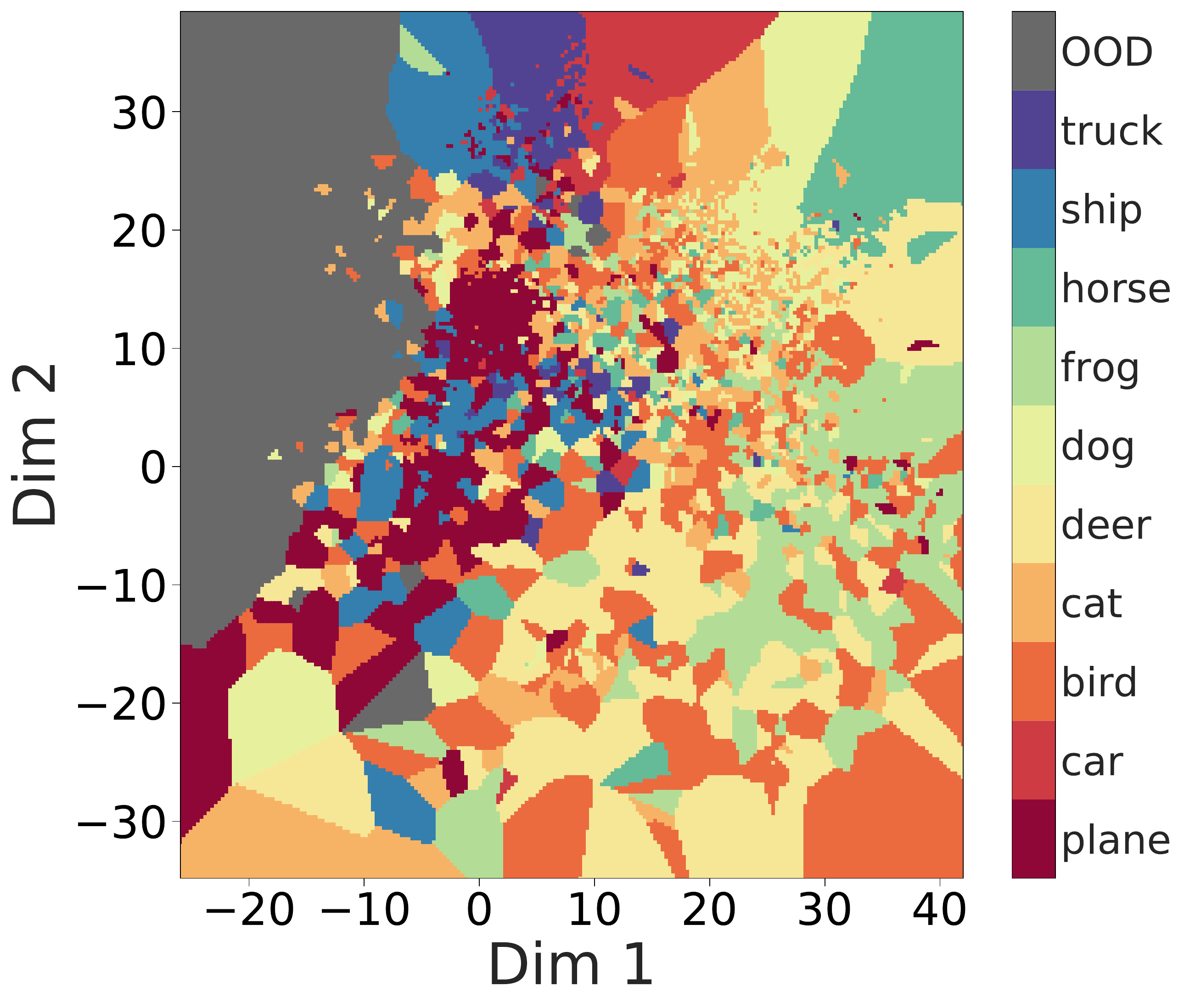}
    \label{fig16:tsne_UM}
    }
    \subfigure[UMAP]{
    \includegraphics[scale=0.151]{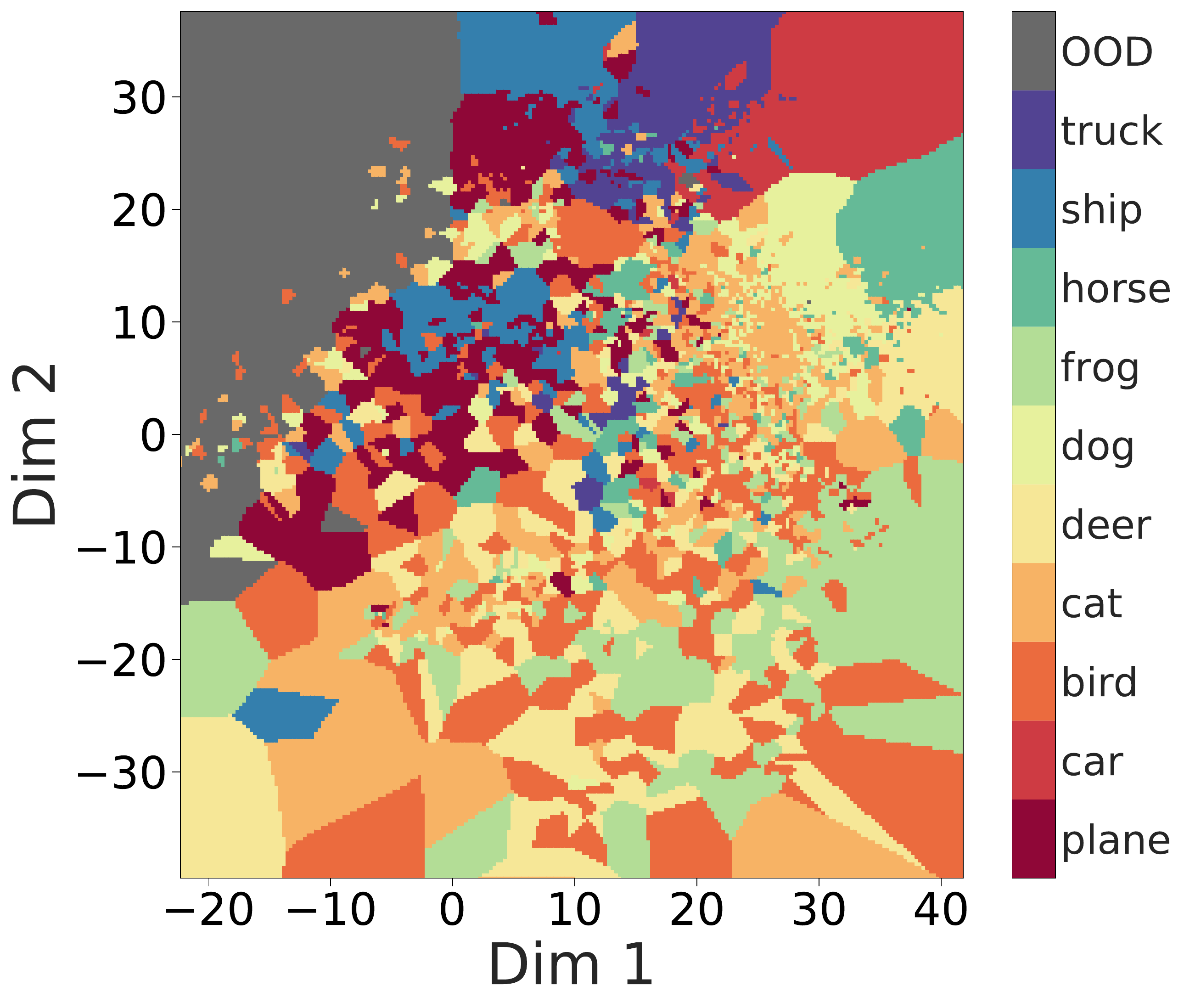}
    \label{fig16:tsne_UMAP}
    }
    \end{center}
    \vspace{-4mm}
    \caption{The visualization of the decision boundary: (a) decision boundary of epoch 60; (b) decision boundary of epoch 100; (c) decision boundary of UM; (d) decision boundary of UMAP. Based on Figures~\ref{fig2:e} and ~\ref{fig15:um/umap_tsne}, we further use the TSNE embeddings to simulate the decision boundaries generated with K-NN \citep{fix1989discriminatory}. The visualization shows that UM/UMAP can help exclude the OOD distribution from the ID distribution, making it easier for the model to distinguish ID/OOD data. While the OOD performance at epoch 60 is better than that at epoch 100, the OOD distribution gets mixed up with the ID distribution in both figures. Though the simulation can't act as a concrete reflection of the model's feature space, it still can intuitively explain the mechanism of UM/UMAP.
    }
    \label{fig16:decision_boundary}
    \vspace{-2mm}
\end{figure*}

\begin{figure}[t!]
    \begin{center}
    \subfigure[Layer-wise masking scores]{
    \includegraphics[scale=0.094]{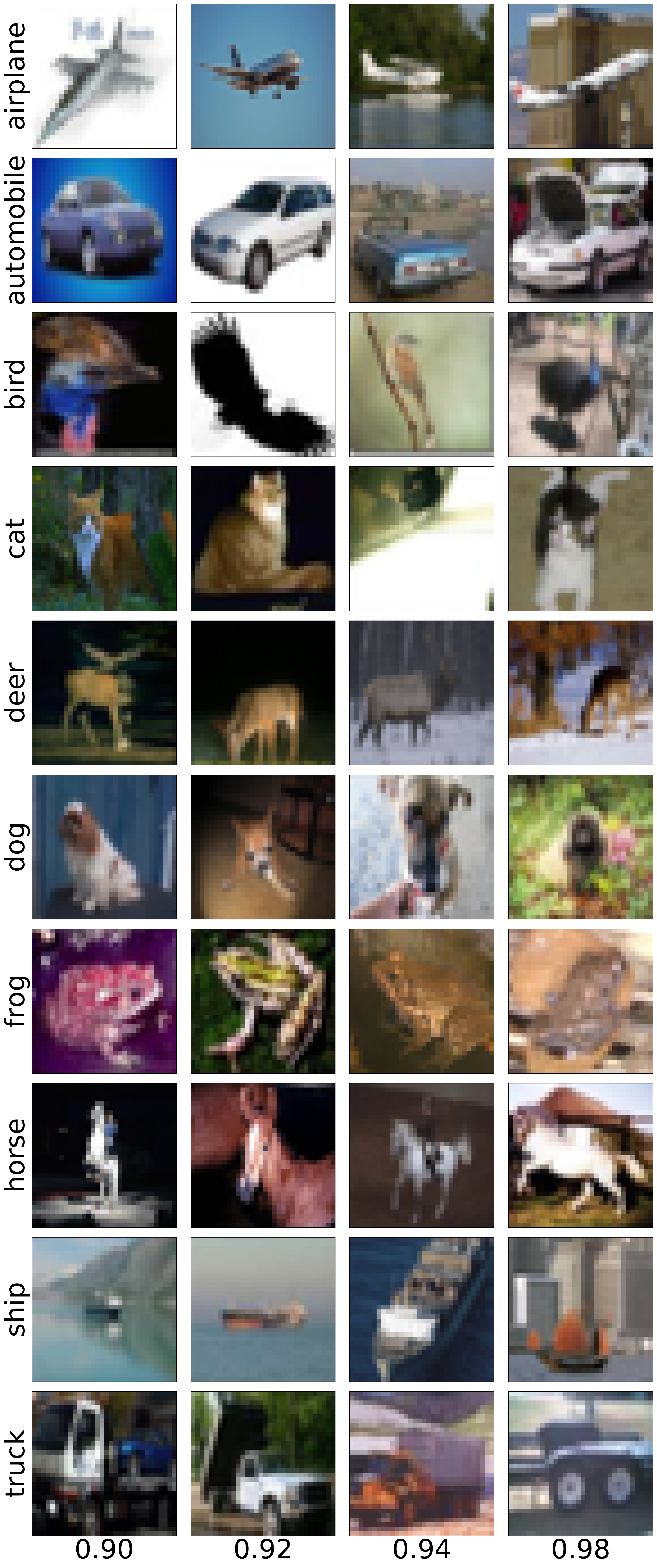}
    \label{fig17:mine_atypical_a}
    }
    \subfigure[Layer-wise masking weights]{
    \includegraphics[scale=0.094]{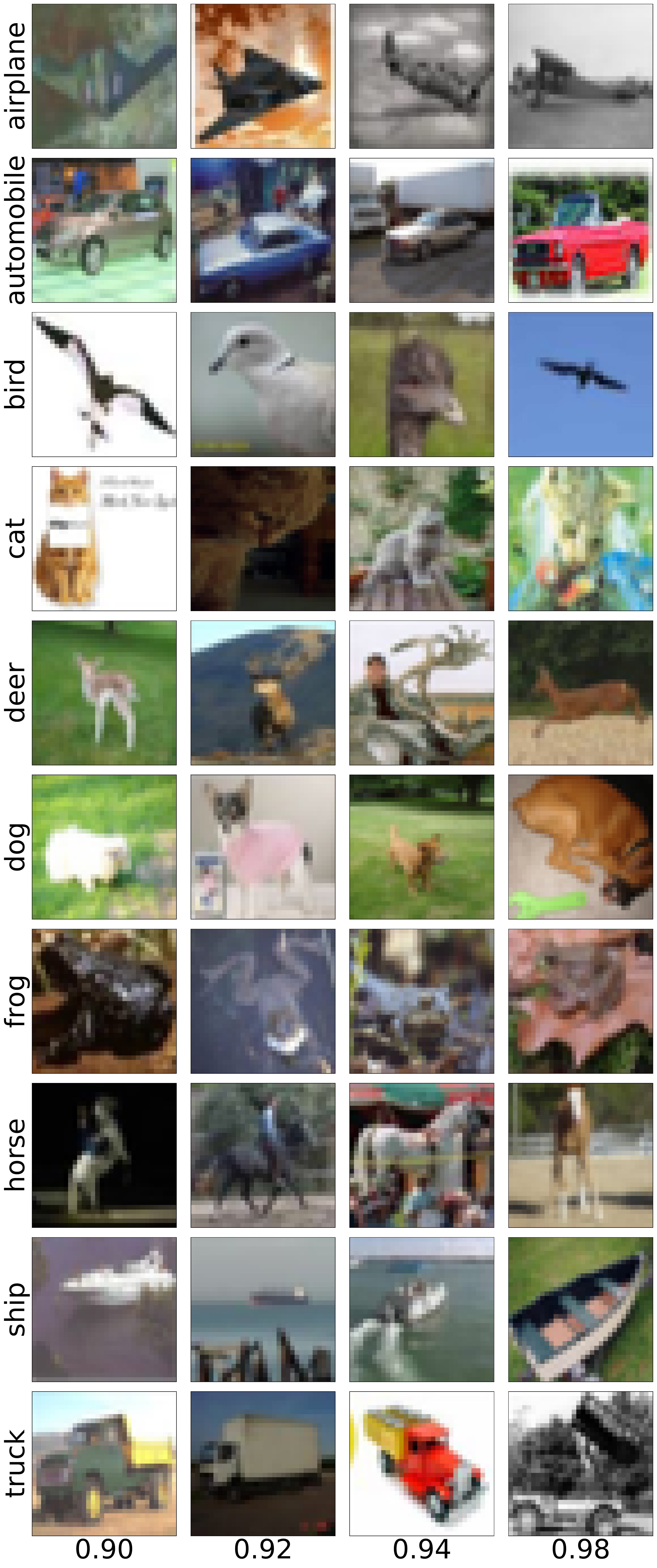}
    \label{fig17:mine_atypical_b}
    }
    \subfigure[Model-wise masking scores]{
    \includegraphics[scale=0.094]{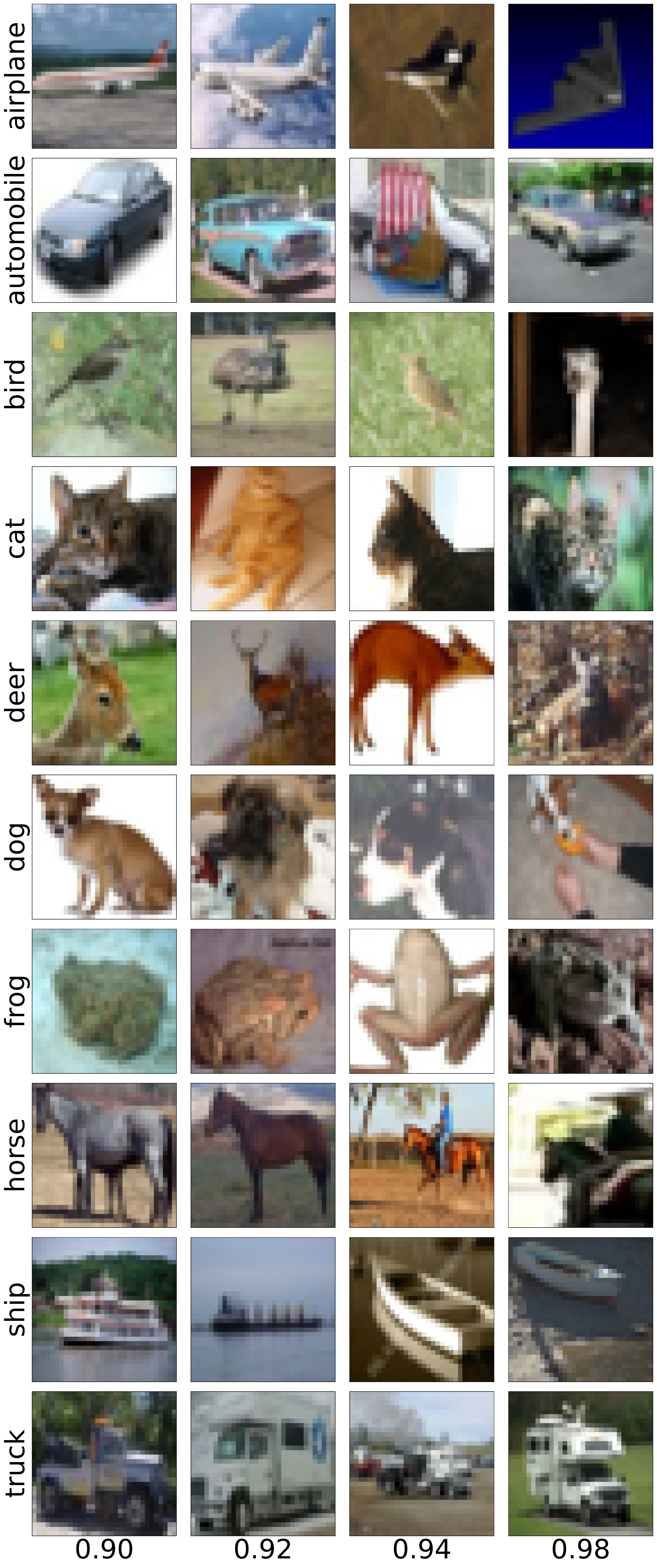}
    \label{fig17:mine_atypical_c}
    }
    \subfigure[Model-wise masking weights]{
    \includegraphics[scale=0.094]{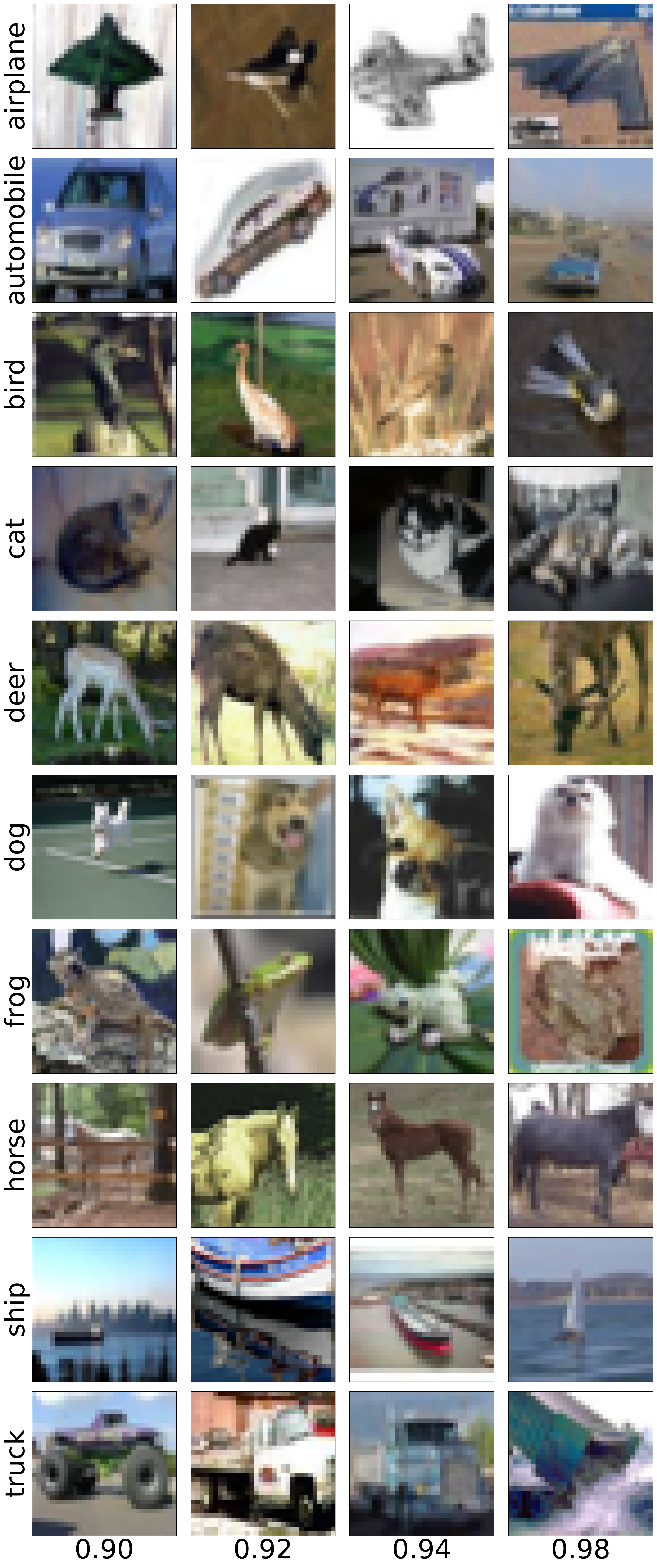}
    \label{fig17:mine_atypical_d}
    }
    \end{center}
    \caption{Examples of misclassified samples after masking the original well-trained model on CIFAR-10. The scores are estimated according to the uniform distribution so that weights are masked out randomly. (a) layer-wise masking scores \citep{ramanujan2020s}; (b) layer-wise masking weights directly; (c) model-wise masking scores; (d) model-wise masking weights directly. For layer-wise masking, a fixed mask ratio is set for every layer of the model, meaning that the same ratio of weights is masked out for every layer; for model-wise masking, the model is considered as a whole with all weights united first and then masked according to the mask ratio. When masking scores, we first generate a score for every weight according to the uniform distribution and then mask out those weights with smaller scores; when masking weights, we directly mask out weights with smaller magnitudes. The images show that masking weights can't efficiently detect atypical samples while masking scores can. For masking scores, masking with a smaller ratio forces the model to misclassify simple samples (clear contours around subjects, single color background) while masking with a larger ratio guide the model to misclassify complex samples (unclear contours, noisy background). This inspection empirically supports our claims that with proper mask ratio, we can detect atypical samples and therefore can force the model to forget them. Besides that, mask weights don't show any particular difference with different mask ratios, and thus it may be inappropriate for mining atypical samples.}
    \label{fig17:mine_atypical}
\end{figure}

\begin{figure}[t!]
    \begin{center}
    \subfigure[Layer-wise masking scores]{
    \includegraphics[scale=0.11]{fig17_largescale_4X15_layer_wise_scores_compressed_compressed.pdf}
    \label{fig18:mine_atypical_a_imagenet}
    }
    \subfigure[Layer-wise masking weights]{
    \includegraphics[scale=0.11]{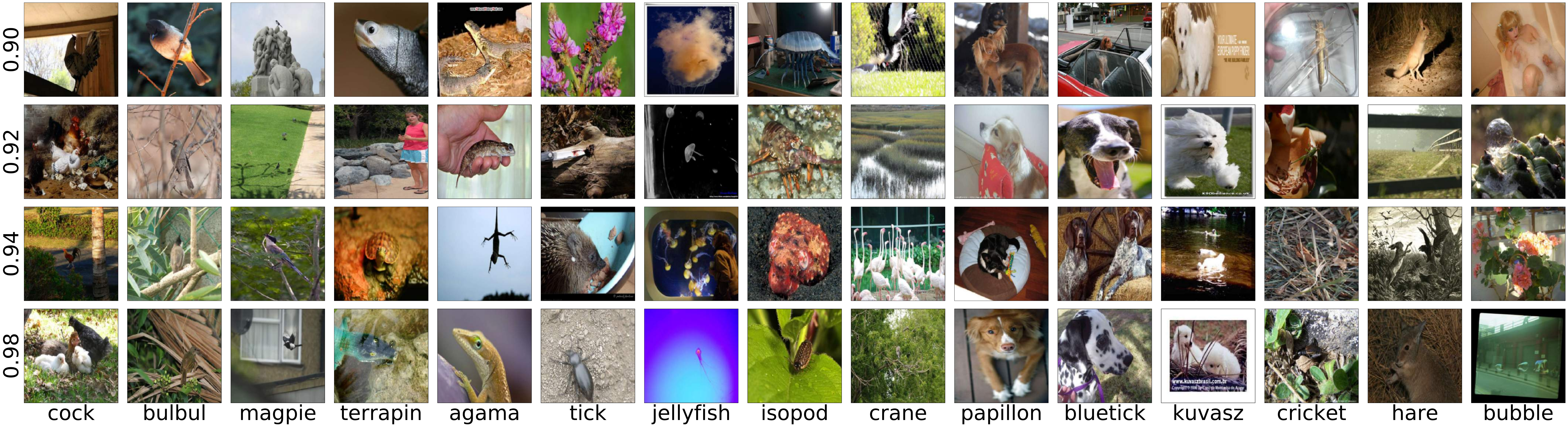}
    \label{fig18:mine_atypical_b_imagenet}
    }
    \subfigure[Model-wise masking scores]{
    \includegraphics[scale=0.11]{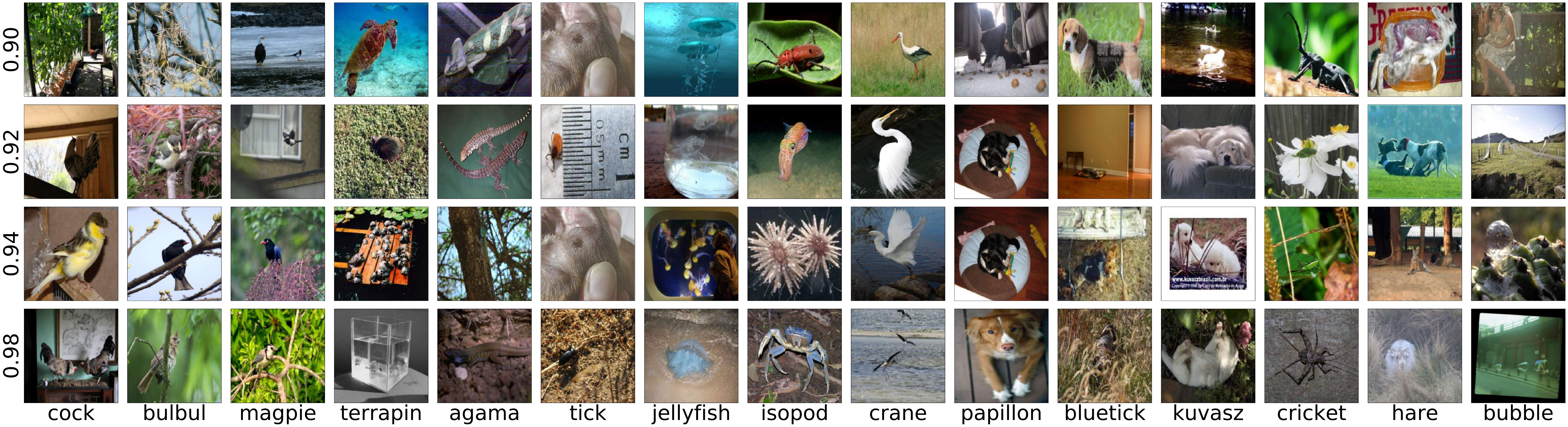}
    \label{fig18:mine_atypical_c_imagenet}
    }
    \subfigure[Model-wise masking weights]{
    \includegraphics[scale=0.11]{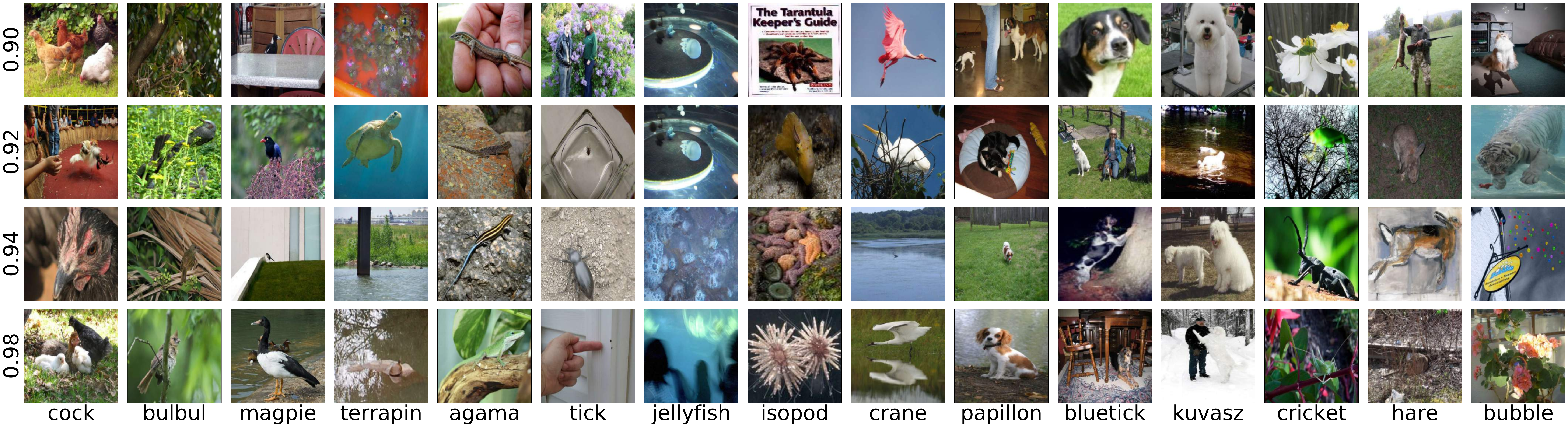}
    \label{fig18:mine_atypical_d_imagenet}
    }
    \end{center}
    \caption{Examples of misclassified samples after masking the original well-trained model on ImageNet.}
    \label{fig18:mine_atypica_imagenetl}
\end{figure}

\end{document}